\renewcommand{\jmlrheading}[7]{%
  \thispagestyle{plain}%
}
\DeclareMathOperator*{\argmin}{arg\,min}
\crefname{assumption}{Assumption}{Assumptions}
\newtheorem{example}{Example} 
\newtheorem{theorem}{Theorem}
\newtheorem{lemma}[theorem]{Lemma} 
\newtheorem{proposition}[theorem]{Proposition} 
\newtheorem{remark}[theorem]{Remark}
\newtheorem{definition}[theorem]{Definition}
\newtheorem{assumption}[theorem]{Assumption}
\newif\ifarxiv      % 定义一个布尔量 \ifarxiv
\begin{document}

\title{Score Matching for Estimating Finite Point Processes}

\author{\name Haoqun Cao \email hcao65@wisc.edu \\
       \addr Department of Statistics\\
       University of Wisconsin-Madison, USA
       \AND
       \name Yixuan Zhang \email zh1xuan@hotmail.com \\
       \addr School of Statistics and Data Science\\
       Southeast University, China
       \AND
       \name Feng Zhou\thanks{Corresponding author.} \email feng.zhou@ruc.edu.cn\\
       \addr Center for Applied Statistics and School of Statistics\\
       Renmin University of China, 100872, China
       }

\editor{My editor}

\maketitle

\begin{abstract}%   <- trailing '%' for backward compatibility of .sty file

Score matching estimators have garnered significant attention in recent years because they eliminate the need to compute normalizing constants, thereby mitigating the computational challenges associated with maximum likelihood estimation (MLE).While several studies have proposed score matching estimators for point processes, this work highlights the limitations of these existing methods, which stem primarily from the lack of a mathematically rigorous analysis of how score matching behaves on finite point processes---special random configurations on bounded spaces where many of the usual assumptions and properties of score matching no longer hold.

To this end, we develop a formal framework for score matching on finite point processes via Janossy measures and, within this framework, introduce an (autoregressive) weighted score-matching estimator, whose statistical properties we analyze in classical parametric settings. 
For general nonparametric (e.g., deep) point process models, we show that score matching alone does not uniquely identify the ground-truth distribution due to subtle normalization issues, and we propose a simple survival-classification augmentation that yields a complete, integration-free training objective for any intensity-based point process model for spatio-temporal case. Experiments on synthetic and real-world temporal and spatio-temporal datasets,  demonstrate that our method accurately recovers intensities and achieves performance comparable to MLE with better efficiency.

\end{abstract}

\begin{keywords}
point processes, Janossy measure, score matching, asymptotic and non-asymptotic analysis, score-matching non-identifiability
\end{keywords}

\section{Introduction}
\label{intro}

Point processes are a class of statistical models used to characterize event occurrences. Typical models include Poisson processes~\citep{kingman1992poisson} and Hawkes processes~\citep{hawkes1971spectra}. Their applications span various fields such as seismology~\citep{ogata1998space,ogata1999seismicity}, finance~\citep{bacry2014hawkes,hawkes2018hawkes}, criminology~\citep{mohler2013modeling}, and neuroscience~\citep{linderman2016bayesian,zhou2022efficient}. 
In the field of point processes, maximum likelihood estimation (MLE) has been the conventional estimator. However, MLE has an inherent limitation: it requires the computation of the normalizing constant, which corresponds to the intensity integral term in the likelihood. Except for simple cases, calculating this intensity integral analytically is generally infeasible. This necessitates the use of numerical integration methods, such as Monte Carlo or quadrature to approximate the computation. This introduces approximation errors, and more importantly, for high-dimensional problems, numerical integration encounters the curse of dimensionality, rendering training infeasible. 
% These shortcomings make MLE not an optimal choice for point processes in certain situations. 

To address this issue, prior research has introduced the concept of score matching (SM)~\citep{Hyvarinen05} to the field of point processes. For instance, \citet{SahaniBM16} derived the application of SM to the estimation of traditional statistical Poisson processes. Furthermore, \citet{zhang2023integration} extended the use of SM to the estimation of deep covariate spatio-temporal point processes. \citet{li2023smurf} further generalized the application of SM to Hawkes processes. These works have greatly advanced the utilization of SM for point processes. 
However, this line of work suffers from several fundamental limitations. Theoretically, existing methods generally lack provable guarantees that their training objectives are correct or statistically well-posed. Empirically, we find that their methods only work for specific classes of point processes and can even fail to recover the intensity of simple parametric models. In this work, we formally demonstrate the incompleteness of the estimators proposed in the aforementioned studies.

On the theoretical side, this incompleteness primarily arises because \textbf{a finite point process is a random configuration on a bounded space, so several properties and assumptions that underlie score matching for general multivariate random variables no longer hold}. Two issues are particularly important. First, the original implicit score-matching objective is not, in general, a statistically valid training objective for point processes. Second, even explicitly matching the score still does not provably recover the distribution, because the normalization condition for finite point processes is substantially more delicate. These observations call for a mathematically rigorous framework for score matching on finite point processes.

On the empirical side, although score- or diffusion-based deep point process models \citep{yuan2023spatio} have achieved strong performance on many spatio-temporal benchmarks, they are not intensity-based models. As a consequence, they can only sample from the process but cannot recover the ground-truth intensity or compute the exact log-likelihood at evaluation time. In this work, we propose a \textbf{completely integration-free training objective that applies to any intensity-based nonparametric point process model}, and we show---both theoretically and through extensive experiments---that it can provably recover the ground-truth intensity.

Specifically, we make the following contributions:\textbf{(1)} We establish a formal framework for score matching on finite point processes via Janossy measures. Within this framework, we theoretically demonstrate that the implicit, autoregressive score-matching estimators in \citet{SahaniBM16, zhang2023integration, li2023smurf} are \emph{incomplete}, in the sense that the regularity conditions required for their correctness cannot be satisfied for general point processes. \textbf{(2)} We propose an (autoregressive) weighted score-matching (WSM or AWSM) estimator that is applicable to general point processes and, under the classical parametric statistics framework, establish its consistency and convergence rate. \textbf{(3)} For general nonparametric point process models, we show that even with weighting, the (A)WSM objective does not, in general, uniquely identify the ground-truth intensity or the underlying distribution, due to subtle normalization issues. We then propose a simple remedy via survival classification that yields a complete, integration-free training objective that applies to any nonparametric intensity model for spatio-temporal cases. \textbf{(4)} We conduct extensive experiments on both synthetic and real-world data, using both classical statistical and deep point process models, on both temporal and spatio-temporal datasets, and empirically validate all our theoretical claims. Our proposed training objective achieves accuracy comparable to MLE while remaining efficient and scalable.

% Lastly, we conduct ablation studies to explore the robustness of our method concerning various components or variables. 

% The structure of this paper is as follows: \cref{preliminary} provides fundamental knowledge about Poisson processes, Hawkes processes, score matching, and autoregressive score matching. \cref{method} explains why implicit SM fails for Poisson processes and how WSM can address this issue. \cref{method2} discusses the similar problem for Hawkes processes. 
% In \cref{consistency}, we theoretically prove the consistency of (autoregressive) WSM. 
% In \cref{experiment}, we conduct experiments on both synthetic and real data. 
% \cref{limitation} discusses some limitations of this study and outlines future research directions. 
% Finally, \cref{conclusion} concludes this work. 

\section{Preliminaries}
\label{preliminary}

In this section, we provide background knowledge on finite point processes, score matching, and autoregressive score matching. 

\subsection{Finite Point Processes}
This paper focuses on point processes that contain a finite number of points almost surely. We begin with two definitions that formalize the probability space considered throughout the paper.

\begin{definition}
Let the points lie in a bounded subset $V \subset \mathbb{R}^d$. A distribution $\{p_N\}_{N \geq 0}$ is given, specifying the probability of having exactly $N$ points in $V$. 
Let $V^{(0)}=\emptyset$, for each integer $N \geq 1$, a probability distribution $\Pi_N(\cdot)$ is defined on the Borel sets of $V^{(N)} := V \times \cdots \times V$ (with $N$ copies of $V$), which determines the joint distribution of the positions of the points in the process, given that the total number of points is $N$.    
The \textbf{Janossy measure} is defined as: 
\begin{equation*}
   J_0(\emptyset) = p_0, J_N(A^{(N)}) = p_N\sum_{\pi}\Pi_N(A^{(N)}_{\pi}), N\geq 1 
\end{equation*}
where $A^{(N)} \in \mathcal{B}_{V^{(N)}}$ is a Borel set in Borel sigma algebra on $V^{(N)}$, $\pi$ is a permutation of $\{1, \ldots, N\}$, and $A^{(N)}_{\pi}$ is the permutation of $A^{(N)}$ by $\pi$. When $J_N(\cdot)$ has a density with respect to the Lebesgue measure, the density is called the Janossy density, denoted by $j_N(\bm x_1, \ldots, \bm x_N)$ with $(\bm x_1, \ldots, \bm x_N)\in V^{(N)}$. A finite point process that has a Janossy density is called \textbf{regular}. 
\end{definition}

\begin{definition}\label{def:data_generating_measure}
    Let $V^{\cup} = \bigcup_{N\geq 0}V^{(N)}$. Let $\mathscr{F} = \sigma\Big(\cup_{N\geq0}\mathcal B_{V^{(N)}}\Big)$, where $\sigma(\cdot)$ denotes the generated sigma algebra, then define $\mathbb  P$ to be: 
    \begin{equation*}
        \mathbb P(B) = \sum_{N}\frac{1}{N!}J_N(B\cap V^{(N)}), B\in \mathscr F.
    \end{equation*}
    Then $(V^{\cup}, \mathscr F, \mathbb P)$ defines a finite point process on $V$. We denote the process with $\mathcal X \in V^{\cup}$, i.e., $\mathcal X(\omega)=\omega$ for $\omega \in V^{\cup}$. 
\end{definition}

Let $\bm{X}$ represents an element in $V^{\cup}$, and $\bm{X}_N$ represents an element in $V^{(N)}$. The $n$-th point in $\bm{X}$ is denoted by $\bm{x}_n$. Throughout this paper, $N$ implies the total number of points is $N$ and $n$ is used for the $n$-th point among all realized points. We denote $\bm{X}_{-n}$ by the sequence $\bm{X}$ excluding $\bm{x}_n$. 
% Although $\mathcal X\in V^{\cup}$ is an element as $(\bm x_1, \ldots, \bm x_n)$ where $n$ is any positive integer. Since both Janossy measure and $\mathcal P$ is symmetric, we can also think $\mathcal X$ as an unordered set of points as $\mathcal X = \{\bm x_1, \ldots, \bm x_n\}$. Actually, unless otherwise states, we always perceive $\mathcal X$ as a set of points. 
We define a measurable function $j: V^{\cup} \to \mathbb{R}$ as: 
\begin{equation*}
    j(\bm{X}) := j_N(\bm{X}), \quad \text{when } \bm{X} \in V^{(N)},
\end{equation*}
where $j_N$ is the Janossy density. We denote the number of points in $\bm{X}$ by $N(\bm{X})$. Notice that $N(\mathcal{X})$ is a random variable. 

From here onward, we use $\mathbb P$ for the ground-truth data generating probability measure. It is worth noting that we have not explicitly verified that the Janossy measure and $\mathbb{P}$ are indeed measures. One can refer to Lemma 2.2 of \citet{moyal1962general} for proof. Furthermore, since $\mathbb{P}$ is a probability measure and is induced by $J_n$, we know that the normalizing condition for $J_n$ is given by: 
\begin{equation}\label{eq:J_normalising}
    \sum_{N \geq 0} \frac{1}{N!} J_N(V^{(N)}) = \sum_{N \geq 0} p_N = 1.
\end{equation}

There is also a more prevalent definition of point processes through random counting measures, which is equivalent to our definition, as discussed in \citet{daley2003introduction}. However, the definition via the Janossy measure suffices our discussion of finite point processes on bounded sets, and it is convenient to use since it is related to the likelihood function of a finite point process. 
\begin{definition}
The likelihood of a realization $\{\bm{x}_1, \ldots, \bm{x}_N\}$ of a regular point process $\mathcal{X}$, where $N = N(\mathcal{X})$, is the Janossy density: 
\begin{equation*}
    L(\bm{x}_1, \ldots, \bm{x}_N) = j_N(\bm{x}_1, \ldots, \bm{x}_N).
\end{equation*}
\end{definition}

% The simplest example of point process likelihood is the likelihood of a homogeneous Poisson process with intensity $\lambda$. In this case, $N(\mathcal{X})$ follows a $\text{Poisson}(\int_{V} \lambda \, d\bm{x})$. Given that the number of points in $V$ is $N$, the locations of the $N$ points are uniformly distributed. Therefore, we have $p_N = \frac{(\int_{V} \lambda \, d\bm{x})^N}{N!} \exp\left(-\int_{V} \lambda \, d\bm{x}\right)$. 
% The density of $\Pi_N$ would be $\frac{1}{|V|^N}$. Therefore, the janossy density is given by:
% \begin{equation}\label{eq:Poisson_Density}
%     j_N(\bm{x}_1, \ldots, \bm{x}_N) = \lambda^N \exp\left(-\int_{V} \lambda \, d\bm{x}\right). 
% \end{equation}

Although most point processes are not directly defined in terms of the distribution $p_N$ and $\Pi_N$, there are quite a few cases where the Janossy density has a closed-form. For more general cases, where 
%$p_N$ and $\Pi_N$ are consequently 
the Janossy density lack explicit formulas, the process can sometimes be described via a collection of conditional probability densities. In this work, we focus on one such case known as \textbf{spatio-temporal point processes.}

Now assume $V = (0, T)\times S$, where $S\in \mathbb R^d$ is bounded. We order the points by time. For a sequence of points $\bm X$, let $(t_1, \ldots, t_{N(\bm X)})$ be the time coordinate of the ordered $N(\bm X)$ points, and let $(\bm s_1,\ldots, \bm s_{N(\bm X)})$ denote their corresponding locations. We define $p_n(t_n,\bm s_n|t_1,\bm s_1,\ldots, t_{n-1},\bm s_{n-1})$ as the conditional density of the (ordered) $n$-th point given past realized $n-1$ points. The following result shows the equivalence of specifying such a process by Janossy measure and by a set of conditional probabilities.
\begin{proposition}\label{prop:existence_of_conditional_densityd}\textnormal{(Proposition 7.2I in \citet{daley2003introduction})}
    For a regular point process on $(0,T)\times S$, there exists a uniquely determined family of conditional probability density functions $p_n(t,\bm s|t_1,\bm s_1,\ldots, t_{n-1},\bm s_{n-1})$ with the exception of $p_1(t,\bm s)$ and associated survivor functions $G_n$: 
    \begin{equation*}
        G_n(t|t_1,\bm s_1,\ldots, t_{n-1},\bm s_{n-1}):=1-\int_{t_{n-1}}^t\int_S p_n(u,\bm s|t_1,\bm s_1,\ldots, t_{n-1},\bm s_{n-1})d\bm sdu, 
    \end{equation*}
    where $0<t_1<\ldots<t_{n-1}<t, (\bm s_1,\ldots, \bm s_{n-1})\in S^{(n-1)}$, $p_n(\cdot, \cdot|t_1,\bm s_1,\ldots, t_{n-1},\bm s_{n-1})$ has support $(t_{n-1},T)\times S$, and for all $n\geq 1$,
    \begin{equation}\label{eq:hawkes_cond_density}
    \begin{aligned}
        J_0\big((0,T)\big)&=G_1(T), \\
        j_n(t_1,\bm s_1,\ldots, t_n,\bm s_n) &= p_1(t_1,\bm s_1)p_2(t_2,\bm s_2|t_1,\bm s_1)\ldots p_n(t_n,\bm s_n|t_1,\bm s_1,\ldots, t_{n-1},\bm s_{n-1})\\
        &\cdot G_{n+1}(T|t_1,\bm s_1,\ldots, t_{n},\bm s_{n}), 
    \end{aligned}
    \end{equation}
where $0<t_1<\ldots <t_n< T$. Conversely, given any such family of conditional densities $\{p_n\}$ for all $t>0, \bm s\in S$, \cref{eq:hawkes_cond_density} uniquely specify the distribution of a regular point process on $(0,T)\times S$.
\end{proposition}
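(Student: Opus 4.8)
The plan is to establish the bijection in both directions by first reducing everything to \emph{time-ordered} configurations. Since each Janossy density $j_N$ is symmetric in its $N$ arguments, integrating it over the simplex $\Delta_N := \{0 < t_1 < \cdots < t_N < T\}$ recovers exactly $\int_{\Delta_N}\int_{S^N} j_N = \tfrac{1}{N!}J_N(V^{(N)})$, so the ordered densities carry all the probabilistic content and the $1/N!$ weighting in $\mathbb P$ is absorbed. Ordering by time is what makes the conditional densities $p_n$ and survivor functions $G_n$ well posed, so I would phrase the whole argument in terms of the ordered first-$n$-point densities.

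For the forward direction, I would define the marginal density of the first $n$ ordered points, irrespective of the total count, by integrating out all later points:
\begin{equation*}
  h_n(t_1,\bm s_1,\ldots,t_n,\bm s_n) = \sum_{m \geq 0}\frac{1}{m!}\int_{(t_n,T)^m}\int_{S^m} j_{n+m}(t_1,\bm s_1,\ldots,t_n,\bm s_n,u_1,\bm s_1',\ldots,u_m,\bm s_m')\, d\bm s'\, du,
\end{equation*}
with $h_0\equiv 1$ (equal to $1$ precisely by the normalization \eqref{eq:J_normalising}). The construction then sets $p_n := h_n/h_{n-1}$ and $G_{n+1}(T\mid\cdot):=j_n/h_n$, interpreted as the conditional probability of ``exactly $n$ points'' given the first $n$. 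Telescoping the ratios gives $h_n=\prod_{k=1}^n p_k$, hence $j_n=\big(\prod_{k=1}^n p_k\big)G_{n+1}(T\mid\cdot)$, which is exactly \eqref{eq:hawkes_cond_density}, and the $n=0$ case recovers $G_1(T)=p_0=J_0((0,T))$. The key algebraic fact to verify is the one-step recursion $h_n = j_n + \int_{t_n}^T\int_S h_{n+1}\, d\bm s\, du$ (splitting ``first $n$ points here'' into ``exactly $n$'' plus ``at least one more''); dividing by $h_n$ turns it into $\int_{t_n}^T\int_S p_{n+1}\, d\bm s\, du = 1 - G_{n+1}(T\mid\cdot)$, matching the survivor-function definition, with the support $(t_{n-1},T)\times S$ automatic since $h_n$ vanishes unless the $n$-th time exceeds $t_{n-1}$. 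Uniqueness is then immediate, as $p_n$ and $G_n$ are forced to equal these specific Janossy-derived ratios; $p_1=h_1$ is the lone ``exception'' in the statement because it is an unconditioned sub-probability density integrating to $1-p_0$.

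For the converse, given a family $\{p_n\}$ and the induced $G_n$, I would \emph{define} $j_n$ by \eqref{eq:hawkes_cond_density} and check normalization \eqref{eq:J_normalising}. Writing $Q_n := \prod_{k=1}^n p_k$ so that $j_n = Q_n\,G_{n+1}(T\mid\cdot)$, I substitute $G_{n+1}(T\mid\cdot)=1-\int_{t_n}^T\int_S p_{n+1}$ and use $Q_{n+1}=Q_n\,p_{n+1}$ together with $\int_{\Delta_n}Q_n\int_{t_n}^T\int_S p_{n+1}=\int_{\Delta_{n+1}}Q_{n+1}$ to collapse the partial sum:
\begin{equation*}
  \sum_{N=0}^{M}\int_{\Delta_N}\int_{S^N} j_N = \int_{\Delta_0}Q_0 - \int_{\Delta_{M+1}}\int_{S^{M+1}} Q_{M+1} = 1 - \int_{\Delta_{M+1}}\int_{S^{M+1}} Q_{M+1}.
\end{equation*}

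The remaining, and main, obstacle is to show that the tail term $\int_{\Delta_{M+1}}\int_{S^{M+1}} Q_{M+1}$, which is precisely the probability assigned to ``at least $M+1$ points,'' tends to $0$ as $M\to\infty$. This is exactly the non-explosion (almost-sure finiteness) requirement for a regular \emph{finite} point process, and it is the one place where a genuine regularity assumption on $\{p_n\}$ must be imposed rather than derived; under it the partial sums increase to $1$, showing that $\{j_N\}$ is a valid Janossy system and that the forward construction reproduces the prescribed conditional densities, which together yield the claimed one-to-one correspondence.
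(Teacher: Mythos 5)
The paper does not prove this proposition at all: it is imported verbatim from Daley and Vere--Jones (Proposition 7.2.I) and used as a black box, so there is no in-paper argument to compare yours against. Judged on its own, your proof is the standard textbook construction and is essentially correct. The forward direction via the ordered sub-probability densities $h_n$, the ratios $p_n=h_n/h_{n-1}$ and $G_{n+1}=j_n/h_n$, and the one-step recursion $h_n=j_n+\int_{t_n}^T\int_S h_{n+1}$ all check out; the only step you assert rather than verify is the combinatorial bookkeeping in that recursion, namely that integrating the $(n+1)$-st point as the \emph{minimum} of the remaining $m+1$ unordered points converts the $1/m!$ weight into $1/(m+1)!$, which is routine but worth writing out. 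You should also note that $p_n$ is only defined (and unique) a.e.\ on the event $h_{n-1}>0$.

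Your observation about the converse is the one substantive point, and you are right to flag it: the telescoping identity leaves the tail term $\int_{\Delta_{M+1}}\int_{S^{M+1}}Q_{M+1}$, which equals the mass assigned to configurations with more than $M$ points, and for an arbitrary family of sub-probability densities this need not vanish (take every $p_n$ to be a proper probability density on $(t_{n-1},T)\times S$, so that every $G_n(T\mid\cdot)=0$ and every $j_n\equiv 0$; the total Janossy mass is then $0$, not $1$). So the converse as literally stated for ``any such family'' requires a non-explosion hypothesis that is implicit in Daley and Vere--Jones's notion of an admissible family; your proof correctly isolates this as an assumption rather than pretending to derive it. One last loose end you gesture at but do not carry out: to close the bijection you must also check that the forward construction applied to the $j_n$ you build returns the prescribed $p_n$, i.e.\ that $h_n=\prod_{k\le n}p_k$ for the constructed system, which again holds exactly because the conditional tail masses sum to one under non-explosion.
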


We denote $D_n^T$ to be the transformed simplex $D_n^T :=\{(t_1\,\ldots ,t_n): 0 < t_1 < \ldots < t_n<T\}$. Let $\mathcal H_{n}:=(t_1,\ldots, t_n,\bm s_1,\ldots, \bm s_n)\in D_n^T\times S^{(n)}$ and $\mathcal H_{t_n}:=(t_1,\ldots, t_n,\bm s_1,\ldots, \bm s_{n-1})\in D_n^T\times S^{(n-1)}$. We further decompose the conditional density of $(t_n,\bm s_n)$ given $\mathcal H_{n-1}$ as: 
\begin{equation*}
    p_n(t_n,\bm s_n|\mathcal H_{n-1})=p_n(t_n|\mathcal H_{n-1})\times f_{S,n}(\bm s_n|\mathcal H_{t_n}). 
\end{equation*}
We also define a conditional intensity as $\lambda_{T,n}(t,\bm s|\mathcal H_{n-1}):= \frac{p_n(t,\bm s|\mathcal H_{n-1})}{G_n(t|\mathcal H_{n-1})}$. Throughout the paper, we denote $t_0 = 0$ and $t_{N+1}=T$ when $N(\mathcal X) = N$.  Addtionally, we set $p_1(t,\bm s|\mathcal H_0) = p_1(t,\bm s)$. 

A Janossy density can be explicitly expressed in terms of the conditional intensity. Therefore, using $\lambda_n(t, \bm s|\mathcal H_{n-1})$, we obtain an alternative expression of the likelihood as: 
\begin{equation}\label{eq:time_space_likelihood}
    L(\bm x_1,\ldots, \bm x_N) = \left[\prod_{n=1}^{N}\lambda_n(t_n,{\bm s}_n|\mathcal H_{n-1})\right]\exp\left(-\sum_{n=1}^{N+1}\int_{t_{n-1}}^{t_{n}}\int _S \lambda_n(t, \bm s|\mathcal H_{n-1}) d\bm s dt \right).
\end{equation}

\paragraph{Notation}
For a multivariate function $f(\bm{x}_1, \bm{x}_2)$, we sometimes need to consider it as a function of only one variable while keeping the other fixed. In such case, we write $f(\cdot, \bm x_2)$ to denote the function of $\bm x_1$ with $\bm x_2$ held fixed. For a function parameterized by $\theta$ in $\Theta$, we denote it as $f_{\theta}(\cdot)$. Unless otherwise stated, any statement regarding $f_{\theta}(\cdot)$ is assumed to hold for all $\theta \in \Theta$.

For the purposes of statistical inference, we usually assume that the data generating process is governed by a true parameter \( \theta^* \in \Theta\subset \mathbb R^r\) and a parameterized model, either \( j_\theta^*(\bm{X}) \) or \( p_{n,\theta^*}(t_n, \bm s_n | \mathcal{H}_{n-1}) \). To simplify notation, we omit the explicit dependence on the true parameter and write \( j(\bm{X}) \) and \( p_n(t_n, \bm s_n | \mathcal{H}_{n-1}) \) to denote the actual data generating process. Every expectation in this paper is always taken w.r.t. the true data generating probability.

Throughout the paper, we write $a.s.$ as almost everywhere w.r.t. $\mathbb P$, which is the data generating measure. We write $a.e.$ to represent almost everywhere w.r.t. the Lebesgue measure. $\|\cdot \|$ is used to represent Euclidean distance and $\mathbb L^p(\mathbb P)$ to represent $L^p$ space under base measure $\mathbb P$.

\subsection{Score Matching and Autoregressive Score Matching}

MLE is a classical estimator that minimizes Kullback–Leibler (KL) divergence between a model distribution and the data distribution. However, a major drawback is the intractability of the normalizing constant, whose approximation via numerical integration can be computationally expensive.
% particularly with higher input dimensions. 
In contrast, SM~\citep{Hyvarinen05} offers an alternative by minimizing Fisher divergence between model and data distributions: 
\begin{equation*}
    \mathcal{L}^{\text{SM}}(\theta)=\frac{1}{2}\mathbb{E}\Vert\nabla_{\bm{x}}\log p(\bm{x})-\nabla_{\bm{x}}\log p_{\theta}(\bm{x})\Vert^2, 
    \label{esm}
\end{equation*}
where $p(\bm{x})$ represents the data distribution, $p_{\theta}(\bm{x})$ is the parameterized model distribution, the gradient of the log-density is called the score. Minimizing the Fisher divergence above provides the parameter estimate. 
The advantage of SM lies in its ability to bypass the computation of the normalizing constant since the score no longer contains this constant: $p_{\theta}(\bm{x})=\frac{1}{Z(\theta)}\tilde{p}_{\theta}(\bm{x})$ where $Z(\theta)=\int \tilde{p}_{\theta}(\bm{x})d\bm{x}$, $\nabla_{\bm{x}}\log p_{\theta}(\bm{x})=\nabla_{\bm{x}}\log\tilde{p}_{\theta}(\bm{x})$. 
Under certain conditions, we can use integration by parts to replace the explicit SM objective, which involves an unknown distribution $p(\bm{x})$, with an equivalent implicit one: 
\begin{equation}
\label{eq:ISM objective}
\mathcal{J}^{\text{SM}}(\theta)=\mathbb{E}\left[\frac{1}{2}\Vert\nabla_{\bm{x}}\log p_{\theta}(\bm{x})\Vert^2+\text{Tr}\left(\nabla^2_{\bm{x}}\log p_{\theta}(\bm{x})\right)\right]. 
\end{equation}
% where $\text{Tr}(\cdot)$ means the trace of a matrix. 
% In this form, it becomes a practical and usable objective function. 

% \subsection{Autoregressive Score Matching}
% The original score matching method encounters efficiency challenges due to the computational cost associated with computing $\text{Tr}\left(\nabla^2_{\bm{x}}\log p_{\theta}(\bm{x})\right)$, especially when dealing with high-dimensional data.
% To address these efficiency concerns, several alternative methods have been proposed. These include denoising score matching \citep{Vincent11}, sliced score matching \citep{song2020sliced}, and the focus of our discussion, autoregressive score matching (ASM) \citep{meng2020autoregressive}. 
An autoregressive model defines a probability density $p(\bm{x})$ as a product of conditionals using the chain rule: $p(\bm{x})=\prod_{n=1}^N p(x_n|\bm{x}_{<n})$, 
    % \ \ \ \
    % p_{\theta}(\bm{x})=\prod_{n=1}^N p_{\theta}(x_n|\bm{x}_{<n}), 
% \end{equation*}
where $x_n$ is the $n$-th entry and $\bm{x}_{<n}$ denotes the entries with indices smaller than $n$. 
The original SM is not suitable for autoregressive models because the autoregressive structure introduces challenges in gradient computation in \cref{eq:ISM objective}. 
To address this issue, \cite{meng2020autoregressive} proposed autoregressive score matching (ASM). Unlike SM, which minimizes the Fisher divergence between the joint distributions of the model $p_{\theta}(\bm{x})$ and the data $p(\bm{x})$, ASM minimizes the Fisher divergence between the conditionals of the model $p_{\theta}(x_n|\bm{x}_{<n})$ and the data $p(x_n|\bm{x}_{<n})$: 
% The central idea behind ASM is to factorize the joint data (model) distribution into conditional distributions in an autoregressive fashion: 
% \begin{equation}
%     p(\bm{x})=\prod_{n=1}^N p(x_n|\bm{x}_{<n}) \ \ \ \
%     p_{\theta}(\bm{x})=\prod_{n=1}^N p_{\theta}(x_n|\bm{x}_{<n}), 
% \end{equation}
% where $x_n$ represents the $n$-th entry of $\bm{x}$, and $\bm{x}{<n}$ denotes the entries with indices smaller than $n$. 
% Subsequently, instead of directly matching joint distributions, ASM focuses on matching the conditionals of the model $p_{\theta}(x_n|\bm{x}{<n})$ to those of the data $p(x_n|\bm{x}{<n})$ using the one-dimensional score matching: 
\begin{equation*}
    \mathcal{L}^{\text{ASM}}(\theta)=\frac{1}{2}\sum_{n=1}^N\mathbb{E}\left(\frac{\partial\log p(x_n|\bm{x}_{<n})}{\partial x_n}-\frac{\partial\log p_{\theta}(x_n|\bm{x}_{<n})}{\partial x_n}\right)^2. 
\end{equation*}
% As demonstrated by \cite{meng2020autoregressive}, inspired by composite scoring rules~\citep{dawid2014theory}, this objective serves as a consistent estimator. 
Similarly, the above explicit ASM objective involves an unknown distribution $p(x_n|\bm{x}_{<n})$. Under specific regularity conditions, we can apply integration by parts to derive an implicit ASM objective: 
\begin{equation}
    \mathcal{J}^{\text{ASM}}(\theta)=\sum_{n=1}^N \mathbb{E}\left[\frac{1}{2}\left(\frac{\partial\log p_{\theta}(x_n|\bm{x}_{<n})}{\partial x_n}\right)^2+\frac{\partial^2 \log p_{\theta}(x_n|\bm{x}_{<n})}{\partial x_n^2}\right]. 
\label{asm}
\end{equation}

\section{Score Matching for Point Process}
\label{method}
In this section, we discuss score matching for regular finite point processes when Jannossy density has closed forms. 
The first motivation example is a simple inhomogeneous Poisson process on $(0,T)$, and we show why the original score matching loss fails to give a consistent estimator for the process. Then, we propose a provably consistent weighted score matching estimator for more general domain such as $V \subset \mathbb R^d$.

\subsection{Failure of Score Matching for Poisson Process}\label{sec:SM_fail}

% Consider a Poisson process $\mathcal{T}$ on $[0, T]$ with intensity function $\lambda(t)$. For the janossy density $j_n(\bm x_1, \ldots, \bm x_n)$, define 
% \begin{equation*}
%     \bm \psi
% \end{equation*}

Now we denote the temporal Poisson process as $\mathcal T\in(0,T)^{\cup}$. For a realization $\bm t = (t_1, \ldots, t_N)$ and the associated Janossy density $j_N(\bm t)$, denote the score function as $\psi_n(\bm t) := \partial_{t_n}\log j_N(\bm t)$. Through computation, we see $\psi_n(\bm t)=\frac{\partial}{\partial t_n}\log \lambda(t_n)$.
% The parameterized intensity is $\lambda_{\theta}(t)$ and the associated score is $\bm \psi_{\theta}(\bm t)$. 
The associated model score would be $\bm \psi_{\theta}(\bm t)=(\psi_{1,\theta}(\bm t),\ldots, \psi_{N,\theta}(\bm t))$.

Previous works \citep{SahaniBM16,zhang2023integration} have both attempted to apply SM to the Poisson process: 
\begin{equation}
    \mathcal{L}^{\text{SM}}(\theta)=\frac{1}{2}\mathbb E\left[\sum_{n=1}^{N(\mathcal T)}\left(\psi_n(\mathcal T)-\psi_{n,\theta}(\mathcal T)\right)^2\right]. 
    \label{poisson_l_sm}
\end{equation}
Desirably, we hope this explicit score matching loss is equivalent to the implicit score matching loss defined as: 
% In order for SM to be practical, \cite{SahaniBM16,zhang2023integration} assumed that specific regularity conditions are satisfied. Therefore, they employed an implicit SM objective similar to \cref{eq:ISM objective}: 
\begin{equation}
    \mathcal{J}^{\text{SM}}(\theta)=\mathbb{E}\left[\sum_{n=1}^{N(\mathcal T)}\frac{1}{2}\psi^2_{n,\theta}(\mathcal T)+\partial_{t_n} \psi_{n,\theta}(\mathcal T)\right]. 
\label{eq:poisson_j_sm}
\end{equation}

Different from the situation in score matching for random variables in $\mathbb R^d$, the equivalence does not hold between explicit SM and implicit SM for most Poisson processes. Thus optimizing \cref{eq:poisson_j_sm} leads to a wrong estimate. Its failure is due to the fact that the specific regularity conditions cannot be satisfied for Poisson processes. 

Such conditions require the probability density of the random variable to be zero when it approaches infinity in any of its dimensions. However, for temporal Poisson process, such requirement is not satisfied, because the Janossy density is not zero at the boundary, and this comes from the nature that the temporal Poisson process $\mathcal T = (t_1,\ldots, t_{N(\mathcal T)})$ is not of fixed dimension and takes values in  $(0, T)^{\cup}$. Therefore, for general Poisson processes, we cannot derive the implicit SM in \cref{eq:poisson_j_sm} based on the explicit SM in \cref{poisson_l_sm}. 
We demonstrate this with an example below. 
\begin{example}
    Consider an inhomogeneous Poisson process with intensity $\lambda(t)=\rho t^{\rho-1}$ with $\rho$ being the ground-truth parameter. 
    The observation is $\{t_1^{(i)},\ldots, t_{N_i}^{(i)}\}_{i=1}^m$ where $m$ is the number of sequences. 
    We assume the model intensity is $\lambda_{\theta}(t)=\theta t^{\theta - 1}$, so we can compute the score as: $\psi_{n,\theta}(\bm t) = \frac{\theta - 1}{t_n}, \ \ \ \partial_{t_n} \psi_{n,\theta}(\bm t) = \frac{1- \theta}{t_n^2}$. 
    % \begin{equation*}
    %     \psi_{\theta}(t_n) = \frac{\theta - 1}{t_n}, \ \ \ \frac{\partial \psi_{\theta}(t_n)}{\partial t_n} = \frac{1- \theta}{t_n^2}, 
    % \end{equation*}
    Substituting the above into \cref{eq:poisson_j_sm}, we obtain: 
    \begin{equation*}
        \hat{\mathcal J}^{\textnormal{SM}}(\theta) = \frac{1}{2}(\theta-1)(\theta-3)\sum_{i=1}^m\sum_{n=1}^{N_i}\frac{1}{(t_n^{(i)})^2}. 
    \end{equation*}
    Minimizing such an objective results in $\hat \theta = 2$. This estimate is even independent of the observations, which is clearly an erroneous estimate. 
    % This implies that, regardless of the observation, the estimated intensity function is always the same. 
    % In fact, in this example, the sum of the last two terms in \cref{eq:nuisance term in sm} is not zero and contains the parameter $\theta$, which explains the failure of SM. 
\end{example}

\subsection{Weighted Score Matching}\label{sec:wsm}
To address the situation where SM fails, inspired by \cite{hyvarinen2007some,yu2019generalized, liu2022estimating}, we introduce the weighted score matching (WSM) for regular finite point processes. The core idea of WSM is to use a weight function that takes zero at the boundary of the integration region to make all terms associated with data density disappear in the final population loss.
To proceed, we sometimes require a more general notion of integration by parts in a multivariate space and a measure-theoretic manner. Thus, we introduce the following definition, which allows integration by parts to be performed in an almost everywhere sense.
\begin{definition}\textnormal{(Lipschitz Domain)}
    Let $V$ be an open and bounded domain in $\mathbb R^d$. We say $V$ is a Lipschitz domain if for any $\bm x\in \partial V$, there exists an open ball $B(\bm x, r)$  and a Lipschitz function $f(x_1,\ldots, x_{d-1})$ such that $V\cap B(\bm x, r)$  is expressed by $\{\bm z\in B(\bm x,r) | z_d>f(z_1,\ldots, z_{d-1})\}$, upon a transformation of the coordinate system if necessary. 
\end{definition}
More information about Lipschitz domain can be found in Section 7 of \citet{atkinson2005theoretical}. In real applications, most domains are Lipschitz and all polytopes are Lipschitz domains.

\begin{definition}\textnormal{(Sobolev Space)}
    Let $L^2(V)$ be the $L^2$ space on $V\subset \mathbb R^d$ endowed with the Lebesgue measure. Then, $H^1(V)$ is the Sobolev space defined by: 
    \begin{equation*}
        H^1(V):=\{f\in L^2(V) \mid \|f\|^2_{L^2(V)}+\sum_i\|D_i f\|^2_{L^2(V)}<\infty\}, 
    \end{equation*}
    where $D_i$ is the weak derivative corresponding to $\frac{\partial}{\partial x_i}$ and $\|f\|_{L^2(V)}=\sqrt {\int_V |f(\bm x)|^2d\bm x}$. 
\end{definition}
The notion of weak derivative is necessary here because later we will have  weight function that does not have derivative everywhere. And for differentiable function, the weak derivative is the same as derivative.

Now we generalize the temporal Poisson process to an open and bounded Lipschitz domain $V\subset \mathbb R^d$ and consider a regular finite point process on it. Suppose the process $\mathcal X$ has the ground-truth Janossy density denoted as $j(\bm X)$. To define the score function, we assume $j_N(\cdot), j_{N,\bm\theta}(\cdot)$ are continuously differentiable on $V^{(N)}$ and define the true score function as: 
\begin{equation*}
    \bm \psi_n: V^{\cup} \rightarrow \mathbb R^d, \quad
    \bm \psi_{n}(\bm X) := \begin{cases}
        \nabla_{\bm x_n}\log j_{N}(\bm X), &\text{ if } \bm X\in V^{(N)}, N\geq n,\\
        0, &\text{  if } N(\bm X) < n. 
    \end{cases}
\end{equation*}
We make the similar definition for model score $\bm \psi_{n,\theta}(\bm X)$. To ensure the integration by parts, we make the following assumptions on the model score. 

\begin{assumption}
\textnormal{(Regularity of Score Function)}
\label{assum:wsm_score_regularity}

\begin{itemize}
    \item (A1) $\sum_{n=1}^{N(\bm X)}\|\bm \psi_{n,\theta}(\bm X)\|^4$, $\sum_{n=1}^{N(\bm X)}(\bm \psi_{n}(\bm X)\cdot \bm \psi_{n,\theta}(\bm X))^2 \in L^1(\mathbb P)$; 
    % \item (A2) $\forall 1\leq n\leq N$, on $V^{(N)}$, for a.e. $\bm{X}_{N,-n} \in V^{(N-1)}$, $\bm{\psi}_{n,\theta}(\cdot, \bm{X}_{N,-n}), j_N(\cdot, \bm X_{N,-n}) \in H^1(V)$. 
    \item (A2) For every $N\ge 1$ and every $1\le n\le N$, for a.e. 
$\bm X_{N,-n}\in V^{(N-1)}$, the maps
$\bm\psi_{n,\theta}(\cdot,\bm X_{N,-n})$ 
and $j_N(\cdot,\bm X_{N,-n})$ belong to $H^1(V)$.
\end{itemize}
\end{assumption}
The above assumptions are necessary for the existence of the population objective function, and also for the divergence theorem. They are regularity conditions that most elementary functions and finite process will satisfy.
Now we define a non-negative weight function $h(\bm x): V \rightarrow \mathbb{R}_{\geq0}$. For simplicity, we will only consider weight functions that can be continuously extended to the closure of $V$.  We require them to satisfy the following conditions. 
\begin{assumption}
\label{assum:wsm_regularity_of_weight}
\textnormal{(Requirements of Weight Function)}
\begin{itemize}
    \item (A1) $h \in H^1(V)$; 
    \item (A2) $h>0$ a.e. in $V$, and its continuous extension satisfies
$h(\bm x)=0$ for all $\bm x\in\partial V$;
    \item (A3) $\sum_{n=1}^{N(\bm X)}h^2(\bm x_n) \in L^1(\mathbb P)$. 
\end{itemize}
\end{assumption}
Conditions (A1) and (A2) ensure that the divergence theorem can be applied to eliminate the intractable surface integration. (A1) combined with (A3) and a Holder's inequality guarantees the existence of population loss.
Typically, (A1) serves as a regularity condition that most functions satisfy. As for (A2), it is generally satisfied by functions that take the value 0 on $\partial V$. For (A3), a sufficient condition is that $\mathbb E[N(\mathcal X)]<\infty$ and $h$ is bounded. 

With a valid weight function $h$, the explicit WSM objective can be defined as: 
\begin{equation}
\begin{aligned}
        &\mathcal L^{\textnormal{WSM}}_{h}(\theta) = \frac{1}{2}\mathbb E\left[\sum_{n=1}^{N(\mathcal X)}\|\bm \psi_{n}(\mathcal X)-\bm \psi_{n,\theta}( \mathcal X)\|^2 h(\bm x_n)\right]. 
\end{aligned}
\end{equation}
The introduction of the weight function allows control over the values of the integrand at the boundaries of the integration domain, thereby eliminating the intractable integration on the surface of the domain. The next theorem shows that minimizing this explicit loss does recover the ground-truth density in a parametric setting. 

\begin{theorem}\label{thm:unique_wsm}
    Suppose $\psi_{n, \theta_1}(\bm  X)=\psi_{n,\theta_2}(\bm X) \textnormal{ a.s. for }\forall n\in \mathbb N_+$ gives $\theta_1=\theta_2$, then the unique minimizer of $\mathcal L^{\textnormal{WSM}}_h(\theta)$ is $\theta^*$.
\end{theorem}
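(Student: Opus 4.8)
The plan is to exploit the fact that $\mathcal L^{\textnormal{WSM}}_h$ is a non-negative functional that vanishes precisely at parameters whose scores agree $\mathbb P$-a.s.\ with the ground truth, and then to invoke the stated identifiability hypothesis. First I would observe that, since $h\ge 0$ and each summand $\|\bm\psi_n(\mathcal X)-\bm\psi_{n,\theta}(\mathcal X)\|^2 h(\bm x_n)$ is non-negative, we have $\mathcal L^{\textnormal{WSM}}_h(\theta)\ge 0$ for every $\theta$. Moreover, because the data-generating process corresponds to $\theta^*$, the true score coincides with the model score at $\theta^*$, i.e.\ $\bm\psi_{n,\theta^*}=\bm\psi_n$, so $\mathcal L^{\textnormal{WSM}}_h(\theta^*)=0$. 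Hence $\theta^*$ is a global minimizer attaining the infimal value $0$, and the real content of the theorem is \emph{uniqueness}.

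For uniqueness, suppose $\theta_0$ is any minimizer, so $\mathcal L^{\textnormal{WSM}}_h(\theta_0)=0$. Since the expectation of a non-negative integrand is zero, the integrand vanishes $\mathbb P$-a.s.; as the sum over $n$ consists of non-negative terms, each term vanishes $\mathbb P$-a.s., i.e.\ for every $n$ one has $\|\bm\psi_n(\mathcal X)-\bm\psi_{n,\theta_0}(\mathcal X)\|^2 h(\bm x_n)=0$ a.s., and a countable intersection of a.s.\ events remains a.s. On the event $\{N(\mathcal X)<n\}$ both scores are defined to equal $0$, so the equality $\bm\psi_n=\bm\psi_{n,\theta_0}$ is automatic there; it therefore remains to handle $\{N(\mathcal X)\ge n\}$.

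The crucial step is to strip off the weight $h(\bm x_n)$. By condition (A2) of \cref{assum:wsm_regularity_of_weight}, $h>0$ a.e.\ with respect to Lebesgue measure on $V$, so the set $Z:=\{\bm x\in V : h(\bm x)=0\}$ is Lebesgue-null. Because the process is regular, $\mathbb P$ restricted to $V^{(N)}$ admits the density $\tfrac{1}{N!}j_N$ with respect to Lebesgue measure (cf.\ \cref{def:data_generating_measure}); integrating out the remaining coordinates shows the law of the single coordinate $\bm x_n$ under $\mathbb P$, on $\{N(\mathcal X)\ge n\}$, is absolutely continuous, whence $\mathbb P\big(\bm x_n\in Z,\,N(\mathcal X)\ge n\big)=0$. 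Thus $h(\bm x_n)>0$ holds $\mathbb P$-a.s.\ on $\{N(\mathcal X)\ge n\}$, and the vanishing of the product forces $\|\bm\psi_n(\mathcal X)-\bm\psi_{n,\theta_0}(\mathcal X)\|=0$ a.s.\ there. Combining the two cases gives $\bm\psi_{n,\theta_0}(\mathcal X)=\bm\psi_n(\mathcal X)=\bm\psi_{n,\theta^*}(\mathcal X)$ a.s.\ for all $n\in\mathbb N_+$, and the hypothesis of the theorem then yields $\theta_0=\theta^*$, establishing uniqueness.

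I expect the measure-theoretic passage from ``$h(\bm x_n)=0$ only on a Lebesgue-null set'' to ``$h(\bm x_n)>0$ holds $\mathbb P$-a.s.'' to be the main obstacle, since this is exactly where the finite-point-process structure is essential: one must use the absolute continuity supplied by the Janossy density (regularity) to argue that the projection of $\mathbb P$ onto the coordinate $\bm x_n$ charges no Lebesgue-null set, and to verify this coherently across the countably many values $N\ge n$ that contribute to $\{N(\mathcal X)\ge n\}$.
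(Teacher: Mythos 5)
Your proposal is correct and follows essentially the same route as the paper's proof: expand the loss as an expectation of a non-negative integrand, conclude from $\mathcal L^{\textnormal{WSM}}_h(\theta_0)=0$ and $h>0$ a.e.\ that the scores agree $\mathbb P$-a.s., and invoke the identifiability hypothesis. The only difference is that you spell out the measure-theoretic step (using regularity of the process, i.e.\ absolute continuity of the law of $\bm x_n$ via the Janossy density, to pass from ``$h=0$ on a Lebesgue-null set'' to ``$h(\bm x_n)>0$ $\mathbb P$-a.s.''), which the paper asserts without elaboration.
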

Unlike score matching for random variables in $\mathbb{R}^d$, where identifiability of the parametric family ensures that the minimizer of the score matching loss gives the true parameter, this is not sufficient for estimating point processes, as we will illustrate in \cref{section:unidentifiability}. Therefore, we require that the parameter is recovered whenever the score is matched. This condition holds for point process models that are (implicitly) specified through the score. For those  processes that do not meet the assumption, we provide a remedy in \cref{sec:WSM_remedy}.

The explicit WSM objective is not practical as it depends on the unknown data density $j(\bm X)$, so we further derive the implicit WSM objective which is tractable. 
\begin{theorem}\label{thm:equivalence between EWSM and IWSM}
Suppose \cref{assum:wsm_score_regularity,assum:wsm_regularity_of_weight} are satisfied and $V$ is a Lipschitz domain, then we have: 
    \begin{equation*}
        \mathcal L^{\textnormal{WSM}}_{h}(\theta)=\mathcal J^{\textnormal{WSM}}_{h}(\theta) + \textnormal{const},
    \end{equation*}
    % where 
    \begin{equation}
    \begin{aligned}
                &\mathcal J^{\textnormal{WSM}}_h(\theta)=\mathbb E[q_{\theta,h}^{\textnormal{WSM}}(\mathcal X)]\\&=\mathbb E\left\{\sum_{n=1}^{N(\mathcal X)} \left[\left(\frac{1}{2}\|\bm \psi_{n,\theta}(\mathcal X)\|^2+\textnormal{Tr}\big(\nabla_{\bm x_n} \bm \psi_{n, \theta}(\mathcal X)\big)\right)h(\bm x_n)+\bm \psi_{n,\theta}(\mathcal X)\cdot \nabla h(\bm x_n) \right]\right\}. 
    \end{aligned}
    \label{poisson_j_wsm}
    \end{equation}
\end{theorem}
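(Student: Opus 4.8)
The plan is to expand the squared norm in $\mathcal L^{\textnormal{WSM}}_h$, isolate the pieces that depend on $\theta$, and convert the single surviving term that still involves the unknown data score into a tractable form by integrating by parts against the Janossy density. Writing the expectation under $\mathbb P$ as
\begin{equation*}
\mathbb E[F(\mathcal X)] = \sum_{N\ge 0}\frac{1}{N!}\int_{V^{(N)}} F(\bm X_N)\, j_N(\bm X_N)\, d\bm X_N,
\end{equation*}
the expansion $\|\bm\psi_n-\bm\psi_{n,\theta}\|^2 = \|\bm\psi_n\|^2 - 2\,\bm\psi_n\cdot\bm\psi_{n,\theta}+\|\bm\psi_{n,\theta}\|^2$ produces three groups of terms. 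The term $\tfrac12\mathbb E[\sum_n\|\bm\psi_n\|^2 h(\bm x_n)]$ is independent of $\theta$ and is absorbed into the constant, while $\tfrac12\mathbb E[\sum_n\|\bm\psi_{n,\theta}\|^2 h(\bm x_n)]$ already appears verbatim in $\mathcal J^{\textnormal{WSM}}_h$. So the entire content reduces to showing that the cross term $-\mathbb E[\sum_n \bm\psi_n\cdot\bm\psi_{n,\theta}\,h(\bm x_n)]$ equals $\mathbb E[\sum_n(\textnormal{Tr}(\nabla_{\bm x_n}\bm\psi_{n,\theta})\,h(\bm x_n)+\bm\psi_{n,\theta}\cdot\nabla h(\bm x_n))]$.

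First I would establish that every expectation above is finite, so that the splitting is legitimate and the constant is well defined: this follows from \cref{assum:wsm_score_regularity}(A1) and \cref{assum:wsm_regularity_of_weight}(A3) together with the Cauchy--Schwarz/H\"older inequality, exactly as flagged after the weight assumptions. This finiteness also licenses Fubini, so that for each fixed $N$ and $n$ I may integrate first in $\bm x_n$ with the remaining coordinates $\bm X_{N,-n}$ held fixed. The crucial step is then the integration by parts. Using the pointwise identity $\bm\psi_n(\bm X_N)\,j_N(\bm X_N)=\nabla_{\bm x_n} j_N(\bm X_N)$, the inner integrand of the cross term becomes $\nabla_{\bm x_n} j_N\cdot(\bm\psi_{n,\theta}h)$. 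I would apply the divergence theorem on the Lipschitz domain $V$ to the field $\bm F:=j_N\,\bm\psi_{n,\theta}\,h$ (in $\bm x_n$, other coordinates fixed), via the product rules $\textnormal{div}_{\bm x_n}\bm F=\nabla_{\bm x_n}j_N\cdot(\bm\psi_{n,\theta}h)+j_N\,\textnormal{div}_{\bm x_n}(\bm\psi_{n,\theta}h)$ and $\textnormal{div}_{\bm x_n}(\bm\psi_{n,\theta}h)=h\,\textnormal{Tr}(\nabla_{\bm x_n}\bm\psi_{n,\theta})+\bm\psi_{n,\theta}\cdot\nabla h$.

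The $H^1$ regularity in \cref{assum:wsm_score_regularity}(A2) and \cref{assum:wsm_regularity_of_weight}(A1), combined with the Lipschitz-domain hypothesis, is what makes the Gauss--Green formula valid in the a.e./weak-derivative sense. The surface integral $\int_{\partial V} (j_N\,\bm\psi_{n,\theta}\,h)\cdot\bm\nu\,dS$ vanishes because the trace of $h$ on $\partial V$ is zero by \cref{assum:wsm_regularity_of_weight}(A2): the continuous extension of $h$ vanishing on $\partial V$ forces its $H^1$-trace to vanish. What remains is exactly $-\int_V j_N\,(h\,\textnormal{Tr}(\nabla_{\bm x_n}\bm\psi_{n,\theta})+\bm\psi_{n,\theta}\cdot\nabla h)\,d\bm x_n$, whose overall sign flips the cross term into the desired trace-plus-gradient expression. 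Summing over $n$, weighting by $1/N!$, summing over $N$, and re-identifying the result as an expectation under $\mathbb P$ recovers $\mathcal J^{\textnormal{WSM}}_h(\theta)$.

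The main obstacle I anticipate is the rigorous justification of the boundary-term cancellation in this Sobolev setting: one must argue that $\bm F=j_N\,\bm\psi_{n,\theta}\,h$ is regular enough (e.g.\ $W^{1,1}$ on $V$) for the Gauss--Green identity to apply and that its boundary trace factors through the individual traces, so that $h|_{\partial V}=0$ genuinely kills the surface integral rather than merely doing so formally. The integrability bookkeeping from \cref{assum:wsm_score_regularity}(A1) and \cref{assum:wsm_regularity_of_weight}(A3) is precisely what controls these products and keeps each intermediate integral finite; cleanly reconciling the a.e.\ pointwise algebra with the trace theorem is the delicate part, whereas the product-rule expansion and the summation/reassembly steps are routine.
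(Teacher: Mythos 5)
Your proposal follows essentially the same route as the paper's proof: establish integrability of the cross term via Cauchy--Schwarz from \cref{assum:wsm_score_regularity}(A1) and \cref{assum:wsm_regularity_of_weight}(A3), apply Fubini to integrate in $\bm x_n$ first, use the identity $j_N\bm\psi_n=\nabla_{\bm x_n}j_N$, and invoke the divergence theorem on the Lipschitz domain with the boundary term killed by $h$ vanishing on $\partial V$. The Sobolev/trace subtleties you flag are handled in the paper at the same level of detail via its stated $H^1$ divergence lemma, so there is no substantive difference.
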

For regular finite point processes, \cref{poisson_j_wsm} is always a valid objective function with a suitable weight function. 
One thing we want to stress here is that this equivalence refers to the equality of two expectations. 
In reality, we use sample average to approximate $\mathcal J^{\textnormal{WSM}}_h(\theta)$. 
To ensure the suitability of this approximation, we consider a multiple-trajectory scenario in which we collect many i.i.d. realizations of a point process and rely on the law of large numbers. 

Now we give some parametric point processes and show how to compute their score functions or the associated loss functions.
\begin{example}
    \textnormal{(Inhomogeneous Poisson Process)} 
    Let $\lambda(\bm x): \mathbb R^d \rightarrow \mathbb R^+$ be a intensity function. The Poisson process on $V \subset \mathbb R^d$ is completely specified by the intensity function with Janossy density: 
    \begin{equation*}
        j_N(\bm X_N) = \prod_{n=1}^N \lambda(\bm x_n)\exp\left(-\int _{V}\lambda(\bm x)d\bm x \right). 
    \end{equation*}
    We parameterize the model intensity as $\lambda_{\theta}(\bm x)$. The model score function w.r.t. the $n$-th point is $\bm \psi_{n,\theta}(\bm X)=\nabla_{\bm x_n}\log \lambda_{\theta}(\bm x_n)$.  
\end{example}

% \begin{example}
%     \textnormal{(Markov Point Process).} A finite Markov point process can be defined through Janossy density as: 
%     \begin{equation*}
%         j_N(\bm X_N) =\frac{\exp(-|V|)}{N!} \prod_{\bm Y \subset \bm X_N}\phi(\bm Y), 
%     \end{equation*}
%     where $\phi:\mathbb R^\cup \rightarrow \mathbb R^+$ is a interaction function which satisfies: 
%     \begin{equation*}
%         \phi(\bm Y) = 1, \text{if}\ \exists \ \xi \neq \eta \in \bm Y, \|\xi-\eta\|>R, 
%     \end{equation*}
%     for a pre-defined non-negative constant $R$. And $\phi(\emptyset)$ is the normalizing factor that satisfies: 
%     \begin{equation*}
%         \frac{1}{\phi(\emptyset)} = \exp(-|V|)\sum_{N\geq 1} \frac{1}{N!}\int_{V^{(N)}} \prod_{\bm Y\subset \bm X_N, \bm Y\neq \emptyset}\phi(\bm Y) d\bm X_N. 
%     \end{equation*}
%     For Markov point processes, we usually parameterize the interaction function by $\phi_{\theta}(\cdot)$. The model score function w.r.t. the $n$-th point will be: 
%     \begin{equation*}
%         \nabla_{\bm x_n} j(\bm X) = \sum_{\bm y\subset \textnormal{Cli}(\bm x_n)} \nabla_{\bm x_n}\log \phi_{\theta}(\bm x_n \cup {\bm Y}). 
%     \end{equation*}
%     Therefore, using the WSM loss function, we avoid computing the normalizing constant $\phi_{\theta}(\emptyset)$, which is usually intractable.
% \end{example}

\begin{example}
    \textnormal{(Exponential Family Point Process)}\label{ex:exponential}
    A class of point process has Janossy density with the following form,
    \begin{equation*}
        \log j_{\theta}(\bm X) = \theta^\top T(\bm X)-c(\theta)+b(\bm X),
    \end{equation*}
    where $T(\bm X)\in \mathbb R^p$. $T(\bm X)$ and $b(\bm X)$ are symmetric w.r.t. every point in $\bm X$. This is usually referred to as the exponential family.  In the case where $T(\bm X)$ and $b(\bm X)$ are differentiable (restricted on $V^{(N)}$), similar to \citet{yu2019generalized}, the empirical objective has a closed-form quadratic form.

\begin{align*}
    \hat{\mathcal J}_{h}^{\textnormal{WSM}}(\theta)&=\frac{1}{2}\theta^\top \hat {\bm \Gamma}\theta - \hat {\bm g}^\top\theta+\textnormal{const}\\
    \hat {\bm \Gamma}&:=\frac{1}{m}\sum_{i=1}^m\sum_{n=1}^{N_i}h(\bm x_n^{(i)})\bm S_n^{(i)}{\bm S_n^{(i)\top}}\\
    \bm {\hat g}&:= \frac{1}{m}\sum_{i=1}^m\sum_{n=1}^{N_i}\Big[h(\bm x_n^{(i)})\bm S_n^{(i)}\bm b_n^{(i)}+\textnormal{Trs}(\bm H_n^{(i)})h(\bm x_n^{(i)})+\bm S_n^{(i)}\nabla_{\bm x_n} h(\bm x_n^{(i)})\Big],
\end{align*}
where $m$ is the number of trajectories and $N_i$ is the length of i-th trajectory, and
\begin{align*}
    &\bm S_n^{(i)}:=\nabla_{\bm x_n^{(i)}}T(\bm X^{(i)})\in \mathbb R^{p\times d} , \bm b_n^{(i)}:=\nabla_{\bm x_n^{(i)}}b(\bm X^{(i)})\in \mathbb R^d,\\
    &\bm H_n^{(i)}:=\nabla^2_{\bm x_n^{(i)}}T(\bm X^{(i)})\in \mathbb R^{p\times d\times d}, \textnormal{Trs}(\bm H_n^{(i)}):=\sum_{j=1}^{d}\bm H_n^{(i)}[:,j,j]\in \mathbb R^p.
\end{align*}
\end{example}

\section{Autoregressive Score Matching for Point Process}
\label{method2}
Now we consider spatio-temporal point process whose conditional density class or conditional intensity are of closed form and the Janossy density is not.  In this case, if we directly use (weighted) score matching, we would still end up with a loss function with integral. For example, directly taking score to \cref{eq:time_space_likelihood} gives 
\begin{align*}
    \nabla_{\bm x_n}\log L(\bm x_1,\ldots, \bm x_N) &=  \sum_{i=n}^N\Big[\nabla_{\bm x_n}\log\lambda_i(t_i, \bm s_i|\mathcal H_{i-1})\Big] - \nabla_{\bm x_n}\int_{t_{n-1}}^{t_n}\int_S \lambda_n(t,\bm s|\mathcal H_{n-1})dtd\bm s \\&- \sum_{i=n+1}^{N+1}\int_{t_{i-1}}^{t_i}\int_S \nabla_{\bm x_n}\lambda_i(t,\bm s|\mathcal H_{i-1})dtd\bm s.
\end{align*}
We can see that they still contain integral, and thus the methods from previous discussion is not suitable here. This can be resolved by performing ASM conditioning on the history. 

In this section, we first introduce the existing ASM loss for temporal Hawkes processes and demonstrate its inconsistency in parameter estimation. 
Subsequently, we propose a provably consistent autoregressive WSM (AWSM) estimator for regular finite spatio-temporal point process specified by conditional intensity.

\subsection{Failure of Autoregressive Score Matching for Hawkes Process}\label{sec:ASM_fail}
Consider a temporal Hawkes process with ordered timestamps $(t_1,\ldots, t_{N(\mathcal X)})$ on $(0,T)$ with the underlying conditional density of $t_n$ denoted as $p_n(t_n|\mathcal H_{n-1})$. The parameterized conditional probability density model of $t_n$ is $p_{n, \theta}(t_n|\mathcal H_{{n-1}})$. 
We denote the conditional score as: 
\begin{equation}\label{eq:conditional_temporal_intensity}
    \psi_n(t_n|\mathcal H_{{n-1}}):=\partial_{t_n}\log p_n(t_n|\mathcal H_{{n-1}})=\partial_{t_n}\log \lambda_n(t_n|\mathcal H_{{n-1}}) - \lambda_n(t_n|\mathcal H_{{n-1}}). 
\end{equation}
An explicit ASM objective is defined as: 
\begin{equation}\label{eq:smurf loss}
\begin{aligned}
        &\mathcal L^{\text{ASM}}(\theta) =\frac{1}{2}\mathbb E\left[\sum_{n=1}^{N(\mathcal X)}(\psi_n(t_n|\mathcal H_{{n-1}}) - \psi_{n,\theta}(t_n|\mathcal H_{{n-1}}))^2\right].
\end{aligned}
\end{equation}
Similarly, to make ASM practical, \cite{li2023smurf} asserted that such a loss is equivalent to a implicit version of ASM: 
\begin{equation}
\begin{aligned}
       &\mathcal J^{\text{ASM}}(\theta) = \mathbb E\left[\sum_{n=1}^{N(\mathcal X)}\frac{1}{2}\psi^2_{n,\theta}(t_n|\mathcal H_{{n-1}})+\partial_{t_n}\psi_{n,\theta}(t_n|\mathcal{H}_{{n-1}})\right]. 
\end{aligned}
\label{J_asm}
\end{equation}
However, the same issue as in the temporal Poisson process arises here. The regularity conditions required to eliminate the unknown data distribution do not hold. 
Therefore, we cannot derive the implicit ASM in \cref{J_asm} based on the explicit ASM in \cref{eq:smurf loss}.

\subsection{Autoregressive Weighted Score Matching}\label{sec:awsm}

Now we move to the more general spatio-temporal domain and consider the corresponding estimator. Now $V=(0,T)\times S$. We consider the temporal conditional density of $p_n(t_n|\mathcal H_{n-1})$ and spatial conditional density $f_{S,n}(\bm s_n|\mathcal H_{t_n})$. We denote the corresponding score as: 
\begin{align*}
    \psi_{T,n}: D_{n}^T\times S^{(n-1)}\rightarrow \mathbb R,  \quad
    \psi_{T, n}(t_n|\mathcal H_{n-1}) &:= \partial_{t_n}\log p_n(t_n|\mathcal H_{n-1}), 
    \\
    \bm \psi_{S, n}:D_n^T\times S^{(n)} \rightarrow \mathbb R^d, \quad \bm \psi_{S, n}(\bm s_n|\mathcal H_{t_n}) &:= \nabla_{\bm s_n}\log f_n(\bm s_n|\mathcal H_{t_n})
    % , (\bm s_1,\ldots , \bm s_{n-1}, \bm s)\in S^{n}, (t_1,\ldots, t_n)\in D_n^T
    .
\end{align*}
And $\psi_{T,n,\theta}$ and $\bm \psi_{S,n,\theta}$ are defined in the same way. Another thing to notice is that both $\psi_{T,n}$ and $\bm \psi_{S,n}$ can also be interpreted as functions on $V^{\cup}$ since they take ordered points as input. We will write $\psi_{T,n}(\bm X)$ and $\bm \psi_{S,n}(\bm X)$ when we need to discuss it as a function on $V^{\cup}$.

Similar to \cref{assum:wsm_score_regularity}, we make necessary assumptions on those score functions. 
\begin{assumption}
\label{assum:AWSM_score_regularity}
\textnormal{(Regularity of Conditional Score Function)}
\begin{itemize}
    \item (A1) $\sum_{n=1}^{N(\bm X)}\psi^4_{T,n,\theta}(t_n|\mathcal H_{n-1})$, $\sum_{n=1}^{N(\bm X)}\psi^2_{T,n}(t_n|\mathcal H_{n-1})\psi^2_{T,n,\theta}(t_n|\mathcal H_{n-1})$, $\sum_{n=1}^{N(\bm X)}\|\bm \psi_{S,n,\theta}(\bm s_n|\mathcal H_{t_n})\|^4$ and $\sum_{n=1}^{N(\bm X )}[\bm \psi_{S,n,\theta}(\bm s_n|\mathcal H_{t_n})\cdot \bm \psi_{S,n}(\bm s_n|\mathcal H_{t_n})]^2 \in L^1(\mathbb P)$; 
    \item (A2) $\forall n$, for a.e. $\mathcal H_{n-1}\in D_{n-1}^T\times S^{(n-1)} , \psi_{T,n,\theta}(\cdot|\mathcal H_{n-1}) \textnormal{ and } p_n(\cdot|\mathcal H_{n-1})\in H^1((t_{n-1},T))$, 
    $\forall n$, for a.e. $\mathcal H_{t_n}\in D^T_{n}\times S^{(n-1)}$, $\bm \psi_{S,n,\theta}(\cdot|\mathcal H_{t_n})$ and $f_n(\cdot|\mathcal H_{t_n}) \in H^1(S)$. 
\end{itemize}
\end{assumption}

Similarly, instead of directly matching those scores, we also have to add weight functions to make the loss function tractable. We define the temporal weight function as $h_T: D_2^T \to \mathbb{R}_{\geq0}$, and the spatial weight function as $h_S: V \to \mathbb{R}_{\geq0}$. Again, we only consider functions that can be continuously extended to the closure and they must satisfy the following conditions. 
\begin{assumption}\label{assum:awsm_regularity_of_weight}\textnormal{(Requirements of Weight Functions)}
\begin{itemize}
    \item (A1) For a.e. $t_{n-1}\in (0,T)$, $h_T(t_{n-1},\cdot ) \in H^1((t_{n-1},T))$, $h_S\in H^1(S)$; 
    \item (A2) $h_T$, $h_S>0$ a.e. $h_T(t_{n-1},t_n)=0$ when $t_n=t_{n-1}$ or $T$, $h_S(\bm s)=0$ when $\bm s\in \partial S$; 
    \item (A3) $\sum_{n=1}^{N(\bm X)}h_T^2(t_{n-1}, t_n)$, $\sum_{n=1}^{N(\bm X)}h^2_S(\bm s_n)\in L^1(\mathbb P)$. 
\end{itemize}
\end{assumption}

With a valid weight function, the explicit AWSM objective is defined as: 
\begin{equation}
\label{eq:Explicit Autoregressive Weighted Score Matching}
\begin{aligned}
\mathcal L^{\text{AWSM}}_{h_T,h_S}(\theta) = \frac{1}{2}\mathbb E\Bigg\{\sum_{n=1}^{N(\mathcal X)}\Big[&\Big (\psi_{T,n}(t_n|\mathcal H_{{n-1}})-\psi_{T,n,\theta}(t_n|\mathcal H_{n-1})\Big)^2h_{T}(t_{n-1}, t_{n})+\\&\|\bm \psi_{S,n}(\bm s_n|\mathcal H_{t_n})-\bm \psi_{S,n, \theta}(\bm s_n|\mathcal H_{t_n})\|^2h_{S}(\bm s_n)\Big]\Bigg\}. 
\end{aligned}
\end{equation}
% For models explicitly specified by the intensity $\lambda_n(t_n, \bm s_n|\mathcal H_{n-1})$, the temporal and spatial score can be expressed as,
% \begin{align*}
%     \psi_{T,n,\theta}(t_n|\mathcal H_{n-1})&=\partial_{t_n}\log \lambda_n(t_n|\mathcal H_{n-1}) - \lambda_n(t_n|\mathcal H_{n-1}),\\
%     \psi_{S,n,\theta}(\bm s_n|\mathcal H_{t_n})&=\nabla_{\bm s_n}\log \lambda_n(t_n,\bm s_n|\mathcal H_{n-1}),
% \end{align*}
% where $\lambda_n(t_n|\mathcal H_{n-1}) = \int_S \lambda_n(t_n, \bm s|\mathcal H_{n-1})d\bm s$. Therefore, unless there exisits a analytical form for $\lambda_n(t_n|\mathcal H_{n-1})$, 
% \begin{remark}
% \normalfont
%     To compute the loss function without integration, we require explicit form for $\psi_T(t_i|\mathcal H_{i-1};\theta)$ and $\bm \psi_{S}(\bm s_i|\mathcal F_{i-1}, t_i ;\theta)$. This is usually achived by parameterizing model temporal intensity (hazard) $h_i(t_i|\mathcal H_{i-1};\theta)$ defined in \cref{eq:spatio_temporal_intensity}, and modeling the unnormalized conditional density $q_i(\bm s_i|\mathcal H_{i-1},t_i; \theta)$ of $\bm k_i$ through for example an energy model. However, if we parameterize the conditional intensity $\lambda^*(t_i, \bm s_i)$, then the score we defined in this section does not generally have a explicit form expreseed by $\lambda^*$. Notice that $\lambda^*(t_i, \bm s_i) = \lambda^*_T(t_i)f_i(\bm s_i|\mathcal H_{i-1}, t_i)$. Therefore, if the integration of $\lambda^*(t_i, \bm s_i)$ over $S$ has a close form, then AWSM is applicable. 
% \end{remark}
The next theorem shows that under necessary assumptions, explicitly matching the conditional score recovers the true parameter. 
\begin{theorem}\label{thm:unique_awsm}
     Suppose $\psi_{T,n,\theta_1}(\bm X) =  \psi_{T,n,\theta_2}(\bm X)$ and ${\bm \psi}_{S,n,\theta_1}(\bm X) ={\bm \psi}_{S,n,\theta_2}(\bm X)$ a.s. for  $\forall n$ gives $\theta_1=\theta_2$. Then the unique minimizer of $\mathcal L_{h_T,h_S}^{\textnormal{AWSM}}(\theta)$ is  $\theta^*$. 
\end{theorem}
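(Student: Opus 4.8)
The plan is to mirror the argument for \cref{thm:unique_wsm}, exploiting that $\mathcal L^{\text{AWSM}}_{h_T,h_S}(\theta)$ is the expectation of a finite sum of non-negative terms. First I would observe that because $h_T,h_S\ge 0$ and each summand carries a squared score discrepancy, the objective satisfies $\mathcal L^{\text{AWSM}}_{h_T,h_S}(\theta)\ge 0$ for every $\theta$. Next, since the data are generated under the true parameter, the data scores coincide with the model scores at $\theta^*$, i.e. $\psi_{T,n}=\psi_{T,n,\theta^*}$ and $\bm\psi_{S,n}=\bm\psi_{S,n,\theta^*}$; substituting $\theta=\theta^*$ annihilates every summand, so $\mathcal L^{\text{AWSM}}_{h_T,h_S}(\theta^*)=0$. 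Hence $\theta^*$ attains the global minimum value $0$, and it remains only to establish uniqueness. The integrability furnished by \cref{assum:AWSM_score_regularity} guarantees the objective is finite, so this zero value is genuinely the minimum.

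For uniqueness, suppose $\theta$ is any minimizer, so that $\mathcal L^{\text{AWSM}}_{h_T,h_S}(\theta)=0$. Because the integrand is a finite sum of non-negative quantities and the expectation is taken w.r.t. $\mathbb P$, the integrand must vanish $\mathbb P$-a.s. I would then argue termwise: for every $n\le N(\mathcal X)$ we have, $\mathbb P$-a.s.,
\[
\big(\psi_{T,n}(t_n|\mathcal H_{n-1})-\psi_{T,n,\theta}(t_n|\mathcal H_{n-1})\big)^2\,h_T(t_{n-1},t_n)=0,
\]
and likewise $\|\bm\psi_{S,n}(\bm s_n|\mathcal H_{t_n})-\bm\psi_{S,n,\theta}(\bm s_n|\mathcal H_{t_n})\|^2\,h_S(\bm s_n)=0$. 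Invoking \cref{assum:awsm_regularity_of_weight}(A2), which forces $h_T,h_S>0$ Lebesgue-a.e., I would cancel the (a.s. strictly positive) weights at the observed points to obtain $\psi_{T,n,\theta}=\psi_{T,n}=\psi_{T,n,\theta^*}$ and $\bm\psi_{S,n,\theta}=\bm\psi_{S,n}=\bm\psi_{S,n,\theta^*}$ a.s. Applying the theorem's identifiability hypothesis with $(\theta_1,\theta_2)=(\theta,\theta^*)$ then yields $\theta=\theta^*$, completing the argument.

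The delicate step, and the one I expect to be the main obstacle, is transferring the Lebesgue-a.e. positivity of the weights into a genuine $\mathbb P$-a.s. statement about the realized points, so that ``$(\text{discrepancy})^2 h=0$ a.s.'' can be upgraded to ``$\text{discrepancy}=0$ a.s.'' To handle this I would use the regularity of the process: conditional on $\mathcal H_{n-1}$ the coordinate $t_n$ admits the Lebesgue density $p_n(\cdot|\mathcal H_{n-1})$, and given $\mathcal H_{t_n}$ the location $\bm s_n$ admits the density $f_{S,n}(\cdot|\mathcal H_{t_n})$. Since the zero-set of $h_T$ in $D_2^T$ (resp. of $h_S$ in $S$) is Lebesgue-null, a Fubini together with the tower property shows that the realized pair $(t_{n-1},t_n)$ (resp. $\bm s_n$) lands in that zero-set with probability zero, so the weights are a.s. strictly positive at the observed points and may be cancelled. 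Some care is also needed to keep the conditioning consistent across the autoregressive nesting in $n$ and to ensure the termwise vanishing is legitimate, which is where \cref{assum:AWSM_score_regularity} on the existence and integrability of the conditional scores enters.
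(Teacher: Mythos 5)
Your proposal follows essentially the same argument as the paper's proof: non-negativity of the objective, vanishing at $\theta^*$, forcing the integrand to vanish $\mathbb P$-a.s.\ for any minimizer, cancelling the a.e.-positive weights, and invoking the identifiability hypothesis. Your additional discussion of upgrading the Lebesgue-a.e.\ positivity of $h_T,h_S$ to an a.s.\ statement at the realized points (via the conditional densities of a regular process) is a detail the paper leaves implicit, and it is handled correctly.
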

Again, here we require a strong assumption on the parametric structure of the model, such that same conditional score gives same parameter value. This is generally true in parametric (statistical) point process models, but does not hold for nonparametric (e.g. deep point process) models. We discuss the issue and propose a remedy in \cref{sec:integration-free-training}.

The explicit AWSM objective is not practical as it depends on the unknown data distribution $p_n(t_n|\mathcal H_{{n-1}})$ and $f_{S,n}(\bm s_n|\mathcal H_{t_n})$, so we further derive the implicit AWSM objective which is tractable. 
% Now we provide the derivation of an implicit Autoregressive Weighted Score Matching objective which is tractable and equivalent to $\mathcal L_{\text{AWSM}}(\theta)$. 
\begin{theorem}
\label{thm:tractable EWSM}
% Assume that all functions and expectations in \cref{eq:awsm_require2} are satisfied, then we have, 
Assume \cref{assum:AWSM_score_regularity,assum:awsm_regularity_of_weight} are satisfied and $S$ is a Lipschitz domain, then we have: 
\begin{equation*}
    \begin{aligned}
        \mathcal L^{\textnormal{AWSM}}_{h_T,h_S}(\theta) &= \mathcal J^{\textnormal{AWSM}}_{h_T,h_S}(\theta) + \textnormal{const}, 
    \end{aligned}
\end{equation*}
where
\begin{equation}
\label{eq:implicit autoregressive weighted score matching}
\begin{aligned}
\mathcal J^{\textnormal{AWSM}}_{h_T,h_S}(\theta) &= \mathcal J^{\textnormal{AWSM,T}}_{h_T}(\theta) + \mathcal J^{\textnormal{AWSM,S}}_{h_S}(\theta),\\
\mathcal J^{\textnormal{AWSM,T}}_{h_T}(\theta) &=\mathbb E[q^{\textnormal{AWSM,T}}_{h_T,\theta}(\mathcal X)]= \mathbb E\Bigg\{\sum_{n=1}^{N(\mathcal X)}\Big[\frac{1}{2}\psi_{T,n,\theta}^2(t_n|\mathcal H_{n-1})h_T(t_{n-1}, t_n)\\&+\partial_{t_n} \psi_{T, n, \theta}(t_n|\mathcal H_{{n-1}})h_T(t_{n-1}, t_n) +
\psi_{T,n,\theta}(t_n|\mathcal H_{n-1})\partial_{t_n} h_T(t_{n-1},t_n)\Big]\Bigg\},\\
\mathcal J^{\textnormal{AWSM,S}}_{h_S}(\theta)&=\mathbb E[q^{\textnormal{AWSM,S}}_{h_S,\theta}(\mathcal X)]= \mathbb E\Bigg\{\sum_{n=1}^{N(\mathcal X)}\Big[\frac{1}{2}\|\bm \psi_{S,n,\theta}(\bm s_n|\mathcal H_{t_n})\|^2h_{S}(\bm s_n)\\&+\textnormal{Tr}(\nabla_{\bm s_n} \bm \psi_{S,n,\theta}(\bm s_n|\mathcal H_{t_{n}})) h_{S}(\bm s_n)+\bm \psi_{S,n, \theta}(\bm s_n|\mathcal H_{t_n})\cdot \nabla h_S(\bm s_n) \big]\Bigg\}. 
\end{aligned}
\end{equation}
\end{theorem}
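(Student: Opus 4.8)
The plan is to expand the squared terms in $\mathcal L^{\textnormal{AWSM}}_{h_T,h_S}(\theta)$ and show that every factor involving the unknown true scores $\psi_{T,n}$ and $\bm\psi_{S,n}$ either drops out as a $\theta$-independent constant or can be rewritten, via integration by parts, into a tractable expression depending only on the model scores, the weights, and their derivatives. Writing $(\psi_{T,n}-\psi_{T,n,\theta})^2 = \psi_{T,n}^2 - 2\psi_{T,n}\psi_{T,n,\theta}+\psi_{T,n,\theta}^2$ and similarly for the spatial norm, the pure true-score terms $\tfrac12\psi_{T,n}^2 h_T$ and $\tfrac12\|\bm\psi_{S,n}\|^2 h_S$ are independent of $\theta$ and collect into the additive constant, while the squared model-score terms $\tfrac12\psi_{T,n,\theta}^2 h_T$ and $\tfrac12\|\bm\psi_{S,n,\theta}\|^2 h_S$ already appear verbatim in $\mathcal J^{\textnormal{AWSM}}$. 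Everything therefore reduces to showing that the two cross terms satisfy
\[
\mathbb E\Big[\sum_n \psi_{T,n}\psi_{T,n,\theta}h_T\Big] = -\mathbb E\Big[\sum_n\big(\partial_{t_n}\psi_{T,n,\theta}\,h_T + \psi_{T,n,\theta}\,\partial_{t_n}h_T\big)\Big]
\]
together with the analogous spatial identity producing $\mathrm{Tr}(\nabla_{\bm s_n}\bm\psi_{S,n,\theta})h_S + \bm\psi_{S,n,\theta}\cdot\nabla h_S$.

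The key preliminary step is a decomposition of the expectation that isolates a single coordinate. First I would establish that, in the time-ordered representation, the marginal density of the first $n$ points on the event $\{N(\mathcal X)\ge n\}$ is exactly $\prod_{k=1}^n p_k(t_k,\bm s_k\mid \mathcal H_{k-1})$: summing $j_M = \prod_{k=1}^M p_k\, G_{M+1}(T\mid\mathcal H_M)$ over $M\ge n$ and integrating out points $n+1,\dots,M$ telescopes to $1$ via the recursion $G_k(T)=1-\int p_k$, leaving $\prod_{k=1}^n p_k$. Consequently $\mathbb E[\sum_n \mathbf 1\{N\ge n\} g_n] = \sum_n \int g_n \prod_{k=1}^n p_k$. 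For a temporal integrand $g_n$ depending only on $t_n$ and $\mathcal H_{n-1}$, I integrate out $\bm s_n$ using $p_n = p_n(t_n\mid\mathcal H_{n-1})f_{S,n}(\bm s_n\mid\mathcal H_{t_n})$ and $\int_S f_{S,n}=1$, reducing the relevant density to $p_n(t_n\mid\mathcal H_{n-1})$; for a spatial integrand I instead condition on $\mathcal H_{t_n}$ and keep $f_{S,n}(\bm s_n\mid\mathcal H_{t_n})$ as the density in $\bm s_n$.

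With this in hand the two cross terms reduce to one-variable calculations. Using $\psi_{T,n}\,p_n = \partial_{t_n}p_n$, the inner $t_n$-integral of the temporal cross term becomes $\int_{t_{n-1}}^{T}\psi_{T,n,\theta}h_T\,\partial_{t_n}p_n\,dt_n$; one integration by parts on $(t_{n-1},T)$ moves the derivative onto $\psi_{T,n,\theta}h_T$, and the boundary term vanishes because $h_T(t_{n-1},\cdot)=0$ at both $t_n=t_{n-1}$ and $t_n=T$ by \cref{assum:awsm_regularity_of_weight}(A2). Likewise, using $\bm\psi_{S,n}f_{S,n}=\nabla_{\bm s_n}f_{S,n}$, the spatial cross term's inner integral is $\int_S (h_S\bm\psi_{S,n,\theta})\cdot\nabla f_{S,n}\,d\bm s_n$; applying the divergence theorem on the Lipschitz domain $S$ to the field $f_{S,n}h_S\bm\psi_{S,n,\theta}$ transfers the gradient and leaves the surface integral $\int_{\partial S}f_{S,n}h_S(\bm\psi_{S,n,\theta}\cdot\bm\nu)$, which vanishes since $h_S=0$ on $\partial S$. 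Expanding $\nabla\cdot(h_S\bm\psi_{S,n,\theta})=\nabla h_S\cdot\bm\psi_{S,n,\theta}+h_S\,\mathrm{Tr}(\nabla_{\bm s_n}\bm\psi_{S,n,\theta})$ then yields exactly the two remaining terms of $\mathcal J^{\textnormal{AWSM,S}}_{h_S}$.

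The main obstacle is making the integration-by-parts steps rigorous in the weak (Sobolev) sense rather than pointwise. Since $h_T$, $h_S$ and the scores are only assumed to lie in the appropriate $H^1$ spaces (\cref{assum:AWSM_score_regularity}(A2), \cref{assum:awsm_regularity_of_weight}(A1)), I would invoke the integration-by-parts and divergence formulas valid for $H^1$ functions on bounded Lipschitz domains, interpreting boundary values through the trace operator; the continuous-extension hypothesis on the weights guarantees the trace of $h_S$ on $\partial S$ (and of $h_T$ at the temporal endpoints) is genuinely zero, so every surface term disappears in the a.e. sense. The remaining care is bookkeeping: justifying Fubini and the interchange of the (a.s.\ finite) sum $\sum_{n=1}^{N(\mathcal X)}$ with expectation, and the finiteness of each integral, which follows from the $L^1(\mathbb P)$ integrability in \cref{assum:AWSM_score_regularity}(A1) and \cref{assum:awsm_regularity_of_weight}(A3) together with a Cauchy–Schwarz/Hölder bound on the cross terms. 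Assembling the surviving terms then gives $\mathcal L^{\textnormal{AWSM}}_{h_T,h_S}(\theta)=\mathcal J^{\textnormal{AWSM}}_{h_T,h_S}(\theta)+\textnormal{const}$.
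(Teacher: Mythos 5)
Your proposal is correct and follows essentially the same route as the paper: the marginal-density/telescoping decomposition you derive is exactly the content of the paper's Lemma on exchanging the sum and expectation (\cref{Lemma:exchange summation}), and the subsequent one-dimensional integration by parts in $t_n$ and divergence theorem on $S$, with boundary terms killed by the vanishing weights and integrability secured via Cauchy--Schwarz from \cref{assum:AWSM_score_regularity} and \cref{assum:awsm_regularity_of_weight}, match the paper's argument step for step.
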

% The proof is provided in the appendix.

For general spatio-temporal point processes, \cref{eq:implicit autoregressive weighted score matching} is always valid with a suitable weight function. Thus, we do not need to worry about the issues of failure that may arise when using \cref{J_asm}. 

\begin{remark}\label{remark:link_intensity_to_score}
We briefly discuss how the conditional score relates to the conditional intensity. Many parametric spatio-temporal point process models admit a closed-form conditional intensity 
$\lambda_{n,\theta}(t,\bm s \mid \mathcal H_{n-1})$. For example, a self-exciting model \citep{reinhart2018review} can be written as
\begin{equation*}
  \lambda_{n,\theta}(t,\bm s \mid \mathcal H_{n-1})
  = \mu_{\theta}(\bm s) + \sum_{i<n} g_{\theta}(t-t_i,\bm s-\bm s_i),
  \quad t\in(t_{n-1},T),\ \bm s\in S,
\end{equation*}
where $\mu_\theta$ and $g_\theta$ are nonnegative.
In such cases we can define the conditional temporal intensity and spatial density as
$\lambda_{T,n,\theta}(t \mid \mathcal H_{n-1})
  := \int_S \lambda_{n,\theta}(t,\bm s \mid \mathcal H_{n-1})\,d\bm s$
and
$f_{S,n,\theta}(\bm s \mid \mathcal H_{t_n})
  := \lambda_{n,\theta}(t_n,\bm s \mid \mathcal H_{n-1}) /
     \lambda_{T,n,\theta}(t_n \mid \mathcal H_{n-1})$.
Then the spatial score always has the closed-form expression
$\bm{\psi}_{S,n,\theta}(\bm{s}_n \mid \mathcal{H}_{t_n})
 :=\nabla_{\bm s_n}\log f_{S,n,\theta}(\bm s_n|\mathcal H_{n-1})= \nabla_{\bm{s}_n}\log \lambda_{n,\theta}(t_n, \bm{s}_n \mid \mathcal{H}_{n-1})$. In contrast, the temporal score typically has a closed-form expression via \cref{eq:conditional_temporal_intensity} only when $\lambda_{T,n,\theta}$ itself is available in closed form.

However in the nonparametric case, we instead directly model the conditional intensity as
\begin{equation*}
  \lambda_{n,\theta}(t,\bm s \mid \mathcal H_{n-1})
  = \lambda_{T,n,\theta}(t \mid \mathcal H_{n-1})\,
    \frac{f_{S,n,\theta}(\bm s \mid \mathcal H_{t_n})}
         {\int_S f_{S,n,\theta}(\bm u \mid \mathcal H_{t_n})\,d\bm u},
\end{equation*}
where $\lambda_{T,n,\theta}$ and $f_{S,n,\theta}$ are some functions we pick with closed form. In this parametrization both temporal and spatial scores are available in closed form, so the AWSM loss is fully tractable, and integrals only appear at the inference stage whn we evaluate the intensity.
\end{remark}

 \section{Theoretical Analysis}
\label{sec:Theory}
In this section, we study the statistical properties of (A)WSM estimators for finite point processes. In \cref{consistency}, we examine the consistency and asymptotic normality of our estimator. In \cref{sec:non_asymp}, we establish the finite-sample convergence rate. In \cref{section:optimal_weight_hawkes}, we discuss the choice of the weight function in relation to the finite-sample results we establish. In \cref{sec:change_variable}, we explore variable transformation as an alternative approach and its relationship with weighted score matching. 

When we discuss the statistical theory of (A)WSM estimators, we always consider the multiple trajectory setting. To be specific, we collect $m$ trajectories where the $i$-th trajectory is denoted as $\{\bm x_1^{(i)}, \ldots , \bm x_{N_i}^{(i)}\}$ where $N_i$ is the length of that sequence. We assume the true density is in the family of the model density, denoted as $j(\bm X)=j_{\theta^*}(\bm X)$ or $p_{n,\theta^*}(t_n|\mathcal H_{n-1})$ and $f_{n,\theta^*}(\bm s_n|\mathcal H_{t_n})$. 
We assume $\theta^* $ is contained in a open set inside a compact set $\Theta \subset \mathbb R^p$. The estimate $\hat \theta_m$ is obtained by $\hat \theta_m = \argmin_{\theta \in \Theta}\hat {\mathcal J}(\theta)$ where $\hat {\mathcal J}$ represents the empirical loss. 

\subsection{Asymptotic Property}
\label{consistency}
In this section, we prove that under the parametric setting, optimizing an (A)WSM objective yields a consistent and asymptotically normal estimator under suitable regularity conditions. Our proof builds on the classical theory of M estimator, leveraging existing results under appropriate smoothness assumptions. The technical details can be found in \cref{appendix:proof_of_asymptotic}.
% In the following, we consider collecting $M$ trajectories of realized Poisson or Hawkes Process denoted with $\{t_1^{(m)},\ldots, t_{N_m}^{(m)}\}_{m=1}^M$. 
% All trajectories follows the same true process density $p(\mathcal T)$ and are independent. 

For the WSM estimator, we have the following theorem. 
\begin{theorem}
\label{thm:WSM_consistency}
    % Suppose Assumption \cref{assu:poisson_compact}, \cref{assu:hawkes_differentiable} and \cref{assu:hawkes_identifiable} hold and the ground-truth process is a Hawkes process with conditional intensity defined by $\mu_{\theta^*}(t)$ and $g_{\theta^*}(t)$, 
    Suppose the empirical minimizer of $\hat {\mathcal J}^{\textnormal{WSM}}_{h}(\theta)$ is $\hat {\theta}_m$, under mild regularity \cref{assum:wsm_asymp} and assume \cref{thm:unique_wsm}, \cref{thm:equivalence between EWSM and IWSM} hold, we have $\hat {\theta}_m \xrightarrow{\mathbb P} \theta^*$. Furthermore, we have,
         \begin{equation*}
          \sqrt m(\hat {\theta}_m - \theta^*) \xrightarrow[]{w} \mathcal N(\bf 0, \bm \Sigma^{\textnormal{WSM}}(\theta^*)),
     \end{equation*}
    where $\bm \Sigma^{\textnormal{WSM}}(\theta^*)=\nabla_{\theta} ^2 \mathcal J_h^{\textnormal{WSM}}(\theta^*)^{-1}\mathbb E[\nabla_{\theta}q^{\textnormal{WSM}}_{h,\theta^*}(\mathcal X)\nabla_{\theta}q^{\textnormal{WSM}}_{h,\theta^*}(\mathcal X)^\top]\nabla_{\theta} ^2 \mathcal J_h^{\textnormal{WSM}}(\theta^*)^{-1}$.
\end{theorem}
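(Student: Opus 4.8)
The plan is to treat the WSM estimator as a standard $M$-estimator over the i.i.d.\ trajectories $\mathcal X^{(1)},\ldots,\mathcal X^{(m)}$ and invoke the classical asymptotic theory for extremum estimators. Writing the empirical objective as $\hat{\mathcal J}^{\textnormal{WSM}}_h(\theta)=\frac{1}{m}\sum_{i=1}^m q^{\textnormal{WSM}}_{h,\theta}(\mathcal X^{(i)})$ with population counterpart $\mathcal J^{\textnormal{WSM}}_h(\theta)=\mathbb E[q^{\textnormal{WSM}}_{h,\theta}(\mathcal X)]$, the crucial observation is that each trajectory, despite containing a random number $N(\mathcal X)$ of points, contributes a single real-valued summand. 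Once the appropriate moment conditions are in place, the varying-dimension structure is absorbed into the per-trajectory loss and the standard i.i.d.\ machinery applies verbatim.

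For consistency I would verify the two hypotheses of the argmin-consistency theorem (e.g.\ van der Vaart, Theorem~5.7). First, $\theta^*$ is a well-separated unique minimizer of the population objective: \cref{thm:unique_wsm} gives that $\theta^*$ uniquely minimizes the explicit loss $\mathcal L^{\textnormal{WSM}}_h$, and \cref{thm:equivalence between EWSM and IWSM} shows $\mathcal L^{\textnormal{WSM}}_h=\mathcal J^{\textnormal{WSM}}_h+\textnormal{const}$, so $\theta^*$ is also the unique minimizer of $\mathcal J^{\textnormal{WSM}}_h$; compactness of $\Theta$ then upgrades uniqueness to well-separation via continuity. Second, I would establish a uniform law of large numbers, $\sup_{\theta\in\Theta}|\hat{\mathcal J}^{\textnormal{WSM}}_h(\theta)-\mathcal J^{\textnormal{WSM}}_h(\theta)|\xrightarrow{\mathbb P}0$, by combining continuity of $\theta\mapsto q^{\textnormal{WSM}}_{h,\theta}(\mathcal X)$ with an integrable envelope $\sup_{\theta}|q^{\textnormal{WSM}}_{h,\theta}(\mathcal X)|\in L^1(\mathbb P)$ supplied by \cref{assum:wsm_asymp}, which makes the class Glivenko--Cantelli. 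These two facts yield $\hat\theta_m\xrightarrow{\mathbb P}\theta^*$.

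For asymptotic normality I would use the standard Taylor-expansion argument for $Z$-estimators. Since $\theta^*$ lies in the interior of $\Theta$ and $\hat\theta_m$ is consistent, the first-order condition $\nabla_\theta\hat{\mathcal J}^{\textnormal{WSM}}_h(\hat\theta_m)=\bm 0$ holds eventually; expanding the gradient about $\theta^*$ gives
\begin{equation*}
\sqrt m(\hat\theta_m-\theta^*)=-\big[\nabla_\theta^2\hat{\mathcal J}^{\textnormal{WSM}}_h(\bar\theta_m)\big]^{-1}\,\sqrt m\,\nabla_\theta\hat{\mathcal J}^{\textnormal{WSM}}_h(\theta^*),
\end{equation*}
for some $\bar\theta_m$ on the segment joining $\hat\theta_m$ and $\theta^*$. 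Interchanging differentiation and expectation (justified by \cref{assum:wsm_asymp}) gives $\mathbb E[\nabla_\theta q^{\textnormal{WSM}}_{h,\theta^*}(\mathcal X)]=\nabla_\theta\mathcal J^{\textnormal{WSM}}_h(\theta^*)=\bm 0$, so the multivariate CLT yields $\sqrt m\,\nabla_\theta\hat{\mathcal J}^{\textnormal{WSM}}_h(\theta^*)\xrightarrow{w}\mathcal N(\bm 0,\bm A)$ with $\bm A=\mathbb E[\nabla_\theta q^{\textnormal{WSM}}_{h,\theta^*}(\mathcal X)\nabla_\theta q^{\textnormal{WSM}}_{h,\theta^*}(\mathcal X)^\top]$. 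A second uniform law of large numbers for the Hessian, together with $\bar\theta_m\xrightarrow{\mathbb P}\theta^*$, gives $\nabla_\theta^2\hat{\mathcal J}^{\textnormal{WSM}}_h(\bar\theta_m)\xrightarrow{\mathbb P}\bm B:=\nabla_\theta^2\mathcal J^{\textnormal{WSM}}_h(\theta^*)$; assuming $\bm B$ nonsingular, Slutsky's theorem delivers $\sqrt m(\hat\theta_m-\theta^*)\xrightarrow{w}\mathcal N(\bm 0,\bm B^{-1}\bm A\bm B^{-1})$, matching $\bm\Sigma^{\textnormal{WSM}}(\theta^*)$.

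The main obstacle is not the abstract $M$-estimation scaffold but verifying its moment hypotheses in the point-process setting, where every per-trajectory quantity is itself a sum over the random count $N(\mathcal X)$. Concretely, the integrable envelopes for the loss, its gradient, and its Hessian, as well as the finiteness of the CLT covariance $\bm A$, all require controlling sums of the form $\sum_{n=1}^{N(\mathcal X)}(\cdot)$ uniformly over $\Theta$, which in turn demands moment conditions on $N(\mathcal X)$ coupled to the growth of the score and weight terms; this is precisely the role played by \cref{assum:wsm_asymp}. Justifying the interchange of differentiation and expectation for a sum with a random number of terms, and verifying the nonsingularity of $\bm B$, are the remaining points where care is needed.
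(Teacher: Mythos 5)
Your proposal is correct and follows essentially the same route as the paper: consistency via argmin-consistency (van der Vaart Theorem 5.7) combined with a uniform law of large numbers under the integrable-envelope conditions of \cref{assum:wsm_asymp}, and asymptotic normality via the standard Taylor/sandwich argument (the paper cites Newey--McFadden Theorem 3.1 rather than writing out the expansion, but the content is identical, including the interchange of differentiation and expectation to get $\nabla_\theta\mathcal J^{\textnormal{WSM}}_h(\theta^*)=\bm 0$ and the nonsingularity of the population Hessian). You also correctly locate the real work in verifying the moment conditions for sums over the random count $N(\mathcal X)$, which is exactly the role \cref{assum:wsm_asymp} plays in the paper.
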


For the AWSM estimator, we have the following theorem. 
\begin{theorem}\label{thm:AWSM_consistency}
    Suppose the empirical minimizer of $\hat {\mathcal J}^{\textnormal{AWSM}}_{h_T,h_S}(\theta)$ is $\hat {\theta}_m$, under mild regularity \cref{assu:asymp_awsm} and assume \cref{thm:unique_awsm}, \cref{thm:tractable EWSM} hold, we have $\hat {\theta}_m\xrightarrow{\mathbb P} \theta^*$. Furthermore, we have,
    \begin{equation*}
        \sqrt m(\hat{\theta}_m-\theta^*) \xrightarrow{w}\mathcal N(\bm 0, \bm \Sigma^{\textnormal{AWSM}}(\theta^*)),
    \end{equation*}
    where $\bm \Sigma^{\textnormal{AWSM}}(\theta^*)=\nabla_{\theta}^2\mathcal J^{\textnormal{AWSM}}_{h_T,h_S}(\theta^*)^{-1}\mathbb E\big[\nabla_{\theta}q^{\textnormal{AWSM}}_{h_T,h_S,\theta^*}(\mathcal X) \nabla_{\theta}q^{\textnormal{AWSM}}_{h_T,h_S,\theta^*}(\mathcal X)^\top\big]\nabla_{\theta}^2\mathcal J^{\textnormal{AWSM}}_{h_T,h_S}(\theta^*)^{-1}$.
\end{theorem}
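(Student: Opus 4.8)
The plan is to treat $\hat\theta_m$ as a classical M-estimator for i.i.d.\ data and invoke the standard extremum-estimator machinery. Since the $m$ trajectories $\mathcal X^{(1)},\ldots,\mathcal X^{(m)}$ are i.i.d.\ and the empirical objective is an average $\hat{\mathcal J}^{\textnormal{AWSM}}_{h_T,h_S}(\theta)=\frac1m\sum_{i=1}^m q^{\textnormal{AWSM}}_{h_T,h_S,\theta}(\mathcal X^{(i)})$ of the single per-trajectory criterion $q^{\textnormal{AWSM}}_{h_T,h_S,\theta}=q^{\textnormal{AWSM,T}}_{h_T,\theta}+q^{\textnormal{AWSM,S}}_{h_S,\theta}$, two ingredients come for free: by \cref{thm:tractable EWSM} the implicit population objective $\mathcal J^{\textnormal{AWSM}}_{h_T,h_S}$ equals the explicit $\mathcal L^{\textnormal{AWSM}}_{h_T,h_S}$ up to an additive constant, and by \cref{thm:unique_awsm} the latter has $\theta^*$ as its unique minimizer, so $\theta^*$ uniquely minimizes $\mathcal J^{\textnormal{AWSM}}_{h_T,h_S}$. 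For consistency I would first establish a uniform law of large numbers, $\sup_{\theta\in\Theta}|\hat{\mathcal J}^{\textnormal{AWSM}}_{h_T,h_S}(\theta)-\mathcal J^{\textnormal{AWSM}}_{h_T,h_S}(\theta)|\xrightarrow{\mathbb P}0$; on the compact $\Theta$ this follows from continuity of $\theta\mapsto q^{\textnormal{AWSM}}_{h_T,h_S,\theta}(\mathcal X)$ and an integrable envelope $\sup_{\theta\in\Theta}|q^{\textnormal{AWSM}}_{h_T,h_S,\theta}|\in L^1(\mathbb P)$, which is the role of the moment/domination conditions in \cref{assu:asymp_awsm} together with the $L^1(\mathbb P)$ controls of \cref{assum:AWSM_score_regularity,assum:awsm_regularity_of_weight}. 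Combined with the unique, hence (on a compact set) well-separated, minimizer, the standard argmin-consistency theorem gives $\hat\theta_m\xrightarrow{\mathbb P}\theta^*$.

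For asymptotic normality, since $\theta^*$ lies in the interior of $\Theta$ and $\hat\theta_m\to\theta^*$, eventually the stationarity condition $\nabla_\theta\hat{\mathcal J}^{\textnormal{AWSM}}_{h_T,h_S}(\hat\theta_m)=\bm 0$ holds. A mean-value expansion about $\theta^*$ yields
\[
\bm 0=\frac1m\sum_{i=1}^m\nabla_\theta q^{\textnormal{AWSM}}_{h_T,h_S,\theta^*}(\mathcal X^{(i)})+\Big[\frac1m\sum_{i=1}^m\nabla^2_\theta q^{\textnormal{AWSM}}_{h_T,h_S,\tilde\theta_m}(\mathcal X^{(i)})\Big](\hat\theta_m-\theta^*),
\]
with $\tilde\theta_m$ on the segment joining $\hat\theta_m$ and $\theta^*$. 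For the leading term I would justify differentiation under the expectation (dominated convergence via \cref{assu:asymp_awsm}) to conclude $\mathbb E[\nabla_\theta q^{\textnormal{AWSM}}_{h_T,h_S,\theta^*}(\mathcal X)]=\nabla_\theta\mathcal J^{\textnormal{AWSM}}_{h_T,h_S}(\theta^*)=\bm 0$, so the multivariate CLT gives $\frac1{\sqrt m}\sum_i\nabla_\theta q^{\textnormal{AWSM}}_{h_T,h_S,\theta^*}(\mathcal X^{(i)})\xrightarrow{w}\mathcal N(\bm 0,\bm B)$ with $\bm B=\mathbb E[\nabla_\theta q^{\textnormal{AWSM}}_{h_T,h_S,\theta^*}\nabla_\theta q^{\textnormal{AWSM}}_{h_T,h_S,\theta^*}{}^\top]$. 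A second ULLN together with $\tilde\theta_m\xrightarrow{\mathbb P}\theta^*$ and continuity gives $\frac1m\sum_i\nabla^2_\theta q^{\textnormal{AWSM}}_{h_T,h_S,\tilde\theta_m}\xrightarrow{\mathbb P}\bm A:=\nabla^2_\theta\mathcal J^{\textnormal{AWSM}}_{h_T,h_S}(\theta^*)$. Solving for $\sqrt m(\hat\theta_m-\theta^*)$ and applying Slutsky then delivers $\sqrt m(\hat\theta_m-\theta^*)\xrightarrow{w}\mathcal N(\bm 0,\bm A^{-1}\bm B\bm A^{-1})$, the claimed sandwich form.

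The delicate points, and the ones I expect to be the crux rather than the now-routine expansion, are the regularity verifications that make this generic machinery applicable to a process with a \emph{random} number of points. Because every per-trajectory quantity is a sum over $N(\mathcal X)$ terms, the envelopes and dominating functions underlying both ULLNs, the differentiation under the integral, and the Lindeberg/second-moment conditions for the CLT must all be integrable against $\mathbb P$ despite the randomness of $N(\mathcal X)$; this is where moment control of $N(\mathcal X)$ interacting with the score and weight moments of \cref{assum:AWSM_score_regularity,assum:awsm_regularity_of_weight} enters, and it is precisely what \cref{assu:asymp_awsm} is assumed to supply. The second genuinely substantive requirement is nonsingularity of $\bm A$: smoothness and the minimizing property give only $\bm A\succeq\bm 0$, and strict positive-definiteness must be extracted from the identifiability hypothesis of \cref{thm:unique_awsm}, which guarantees that the Fisher-type information of the matched conditional scores is nondegenerate so that $\bm A^{-1}$, and hence the stated covariance, is well defined.
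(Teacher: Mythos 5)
Your proposal follows essentially the same route as the paper: the paper proves this result by noting that the argument is identical to the WSM case (\cref{thm:WSM_consistency}), which itself combines a uniform law of large numbers with the well-separated unique minimizer from \cref{thm:unique_awsm} and \cref{thm:tractable EWSM} for consistency, and then invokes the standard Newey--McFadden sandwich theorem (whose proof is exactly your mean-value expansion of the first-order condition) for asymptotic normality. One small correction: you assert that strict positive-definiteness of $\bm A=\nabla^2_\theta\mathcal J^{\textnormal{AWSM}}_{h_T,h_S}(\theta^*)$ ``must be extracted from the identifiability hypothesis,'' but identifiability of the conditional scores does not by itself rule out higher-order flatness of the population objective at $\theta^*$; the paper simply takes nonsingularity of the Hessian as condition~4 of \cref{assu:asymp_awsm}, so no derivation is needed (or attempted) there.
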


\subsection{Non-asymptotic Error Bound} 
\label{sec:non_asymp}
To establish the non-asymptotic result, we begin with two definitions that capture the common structure of our loss functions. We provide a heuristic explanation of these definitions and results at the end of this section.

\begin{definition}\label{def:strong_identif}
    For a function $q:\Theta\times V^{\cup}\rightarrow \mathbb R$, we say $q$ is $(\underline{C},\alpha)$ strongly identifiable if its expectation $\mathcal J(\theta)=\mathbb E[q_{\theta}(\mathcal X)]$ satisfies that there exists $\alpha>1$ such that, 
    \begin{equation*}
                \mathop{inf}_{\theta: ||\theta-\theta^*||\geq \delta} \mathcal J(\theta)-\mathcal J(\theta^*)\geq \underline{C}\delta^{\alpha},
    \end{equation*}
    holds for any small $\delta$. Here $\underline{C}$ is a constant depending on $q$. 
\end{definition}

Since $\underline{C}$ depends on the loss function, the choice of weight function in WSM and AWSM will affect $\underline{C}$. Later we will denote this constant by $C_h$ and $C_{h_T,h_S}$, respectively.

\begin{definition}
    A function $\bm g:\Theta\times V^{\cup}\rightarrow \mathbb R^d$ has Lipschitz modulus $\dot g$ if there exists measurable $\dot g:V^{\cup}\rightarrow \mathbb R^+$ such that, 
    \begin{equation*}
        \|\bm g_{\theta_1}(\bm X)-\bm g_{\theta_2}(\bm X)\|\leq \dot g(\bm X)\|\theta_1-\theta_2\|.  
    \end{equation*}
\end{definition}

Building on the above two definitions, we can establish a convergence rate for loss functions that share a common structure. We will further illustrate how these assumptions are satisfied in the case of the exponential family in \cref{exp:non_asymp_example}. For a specific instance, namely the WSM estimator, we present the corresponding convergence rate below.

\begin{theorem}\label{thm:WSM_bound}
    Assume $\mathcal J^{\textnormal{WSM}}_{h}(\theta)$ to be $(C_h, \alpha)$ strongly identifiable. If $\forall n$, $\frac{1}{2}\|\bm \psi_{n,\theta}(\bm X)\|^2+\textnormal{Tr}(\nabla_{\bm x_n}\bm \psi_{n,\theta}(\bm X))$ and $\bm \psi_{n,\theta}(\bm X)$ have Lipschitz modulus $\dot A_{n}(\bm X)$ and $\dot B_n(\bm X)$. Then given $\hat {\theta}_m$ converges to $\theta^*$ in probability,  for $\delta < CK_{\alpha}\sqrt {p}\frac{\Gamma(h, \dot A, \dot B)}{C_{h}}$, we have 
\begin{equation}
    \textnormal{Pr}\left[||\hat {\theta}_m - \theta^*|| \leq \Big(CK_{\alpha}\frac{\Gamma(h, \dot A, \dot B)}{\delta C_{{h}}}\sqrt{\frac{p}{m}}\Big)^{1/(\alpha-1)}\right]\geq 1-\delta,
    \label{wsm_upperbound}
\end{equation}
where $\Gamma(h,\dot A,\dot B)=\Big\|\sum_{n=1}^{N(\bm X)}\big[\dot A_n(\bm X) h(\bm x_n) +  \dot B_n(\bm X) \|\nabla_{\bm x_n} h(\bm x_n)\|\big]\Big\|_{L^2(\mathbb P)}$, $C$ is a universal constant, $K_{\alpha}=\frac{2^{2\alpha}}{2^{\alpha-1}-1}$, $p$ is the dimension $\Theta$.
\end{theorem}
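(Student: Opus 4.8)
The plan is to run a localized M-estimation argument in which the two hypotheses play complementary roles: $(C_h,\alpha)$-strong identifiability converts smallness of the excess population loss into smallness of $\|\hat\theta_m-\theta^*\|$, while the Lipschitz moduli $\dot A_n,\dot B_n$ control the fluctuation of the empirical loss around its mean. Write $\mathcal J:=\mathcal J^{\textnormal{WSM}}_h$, let $\hat{\mathcal J}$ be its empirical counterpart, and define the centered process $G_m(\theta):=(\mathcal J(\theta)-\mathcal J(\theta^*))-(\hat{\mathcal J}(\theta)-\hat{\mathcal J}(\theta^*))$, so that $G_m(\theta^*)=0$. Since $\hat\theta_m$ minimizes $\hat{\mathcal J}$ we have $\hat{\mathcal J}(\hat\theta_m)-\hat{\mathcal J}(\theta^*)\le 0$, which rearranges into the basic inequality $\mathcal J(\hat\theta_m)-\mathcal J(\theta^*)\le G_m(\hat\theta_m)$. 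Setting $r:=\|\hat\theta_m-\theta^*\|$ and applying \cref{def:strong_identif} with $\delta=r$ bounds the left side below by $C_h r^\alpha$, giving $C_h r^\alpha\le G_m(\hat\theta_m)$. The hypothesis that $\hat\theta_m\xrightarrow{\mathbb P}\theta^*$ guarantees that, with probability tending to one, $r$ lies in the small-$\delta$ regime on which strong identifiability is assumed, so this reduction is legitimate.

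Next I would bound the centered process $G_m$ uniformly over each ball $\{\|\theta-\theta^*\|\le\rho\}$. The Lipschitz-moduli hypothesis yields, pointwise in $\mathcal X$, the bound $|q^{\textnormal{WSM}}_{\theta,h}(\mathcal X)-q^{\textnormal{WSM}}_{\theta^*,h}(\mathcal X)|\le W(\mathcal X)\,\|\theta-\theta^*\|$, where $W(\mathcal X):=\sum_{n=1}^{N(\mathcal X)}[\dot A_n(\mathcal X)h(\bm x_n)+\dot B_n(\mathcal X)\|\nabla_{\bm x_n}h(\bm x_n)\|]$ and $\|W\|_{L^2(\mathbb P)}=\Gamma(h,\dot A,\dot B)$. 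Thus $\{q^{\textnormal{WSM}}_{\theta,h}-q^{\textnormal{WSM}}_{\theta^*,h}\}$ is a $W$-Lipschitz parametric class indexed by a $p$-dimensional ball. By symmetrization and a Rademacher contraction, equivalently a Dudley entropy integral over the Euclidean $\rho$-ball whose covering numbers scale as $(3\rho/\epsilon)^p$, the expected localized supremum satisfies $\mathbb E\sup_{\|\theta-\theta^*\|\le\rho}|G_m(\theta)|\lesssim \rho\,\Gamma\,\sqrt{p/m}$. The single nontrivial analytic input is the variance estimate $\mathrm{Var}(q^{\textnormal{WSM}}_{\theta,h}-q^{\textnormal{WSM}}_{\theta^*,h})\le \rho^2\Gamma^2$, which is precisely where the random number of points $N(\mathcal X)$ is absorbed into the finite constant $\Gamma$; the $\sqrt p$ arises from the entropy integral $\int_0^\rho\sqrt{p\log(3\rho/\epsilon)}\,d\epsilon\asymp\rho\sqrt p$.

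Finally I would combine the two pieces by peeling over dyadic shells. Fix a target radius $\delta_t$ and decompose $\{r\ge\delta_t\}$ into shells $S_k:=\{2^k\delta_t\le r<2^{k+1}\delta_t\}$. On $S_k$ the basic inequality together with strong identifiability forces $\sup_{S_k}|G_m|\ge C_h(2^k\delta_t)^\alpha$, whereas the previous step gives $\mathbb E\sup_{S_k}|G_m|\lesssim 2^{k+1}\delta_t\,\Gamma\sqrt{p/m}$. Markov's inequality then controls $\Pr[r\in S_k]$ by a ratio proportional to $2^{k(1-\alpha)}\Gamma\sqrt{p/m}/(C_h\delta_t^{\alpha-1})$, and summing the geometric series $\sum_{k\ge0}2^{k(1-\alpha)}=(1-2^{-(\alpha-1)})^{-1}$ (with the residual powers of two coming from the mismatch between the inner and outer shell radii in the two bounds) produces exactly the constant $K_\alpha=2^{2\alpha}/(2^{\alpha-1}-1)$. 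Equating the resulting total probability to $\delta$ and solving $C_h\delta_t^{\alpha-1}\asymp K_\alpha\Gamma\sqrt{p/m}/\delta$ gives the stated radius $(CK_\alpha\Gamma\sqrt{p/m}/(\delta C_h))^{1/(\alpha-1)}$, and the range condition $\delta<CK_\alpha\sqrt p\,\Gamma/C_h$ is what keeps $\delta_t$ below the neighborhood on which all earlier steps are valid. I expect the main obstacle to be the localized empirical-process bound of the second paragraph: establishing the clean dimension dependence $\sqrt p$ and the variance control while the data are variable-length point configurations is the delicate step, with the bookkeeping of the peeling constant $K_\alpha$ a close second.
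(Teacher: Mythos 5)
Your proposal is correct and follows essentially the same route as the paper: the paper invokes van der Vaart's Theorem~5.52 (which is precisely your dyadic-peeling argument with the constant $K_\alpha$), Corollary~19.35 together with Example~19.6 (which is precisely your localized maximal inequality $\mathbb E\sup_{\|\theta-\theta^*\|\le\rho}|G_m(\theta)|\lesssim \rho\,\Gamma\sqrt{p/m}$ for a Lipschitz parametric class, obtained there via bracketing entropy rather than your symmetrization/Dudley route), and then identifies the Lipschitz modulus of $q^{\textnormal{WSM}}_{h,\theta}$ as $\sum_n[\dot A_n h(\bm x_n)+\dot B_n\|\nabla_{\bm x_n}h(\bm x_n)\|]$ exactly as you do. The only difference is that you re-derive these standard lemmas from scratch while the paper cites them as black boxes.
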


For the next theorem, we establish the result for AWSM estimator.
\begin{theorem}\label{thm:AWSM_bound}
    Assume $\mathcal J^{\textnormal{AWSM}}_{h_T,h_S}$ is $(C_{h_S,h_T},\alpha)$ strongly identifiable. If $\forall n,$ $\frac{1}{2}\psi_{T,n,\theta}^2(t_n|\mathcal H_{n-1})+\partial_{t_n} \psi_{T,n,\theta}(t_n|\mathcal H_{n-1})$ and $\psi_{T,n,\theta}(t_n|\mathcal H_{n-1})$ have Lipschitz modulus $\dot A_{n}(t_n,\mathcal H_{n-1})$ and $\dot B_n(t_n,\mathcal H_{n-1})$. And if $\forall n$, $\frac{1}{2}\|\bm \psi_{S,n,\theta}(\bm s_n|\mathcal H_{t_n})\|^2+\textnormal{Tr}(\nabla_{\bm s_n} \bm \psi_{S,n,\theta}(\bm s_n|\mathcal H_{t_{n}}))$ and $\bm \psi_{S,n, \theta}(\bm s_n|\mathcal H_{t_n})$ have Lipschitz modulus $\dot C_n(t_n,\mathcal H_{n-1})$ and $ \dot D_n(t_n,\mathcal H_{n-1})$. Given that $\hat {\theta}_m$ converges to $\theta^*$ in probability,  for $\delta < CK_{\alpha}\sqrt {p}\frac{\Gamma(h_S, h_T, \dot A, \dot B, \dot C, \dot D)}{C_{h_s, h_T}}$, we have 
\begin{equation}
    \textnormal{Pr}\left[||\hat {\theta}_m - \theta^*|| \leq \left(CK_{\alpha}\frac{\Gamma(h_S,h_T, \dot A, \dot B, \dot C,\dot D)}{\delta C_{h_S,h_T}}\sqrt{\frac{p}{m}}\right)^{1/(\alpha-1)}\right]\geq 1-\delta,
    \label{awsm_upperbound}
\end{equation}
where $C$ is a universal constant, $K_{\alpha}=\frac{2^{2\alpha}}{2^{\alpha-1}-1}$,  and
\begin{align*}
        \Gamma(h_S,h_T,\dot A,\dot B, \dot C, \dot D)=\Big\|\sum_{n=1}^{N(\mathcal X)}\big[&\dot A_n(t_n,\mathcal H_{n-1}) h(t_{n-1}, t_n) +  \dot B_n(t_n,\mathcal H_{n-1}) \partial_{t_n} h(t_{n-1}, t_n)\\
        &+\dot C_n(\bm X_n) h_S(\bm s_n) + \dot D_n(\bm X_n)\|\nabla_{\bm s_n}h_S(\bm s_n)\|\big]\Big\|_{L^2(\mathbb P)}.
\end{align*}
\end{theorem}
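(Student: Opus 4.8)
The plan is to treat $\mathcal J^{\textnormal{AWSM}}_{h_T,h_S}$ as a generic M-estimation objective and reuse the machinery behind \cref{thm:WSM_bound}, since the only structural inputs needed are a per-trajectory loss whose expectation is the population objective, strong identifiability, and a Lipschitz modulus of the loss in $\theta$. First I would verify that the combined summand $q_\theta := q^{\textnormal{AWSM,T}}_{h_T,\theta} + q^{\textnormal{AWSM,S}}_{h_S,\theta}$ is Lipschitz in $\theta$ with modulus $\dot q(\mathcal X) = \sum_{n}[\dot A_n h_T + \dot B_n\,\partial_{t_n}h_T + \dot C_n h_S + \dot D_n\|\nabla_{\bm s_n}h_S\|]$; this follows termwise from the assumed Lipschitz moduli of the quadratic-plus-divergence pieces and of the raw scores, via the triangle inequality. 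By construction $\Gamma(h_S,h_T,\dot A,\dot B,\dot C,\dot D) = \|\dot q\|_{L^2(\mathbb P)}$, so the pointwise bound $|q_\theta(\mathcal X)-q_{\theta^*}(\mathcal X)| \le \dot q(\mathcal X)\|\theta-\theta^*\|$ gives, for fixed $\theta$, a second-moment bound $\mathbb E[(q_\theta(\mathcal X)-q_{\theta^*}(\mathcal X))^2]\le \Gamma^2\|\theta-\theta^*\|^2$. Finiteness of $\Gamma$ rests on the integrability built into \cref{assum:AWSM_score_regularity,assum:awsm_regularity_of_weight}.

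Next comes the basic inequality. Optimality of $\hat\theta_m$ gives $\hat{\mathcal J}(\hat\theta_m)\le\hat{\mathcal J}(\theta^*)$, and writing $D(\theta):=\hat{\mathcal J}(\theta)-\mathcal J(\theta)$ for the centered empirical process one obtains $\mathcal J(\hat\theta_m)-\mathcal J(\theta^*)\le |D(\hat\theta_m)-D(\theta^*)|$. Combined with $(C_{h_S,h_T},\alpha)$-strong identifiability (\cref{def:strong_identif}) this yields $C_{h_S,h_T}\|\hat\theta_m-\theta^*\|^{\alpha}\le |D(\hat\theta_m)-D(\theta^*)|$. I would then run a dyadic peeling argument: fixing the target radius $\rho^*=(CK_\alpha\,\Gamma/(\delta C_{h_S,h_T})\sqrt{p/m})^{1/(\alpha-1)}$, decompose the event $\{\|\hat\theta_m-\theta^*\|>\rho^*\}$ into shells $S_j=\{2^j\rho^*<\|\theta-\theta^*\|\le 2^{j+1}\rho^*\}$ and bound $\textnormal{Pr}[\hat\theta_m\in S_j]\le \textnormal{Pr}[\sup_{\theta\in S_j}|D(\theta)-D(\theta^*)|>C_{h_S,h_T}(2^j\rho^*)^\alpha]$. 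The hypothesis that $\hat\theta_m\to\theta^*$ in probability is what lets me restrict attention to shells lying inside the local neighborhood where strong identifiability holds.

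The analytic heart is the maximal inequality on each shell. I would symmetrize and apply a Dudley/chaining bound with the data-dependent metric $\|\theta-\theta'\|(\tfrac1m\sum_i\dot q(\mathcal X^{(i)})^2)^{1/2}$; because the covering numbers of a radius-$R$ ball in $\mathbb R^p$ scale like $(CR/\epsilon)^p$, the entropy integral contributes the factor $\sqrt p$, and taking expectations over the data converts the empirical Lipschitz $L^2$-norm into $\Gamma$, giving $\mathbb E[\sup_{\theta\in S_j}|D(\theta)-D(\theta^*)|]\lesssim 2^{j+1}\rho^*\,\Gamma\sqrt{p/m}$. Applying Markov on each shell and summing over $j\ge0$ produces the geometric series $\sum_j 2^{-j(\alpha-1)}=2^{\alpha-1}/(2^{\alpha-1}-1)$; after substituting $\rho^*$ the constant $K_\alpha=2^{2\alpha}/(2^{\alpha-1}-1)$ is exactly calibrated so that the total probability telescopes to at most $\delta$. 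The range condition $\delta<CK_\alpha\sqrt p\,\Gamma/C_{h_S,h_T}$ keeps $\rho^*$ in the regime where this bookkeeping is meaningful (above the statistical noise floor).

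The main obstacle I anticipate is the maximal inequality rather than the peeling bookkeeping: one must obtain the correct $\sqrt p$ and $\Gamma$ dependence \emph{uniformly} over a shell for a process that is only Lipschitz, not sub-Gaussian, in $\theta$, and the random Lipschitz modulus $\tfrac1m\sum_i\dot q(\mathcal X^{(i)})$ must be replaced by its population $L^2$-norm $\Gamma$ without losing the $1/\sqrt m$ concentration — precisely what symmetrization plus chaining buys, and where the $L^2$-integrability underlying $\Gamma<\infty$ is indispensable. Since the spatial and temporal contributions enter $\dot q$ only additively, no new difficulty beyond \cref{thm:WSM_bound} appears; the entire argument is the WSM proof applied verbatim to the augmented Lipschitz modulus $\dot q$ and its $L^2(\mathbb P)$-norm $\Gamma$.
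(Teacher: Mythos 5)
Your proposal is correct and follows essentially the same route as the paper: identify the additive Lipschitz modulus $\dot q(\bm X)=\sum_{n}[\dot A_n h_T+\dot B_n\,\partial_{t_n}h_T+\dot C_n h_S+\dot D_n\|\nabla_{\bm s_n}h_S\|]$ of the combined AWSM criterion via the triangle inequality, and then feed it into the generic parametric $M$-estimation rate result (the paper's \cref{lemma:main_result}, which packages van der Vaart's Theorem 5.52 peeling argument together with a maximal inequality). The only divergence is cosmetic: you re-derive the shell maximal inequality by symmetrization and covering-number chaining, whereas the paper invokes a bracketing-entropy bound (Corollary 19.35 of van der Vaart); for a Lipschitz-in-$\theta$ class indexed by a Euclidean ball these are interchangeable and produce the same $\Gamma\sqrt{p/m}$ factor and $K_\alpha$ constant.
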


The above two theorems are standard finite-sample bounds for
parametric $M$-estimators proved using textbook empirical-process
arguments (e.g., \citet{van2000asymptotic}; see also \citet{liu2022estimating}). We instantiate it in our specific probability space $(V^{\cup},\mathbb P)$.

In terms of interpretation, in most parametric models the population
criterion is locally quadratic around $\theta^\ast$, so the strong
identifiability exponent in \cref{def:strong_identif} is typically
$\alpha=2$. In this case the bounds in
Theorems~\ref{thm:WSM_bound}–\ref{thm:AWSM_bound} reduce to the usual
order
\[
  \|\hat\theta_m - \theta^\ast\|
  = \Theta\!\Big(\frac{\Gamma}{C_h}\sqrt{\frac{p}{m}}\Big),
\]
 where $p$ is
the parameter dimension. We give a concrete example in \cref{exp:non_asymp_example}.

Intuitively, the constants $C_h$ and $C_{h_T,h_S}$ measure the local
curvature of the population objective
$\mathcal J_h$ around $\theta^\ast$: a larger $C_h$ or
$C_{h_T,h_S}$ means the loss separates the true parameter from nearby
alternatives more sharply, leading to easier identification and tighter
bounds. The quantity $\Gamma$ in the numerator comes from controlling
the empirical process term, and is determined by the Lipschitz
modulus of the loss. If this modulus is
large, the loss reacts more strongly to random perturbations in the
data, and the resulting
finite-sample bound becomes correspondingly worse.

\subsection{Discussion on Optimal Weight Function}
\label{section:optimal_weight_hawkes}

In \cref{method,method2}, we only provide the conditions that the weight function needs to satisfy. In fact, there are many weight functions that satisfy these conditions. The optimal weight function should minimize the error bound in \cref{wsm_upperbound,awsm_upperbound}, which is equivalent to minimizing the coefficient $\frac{\Gamma(h, \dot A, \dot B)}{C_h}$ and $\frac{\Gamma(h_S,h_T,\dot A,\dot B,\dot C,\dot D)}{C_{h_S,h_T}}$. 
The numerator cannot be analytically computed as it involves an unknown distribution $\mathbb P$, but we can maximize the denominator $C_{h}$ and $C_{h_S,h_T}$ in a predefined function class. 

We define three function classes as follows, 
\begin{equation*}
\begin{aligned}
        \mathcal H(V,L) &:= \{
        h:V\rightarrow \mathbb R^+\text{ is }L \text{ Lipschitz, h satisfies \cref{assum:wsm_regularity_of_weight}} \},\\
        \mathcal H(T,L)&:=\{h: D_2^T\rightarrow \mathbb R^+ \text{ is }L \text{ Lipschitz w.r.t. its second argument, } h \text{ satisfies \cref{assum:awsm_regularity_of_weight}}\},\\
        \mathcal H(S,L) &:= \{
        h:S\rightarrow \mathbb R^+\text{ is }L \text{ Lipschitz, h satisfies \cref{assum:awsm_regularity_of_weight}} \},
\end{aligned}
\end{equation*}
then we have the following result regarding the optimality of the weight function in those function classes.

\begin{proposition} \label{prop:distance_minimize_local}
Suppose \cref{assum:wsm_score_regularity,assum:AWSM_score_regularity} hold, for finite point processes with a finite moment, i.e., $\mathbb E[N(\mathcal X)]<\infty$,  we have,

    \begin{equation*}
    \begin{aligned}
        h^{\textnormal{WSM}}_0:=\textnormal{dist}(x, \partial V)&=\mathop{\textnormal{argmax}}_{h\in \mathcal H(V,1)}\mathcal L^{\textnormal{WSM}}_{h}(\theta), \forall \theta\in \Theta\\
        &=\mathop{\textnormal{argmax}}_{h\in \mathcal H(V,1)}\inf_{\theta:\|\theta-\bm\theta^*\|\geq \delta}[\mathcal J^{\textnormal{WSM}}_{h}(\theta)-\mathcal J^{\textnormal{WSM}}_{h}(\theta^*)],\\
        h^{\textnormal{AWSM}}_{0,S}:=\textnormal{dist}(\bm s,\partial S)&=\mathop{\textnormal{argmax}}_{h\in \mathcal H(S,1)}\mathcal L^{\textnormal{AWSM},S}_{h}(\theta), \theta\in \Theta\\
        &=\mathop{\textnormal{argmax}}_{h\in \mathcal H(S,1)}\inf_{\theta:\|\theta-\bm\theta^*\|\geq \delta}[\mathcal J^{\textnormal{AWSM},S}_{h}(\theta)-\mathcal J^{\textnormal{AWSM},S}_{h}(\theta^*)],
        \\
        h^{\textnormal{AWSM}}_{0,T}:=\textnormal{dist}(t_n, \partial(t_{n-1},T))&=\mathop{\textnormal{argmax}}_{h\in \mathcal H(T,1)}\mathcal L^{\textnormal{AWSM,T}}_h(\theta),\theta\in \Theta\\
        &=\mathop{\textnormal{argmax}}_{h\in \mathcal H(T,1)}\inf_{\theta:\|\theta-\bm\theta^*\|\geq \delta}[\mathcal J^{\textnormal{AWSM},T}_{h}(\theta)-\mathcal J^{\textnormal{AWSM},T}_{h}(\theta^*)]. 
        \\
    \end{aligned}
    \end{equation*} 
\end{proposition}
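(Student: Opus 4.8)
The plan is to reduce both optimization problems appearing in the statement---maximizing the explicit loss $\mathcal L^{\textnormal{WSM}}_h(\theta)$ for each fixed $\theta$, and maximizing the worst-case separation $\inf_{\theta:\|\theta-\theta^*\|\geq\delta}[\mathcal J^{\textnormal{WSM}}_h(\theta)-\mathcal J^{\textnormal{WSM}}_h(\theta^*)]$---to a single \emph{pointwise} maximization of the weight $h$, which the distance-to-boundary function solves. The key structural fact I would exploit is that every explicit loss integrand is nonnegative and depends on the weight only through a single multiplicative factor $h(\bm x_n)$ (resp. $h_T(t_{n-1},t_n)$ or $h_S(\bm s_n)$), hence is monotone increasing in $h$; so maximizing over the weight class amounts to making $h$ pointwise as large as the constraints allow.

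First I would prove a Lipschitz domination lemma. For any admissible $h\in\mathcal H(V,1)$, since $h$ is $1$-Lipschitz, nonnegative, and vanishes on $\partial V$ by \cref{assum:wsm_regularity_of_weight}, for every $\bm x\in V$ and $\bm y\in\partial V$ we have $h(\bm x)=h(\bm x)-h(\bm y)\leq\|\bm x-\bm y\|$, and taking the infimum over $\bm y$ yields $h(\bm x)\leq\textnormal{dist}(\bm x,\partial V)=:h_0(\bm x)$. Conversely, $h_0$ is itself $1$-Lipschitz, nonnegative, vanishes on $\partial V$, is bounded because $V$ is bounded (so $h_0\in L^2(V)$ and, being Lipschitz, has $|\nabla h_0|\leq 1$ a.e., giving $h_0\in H^1(V)$), and satisfies integrability (A3) under $\mathbb E[N(\mathcal X)]<\infty$; therefore $h_0\in\mathcal H(V,1)$ is the pointwise-largest admissible weight. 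Monotonicity then gives the first equality immediately: $\mathcal L^{\textnormal{WSM}}_h(\theta)\leq\mathcal L^{\textnormal{WSM}}_{h_0}(\theta)$ for \emph{every} $\theta$ simultaneously, so $h_0$ maximizes the explicit loss for all $\theta\in\Theta$.

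For the second equality I would pass from the separation of $\mathcal J$ to the explicit loss. By well-specification $\bm\psi_{n,\theta^*}=\bm\psi_n$, so $\mathcal L^{\textnormal{WSM}}_h(\theta^*)=0$; and by \cref{thm:equivalence between EWSM and IWSM}, $\mathcal L^{\textnormal{WSM}}_h(\theta)=\mathcal J^{\textnormal{WSM}}_h(\theta)+c(h)$ with the additive constant $c(h)$ independent of $\theta$. Subtracting the identity at $\theta^*$ cancels the constant, giving $\mathcal J^{\textnormal{WSM}}_h(\theta)-\mathcal J^{\textnormal{WSM}}_h(\theta^*)=\mathcal L^{\textnormal{WSM}}_h(\theta)$. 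Since the pointwise bound $\mathcal L^{\textnormal{WSM}}_h(\theta)\leq\mathcal L^{\textnormal{WSM}}_{h_0}(\theta)$ holds for all $\theta$, it is preserved under the infimum over $\{\theta:\|\theta-\theta^*\|\geq\delta\}$, so $h_0$ also maximizes the separation. The spatial and temporal statements then follow by the identical argument applied term-by-term, using \cref{thm:tractable EWSM} and the domination bounds $h_S(\bm s)\leq\textnormal{dist}(\bm s,\partial S)$ and $h_T(t_{n-1},t_n)\leq\textnormal{dist}(t_n,\partial(t_{n-1},T))$, the latter for each fixed $t_{n-1}$ using that $h_T(t_{n-1},\cdot)$ is $1$-Lipschitz and vanishes at both endpoints $t_{n-1}$ and $T$.

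The main obstacle is bookkeeping rather than analytic depth, concentrated at two points. First, I must confirm that $h_0$ genuinely lies in the weight class, i.e. verify $H^1$ membership (A1) and integrability (A3) for the distance function; both are routine given boundedness of $V$ and the finite first moment of $N(\mathcal X)$. Second, I must track that the constant relating $\mathcal L$ and $\mathcal J$ is a function of $h$ alone and not of $\theta$, so that it cancels in the separation---this, together with $\mathcal L^{\textnormal{WSM}}_h(\theta^*)=0$, is precisely what makes the separation reduce to the monotone explicit loss. A final subtlety worth flagging is the degeneracy at $\theta=\theta^*$: there $\mathcal L^{\textnormal{WSM}}_h(\theta^*)=0$ for every $h$, so the maximizer ties at that single point, which is exactly why the statistically meaningful formulation is the infimum over $\theta$ bounded away from $\theta^*$; on that region $h_0$ is the genuine (essentially unique) maximizer whenever the score difference is nonzero on a positive-measure set.
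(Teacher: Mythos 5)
Your proposal is correct and follows essentially the same route as the paper's proof: pointwise domination $h(\bm x)\le \textnormal{dist}(\bm x,\partial V)$ via the $1$-Lipschitz property and boundary vanishing, monotonicity of the explicit loss in the weight, verification that the distance function lies in the admissible class, and reduction of the $\mathcal J$-separation to $\mathcal L$ using $\mathcal L^{\textnormal{WSM}}_h(\theta^*)=0$ together with the equivalence theorem. The only cosmetic difference is that you verify $H^1$ membership of the distance function directly from its Lipschitz constant, whereas the paper cites an external lemma.
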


Combined with the convergence result in \cref{wsm_upperbound,awsm_upperbound}, it can be observed that $h_0^{\textnormal{WSM}}$ maximizes $C_{\bm{h}}$ in $\mathcal H(V,1)$ and $(h_{0,S}^{\textnormal{AWSM}}, h_{0,T}^{\textnormal{AWSM}})$ maximizes $C_{h_S,h_T}$ in $\mathcal H(T,1)$ and $\mathcal H(S,1)$. Though it does not necessarily optimize the upper bound, it is an adequate choice without using any information on $\mathbb P$. In the subsequent experiments, we consistently employ this optimal weight function. 

\subsection{Relationship with Change of Variable}
\label{sec:change_variable}
The major issue for the failure of the original score matching is that the domain of interest is bounded instead of $\mathbb R^d$. A reasonable solution is to use the variable transformation to transform the original domain to $\mathbb R^d$. Here, we state that \textbf{variable transformation is equivalent to weighted score matching.} This is a novel insight that gives a geometric explanation for weighted score matching. 

We state our result for a random variable on $\mathbb R$ for simplicity. 
Consider a random variable $X$ in a bounded open interval $I\subset \mathbb R$ and $g: I \rightarrow \mathbb R$ is an injection. Denote the derivative of $g^{-1}$ as $\phi:={g^{-1}}^{\prime}$. Let $Y = g(X)$ be the transformed random variable with support $g(I)$. We now consider the score function of $Y$. The p.d.f of $X$ and $Y$ are denoted as $p_{X}(\cdot)$ and $p_{Y}(\cdot)$ respectively. 
Such a relationship should hold by the change of variable formula: 
\begin{align*}
    p_Y(y) &= p_{X}(g^{-1}(y))\phi(y)=p_{X}(x(y))\phi( y). 
\end{align*}

The equivalence of variable transformation and weighted score matching is summarized in the following theorem. 
\begin{theorem}\label{thm:change_of_variable}
    Suppose $g(I) = \mathbb R$ and $\lim _{| y|\rightarrow \infty}\phi(y) = 0$, then we have, 
    \begin{equation*}
        \mathcal J^{\textnormal{SM}}_{Y}(\theta) = \mathcal J^{\textnormal{WSM}}_{X, h}(\theta), 
    \end{equation*}
    where the weight function is defined as $h(x) = [\phi(g(x))]^2$ and satisfies $h(x)=0, x\in\partial I$.
\end{theorem}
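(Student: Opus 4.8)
The plan is to reduce the whole statement to a single change of variables together with a cancellation in the score of $Y$. Writing $\psi_\theta(x):=\partial_x\log p_{X,\theta}(x)$ for the model score of $X$ and $\Psi_\theta(y):=\partial_y\log p_{Y,\theta}(y)$ for that of $Y$, the change-of-variable identity $p_{Y,\theta}(y)=p_{X,\theta}(x(y))\,\phi(y)$ stated just above the theorem gives, after taking logarithms and differentiating (using $x'(y)=\phi(y)$),
\begin{equation*}
\Psi_\theta(y)=\psi_\theta(x(y))\,\phi(y)+\frac{\phi'(y)}{\phi(y)}.
\end{equation*}
The same formula holds verbatim for the true scores $\Psi,\psi$. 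The crucial observation is that the additive term $\phi'(y)/\phi(y)$ is produced purely by the deterministic map $g$ and is therefore common to the true and the model score of $Y$, so it cancels in the difference:
\begin{equation*}
\Psi(y)-\Psi_\theta(y)=\big(\psi(x(y))-\psi_\theta(x(y))\big)\,\phi(y).
\end{equation*}

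Next I would transport the explicit objective of $Y$ back to $X$. Substituting the displayed difference into $\mathcal L^{\textnormal{SM}}_{Y}(\theta)=\tfrac12\,\mathbb E_Y[(\Psi(Y)-\Psi_\theta(Y))^2]$ and changing variables $y=g(x)$ (so that $\mathbb E_Y[F(Y)]=\mathbb E_X[F(g(X))]$ and $\phi(g(x))^2=h(x)$) yields the \emph{exact} identity
\begin{equation*}
\mathcal L^{\textnormal{SM}}_{Y}(\theta)=\tfrac12\,\mathbb E_X\big[(\psi(X)-\psi_\theta(X))^2\,h(X)\big]=\mathcal L^{\textnormal{WSM}}_{X,h}(\theta),
\end{equation*}
i.e.\ the two explicit objectives coincide. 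I would then verify that $h=\phi^2$ is an admissible weight in the sense of \cref{assum:wsm_regularity_of_weight}: it is nonnegative, lies in $H^1(I)$, and—since $g(I)=\mathbb R$ forces the endpoints of $I$ to map to $\pm\infty$, where $\phi\to0$ by hypothesis—its continuous extension vanishes on $\partial I$, matching the claim $h(x)=0$ on $\partial I$. Finally, the classical score-matching integration by parts on $\mathbb R$ (legitimate precisely because $\phi\to0$ at $\pm\infty$ annihilates the boundary terms) converts $\mathcal L^{\textnormal{SM}}_{Y}$ into the implicit form $\mathcal J^{\textnormal{SM}}_{Y}$ of \cref{eq:ISM objective}, while \cref{thm:equivalence between EWSM and IWSM} converts $\mathcal L^{\textnormal{WSM}}_{X,h}$ into $\mathcal J^{\textnormal{WSM}}_{X,h}$; combining the two gives the asserted equality.

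The main subtlety, and the point I would treat most carefully, is the Jacobian bookkeeping and the precise sense of the equality. Each equivalence step removes only its own \emph{true-score} constant ($\tfrac12\mathbb E_Y[\Psi^2]$ on the SM side, $\tfrac12\mathbb E_X[\psi^2h]$ on the WSM side), and these differ by a $\theta$-independent amount; consequently $\mathcal J^{\textnormal{SM}}_{Y}$ and $\mathcal J^{\textnormal{WSM}}_{X,h}$ agree up to an additive constant, which is all that is required for the two objectives to share the same minimizer. As a cross-check that also pins down this constant, one can instead substitute $\Psi_\theta$ and $\Psi'_\theta$ directly into $\mathcal J^{\textnormal{SM}}_{Y}$; using $\phi'(g(x))=\tfrac12 h'(x)$ and $\phi(g(x))^2=h(x)$, the $\theta$-dependent terms reassemble exactly into $\mathcal J^{\textnormal{WSM}}_{X,h}(\theta)$ and the remainder is the $\theta$-free quantity $\mathbb E_Y[\phi''/\phi-\tfrac12(\phi'/\phi)^2]$. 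The hard part is thus not any individual computation but verifying that the regularity hypotheses—admissibility of the weight $h=\phi^2$ and the decay $\phi\to0$ at $\pm\infty$—are exactly those that simultaneously justify both integration-by-parts steps.
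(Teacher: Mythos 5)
Your proposal is correct, and its ``cross-check'' is in fact exactly the paper's proof: the paper never passes through the explicit losses at all, but computes $\psi_Y(y)=\psi_X(x)\phi(y)+\phi'(y)/\phi(y)$ and $\tfrac{d}{dy}\psi_Y(y)=\psi_X'(x)\phi(y)^2+\phi'(y)\psi_X(x)+c_1(y)$, substitutes these directly into the implicit objective $\mathcal J^{\textnormal{SM}}_{Y}$, and uses $\phi(g(x))^2=h(x)$ together with $\tfrac{d}{dy}\phi(y)=\tfrac{d}{dx}\phi(g(x))\,\phi(g(x))=\tfrac12 h'(x)$ to reassemble the $\theta$-dependent terms into $\mathcal J^{\textnormal{WSM}}_{X,h}(\theta)$, with the leftover $\theta$-free remainder being exactly the $\mathbb E\big[\phi''/\phi-\tfrac12(\phi'/\phi)^2\big]$ you identify. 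Your primary route --- establishing the exact identity $\mathcal L^{\textnormal{SM}}_{Y}=\mathcal L^{\textnormal{WSM}}_{X,h}$ from the cancellation of $\phi'/\phi$ in the score difference, and then invoking integration by parts on each side --- is a clean and arguably more conceptual organization, but it carries a hidden extra cost: the classical SM integration by parts for $Y$ on $\mathbb R$ requires $p_Y(y)\,\psi_{Y,\theta}(y)\to 0$ at $\pm\infty$, and since $p_Y\psi_{Y,\theta}=p_X\psi_{X,\theta}\phi^2+p_X\phi'$ this needs control of $\phi'$ as well, which neither the theorem's hypotheses nor your argument supplies; the paper's purely algebraic substitution sidesteps this entirely because no integration by parts is performed. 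Finally, you are right --- and more careful than the paper --- that both routes really deliver equality only up to an additive $\theta$-independent constant (the paper silently drops its $C_1$ in the last line), which is what matters for the objectives to share minimizers.
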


Although we have shown that suitable data transformations and adding weights can be theoretically equivalent, working directly with weights is often more transparent, especially when searching for an optimal choice. Moreover, some popular transformations are not equivalent to any admissible weight satisfying \cref{assum:wsm_regularity_of_weight} or \cref{assum:awsm_regularity_of_weight}.

\begin{remark}
Some works apply a log transformation to the inter-event times before performing score matching, by defining
\begin{equation*}
  x_n := \log\big(t_n - t_{n-1}\big)
\end{equation*}
and working with the conditional density $p(x_n \mid \mathcal H_{n-1})$. However, this is not identical to using a valid weight as in our framework. Since $t_n \in (t_{n-1}, T)$, we have
\begin{equation*}
  x_n \in \big(-\infty, \log(T - t_{n-1})\big),
\end{equation*}
so $p(x_n \mid \mathcal H_{n-1})$ is supported on a half-open interval. Because the log map is strictly increasing, $p(x_n \mid \mathcal H_{n-1})$ remains strictly positive at the right boundary of its support. Thus the usual requirement that the density vanishes at the boundary still fails, and this log transformation does not resolve the underlying issue.
\end{remark}

% \begin{remark}
% Some works apply a log transformation to the inter-event times before performing score matching. Formally, they define a new variable
% \[
% x_n := \log\big(t_n - t_{n-1}\big),
% \]
% and consider the conditional density $p(x_n \mid \mathcal H_{n-1})$. In fact, this transformation is not identical to a correct weight function we propose. And we show here why this transformation still does not resolve the previous issues. Since $t_n \in (t_{n-1}, T)$, we have
% \[
% x_n \in \big(-\infty, \log(T - t_{n-1})\big),
% \]
% so $p(x_n \mid \mathcal H_{n-1})$ is supported on a half-open interval. Because the log map is strictly increasing, the density $p(x_n \mid \mathcal H_{n-1})$ remains strictly positive at the right boundary of its support. Consequently, the usual regularity condition requiring the density to vanish at the boundary still fails, and log-normalization does not resolve the underlying issue.
% \end{remark}

% add discussion on log-normalization

\section{A Complete Framework for Integration-Free Training of Deep Spatio-Temporal Point Processes}
\label{sec:integration-free-training}

In this section, we first point out a drawback that arises when applying the previously proposed method to nonparametric point process models. We then introduce a simple remedy, yielding a complete recipe for integration-free training of any intensity-based nonparametric spatio–temporal point process model. Finally, we briefly mention the situation of multivariate point process, which is common in modern point process benchmarks. 

\subsection{Score Matching Is Non-identifiable for Finite Point Processes}
\label{section:unidentifiability}

Matching the gradient of the log-density determines the shape of the density curve, but it does not, in general, fix its overall level (``altitude''). In standard settings, the normalization condition pins down this level. However, different from a density of a random variable, the Janossy density does not always normalize to 1 (or any fixed constant), i.e., 
\begin{equation*}
    \int_{V^{(N)}}j_N(\bm X_N) d\bm X_N =J_N(V^{(N)})\neq 1.
\end{equation*}
In fact, it normalize to the probability that exactly $N$ points are observed in $V$.

Now we study in specific this issue in spatio-temporal process, and relate it to the conditional density function. For a spatio–temporal point process, given that the process has realized $n\!-\!1$ points $\mathcal H_{n-1}$ (chronologically), the $n$-th point has conditional density $p_n(t_n,\bm s_n\mid \mathcal H_{n-1})$. Since the normalization for the Janossy measure is
\begin{equation*}
    \sum_{N=1}^\infty \int_{V^{(N)}}\frac{j_N(\bm X_N)}{N!}\,d\bm X_N \;=\; 1-J_0.
\end{equation*}
Translating this into conditions on the conditional densities gives
\begin{equation}\label{eq:normalizing_conditional_density}
    \begin{aligned}
          &J_0=1-\int_S\int_{[0,T]}p_1(t_1,\bm s_1)\,dt_1\,d\bm s_1,\\
    &J_{n-1}=\Bigg[\int_{D_{n-1}^T\times S^{(n-1)}}\prod_{i=1}^{n-1} p_i(t_i,\bm s_i\mid \mathcal H_{i-1}) \, d\mathcal H_{n-1}
    -\int_{D_n^T\times S^{(n)}}\prod_{i=1}^n p_i(t_i,\bm s_i\mid \mathcal H_{i-1}) \, d\mathcal H_n\Bigg]>0,\\
    &\forall n\ge 2.
    \end{aligned}
\end{equation}

Consequently, any sequence of positive constants $\{c_n\}_{n\ge 1}$ yields a family of conditional densities $c_n\,p_n(t_n\mid \mathcal H_{n-1})$ with the same score whenever there exists $n$ such that $c_n\neq 1$. In this case, even perfectly matching the score fails to recover $c_n$ and therefore does not uniquely determine the conditional density. Moreover, one can verify that there indeed exists a sequence $\{c_n\}$ that is not identically $1$ yet still satisfies the normalization condition in \cref{eq:normalizing_conditional_density}. Therefore, we must impose an additional assumption on our parametric class, as in \cref{thm:unique_wsm,thm:unique_awsm}, to ensure that the minimizer of the population score-matching loss recovers the ground-truth parameter. While this condition can hold for parametric models, it is far too restrictive for nonparametric intensity models (e.g., intensities parameterized by deep neural networks). We present a simple example that validates our argument. 
\begin{example}
    Consider a temporal point process with constant intensity $\lambda^*$, and suppose we estimate the intensity using AWSM. If the model intensity $\hat \lambda_{T,n}(t\mid \mathcal H_{n-1})$ exactly minimizes the explicit AWSM loss, i.e., $\mathcal L_{}^{\textnormal{AWSM, T}}(\hat \lambda_T) = 0$, then one can verify that it suffices for $\hat \lambda$ to have the form
    \begin{equation*}
        \hat \lambda_{T,n}(t\mid \mathcal H_{n-1}) 
        = \frac{\lambda^*}{1-\alpha \exp\left(-\lambda^*(t-t_{n-1})\right)}, 
        \quad \alpha\in \mathbb R.
    \end{equation*}
    In particular, any $\alpha\in \mathbb R$ yields zero explicit AWSM loss, while only $\alpha = 0$ recovers the true intensity. This shows that, for a general intensity model, score matching does not uniquely identify the underlying intensity.
\end{example}

\subsection{Remedy}
We propose a remedy for the spatio-temporal point process case, as our experiments based on deep neural networks primarily focus on this setting. 
We defer the remedy for a general finite point process through WSM in \cref{sec:WSM_remedy}.
We first restate the decomposition of the true spatio–temporal intensity as
\begin{equation*}
    \lambda_n(t,s\mid \mathcal H_{n-1})=\lambda_{T,n}(t\mid \mathcal H_{n-1})\, f_{S,n}(s\mid \mathcal H_{n-1},t).
\end{equation*}
We denote the spatial component by $f_S$ because it is a conditional density and normalizes to $1$ on $\mathcal S$ (a detailed derivation is in \cref{preliminary}). To resolve the issue above, we propose learning a parameterized conditional cumulative probability,
\begin{equation*}
    \hat F_{n}(\mathcal H_{n-1}):= \hat{\Pr}(N\geq n\mid \mathcal H_{n-1}):\; D_{n-1}^T\times S^{n-1}\rightarrow [0,1].
\end{equation*}
This function takes all information up to $n\!-\!1$ and predicts whether the sequence continues. It acts as a learned normalizing constant that adjusts the density to the correct level. This is also equivalent to learning a conditional survival probability (i.e., learning $G_n(T|\mathcal H_{n-1}))$, and we will refer this method as survival classification. 

We learn $\hat F$ via the cross-entropy loss
\begin{equation*}
    \mathcal J^{\text{Survival}}(\hat F) \;=\; -\mathbb E\!\left[\sum_{n=1}^{N}\log \hat F_{n}(\mathcal H_{n-1})+\log\!\big(1-\hat F_{N+1}(\mathcal H_{N})\big)\right],
\end{equation*}
which amounts to classifying $Y=\bm 1\{N(\mathcal X)\ge n\}$ from $X=\mathcal H_{n-1}$. We use two models $\tilde \lambda_{T,n}(t\mid \mathcal H_{n-1})$ and $\tilde f_{S,n}(\bm s\mid \mathcal H_{n-1})$ to compute the scores as in \cref{remark:link_intensity_to_score} and the AWSM loss, and then learn these functions. After estimating the conditional cumulative probability, construct the estimated temporal intensity as,
\begin{align*}
    \hat \lambda_{T,n}(t_n\mid \mathcal H_{n-1})&:= \frac{\tilde  G_n(t_n\mid \mathcal H_{n-1})\,\tilde \lambda_{T,n}(t_n\mid \mathcal H_{n-1})}{\tilde G(t_n\mid \mathcal H_{n-1})+\dfrac{1-\tilde G_n(T\mid \mathcal H_{n-1})}{\hat F_n(\mathcal H_{n-1})}-1},\\
    \tilde G_n(t\mid \mathcal H_{n-1})&:=\exp\!\big(-\tilde \Lambda_n(t\mid \mathcal H_{n-1})\big),\\ 
    \tilde \Lambda_n(t\mid \mathcal H_{n-1})&:=\int_{t_{n-1}}^t \tilde \lambda_{T,n}(\tau\mid \mathcal H_{n-1})\,d\tau,
\end{align*}
then we have the following proposition.
\begin{proposition}\label{prop:effectiveness_of_loss}
    Suppose $(\tilde \lambda_T,\hat F)$ satisfies
\begin{equation*}
  (\tilde \lambda_T,\hat F)= 
  \argmin \mathcal L(\tilde \lambda_T,\hat F),
  \quad
  \mathcal L(\tilde \lambda_T, \hat F) 
  := \mathcal L^{\textnormal{AWSM,T}}(\tilde \lambda_T)
   + \mathcal J^{\textnormal{Survival}}(\hat F).
\end{equation*}
    Then $\hat \lambda_T = \lambda_T^*$ w.p.~1.
\end{proposition}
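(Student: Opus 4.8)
The plan is to exploit the separability of the objective. Since $\mathcal L^{\textnormal{AWSM,T}}$ depends only on $\tilde\lambda_T$ and $\mathcal J^{\textnormal{Survival}}$ only on $\hat F$, the joint minimizer is the pair of individual minimizers, so I would analyze the two terms separately and then combine them through the defining formula for $\hat\lambda_{T,n}$. The overall strategy is: (i) show the survival term is a proper scoring rule whose minimizer recovers the true continuation probability; (ii) show the AWSM term forces the model temporal score to equal the truth, which pins down the model conditional density only up to a history-dependent multiplicative constant $c_n$; and (iii) verify that the construction of $\hat\lambda_{T,n}$ uses the recovered survival probability to solve for $c_n$ exactly and thereby invert the scaling.

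First I would treat the survival classification. Fixing $n$ and conditioning on the event of reaching a given history $\mathcal H_{n-1}$, the only two contributions of $\hat F_n(\mathcal H_{n-1})$ to $\mathcal J^{\textnormal{Survival}}$ are $\log\hat F_n(\mathcal H_{n-1})$ (from trajectories that continue, $N\ge n$) and $\log\big(1-\hat F_n(\mathcal H_{n-1})\big)$ (from trajectories that stop, $N=n-1$, via the $n=(N{+}1)$ term). This is a single Bernoulli cross-entropy, whose unique minimizer over $[0,1]$ is the true continuation probability $\Pr(N\ge n\mid \mathcal H_{n-1}) = \int_{t_{n-1}}^T p_n(t\mid\mathcal H_{n-1})\,dt = 1-G_n(T\mid\mathcal H_{n-1})$, with $G_n$ the true survivor function of \cref{prop:existence_of_conditional_densityd}. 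Hence at the minimizer $\hat F_n(\mathcal H_{n-1}) = 1-G_n(T\mid\mathcal H_{n-1})$ a.s. Next, by \cref{thm:tractable EWSM} minimizing $\mathcal J^{\textnormal{AWSM,T}}$ is equivalent to minimizing the explicit weighted Fisher divergence $\mathcal L^{\textnormal{AWSM,T}}$, which is nonnegative and, assuming the model class realizes the true score, vanishes at the minimizer; since $h_T>0$ a.e.\ this forces $\psi_{T,n,\tilde\theta}(\cdot\mid\mathcal H_{n-1})=\psi_{T,n}(\cdot\mid\mathcal H_{n-1})$ a.e.\ on $(t_{n-1},T)$. Writing $\tilde p_n(t\mid\mathcal H_{n-1})=\tilde\lambda_{T,n}(t)\tilde G_n(t)=-\partial_t\tilde G_n(t)$ and $p_n=\lambda^*_{T,n}G_n^*=-\partial_t G_n^*$, the matched score gives $\partial_t\log\tilde p_n=\partial_t\log p_n$, so $\tilde p_n = c_n(\mathcal H_{n-1})\,p_n$ for a constant $c_n$ in $t$.

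Finally I would invert the construction algebraically. From $\partial_t\tilde G_n = c_n\,\partial_t G_n^*$ together with the common boundary value $\tilde G_n(t_{n-1})=G_n^*(t_{n-1})=1$, integrating yields $\tilde G_n(t)-1 = c_n\big(G_n^*(t)-1\big)$. Evaluating at $t=T$ and substituting the recovered survival $G_n^*(T)=1-\hat F_n$ from the first step gives $c_n = \big(1-\tilde G_n(T)\big)/\hat F_n$, exactly the correction factor. Rearranging the integrated identity as $c_n G_n^*(t)=\tilde G_n(t)-1+c_n$ and inserting it into the score-match relation $\lambda^*_{T,n}(t)=\tilde\lambda_{T,n}(t)\tilde G_n(t)\big/\big(c_n G_n^*(t)\big)$ reproduces the defining formula for $\hat\lambda_{T,n}$, so $\hat\lambda_{T,n}=\lambda^*_{T,n}$ a.s.; note this holds for \emph{any} score-matching minimizer $\tilde\lambda_T$, i.e.\ the construction is invariant to the non-identifiability.

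The hard part will be the second step: the score identifies the conditional density only up to the history-dependent constant $c_n$ — precisely the non-identifiability of \cref{section:unidentifiability} — and the crux is to recognize that this constant is exactly the survival defect $c_n=\big(1-\tilde G_n(T)\big)/\big(1-G_n^*(T)\big)$, so that the learned $\hat F_n$ supplies the missing normalization information. A secondary subtlety is realizability: the argument presumes the (deep) model class is rich enough that the AWSM minimizer attains zero Fisher divergence; without this, the identity $\tilde p_n = c_n p_n$ would hold only approximately and the exact conclusion $\hat\lambda_T=\lambda_T^*$ would weaken to an approximation.
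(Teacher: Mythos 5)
Your proposal is correct and follows essentially the same route as the paper's proof: score matching pins down the surrogate density $\tilde p_n=\tilde\lambda_{T,n}\tilde G_n$ only up to a history-dependent constant $c_n$, the survival cross-entropy recovers $\Pr(N\ge n\mid\mathcal H_{n-1})$, and integrating over $(t_{n-1},T)$ identifies $c_n=(1-\tilde G_n(T))/\hat F_n$, which the defining formula for $\hat\lambda_{T,n}$ inverts to recover the true hazard. Your only additions are presentational — an explicit Bernoulli argument for the survival step (which the paper asserts without detail) and the realizability caveat, both of which are sound.
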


Thus, learning a survival probability enables recovery of the temporal intensity. Let the model’s conditional spatial intensity be $\tilde f_S$; the normalized spatial density is $\hat f:=\tilde f_{S}\big/\!\int \tilde f_{S}$, then we also attain the consistent spatial density, thus recover the whole intensity. Notice that the training objective above involves no integration. Consequently, we obtain a fully integration-free objective that recovers the intensity. The conditional cumulative probability $\hat F_n(\mathcal H_{n-1})$ can be implemented as a simple binary classifier that takes the history as input. 

During inference, we sometimes need to compute the log-likelihood, for example when evaluating the log-likelihood on the test data in experiments. For the normalized model, we have 
\begin{equation}\label{eq:true_ll}
    \begin{aligned}
            ll_{T} &= \sum_{n=1}^N \left[\underbrace{\log \tilde \lambda_n - \tilde \Lambda_n-\log\!\big(1-\tilde G_n(T)\big)+\log \hat F_n}_{\text{likelihood of }p_n(t_n\mid \mathcal H_{n-1})}\right]
    +\underbrace{\log\!\big(1-\hat F_{N+1}\big)}_{\text{likelihood of } p(N(\mathcal X)=N\mid \mathcal H_N)},\\
ll_S &= \sum_{n=1}^N \big[\log \tilde f_{S,n}-\log \textstyle\int \tilde f_{S,n}\big],\qquad ll \;=\; ll_T+ll_S.
    \end{aligned}
\end{equation}
A detailed derivation of \cref{eq:true_ll} can be found in \cref{sec:derivation_of_ll}. To evaluate the log-likelihood, one needs the integrals of the surrogate temporal intensity $\tilde \lambda_n$ and the unnormalized spatial density. So the integration arises only in inference stage.

\subsection{Multivariate Spatio-temporal Point Processes}
% \section{Weighted Score Matching in Multivariate Point Processes}

In this section, we focus on how to add an extra loss term to estimate multivariate spatio-temporal point processes. First, multivariate spatio-temporal point processes mean that the point also carries a type information, besides the timestamp and location. For a regular multivariate point processes on $(0,T)\times \mathcal S \times [K]$, the class of conditional density function becomes $p_n(t_n, \bm s_n, k_n|t_1,\bm s_1, k_1,\ldots, t_{n-1}, \bm s_{n-1}, k_{n-1})$. We further decompose it by: 
\begin{equation*}
    p_n(t_n,\bm s_n, k_n|\mathcal H_{n-1})=p_{T,n}(t_n|\mathcal H_{n-1})f_{S,n}(\bm s_n|\mathcal H_{t_n})f_{K,n}(k_n|\mathcal H_{\bm s_{n}}), 
\end{equation*}
where $f_{K,n}(k_n|\mathcal H_{\bm s_n})$ is the conditional mass of $k_n$ given $\mathcal H_{\bm s_n}$ and
\begin{equation*}
    \mathcal H_{\bm s_{n}}=(t_1,\bm s_1,k_1,\ldots, t_{n-1},\bm s_{n-1}, k_{n-1},t_n,\bm s_n). 
\end{equation*}
The estimation is performed using the original AWSM loss and an additional cross-entropy (CE) loss: 
\begin{equation}\label{eq:full_AWSM}
    \begin{aligned}
        \mathcal J^{\textnormal{CE}}(\hat f_K)& = -\mathbb E [\sum_{n=1}^{N(\mathcal X)}\log \hat f_{K,n}(k_n|\mathcal H_{\bm s_n})],\\
    \mathcal J(\tilde \lambda_{T}, \tilde f_{S},  \tilde f_K, \hat F) &=\mathcal J^{\textnormal{AWSM}}_{h_T,h_S}(\tilde \lambda_T, \tilde f_S)+\alpha_{\textnormal{surv}}\mathcal J^{\text{Survival}}(\hat F) +\alpha_K\mathcal J^{\textnormal{CE}}(\tilde f_K), 
\end{aligned}
\end{equation}
with balancing coefficients $\alpha_{\textnormal{surv}}$ and $\alpha_K$.

\section{Experiments} 
\label{experiment} 
This section validates our proposed (A)WSM on statistical and deep point process models.
For statistical models, we focus on verifying whether (A)WSM can accurately recover the ground-truth parameters. For deep point-process models, we show that our new training procedure recovers the intensity with MLE-level accuracy and is both efficient and scalable. 
% \footnote{Experiments are performed using an NVIDIA A16 GPU, 15GB memory.}

\subsection{Statistical Point Processes}
\label{synthetic}
In this section, we conduct experiments with parametric statistical models from the same family as the data-generating process and examine the estimation bias and variance of the (A)WSM objective.
\paragraph{Baselines and Metrics}
We consider two baselines (1) \textbf{MLE}, (2) implicit \textbf{(A)SM}~\citep{SahaniBM16, zhang2023integration, li2023smurf}, i.e., score matching without adding weight functions. To compare the performance of different methods,  we use the mean absolute error (\textbf{MAE}, $|\hat{\theta}-\theta|$) between the ground-truth parameters and the estimates as a metric since the ground-truth parameters are known.

\paragraph{Datasets} We validate the effectiveness of (A)WSM using three sets of synthetic data. 
% Each dataset is generated by using the thinning algorithm~\citep{ogata1998space}. 
(1) \textbf{Spatial Poisson Process}: This dataset is simulated from a spatial inhomogeneous Poisson process with an intensity function $\lambda(\bm x)=\exp(\theta (\sin x_1 + \cos x_2))$ on $(-2\pi, 2\pi)^2$, $\theta=2$. 
(2) \textbf{2-Variate Temporal Hawkes Processes}: This dataset is simulated from a $2$-variate Hawkes processes with conditional intensity: $\lambda_n(t, k|\mathcal H_{n-1}) = \mu_{k}+\sum_{i<n}\alpha_{k_ik}\exp(-5(t-t_i))$, with $T=10$, $\mu_1=\mu_2=1$, $\alpha_{11}=1.6, \alpha_{12}=0.2$, $\alpha_{21}=\alpha_{22}=1$. 
    % \item \textbf{Exponential Decay Kernel} This kernel assumes that the influence of historical events decays exponentially as time elapses. The kernel function for an interval $\tau$ is given by: $\phi(\tau) = \alpha \exp(-\beta \tau)$, 
    % % \begin{equation*}
    % % \label{Eq: Exp Kernel Def}
    % % \phi(\tau) = \alpha \exp(-\beta \tau),
    % % \end{equation*}
    % where $\alpha = 1$ and $\beta = 2$. 
(3) \textbf{Univariate Spatio-temporal Gaussian Hawkes Process}: This dataset is simulated from a univariate Hawkes process with Gaussian decay triggering kernel $\lambda_n(t,\bm s|\mathcal H_{n-1})=\mu+\sum_{i<n}\exp[-\beta(t-t_i)]\frac{C}{2\pi(t-t_i)}\exp(-\frac{1}{2}\|\bm s-\bm s_i\|^2)$. The parameters are set to be $\mu=0.5, C=1,\beta=2$. 

\paragraph{Training Protocol}
We assume that we know the ground-truth model but do not know its parameters. Therefore, we use the ground-truth model as the training model. 
The purpose is to verify whether the estimator can recover the ground-truth parameters. 
% For (A)SM and (A)WSM, we demonstrate that for multivariate Hawkes processes, the objective function may have multiple minimizers. 
For each dataset, we collect a total of $1000$ sequences. 
We 
% initialize the parameters at the ground-truth position and 
run $500$ iterations of gradient descent using Adam~\citep{kingma2014adam} as the optimizer for all scenarios. 
For MLE, the intensity integral is computed through numerical integration, with the number of integration nodes set to $100$ to achieve a considerable level of accuracy.  For the 2-variate temporal Hawkes experiment, we use the CE loss for type matching.
We change the random seed $3$ times to compute the mean and standard deviation of MAE.

\paragraph{Results}
In \cref{table: synthetic Data Experiment}, we report the MAE of parameter estimates for three models trained by MLE, (A)SM, and (A)WSM on the synthetic datasets.  
We can see that both MLE and (A)WSM achieve small MAE on three types of data. However, the MAE of (A)SM is large. As we have theoretically demonstrated earlier, this is because MLE and (A)WSM estimators are consistent. In contrast, (A)SM, due to the absence of the required regularity conditions in the three cases, has an incomplete objective and cannot accurately estimate parameters. 
% In \cref{learned_intensity}, we showcase the learned intensity functions. Both MLE and (A)WSM successfully capture the ground truth, while (A)SM fails. 

\begin{table*}[t]
\centering
\caption{The MAE of three models trained by MLE, (A)SM, and (A)WSM on the synthetic dataset. 
For the 2-variate processes, we only present the estimation results for some parameters here. The rest is shown in \cref{table:additional_experiments}.}

\label{table: synthetic Data Experiment}
\begin{sc}
\resizebox{\textwidth}{!}{
\begin{tabular}{c|c|ccc|ccc}
    \toprule
    \multirow{2}{*}{Estimator} & \multicolumn{1}{c|}{Poisson} & \multicolumn{3}{c|}{2-Variate Temporal Hawkes}  & \multicolumn{3}{c}{Spatio-temporal Gaussian Hawkes} \\
    \cmidrule{2-8}
        & $\theta$ & $\alpha_{11}$ & $\alpha_{12}$ &  $\mu_{1}$ & $\mu$ & $\beta$ & $C$ \\
    
    \midrule
    (A)WSM & $0.07_{\pm 0.01}$ & $0.041_{\pm 0.041}$& {$0.026_{\pm 0.001}$ }&$\bm{0.011_{\pm 0.010}}$ &$0.153_{\pm 0.162}$  & {$0.022_{\pm 0.023}$} &$0.060_{\pm 0.066}$ 
    \\
     \midrule
    (A)SM & $0.27_{\pm 0.01}$ & $1.600_{\pm 0.001}$& $0.200_{\pm 14.30}$ & $0.700_{\pm 0.272}$ &$0.487_{\pm 0.313}$  & $0.988_{\pm 0.083}$ & $1.420_{\pm 0.313}$ 
    \\
    \midrule
    MLE  & \bm{$-0.02_{\pm 0.10}$} & \bm{$0.028_{\pm 0.015}$} & $\bm{0.014_{\pm 0.002}}$ & {$0.012_{\pm 0.006}$} &$\bm{0.098_{\pm 0.107}}$ & $\bm{0.017_{\pm 0.019}}$ & \bm{$0.051{\pm 0.049}$}\\
    \bottomrule
    % \midrule
    % Exp Hawkes & $-0.33_{\pm 0.03}$ & $-0.28_{\pm 0.05}$& $-0.32_{\pm 0.02}$ & $-0.35_{\pm 0.05}$ \\
    % \midrule
    % Sin Hawkes & $-1.11_{\pm 0.04}$ & $-1.04_{\pm 1.02}$ & $-0.74_{\pm 0.02}$ & $-0.90_{\pm 0.03}$\\
\end{tabular}}
\end{sc}
\end{table*}

\subsection{Deep Point Processes}
In this section, we parameterize the intensity using deep neural network and evaluate whether our AWSM method can accurately and efficiently recover the intensity, on both synthetic datasets where the true intensity is known and real-world datasets where the true intensity is unknown. 

\paragraph{Baselines and Metrics}
We consider two baselines: (1) \textbf{MLE}, (2) Denoising Score Matching (\textbf{DSM})~\citep{li2023smurf,li2024beyond}. The result of (A)SM is omitted since it fails on our tasks. We use test log-likelihood (\textbf{TLL}) as a principal metric. Notably, on spatio-temporal datasets, we use $\textnormal{TLL}_S$ and $\textnormal{TLL}_T$ for spatial test log-likelihood and temporal test log-likelihood. We also use the event type prediction accuracy (\textbf{ACC}) as metrics for multivariate point processes. Besides, we also record the total runing time (\textbf{RT}) in minutes of each method, deployed on the same model, same device with same data-loading strategy.

\paragraph{Datasets}
We perform experiments on both synthetic and real-world datasets, with a total number of 12 datasets. For synthetic datasets, we visualize the learned intensities by different training objectives and the ground-truth intensity. For real-world datasets, we record their TLL, ACC and RT. To be specific, we consider: 
\begin{itemize}
    \item (Multivariate) Temporal Point Processes: (1) \textbf{2-Variate Half-sin Hawkes Processes}: this is a synthetic 2-variate temporal Hawkes processes with triggering kernel $g_{ij}(\tau)=\alpha_{ij}sin(\tau), \tau\in (0,\pi)$.
    (2) \textbf{2-Variate Exponential-decay Hawkes Processes}: this is a synthetic 2-variate Hawkes processes with triggering kernel $g_{ij}(\tau)=\alpha_{ij}\exp(-\beta \tau)$.
    (3) \textbf{StackOverflow}~\citep{jure2014snap}: this dataset has two years of user awards on StackOverflow. Each user received a sequence of badges and there are $K = 22$ types of badges. 
    (4) \textbf{Taobao}~\citep{xue2022hypro}: this dataset comprises user activities on Taobao (in total $K=17$ event types). 
    (5) \textbf{Retweet}~\citep{zhao2015seismic}: this dataset includes sequences indicating how each novel tweets are forwarded by other users. Retweeter categories serve as event types $K=3$. 
    (6) \textbf{Taxi}~\citep{whong2014foiling}: this dataset comprises taxi pick-up and drop-off incidents in New York City. There are in total $K = 10$ event types. 
    For each dataset, we follow the default training/dev/testing split in the repository. 

    \item (Multivariate) Spatio-temporal Point Processes: (1) \textbf{Gaussian Spatio-temporal Hawkes Process}~\citep{zhou2022neural}: this is a univariate spatio-temporal process with $\lambda_n(\bm s, t|\mathcal H_{n-1})=\mu g_0(\bm s)+\sum_{i<n}g_1(t,t_i)g_2(\bm s, s_i)$. (2) \textbf{Gaussian Self-correcting Process}~\citep{zhou2022neural}: this is a univariate spatio-temporal process with $\lambda_n(\bm s, t|\mathcal H_{n-1})=\mu \exp(g_0(\bm s)\beta t-\sum_{i<n}\alpha g_2(\bm s, \bm s_i))$. (3) \textbf{Earthquake}~\citep{usgs_earthquake_catalogue_2020}: this dataset contains the location and time of all earthquakes in Japan from 1990 to 2020. 
    (4) \textbf{COVID19}~\citep{nytimes_covid19_data_2020}: this dataset is released publicly by The New York Times (2020) on daily COVID-19 cases in New Jersey state. The number of events per sequence ranges from 3 to 323, consisting of time and locations. 
    (5) \textbf{Citibike}~\citep{citibike_system_data_2019}: this dataset consists of rental events from a bike sharing service in New York City, consisting of time and locations. 
    (6) \textbf{Football}~\citep{yeung2023transformer}: this dataset records football event data retrieved from the WyScout Open Access Dataset. Each event signifies an action made by the player, with time, location and event type of the action. 
    For each dataset, we follow the default training/dev/testing split in the repository. 
\end{itemize}

\paragraph{Models}
In principle, our method works for any intensity-based model that allows taking derivative to its input. In practice, for temporal datasets, we focus on two of the most popular attention-based Hawkes process models: 
% \textbf{RMTPP}~\citep{du2016recurrent}, \textbf{NHP}~\citep{mei2017neural}, 
\textbf{SAHP}~\citep{zhang2020self} and \textbf{THP}~\citep{simiao2020transformer}. For spatio-temporal datasets, we mainly adopt the codebase of \citet{li2024beyond} and use their THP-based \textbf{SMASH} model. For the additional survival part $\hat F_n(\mathcal H_{n-1})$, we use a MLP-classifier that takes as input the history embedding from the Transformer. The detailed implementation is deferred to \cref{sec:network_details}. 

% we mainly adopt the codebase of \citep{li2023smurf} and \citet{lis} 
% \paragraph{Data Preprocessing}

\paragraph{Preprocessing and Training Protocol}
For the temporal coordinate, if a dataset is already approximately confined to a fixed observation window $(0,T)$, we keep the original timestamps and set $T$ to be the ceiling of the maximum timestamp observed over the entire dataset. Then we insert a ``zero element'' with $t_0=0$. If the events are recorded over disparate calendar periods, we instead re-center each sequence by subtracting the time of its first event so that all sequences start at time $0$. Then we use ceiling of the maximum re-centered timestamps as $T$. 
For the two-dimensional spatial coordinates, we take an axis-aligned rectangle as the spatial observation domain $\mathcal S$ whose $x$- and $y$- limits  are given by the minimum and maximum coordinates observed in the dataset. 
On top of this preprocessing, for a subset of datasets we further apply min–max normalization to the temporal coordinate, the spatial coordinates, or both.
Again during training, we always initialize the history of the first event $t_1$ with a dummy event at $t_0=0, \bm s_0=(0,0)$ and $k_0=0$, which serves as a common initial history for all sequences.

For each dataset, we train 3 seeds with the same epochs and report the mean and standard deviation of the best TLL, ACC and RT. When using MLE, we adopt numerical integration to calculate the intensity integral. We select the number of integration nodes such that all three methods have similar training speed and memory consumption. 
When training with DSM, we add additive Gaussian noise to the locations and multiplicative log-normal noise to the timestamps. More details of training and testing hyperparameters are provided in \cref{sec:additional experiment}. 
% Such nodes are dense enough for the accurate estimation of the intensity integral. 

\paragraph{Results}
We summarize our empirical findings on deep point process models as follows.
\begin{itemize}
    \item \textit{AWSM recovers the ground-truth intensity with deep point process models.} We visualize the learned intensity alongside the ground-truth intensity on synthetic datasets. The result on the temporal synthetic dataset is shown in \cref{fig:synthetic_temporal_intensity_visualize}, and the result on the spatio-temporal synthetic dataset is shown in \cref{fig:synthetic_spatial_temporal_intensity_visualize}. They all approximately match the ground-truth intensity. 

    \item \textit{AWSM achieves TLL and ACC comparable to MLE.} In \cref{table:results_on_temporal_dataset_split}, we report the performance of SAHP and THP trained on four real-world temporal datasets under three training objectives. In \cref{table:results_on_ST_dataset}, we report the performance of SMASH on four real-world spatio-temporal datasets under the same set of objectives. Across all datasets, models trained with MLE and AWSM exhibit very similar performance in terms of both TLL and ACC on the test data, indicating that AWSM is consistent with MLE in the sense that they yield comparable model parameters. Overall, DSM is slightly inferior to AWSM and MLE. We also observe that the performance of DSM is sensitive to the noise scale and requires substantial hyperparameter tuning. Besides, we visualize some intensities on real-world datasets in \cref{fig:earthquake_map_visual} and \cref{learned_intensity}.

    % \item \textit{Score-based training objectives are more efficient than MLE while achieving similar accuracy.} In \cref{table:results_on_temporal_dataset_split} and \cref{table:results_on_ST_dataset}, we observe that the training times of the three methods are overall quite similar, indicating that they were implemented under a reasonably fair setting, which allows us to compare their performance under comparable computational budgets. Overall, while maintaining accuracy comparable to MLE, the score-based methods are generally more efficient, due to their ability to bypass the explicit computation of the normalizing integral. We further discuss this efficiency–accuracy trade-off in \cref{sec:advantage_of_AWSM}.

\item \textit{Under comparable training budgets, AWSM attains MLE-level accuracy with similar or lower running time.}
As reported in \cref{table:results_on_temporal_dataset_split} and \cref{table:results_on_ST_dataset}, the training times of the three objectives are deliberately kept in the same range: for MLE we do not use an overly coarse quadrature that would make it artificially fast but inaccurate. Instead, all methods are run under broadly comparable computational budgets. In this fair setting, the score-based objectives typically achieve accuracy on par with MLE while being slightly more efficient, mainly because they bypass the explicit computation of the normalizing integral. We further analyze this accuracy–efficiency trade-off in \cref{sec:advantage_of_AWSM}.

\end{itemize}

\begin{figure}[htbp]
    \centering

    % ====== 第 1 行：Half-Sin, THP ======
    \begin{subfigure}[b]{0.33\textwidth}
        \includegraphics[width=\linewidth]{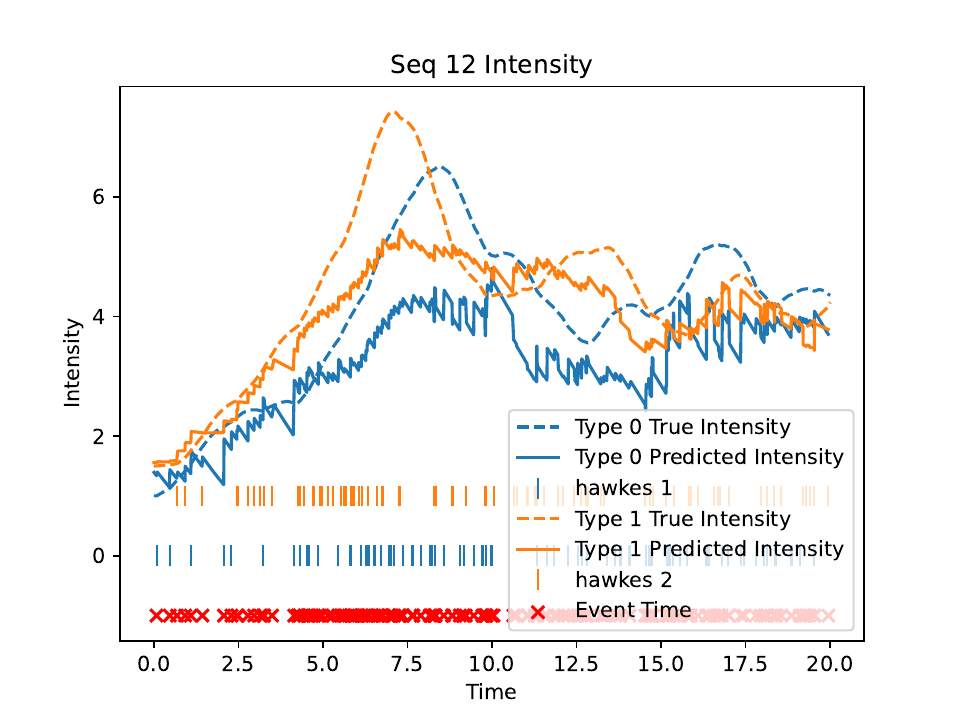}
        \subcaption{MLE}
        \label{fig:intensity_seq12_mle_thp}
    \end{subfigure}%
    \hfill
    \begin{subfigure}[b]{0.33\textwidth}
        \includegraphics[width=\linewidth]{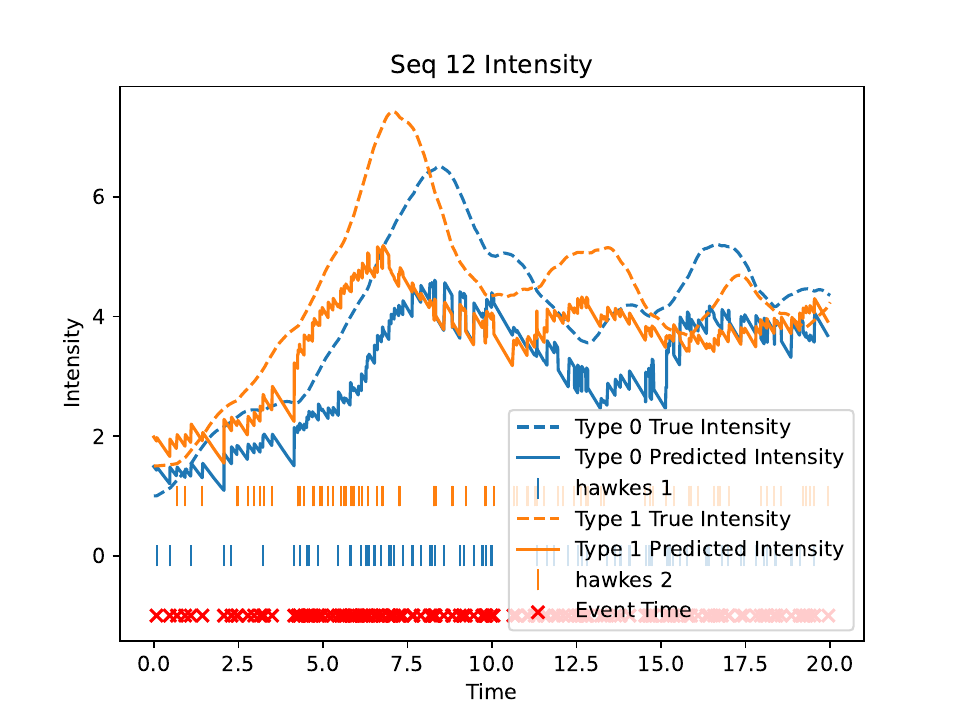}
        \subcaption{DSM}
        \label{fig:intensity_seq12_dsm_thp}
    \end{subfigure}%
    \hfill
    \begin{subfigure}[b]{0.33\textwidth}
        \includegraphics[width=\linewidth]{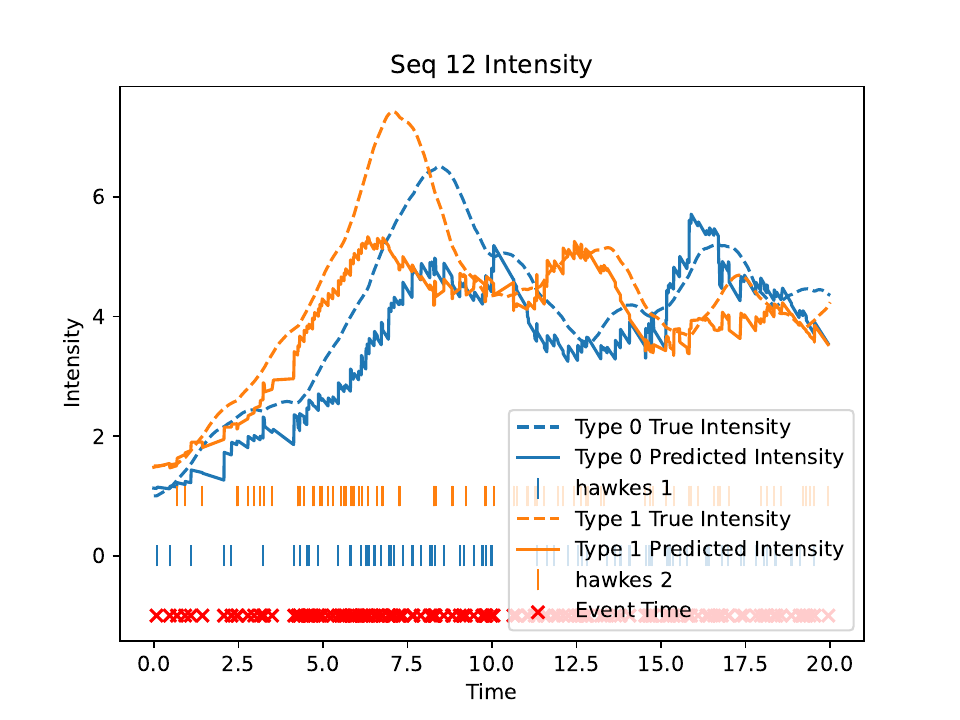}
        \subcaption{AWSM}
        \label{fig:intensity_seq12_wsm_thp}
    \end{subfigure}

    \vspace{0.15cm}

    % ====== 第 2 行：Exp-decay, THP ======
    \begin{subfigure}[b]{0.33\textwidth}
        \includegraphics[width=\linewidth]{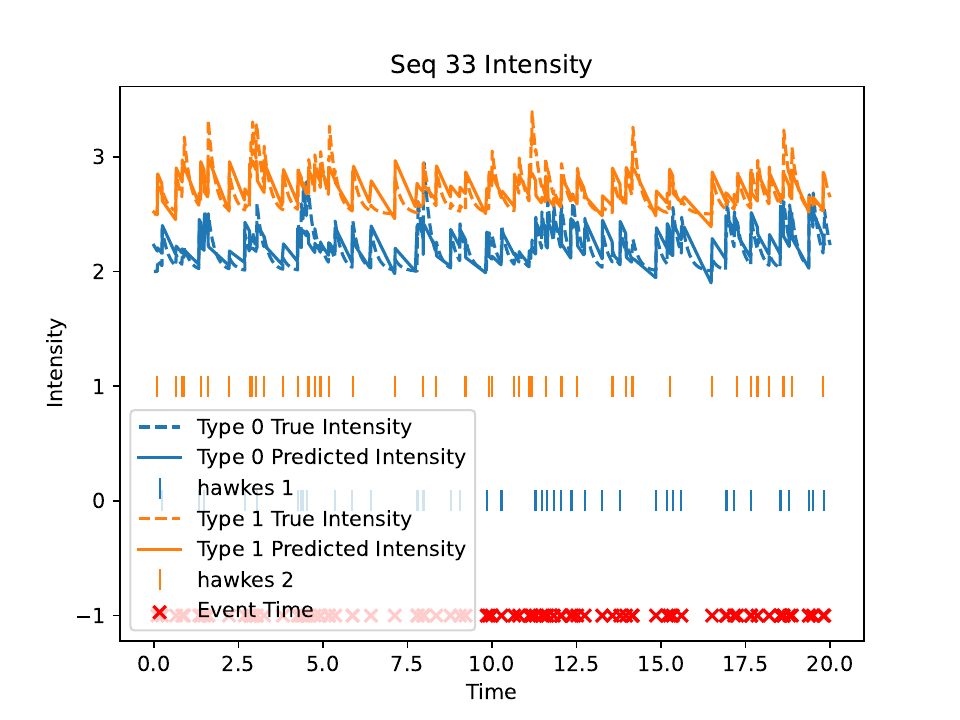}
        \subcaption{MLE}
        \label{fig:intensity_seq33_mle_thp}
    \end{subfigure}%
    \hfill
    \begin{subfigure}[b]{0.33\textwidth}
        \includegraphics[width=\linewidth]{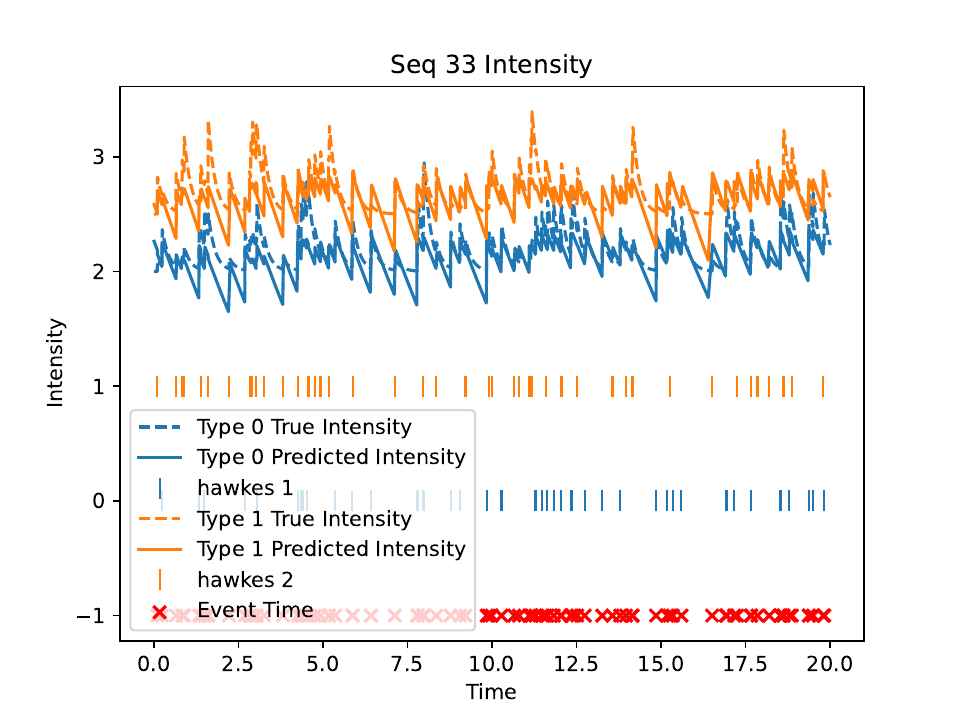}
        \subcaption{DSM}
        \label{fig:intensity_seq33_dsm_thp}
    \end{subfigure}%
    \hfill
    \begin{subfigure}[b]{0.33\textwidth}
        \includegraphics[width=\linewidth]{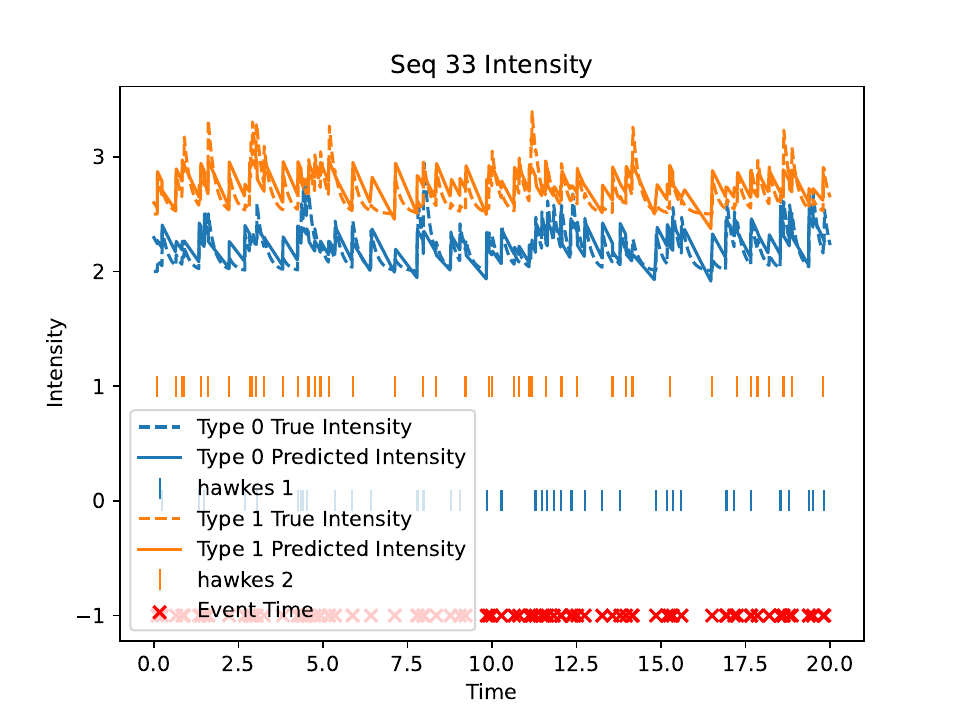}
        \subcaption{AWSM}
        \label{fig:intensity_seq33_wsm_thp}
    \end{subfigure}

    \vspace{0.15cm}

    % ====== 第 3 行：Half-Sin, SAHP ======
    \begin{subfigure}[b]{0.33\textwidth}
        \includegraphics[width=\linewidth]{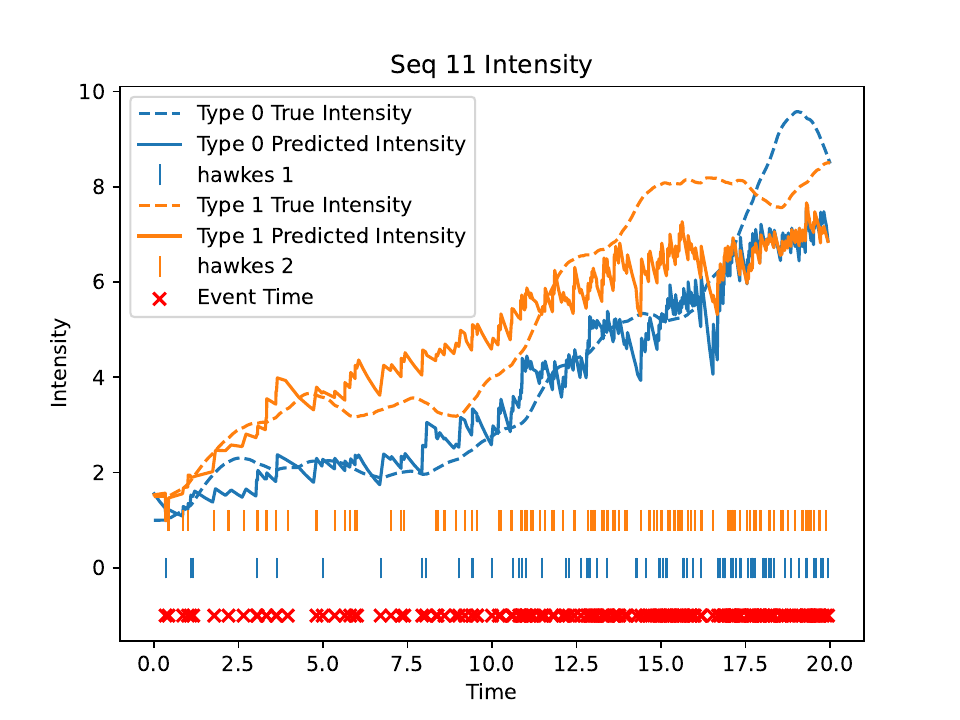}
        \subcaption{MLE}
        \label{fig:intensity_seq11_mle_sahp}
    \end{subfigure}%
    \hfill
    \begin{subfigure}[b]{0.33\textwidth}
        \includegraphics[width=\linewidth]{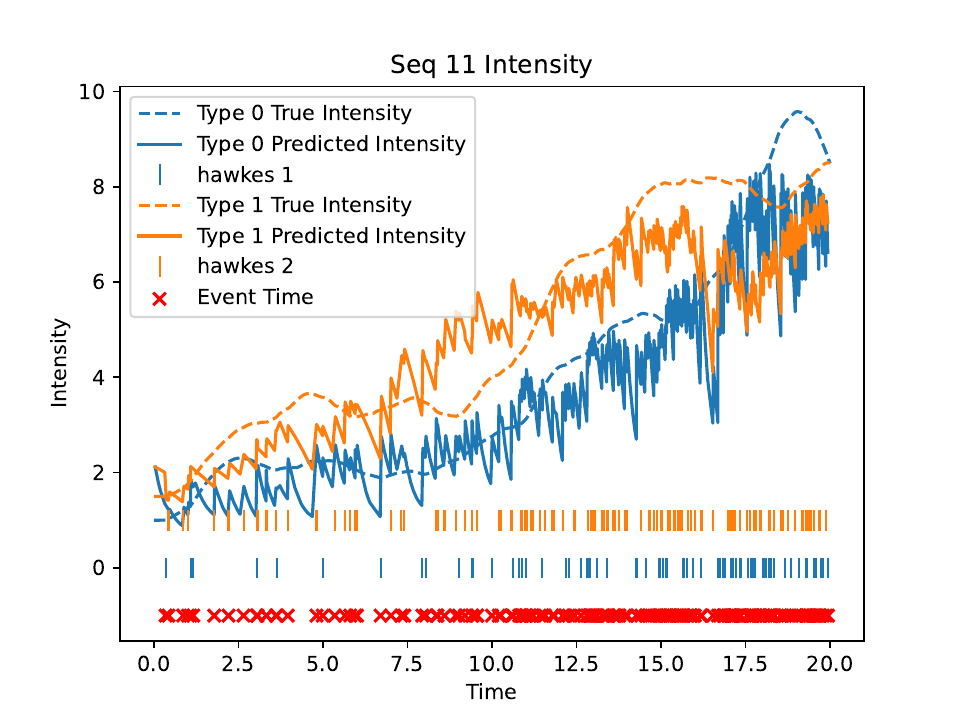}
        \subcaption{DSM}
        \label{fig:intensity_seq11_dsm_sahp}
    \end{subfigure}%
    \hfill
    \begin{subfigure}[b]{0.33\textwidth}
        \includegraphics[width=\linewidth]{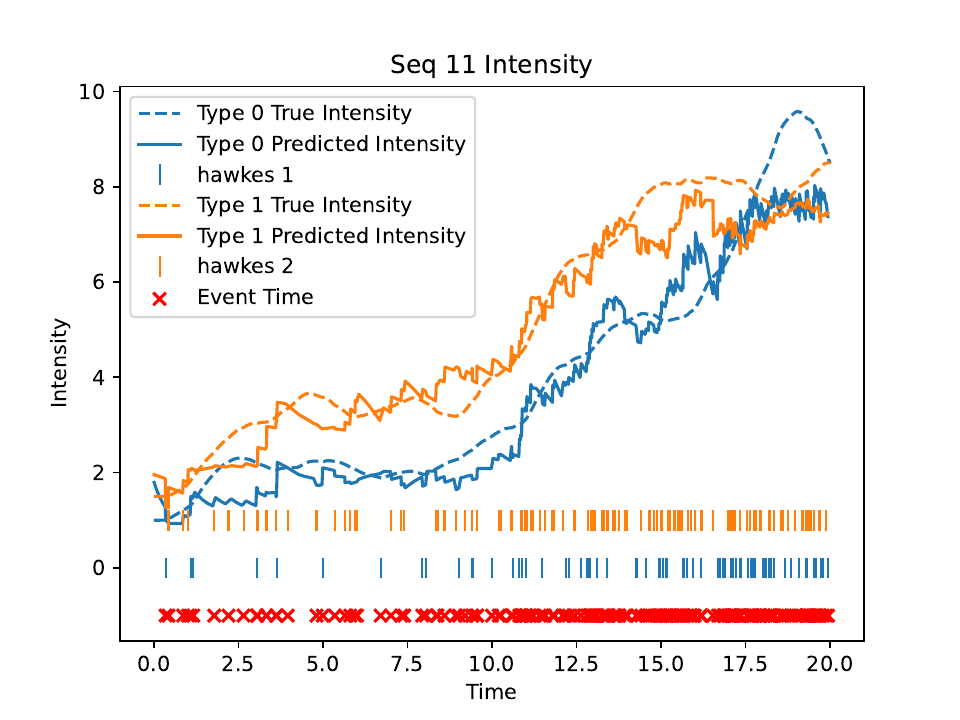}
        \subcaption{AWSM}
        \label{fig:intensity_seq11_wsm_sahp}
    \end{subfigure}

    \vspace{0.15cm}

    % ====== 第 4 行：Exp-decay, SAHP ======
    \begin{subfigure}[b]{0.33\textwidth}
        \includegraphics[width=\linewidth]{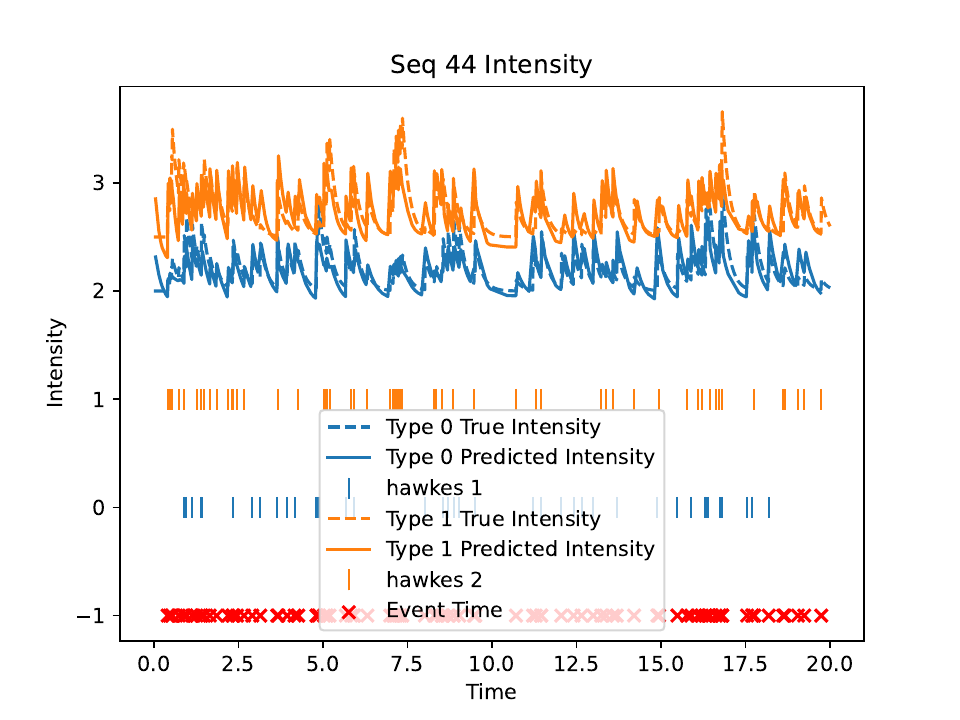}
        \subcaption{MLE}
        \label{fig:intensity_seq44_mle_sahp}
    \end{subfigure}%
    \hfill
    \begin{subfigure}[b]{0.33\textwidth}
        \includegraphics[width=\linewidth]{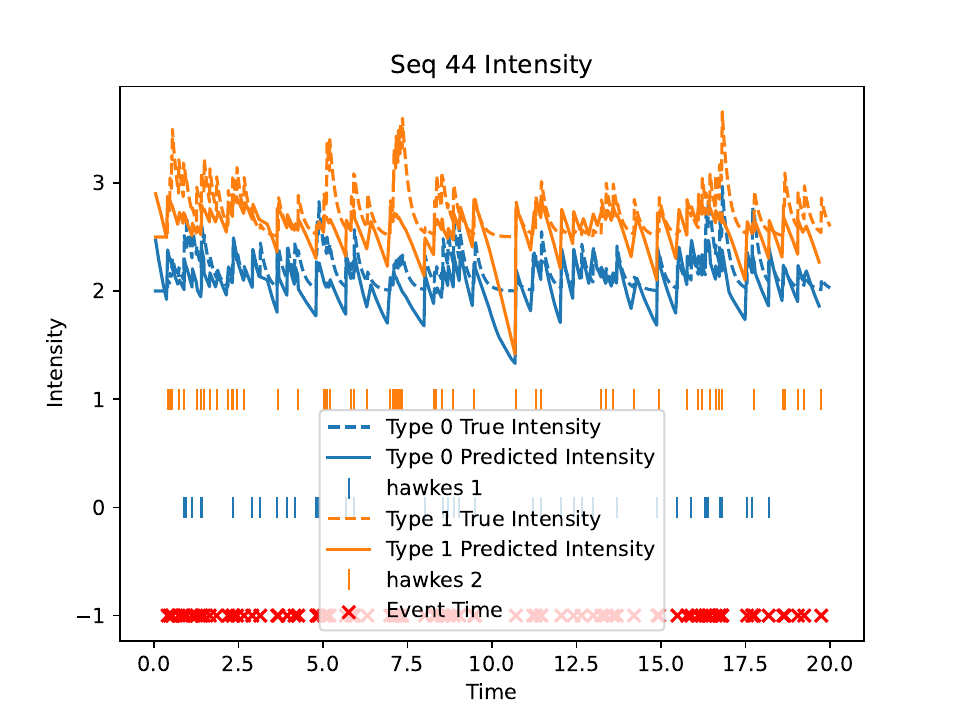}
        \subcaption{DSM}
        \label{fig:intensity_seq44_dsm_sahp}
    \end{subfigure}%
    \hfill
    \begin{subfigure}[b]{0.33\textwidth}
        \includegraphics[width=\linewidth]{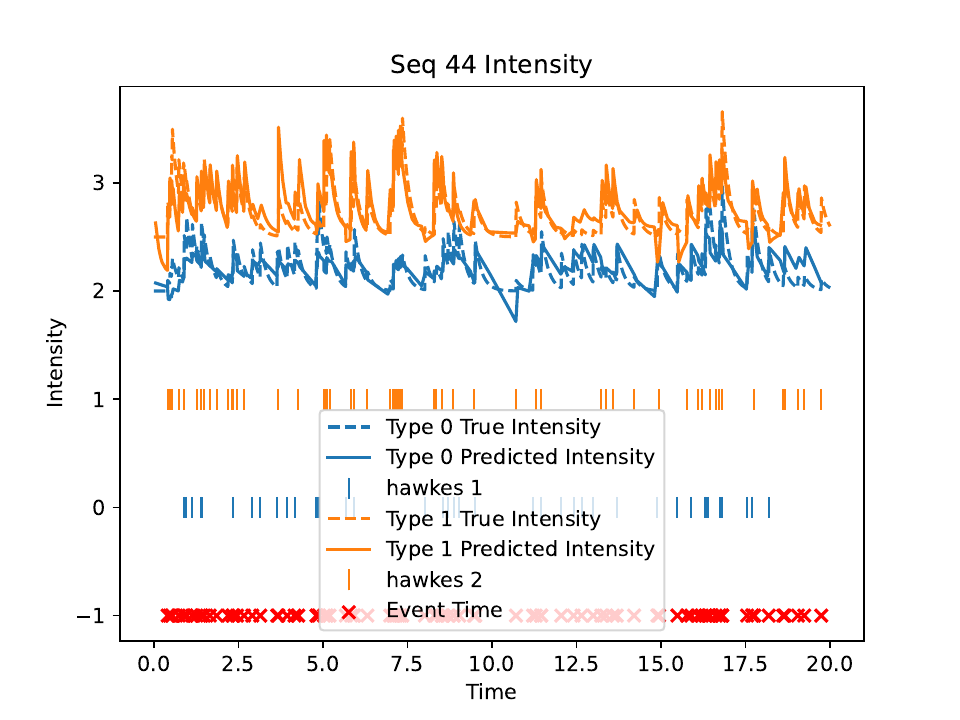}
        \subcaption{AWSM}
        \label{fig:intensity_seq44_wsm_sahp}
    \end{subfigure}

    \caption{Ground-truth and learned intensities on two temporal synthetic datasets. \textbf{Rows}: (1) Half-Sin, THP; (2) Exp-decay, THP; (3) Half-Sin, SAHP; (4) Exp-decay, SAHP. \textbf{Columns}: MLE, DSM, AWSM.}
\label{fig:synthetic_temporal_intensity_visualize}
\end{figure}

% \begin{figure}
%     \centering
%     \includegraphics[width=0.5\linewidth]{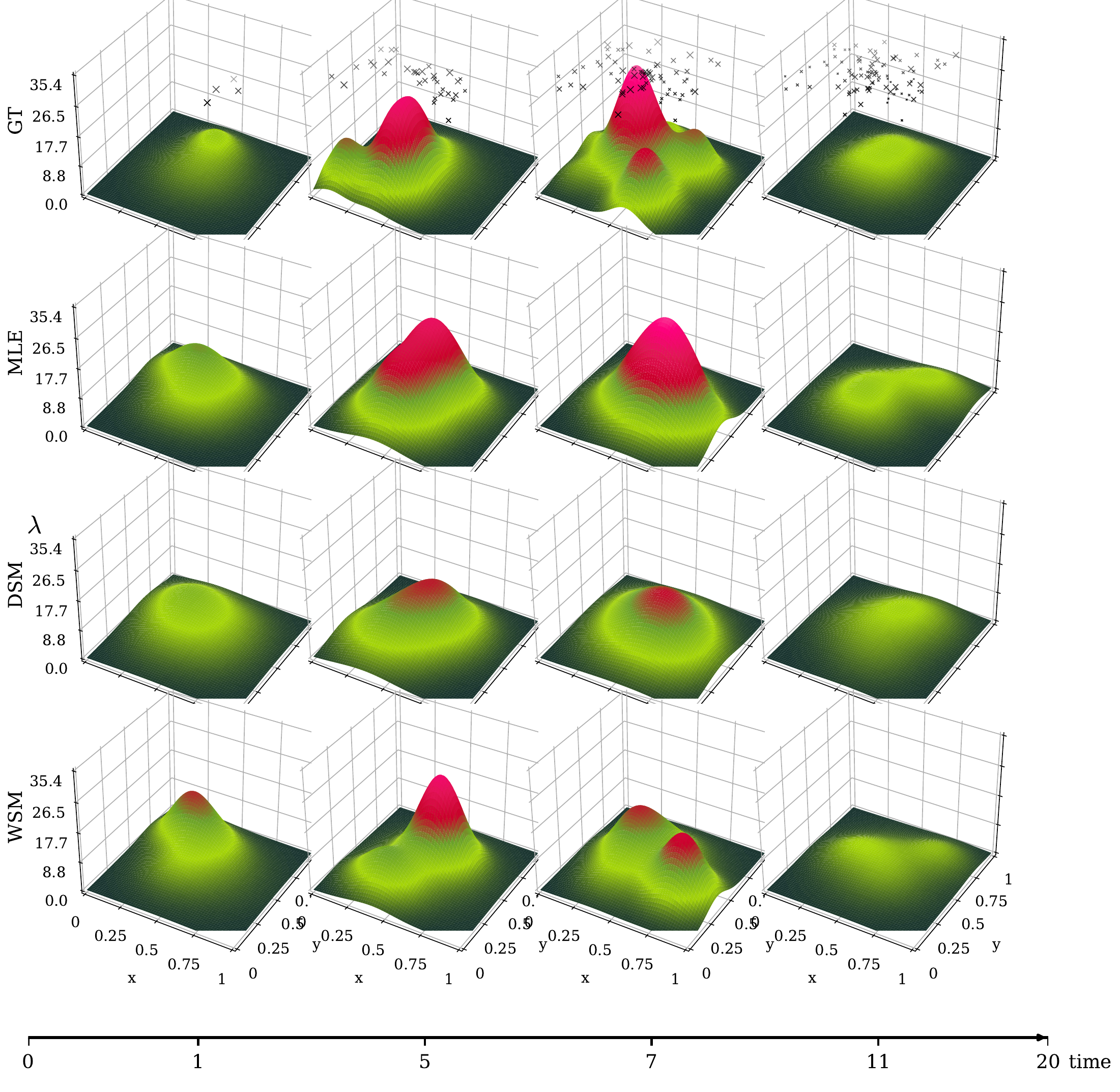}
%     \caption{Ground-truth and learned intensity on two synthetic data. \textbf{Top}: ground-truth; \textbf{Middle}: learned by MLE and DSM, \textbf{Bottom}: learned by WSM}.
%     \label{fig:placeholder}
% \end{figure}

\begin{figure}[htbp]
    \centering
    % ---- 第一行 ----
    \begin{subfigure}{0.48\textwidth}
        \centering
        \includegraphics[width=\linewidth]{image_new/intensity_gt_vs_pred_AutoIntHawkes3_MLE_DSM_WSM_1.pdf} % 第一张图
        \caption{Spatio-temporal Hawkes 1}
        \label{fig:sub1}
    \end{subfigure}
    \hfill
    \begin{subfigure}{0.48\textwidth}
        \centering
        \includegraphics[width=\linewidth]{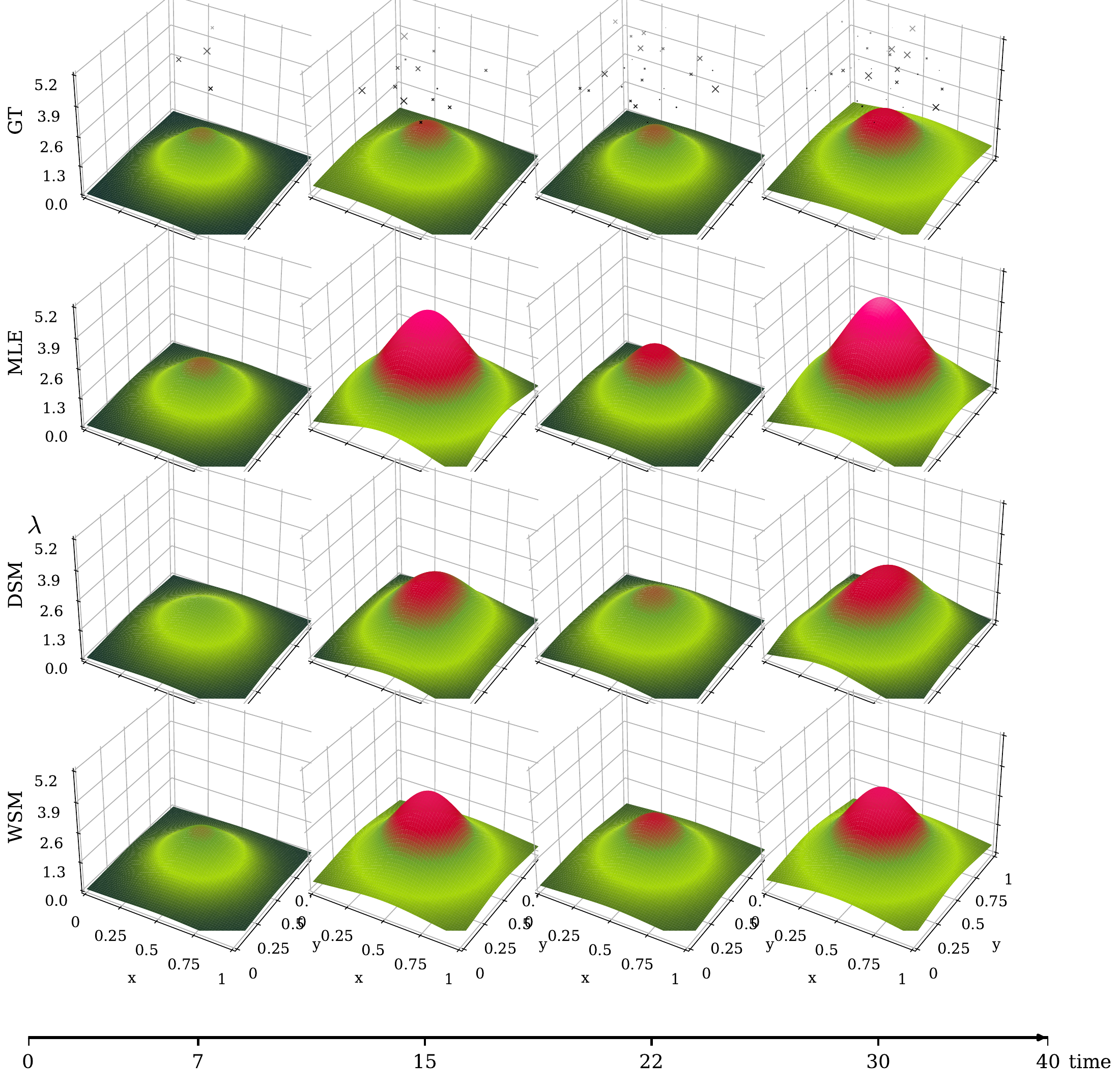} % 第二张图
        \caption{Spatio-temporal Hawkes 2}
        \label{fig:sub2}
    \end{subfigure}

    % \medskip % 换行留一点垂直间距

    % ---- 第二行 ----
    \begin{subfigure}{0.48\textwidth}
        \centering
        \includegraphics[width=\linewidth]{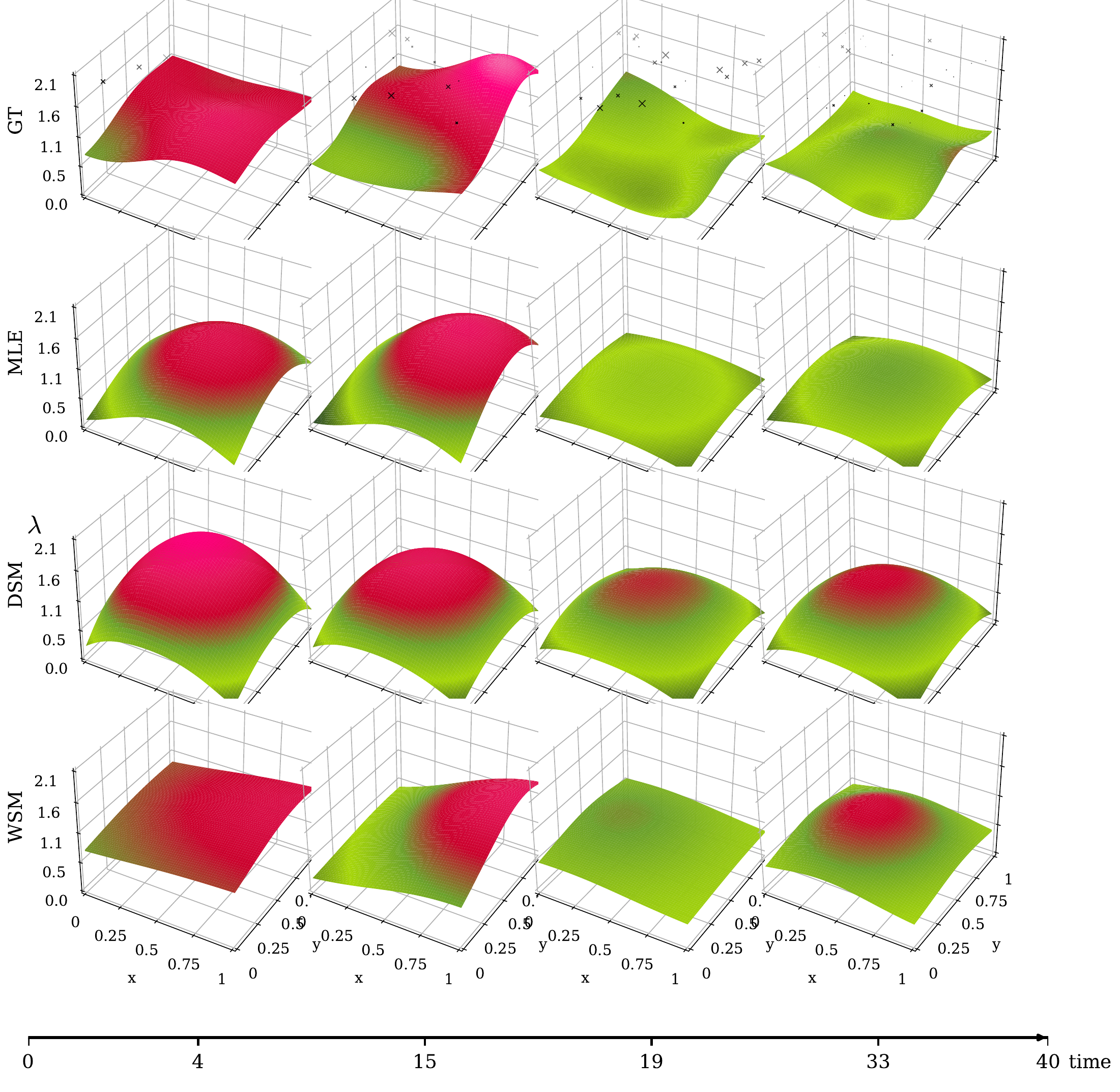} % 第三张图
        \caption{Self-correcting Process 1}
        \label{fig:sub3}
    \end{subfigure}
    \hfill
    \begin{subfigure}{0.48\textwidth}
        \centering
        \includegraphics[width=\linewidth]{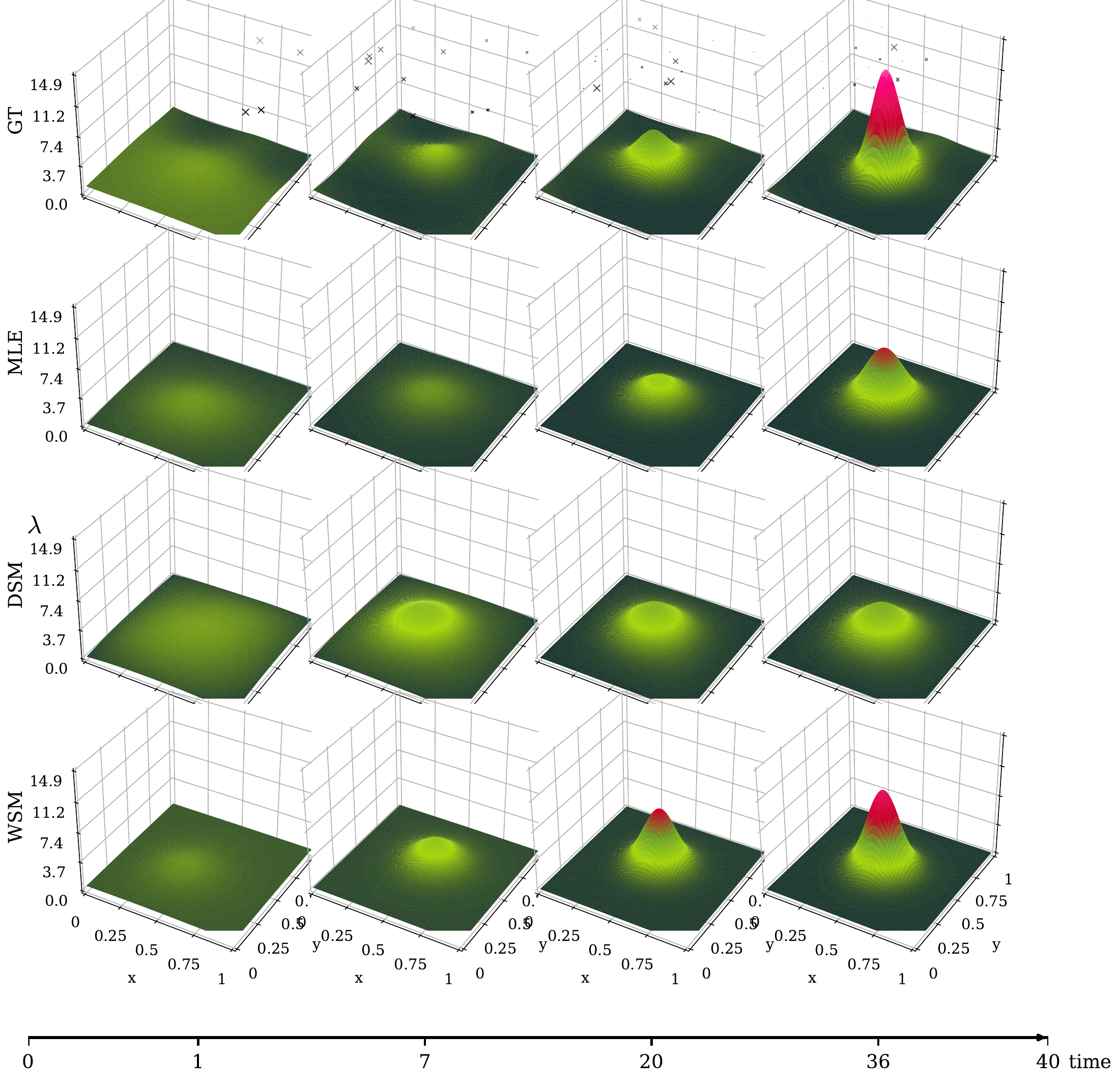} % 第四张图
        \caption{Self-correcting Process 2}
        \label{fig:sub4}
    \end{subfigure}

    \caption{Ground-truth and learned intensities on two spatio-temporal synthetic datasets. \textbf{Top}: ground-truth intensity; \textbf{Middle}: learned intensities by MLE and DSM, \textbf{Bottom}: learned intensity by AWSM.}
    \label{fig:synthetic_spatial_temporal_intensity_visualize}
\end{figure}

\begin{figure}[t]
  \centering
  \setlength{\tabcolsep}{2pt} % 可选：稍微减小图片间距

  % --------- (a) MLE 一整行 ----------
  \begin{subfigure}{\linewidth}
    \centering
    \includegraphics[width=0.18\linewidth]{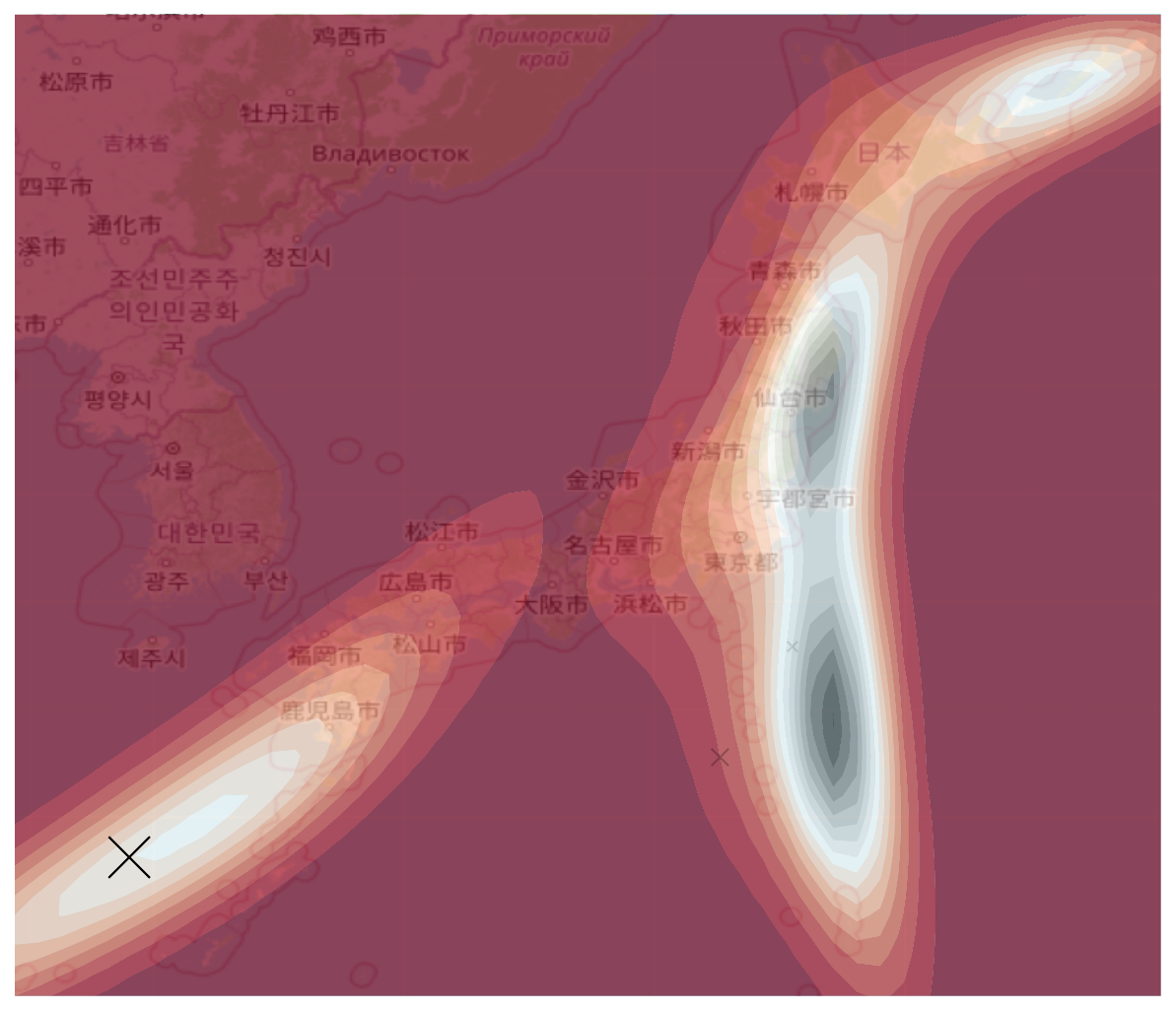}
    \includegraphics[width=0.18\linewidth]{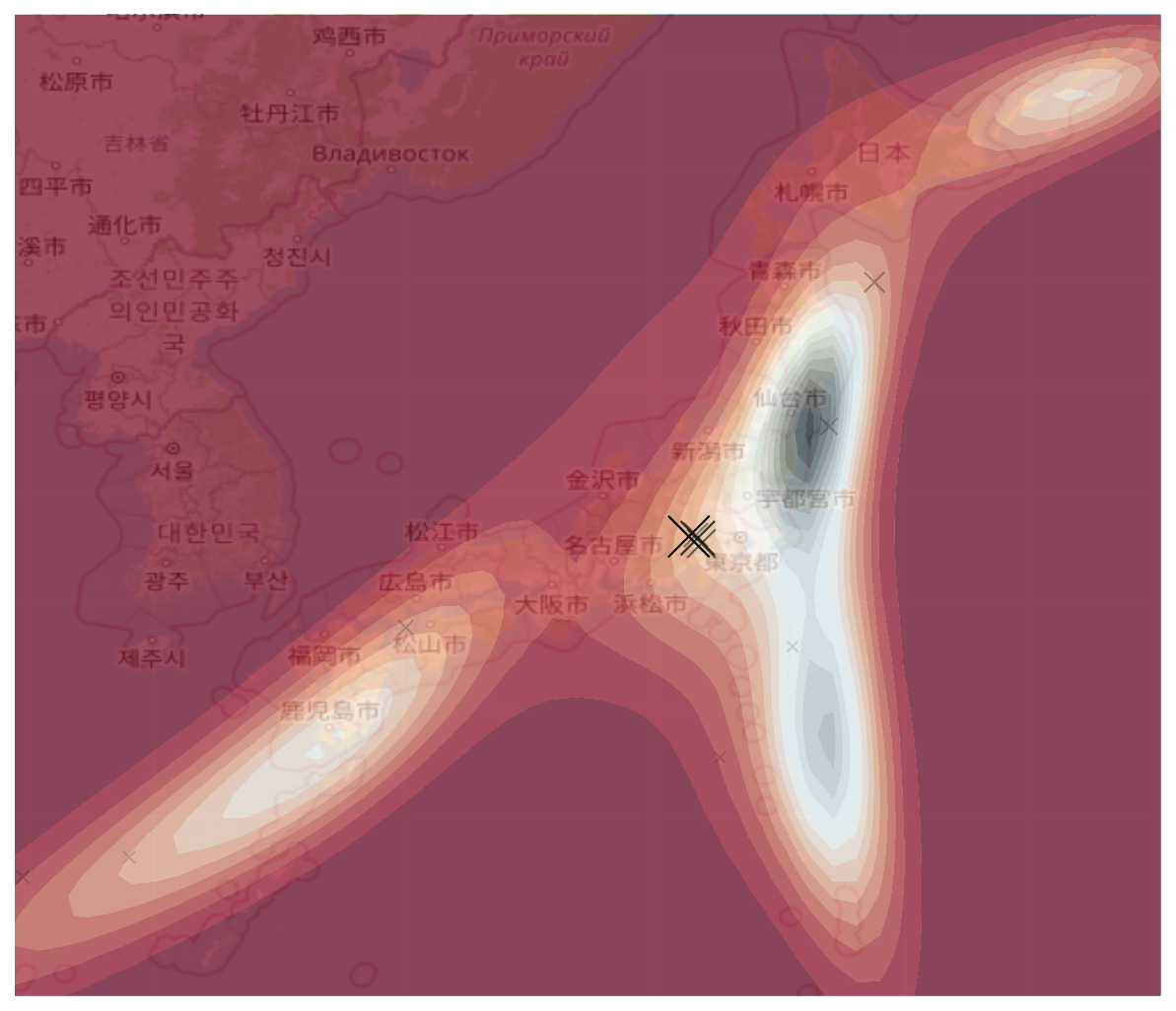}
    \includegraphics[width=0.18\linewidth]{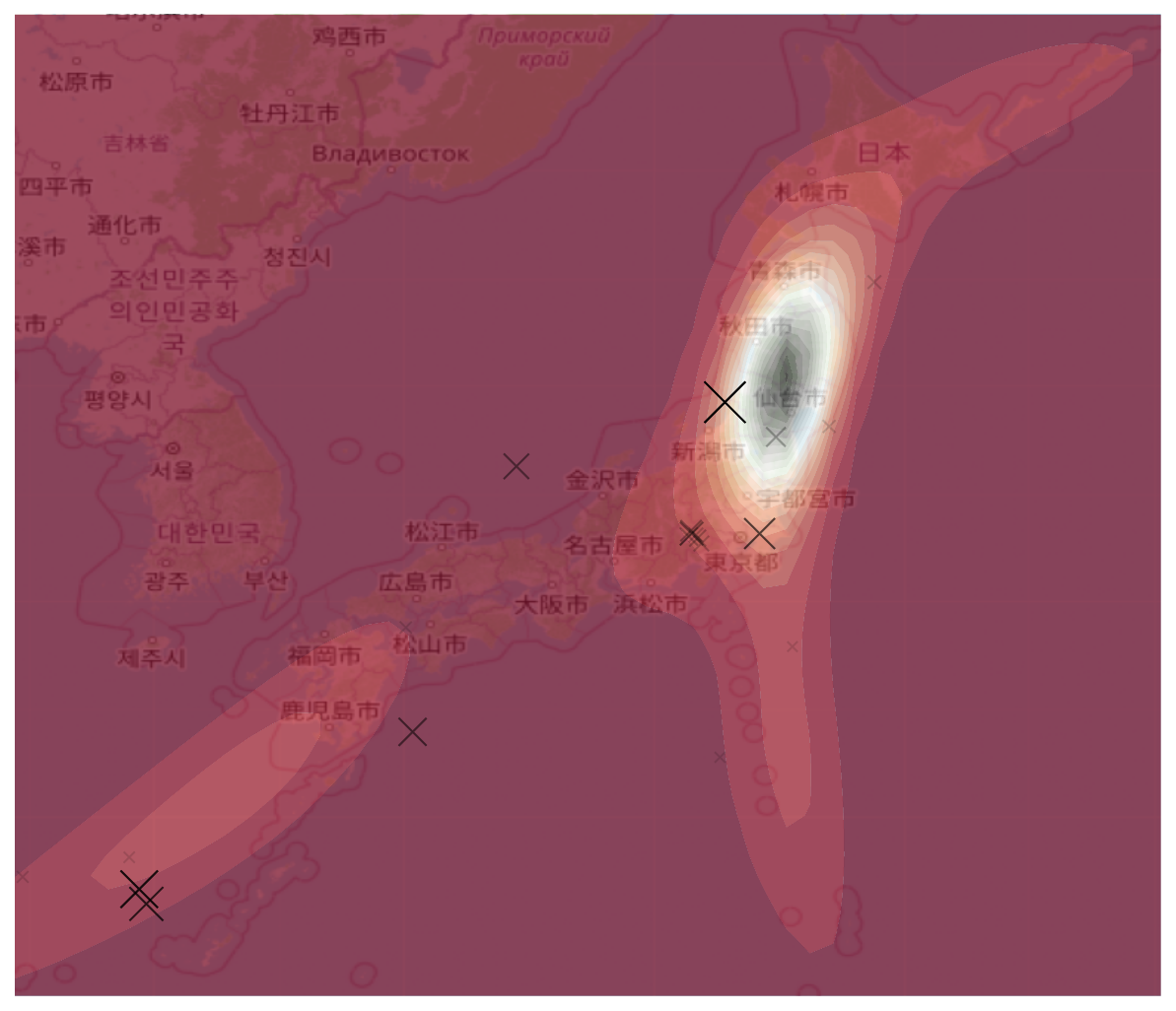}
    \includegraphics[width=0.18\linewidth]{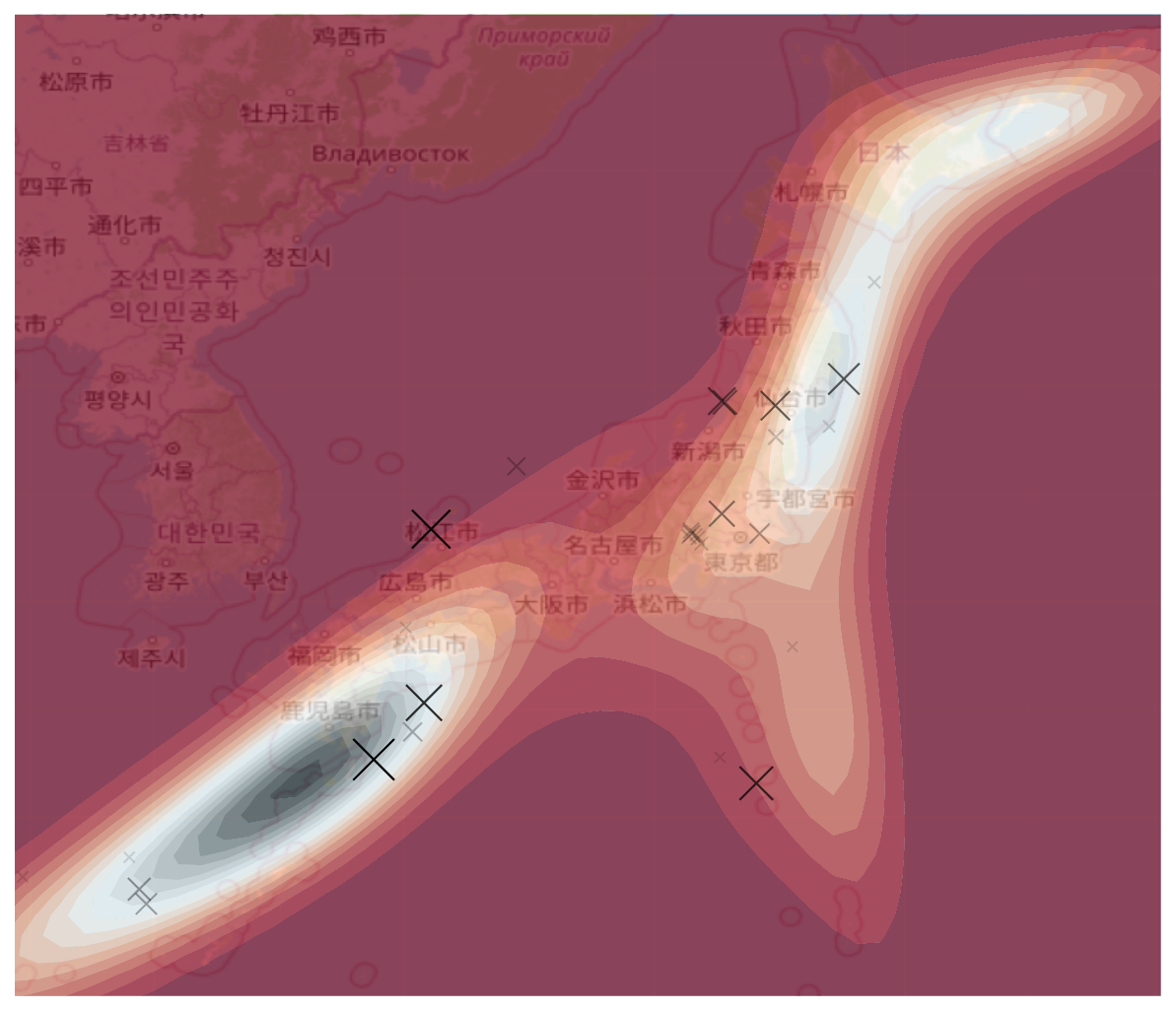}
    \includegraphics[width=0.18\linewidth]{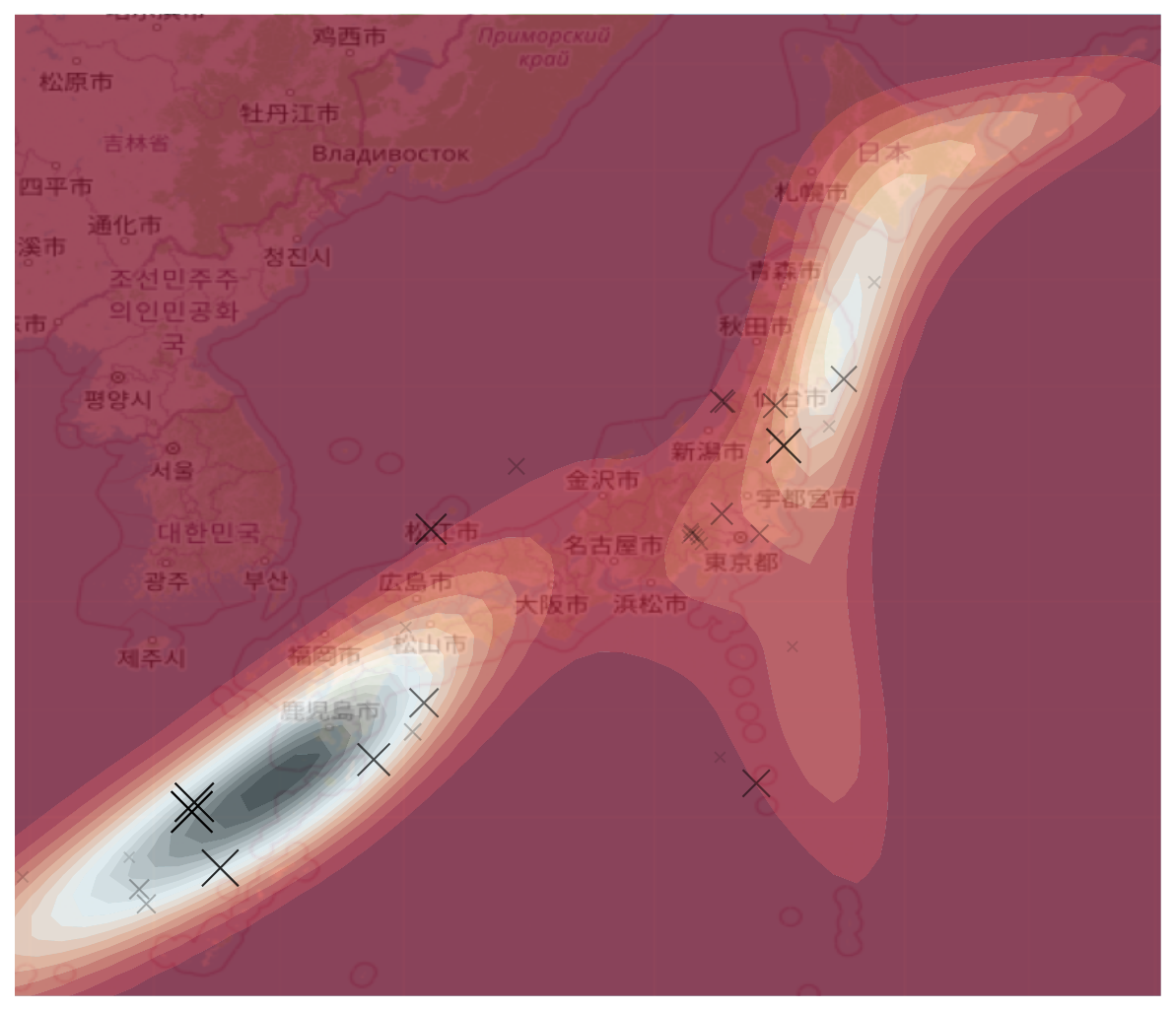}
    \caption{MLE on Earthquake}
  \end{subfigure}

  \medskip

  % --------- (b) WSM 一整行 ----------
  \begin{subfigure}{\linewidth}
    \centering
    \includegraphics[width=0.18\linewidth]{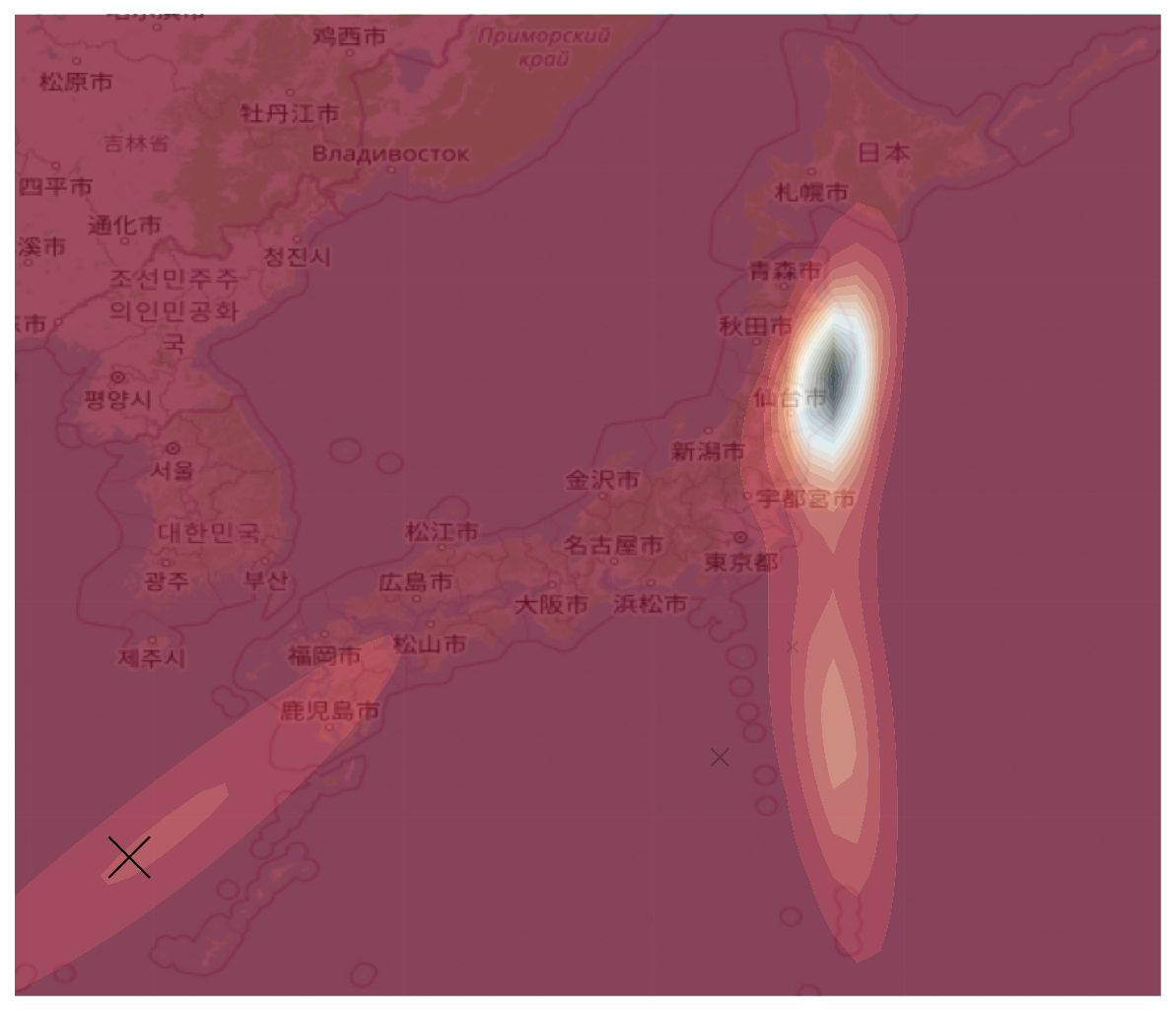}
    \includegraphics[width=0.18\linewidth]{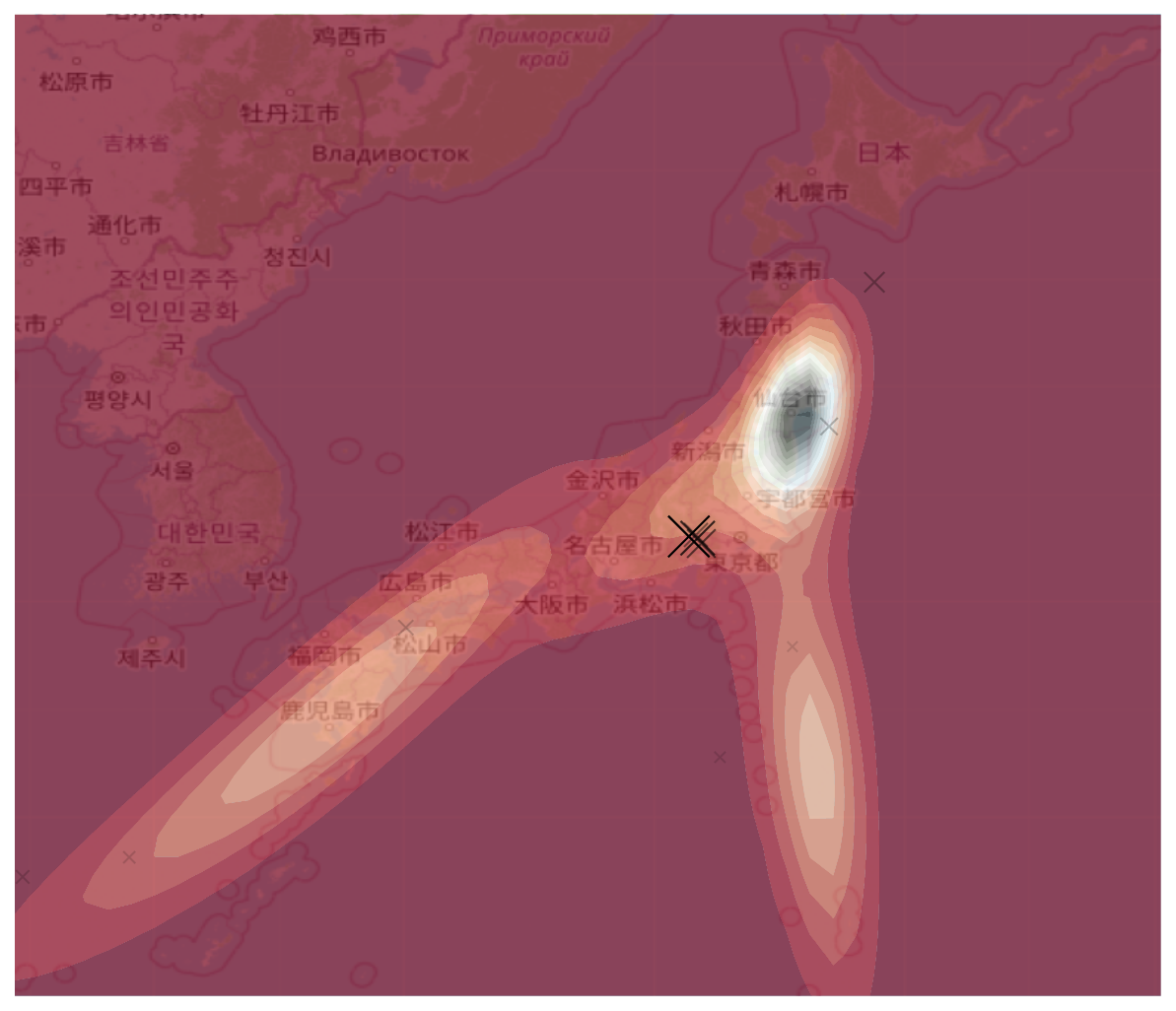}
    \includegraphics[width=0.18\linewidth]{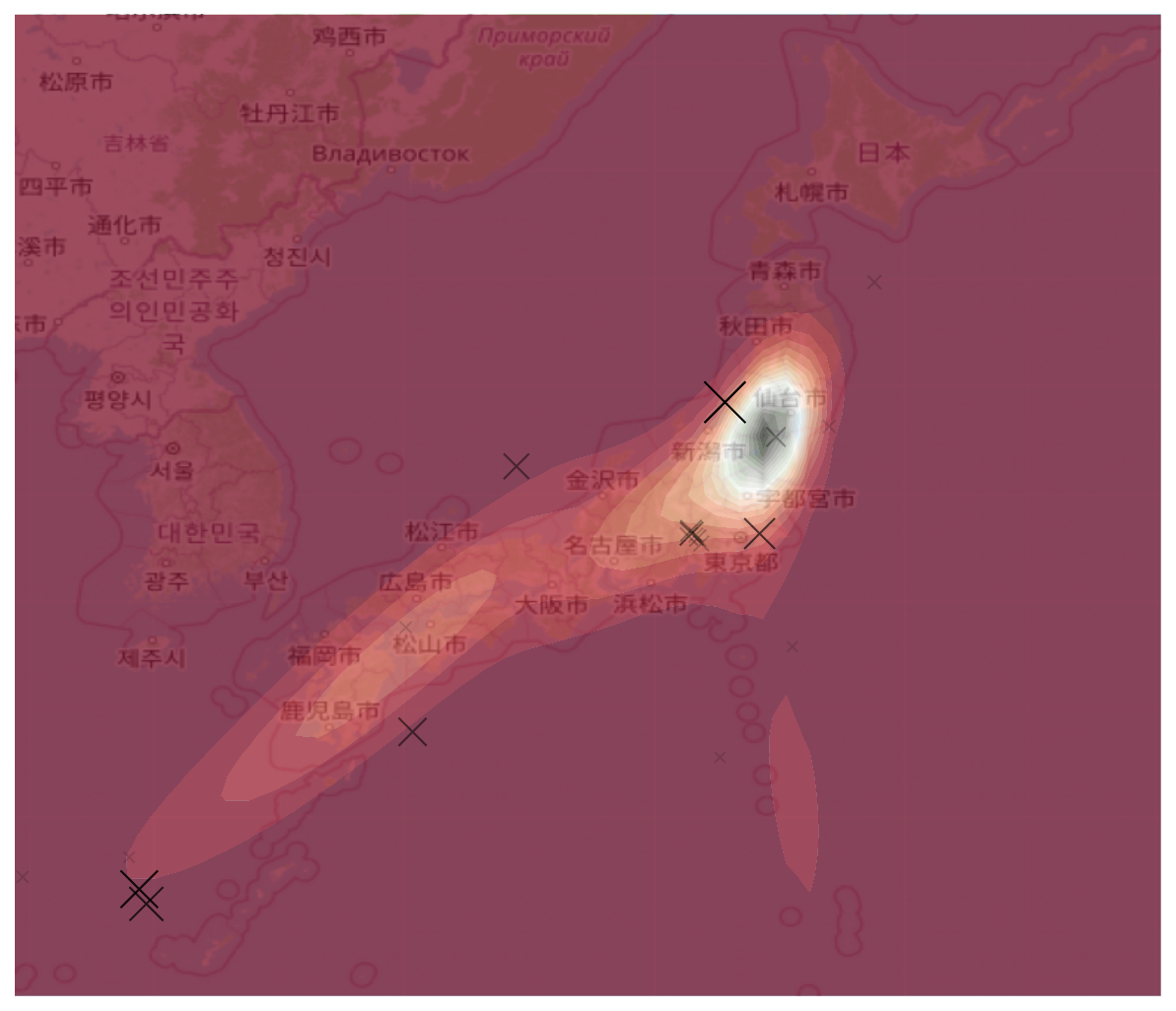}
    \includegraphics[width=0.18\linewidth]{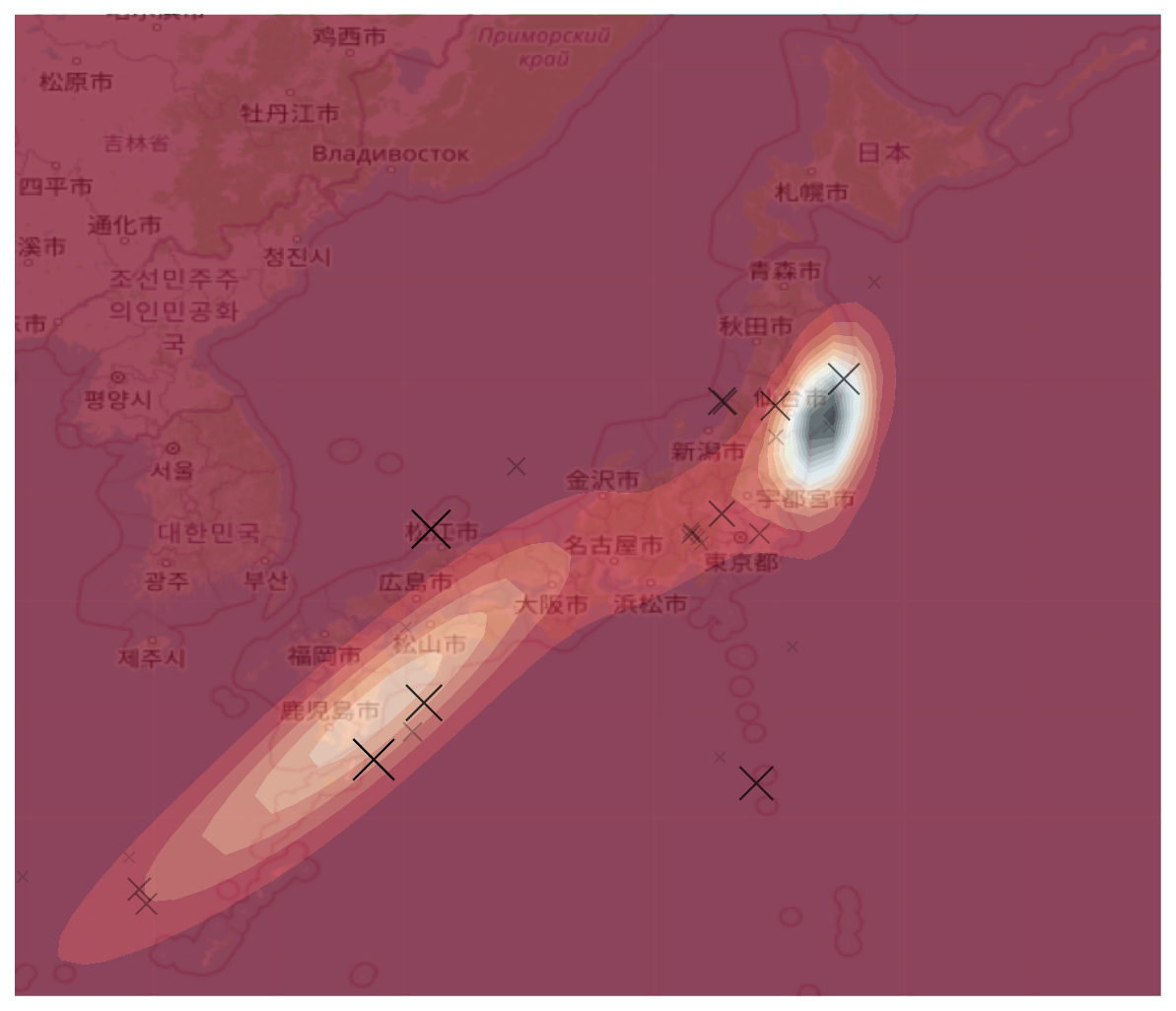}
    \includegraphics[width=0.18\linewidth]{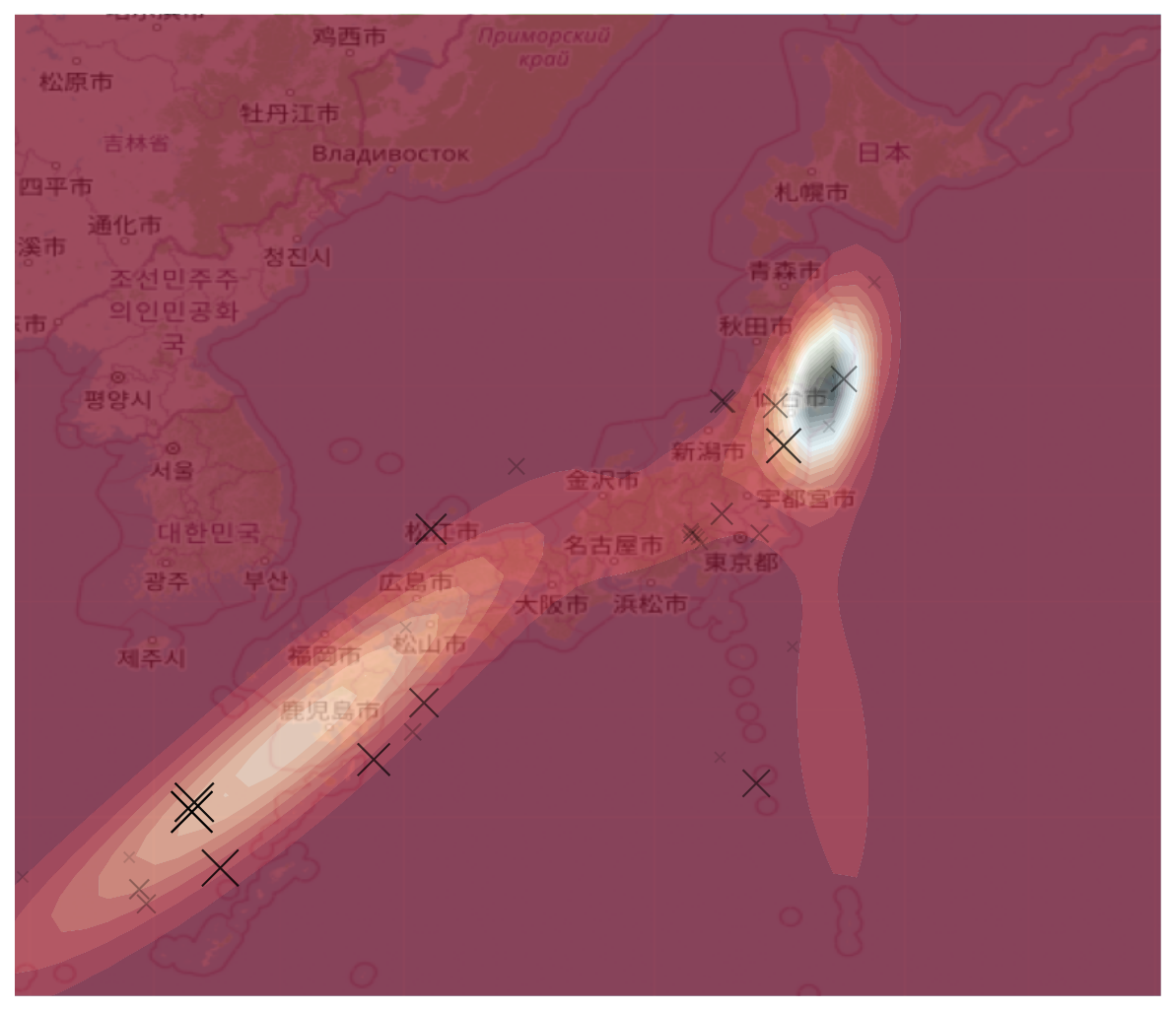}
    \caption{AWSM on Earthquake}
  \end{subfigure}

  \medskip

  % --------- (c) 第三行（例如 MLE, 另一组图） ----------
  \begin{subfigure}{\linewidth}
    \centering
    % TODO: 替换为你自己的 5 张图
    \includegraphics[width=0.18\linewidth]{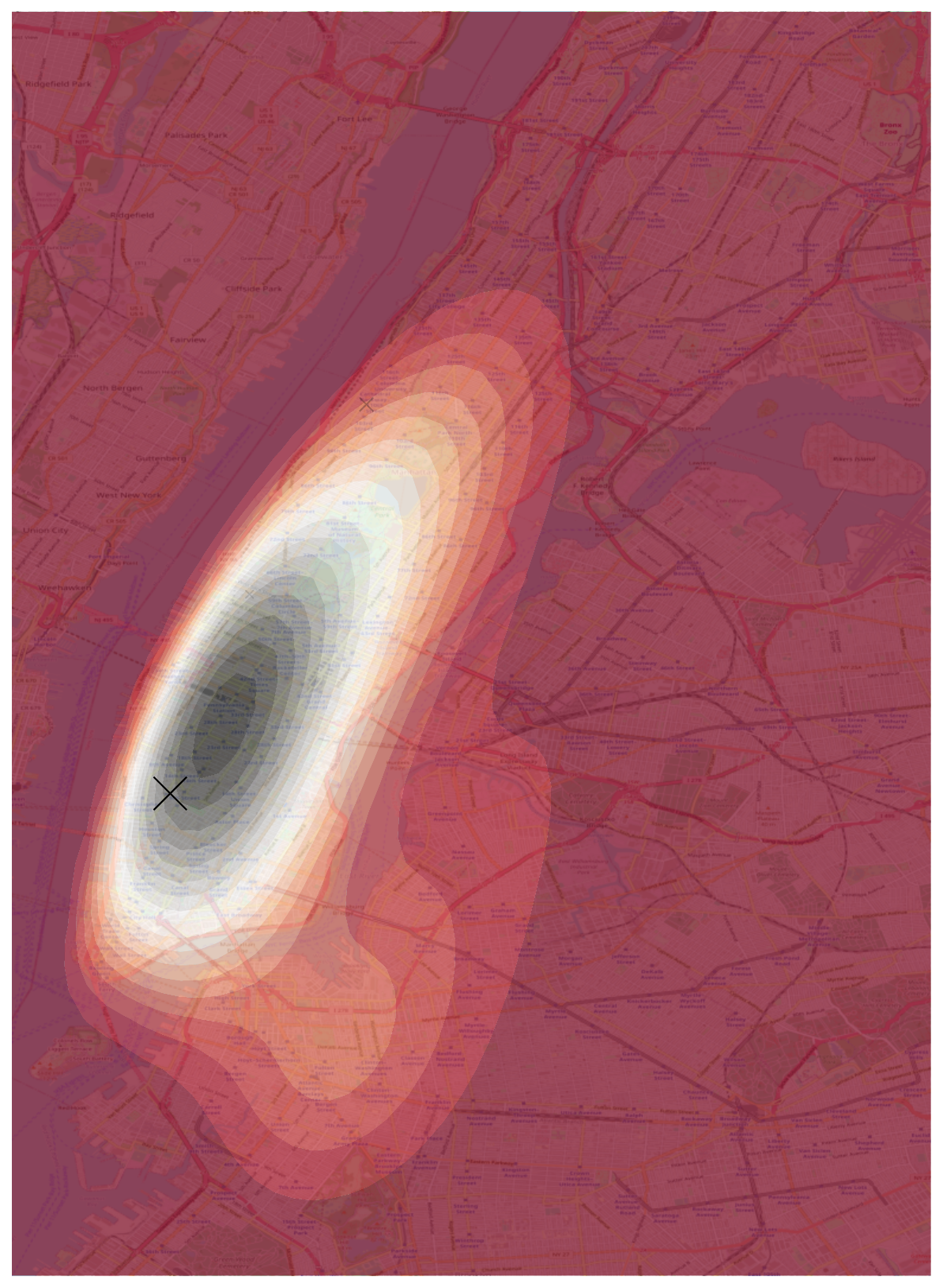}
    \includegraphics[width=0.18\linewidth]{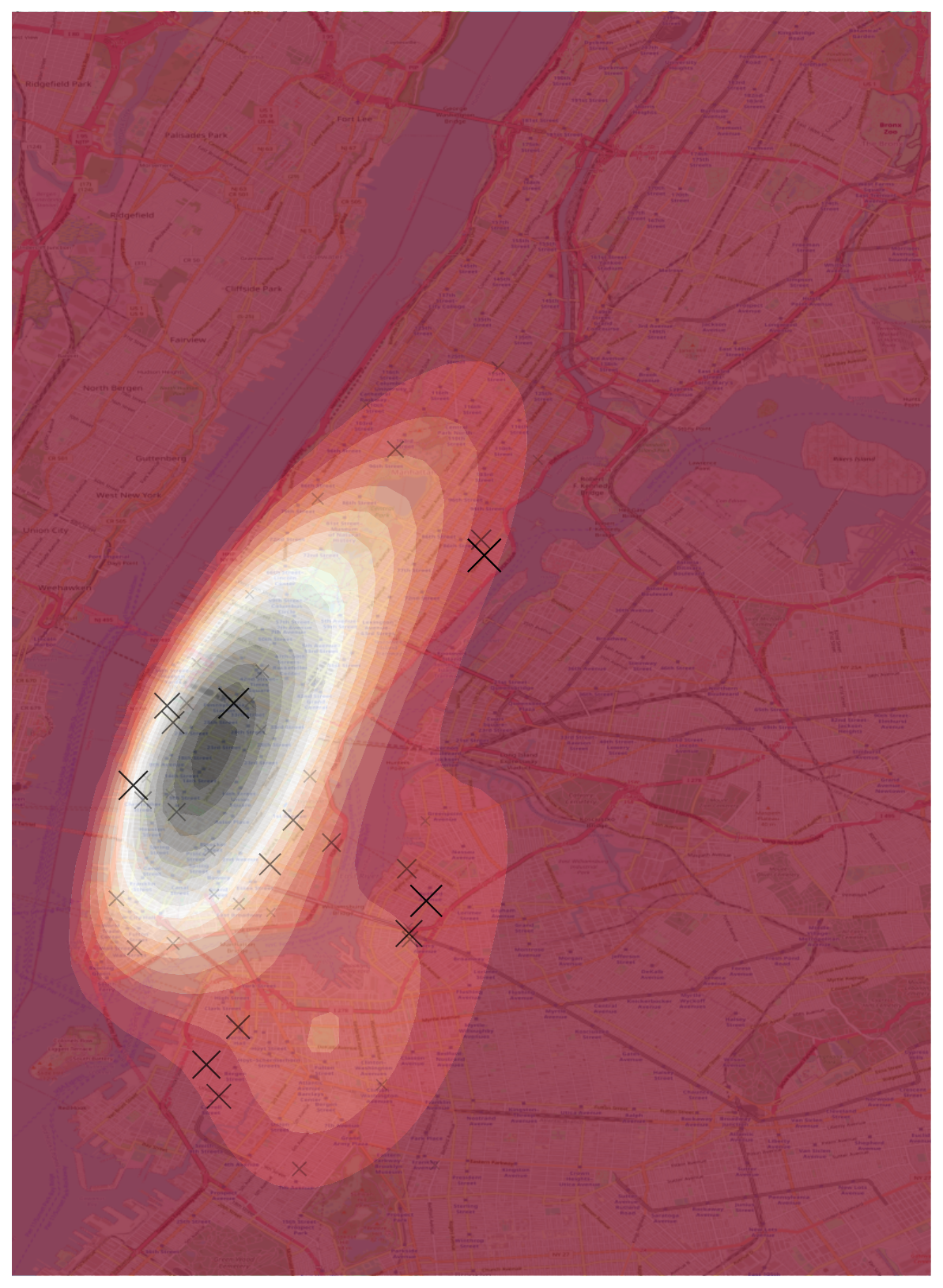}
    \includegraphics[width=0.18\linewidth]{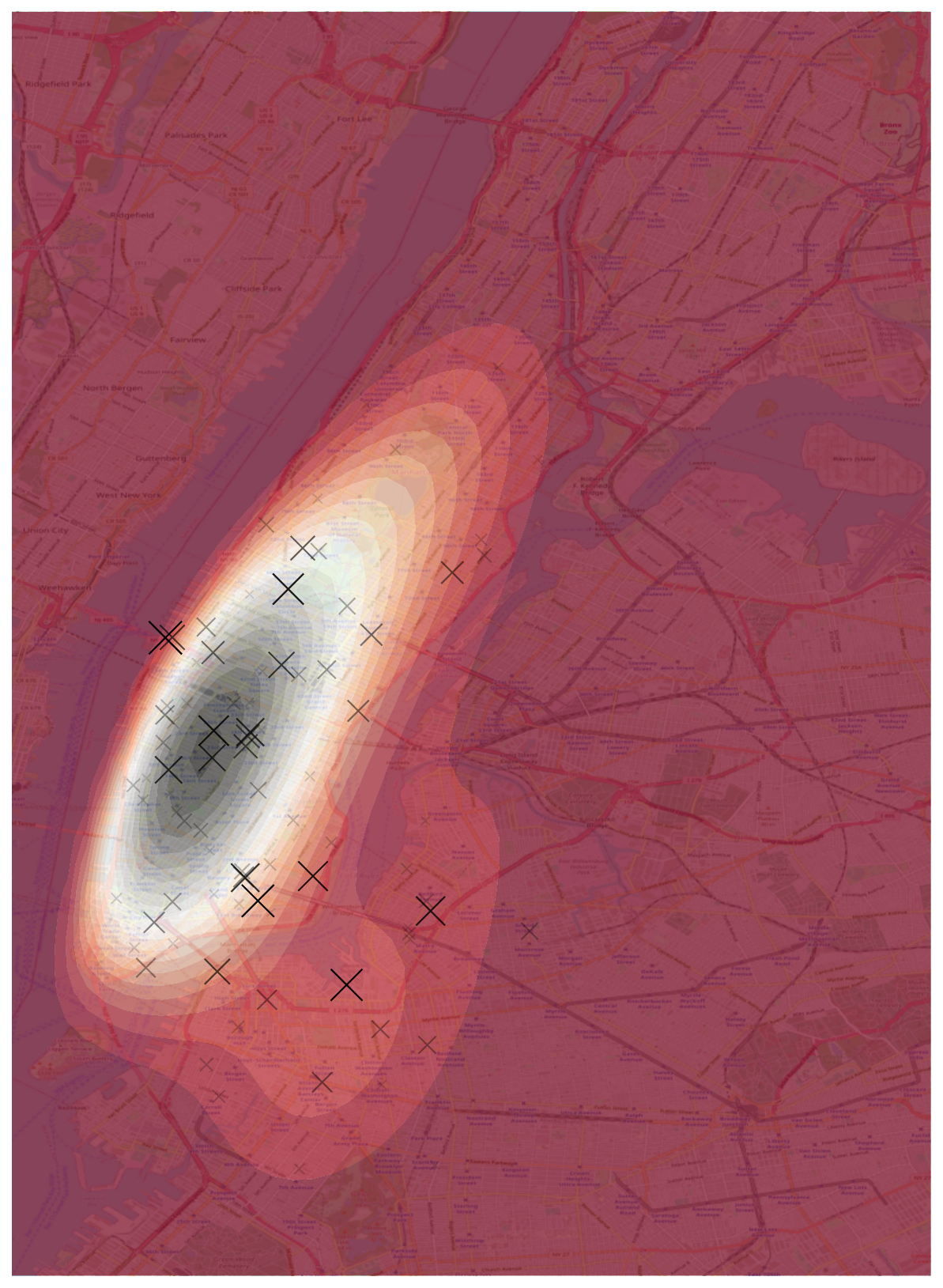}
    \includegraphics[width=0.18\linewidth]{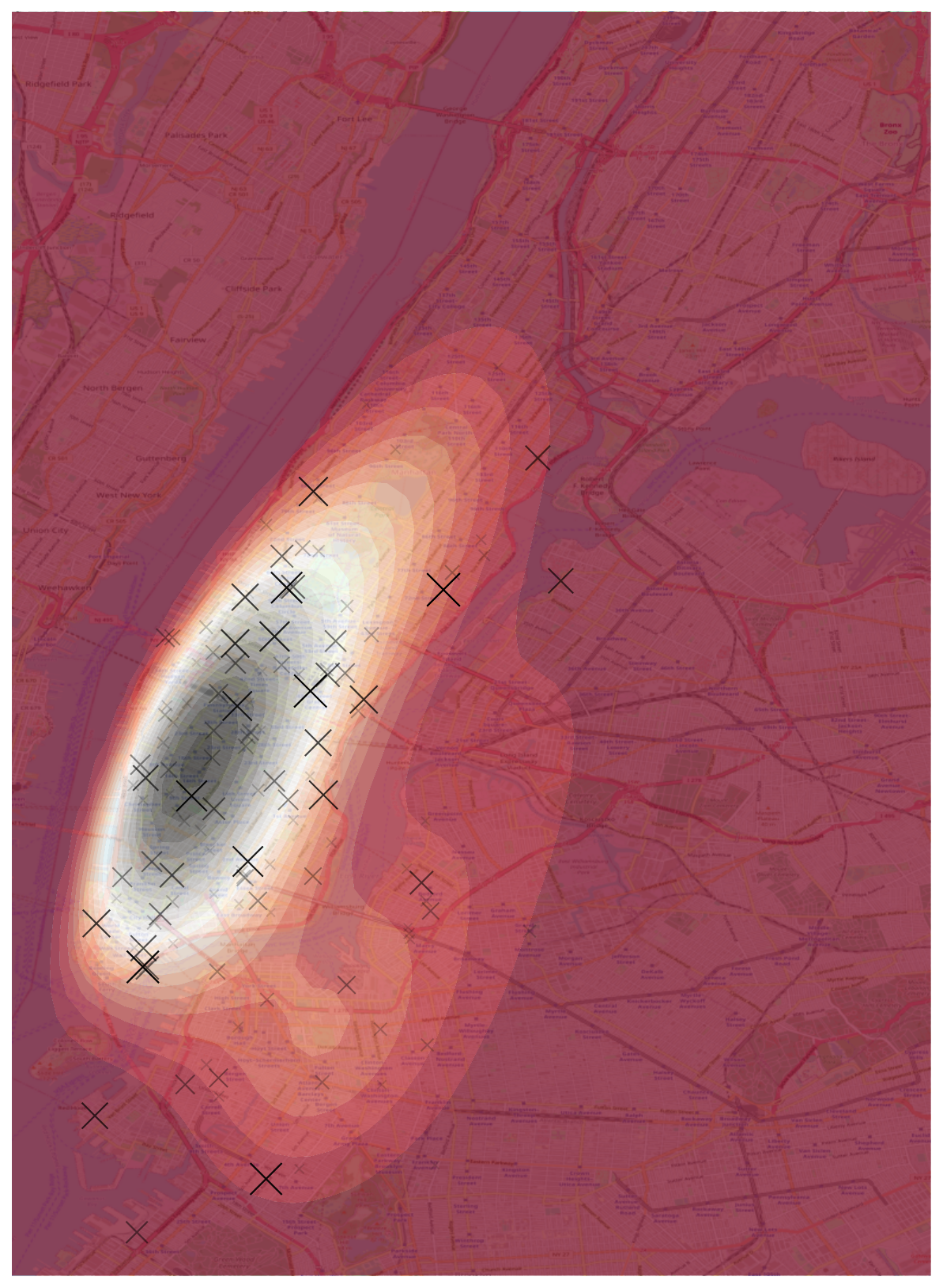}
    \includegraphics[width=0.18\linewidth]{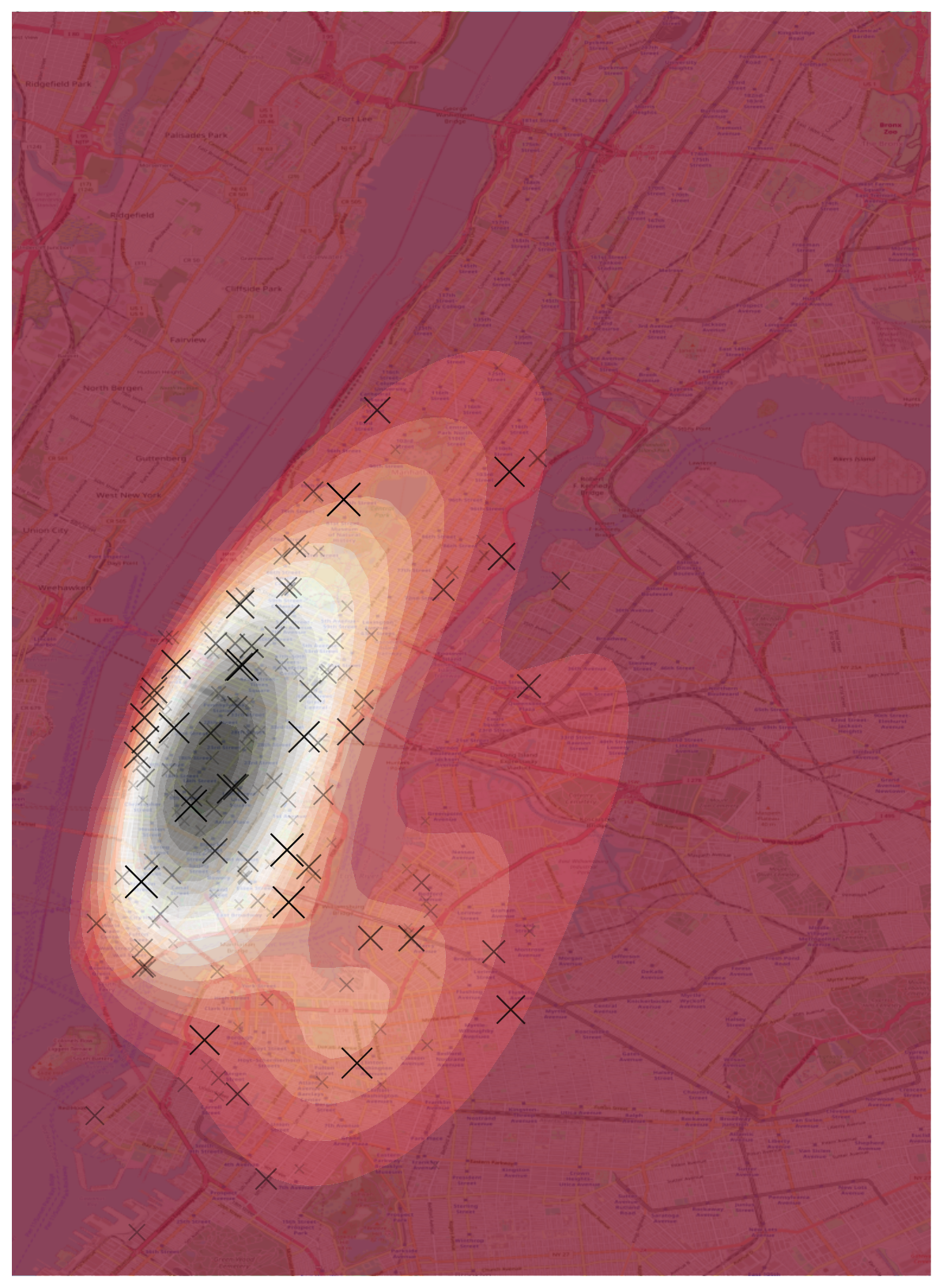}
    \caption{MLE on Citibike}
  \end{subfigure}

  \medskip

  % --------- (d) 第四行（例如 WSM, 另一组图） ----------
  \begin{subfigure}{\linewidth}
    \centering
    % TODO: 替换为你自己的 5 张图
    \includegraphics[width=0.18\linewidth]{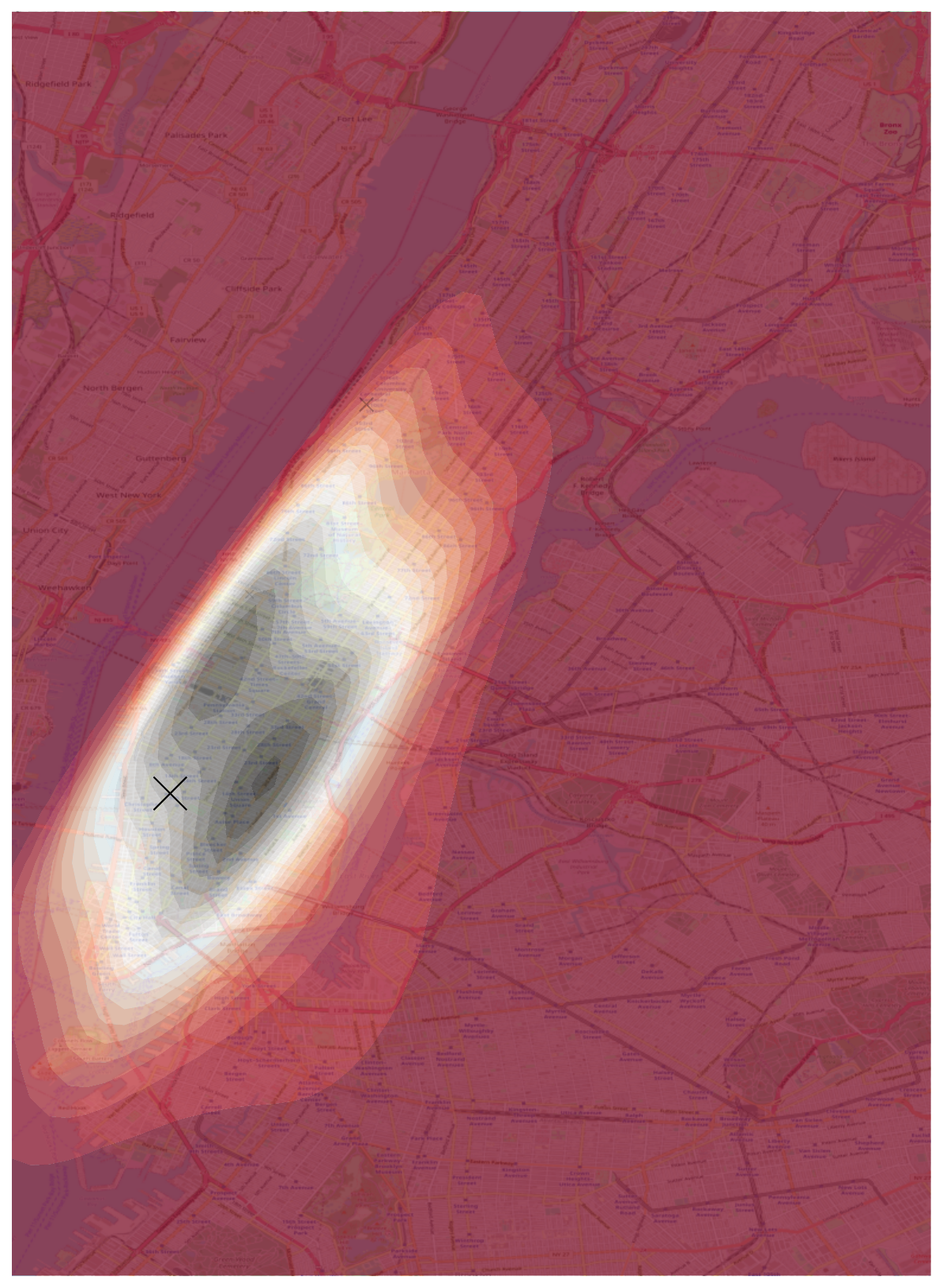}
    \includegraphics[width=0.18\linewidth]{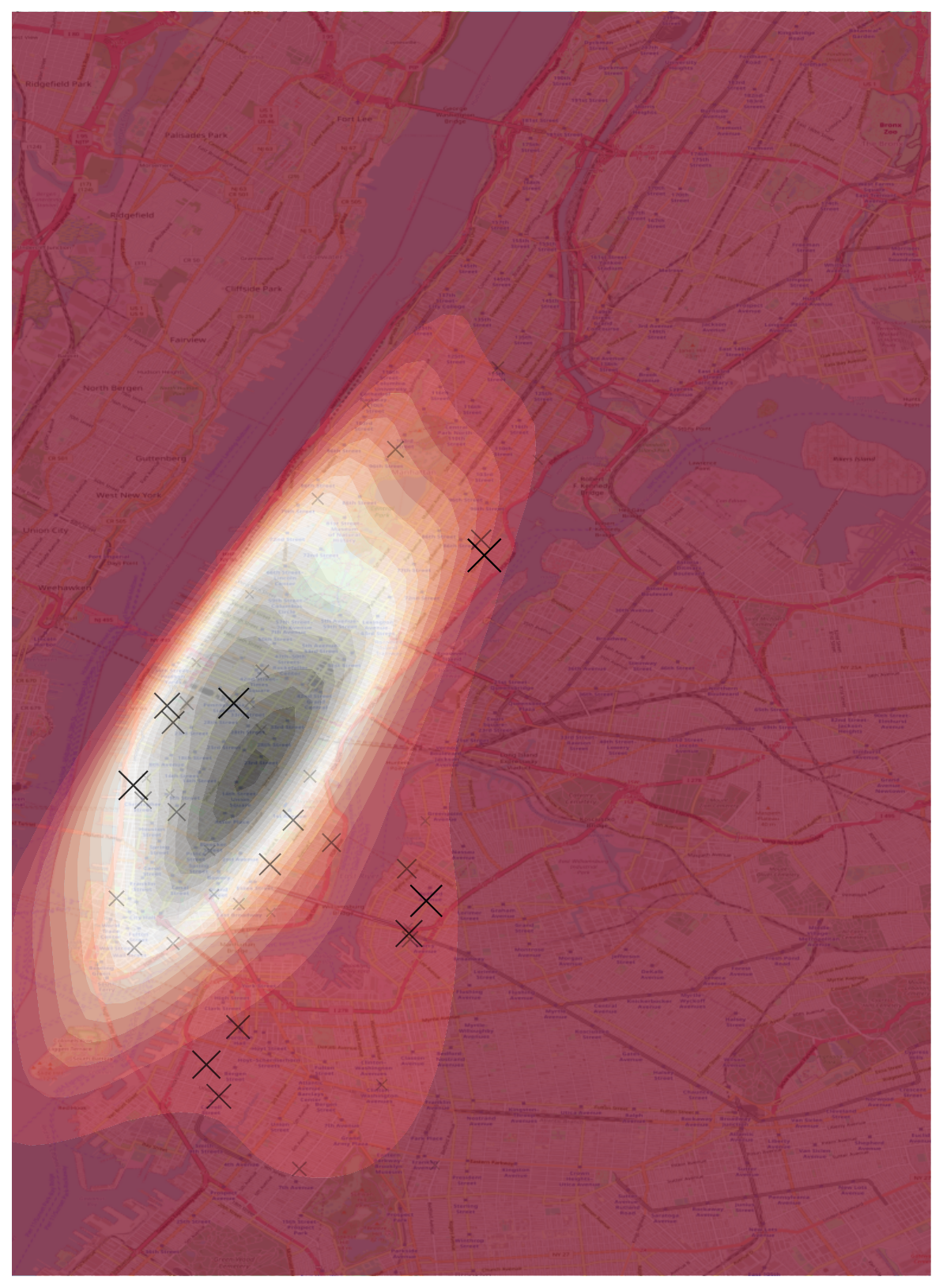}
    \includegraphics[width=0.18\linewidth]{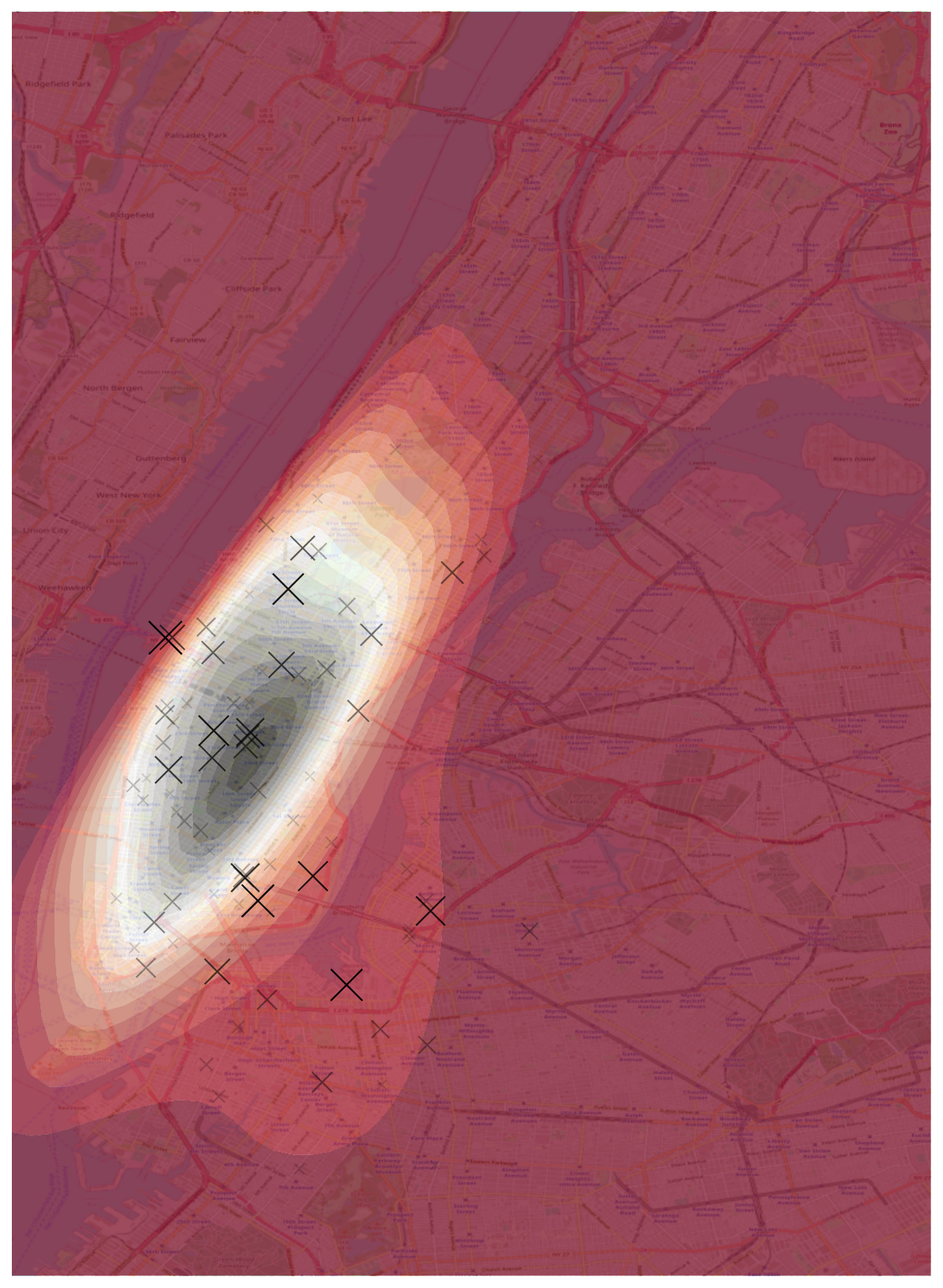}
    \includegraphics[width=0.18\linewidth]{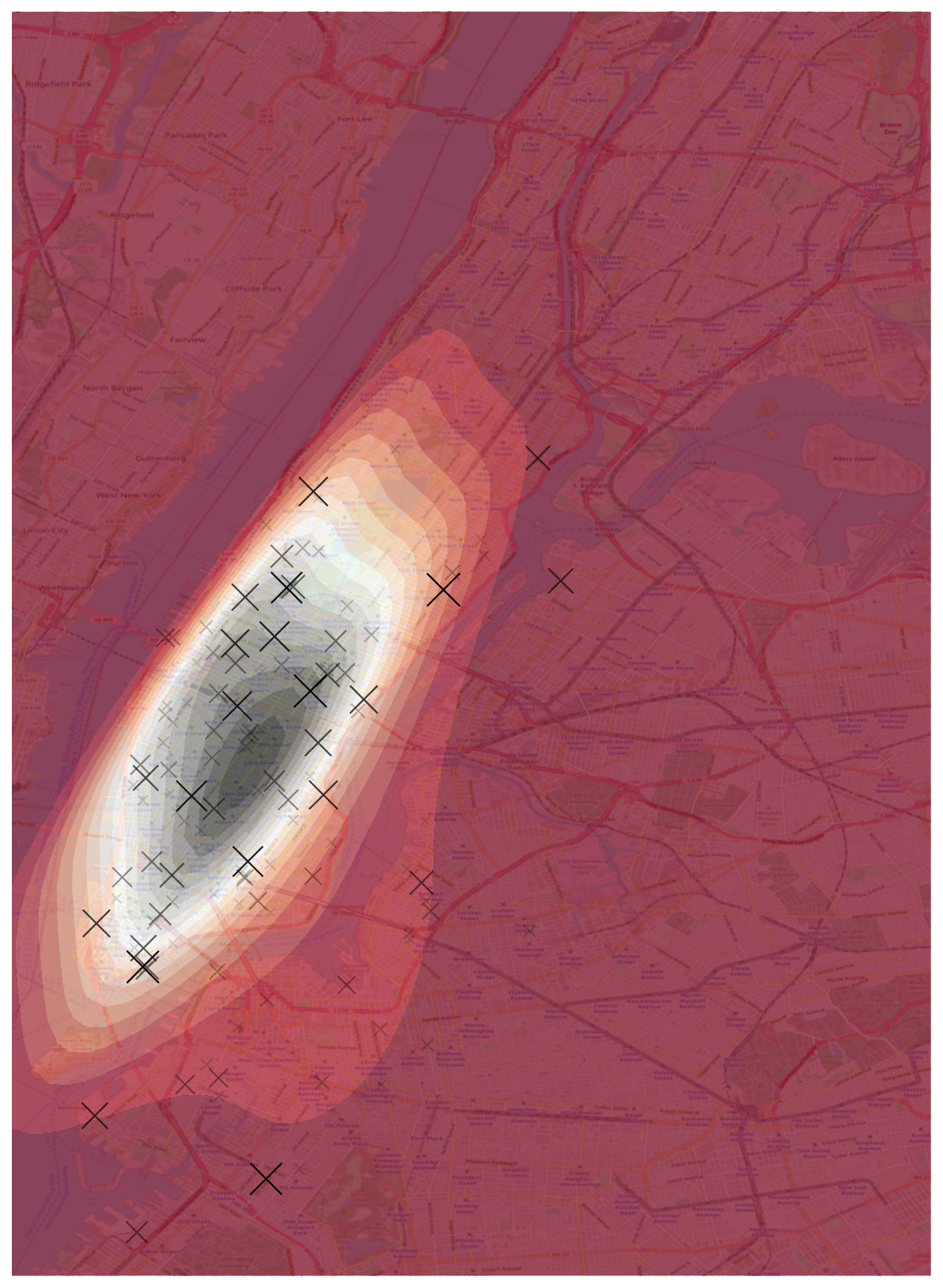}
    \includegraphics[width=0.18\linewidth]{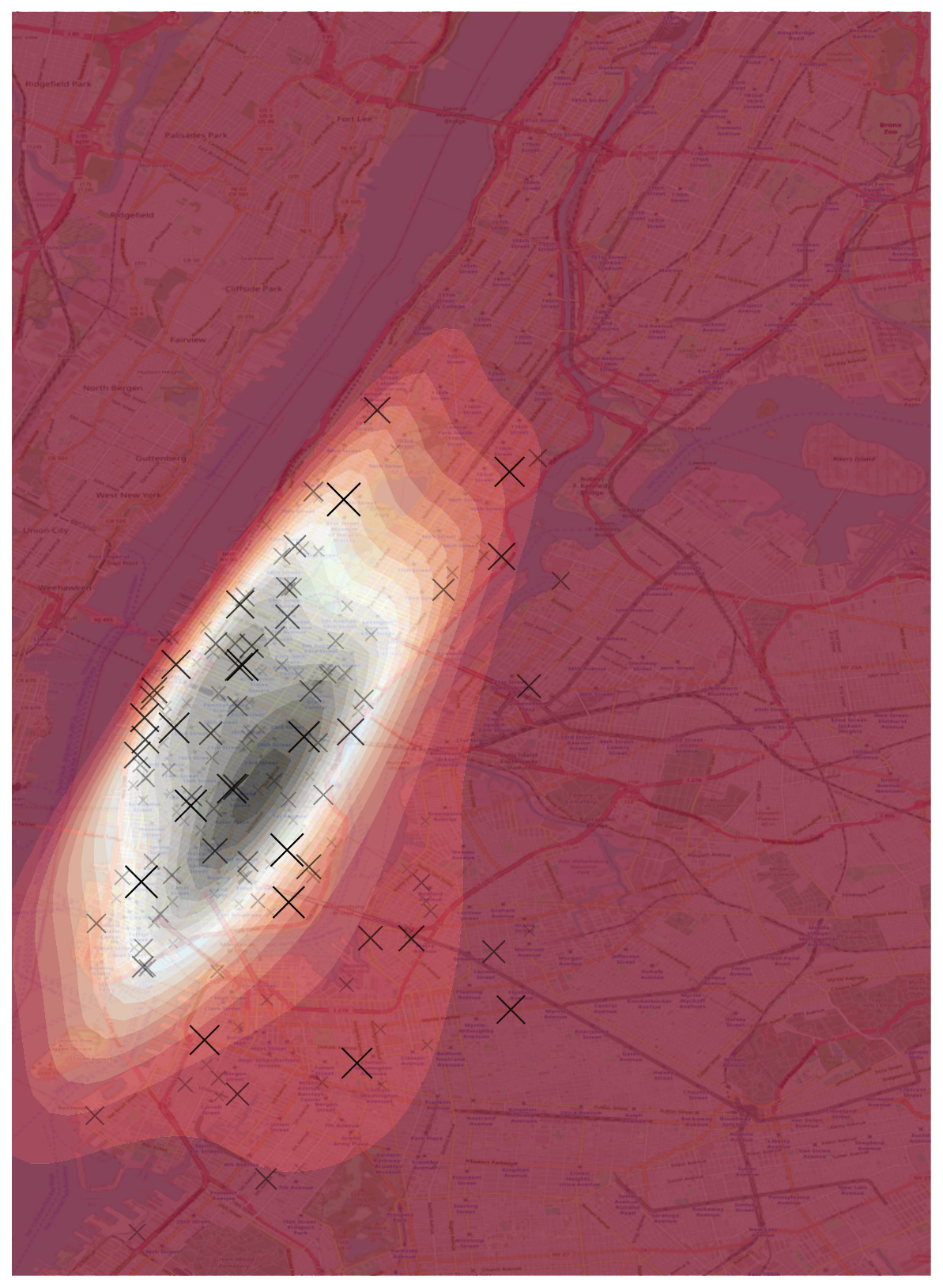}
    \caption{AWSM on Citibike}
  \end{subfigure}

  \caption{Snapshots of the conditional intensity learned by SMASH trained with MLE and AWSM on the Earthquake and CitiBike datasets. From left to right: intensity at different timestamps.  Observed events are marked with “$\times$”, whose influence decays over time; brighter regions indicate higher intensity.}
  \label{fig:earthquake_map_visual}
\end{figure}

\begin{figure}[t]
    \centering

    \begin{subfigure}{0.48\linewidth}
        \centering
        \includegraphics[width=\linewidth]{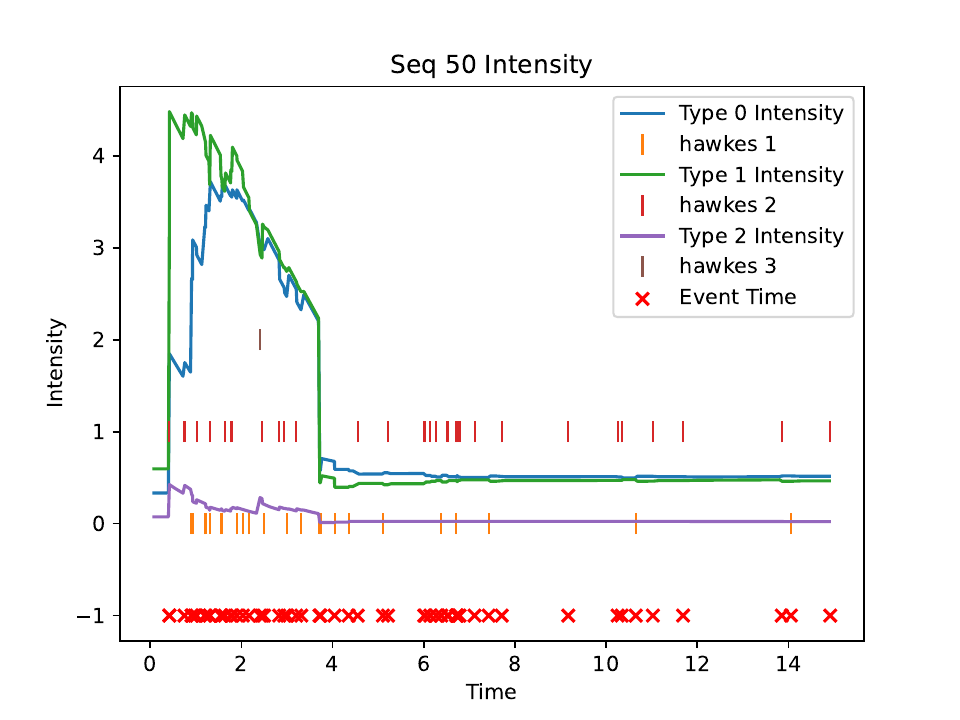}
        \caption{MLE}
        \label{fig:long}
    \end{subfigure}
    \hspace{0.01\linewidth} % <-- 原来是 \hfill
    \begin{subfigure}{0.48\linewidth}
        \centering
        \includegraphics[width=\linewidth]{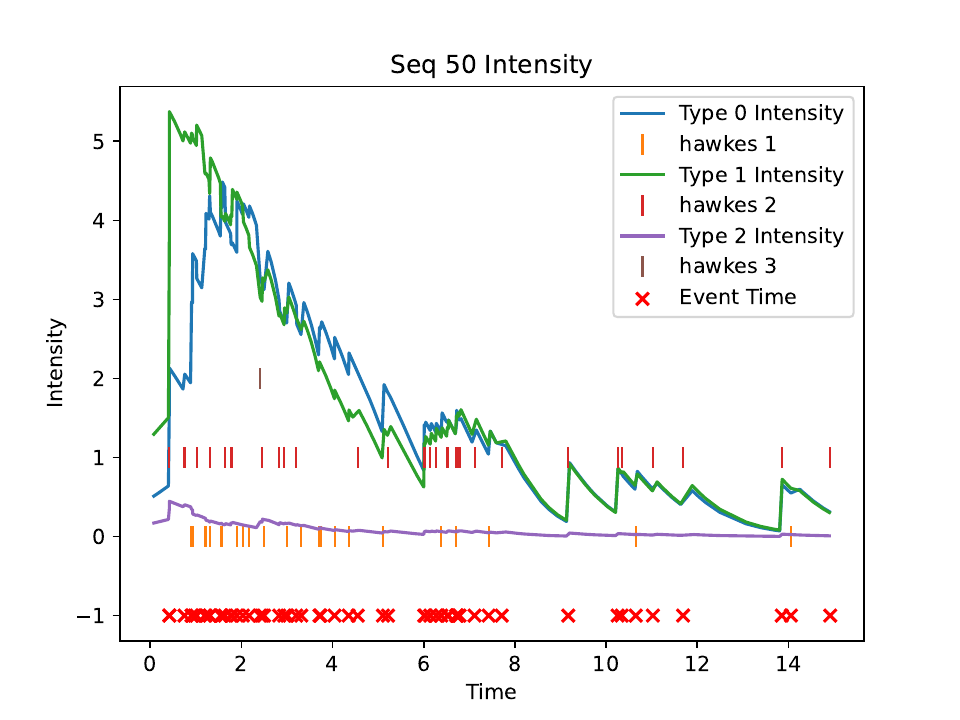}
        \caption{AWSM}
        \label{fig:left}
    \end{subfigure}
    % \hspace{0.01\linewidth} % <-- 原来是 \hfill
    % \begin{subfigure}{0.30\linewidth}
    %     \centering
    %     \includegraphics[width=\linewidth]{img/Exp-HawkesMC.pdf}
    %     \caption{}
    %     \label{fig:right}
    % \end{subfigure}

    \caption{Learned intensities from THP with MLE and AWSM objectives on Retweet dataset.} 
    \label{learned_intensity}
\end{figure}

\begin{figure}[t]
  \centering
  % 左图
  \begin{subfigure}{0.48\linewidth}
    \centering
    \includegraphics[width=\linewidth]{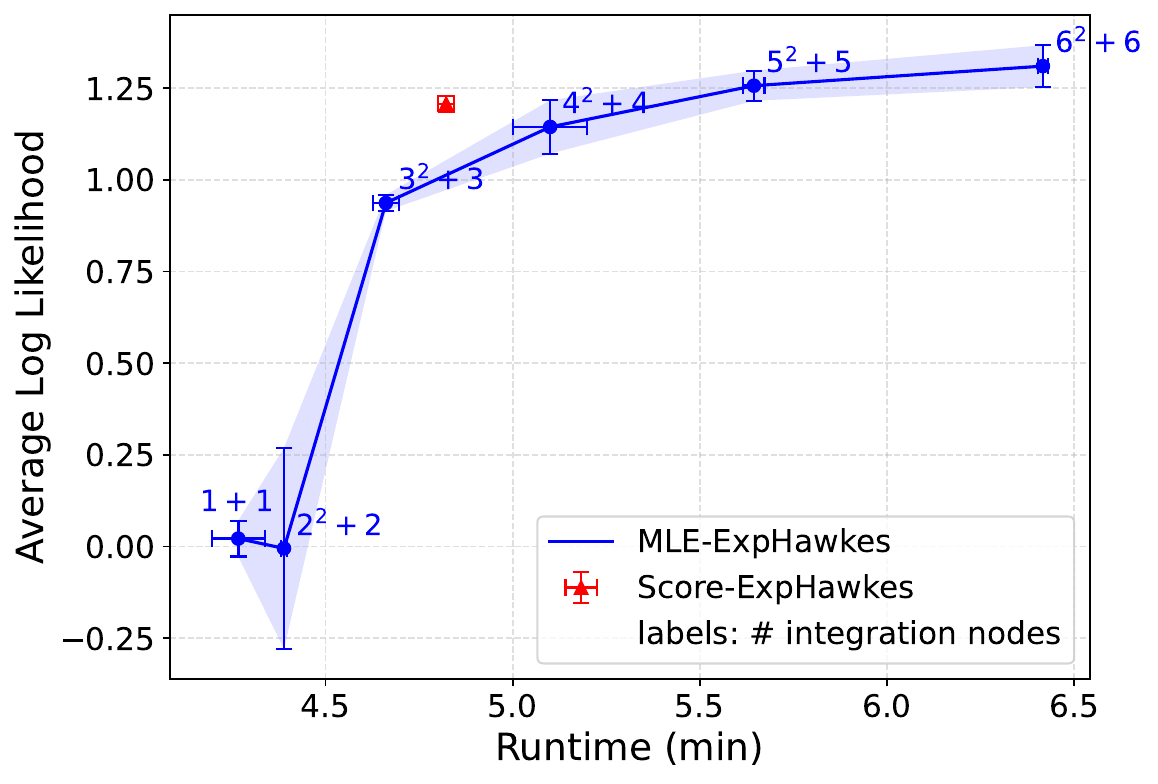}% 换成你的路径
    \caption{Result on Earthquake Dataset}
    % \label{fig:tradeoff_dataset1}
  \end{subfigure}
  \hfill
  % 右图
  \begin{subfigure}{0.48\linewidth}
    \centering
    \includegraphics[width=\linewidth]{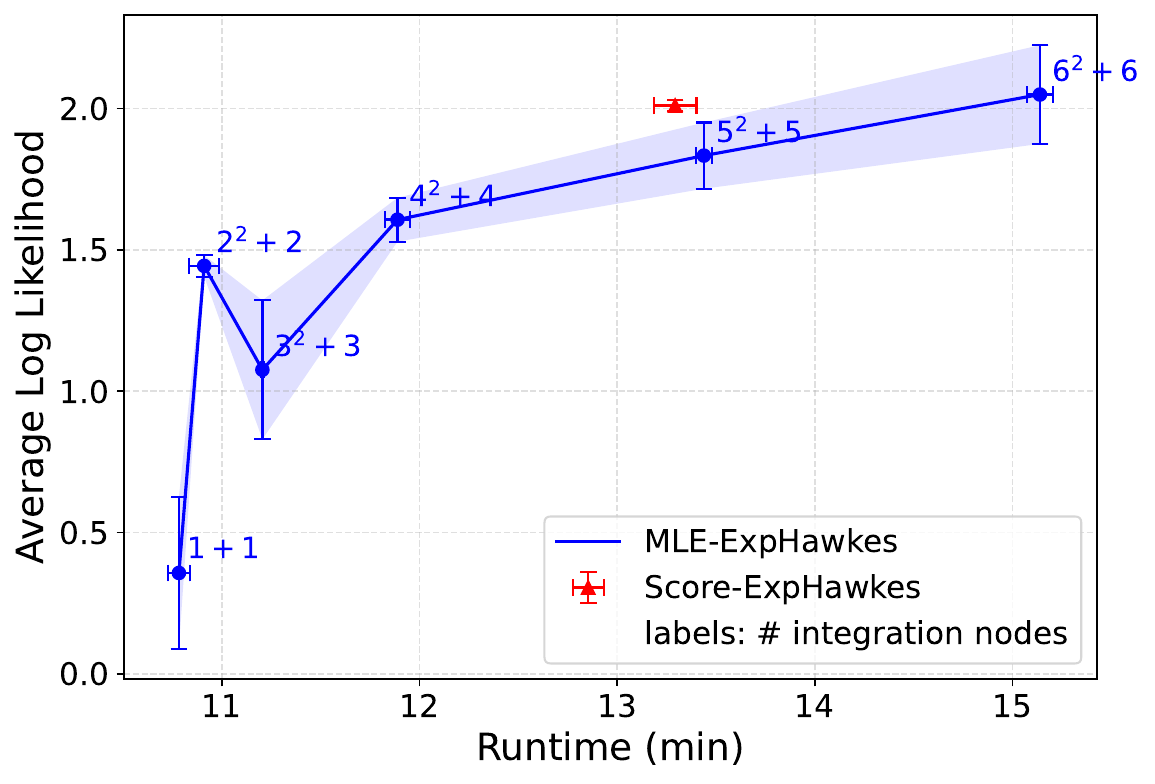}% 换成你的路径
    \caption{Result on Citibike Dataset}
    % \label{fig:tradeoff_dataset2}
  \end{subfigure}

  \caption{
  Average TLL versus RT for different choices of integration nodes. 
  In each panel, the blue solid line with circular markers corresponds to the MLE estimator, 
  and the red triangle corresponds to the AWSM estimator; error bars indicate one standard deviation 
  over repeated runs. The text labels next to the blue markers indicate the integration-node configuration. 
  For example, $2^2+2$ means using two spatial quadrature nodes in each spatial dimension and two temporal 
  quadrature nodes between any two consecutive event times. 
  }
  \label{fig:time_accuracy_tradeoff}
\end{figure}

% \begin{figure*}[t]
%     \begin{center}
%     \adjustbox{valign=b}{
%     \begin{minipage}{0.40\linewidth}
%     \includegraphics[width=\columnwidth]{image_new/intensity_seq50.pdf}
%     \subcaption{Poisson}
%     %\label{fig: Exponential Intensity}
%     \end{minipage}}
%     \adjustbox{valign=b}{
%     \begin{minipage}{0.33\linewidth}
%     \includegraphics[width=\columnwidth]{image_new/temperature_Earthquake_wsm_3_with_survival_1.pdf}
%     \subcaption{Exp-Hawkes Dim-1}
%     %\label{fig: Sinusoidal Intensity}
%     \end{minipage}}
%     \adjustbox{valign=b}{
%     \begin{minipage}{0.33\linewidth}
%     \includegraphics[width=\columnwidth]{img/Gaussian_1.pdf}
%     \subcaption{Gau-Hawkes Dim-1}
%     %\label{fig: Sinusoidal Intensity}
%     \end{minipage}}
%     \adjustbox{valign=b}{
%     \begin{minipage}{0.33\linewidth}
%     \label{fig:runtime}
%     \includegraphics[width=\columnwidth]{img/Exp-HawkesMC.pdf}
%     \subcaption{TLL and Runtime}
%     \label{mle_wsm}
%     \end{minipage}}
%     \end{center}
%     \caption{(a) Learned intensity from WSM on the observation space with training points for Poisson process experiment. (b) Learned intensity from MLE, AWSM and ASM for 2-variate temporal Hawkes process. Only the first-variate intensity is showed. (c) Learned temporal intensity from MLE, AWSM for spatio-temporal Hawkes process. (d) The TLL and runtime of AWSM and MLE w.r.t. the number of integration nodes.}

% \end{figure*}

% 需要:
% \usepackage{booktabs}
% \usepackage{subcaption}
% 以及你的 icml 样式里自带的 \begin{sc} 环境

\begin{table}[t]
    \centering
    \caption{Performance of SAHP and THP trained on four real-world temporal datasets with three different training objectives.}
    \label{table:results_on_temporal_dataset_split}
    \setlength{\tabcolsep}{2pt}
    \renewcommand{\arraystretch}{0.9}
    \small  % 整个表统一缩小一点，两个子表字号一致

    % ---------- (a) TAXI + RETWEET ----------
    \begin{subtable}{\linewidth}
        \centering
        \begin{sc}
        % 用 resizebox 保证与页宽对齐
        \resizebox{\linewidth}{!}{%
        \begin{tabular}{l|ccc|ccc}
            \toprule
            & \multicolumn{3}{c}{taxi}
            & \multicolumn{3}{c}{retweet} \\
            \cmidrule(lr){2-4}
            \cmidrule(lr){5-7}
            & TLL($\uparrow$) & ACC($\uparrow$) & RT(mins) ($\downarrow$)
            & TLL($\uparrow$) & ACC($\uparrow$) & RT(mins) ($\downarrow$) \\
            \midrule
            \multicolumn{7}{c}{\textbf{SAHP}} \\
            \midrule
            MLE
            & \bm{$0.316_{\pm 0.004}$} & \bm{$0.927_{\pm 0.001}$} &$3.457_{\pm 0.089}$ 
            & $-0.756_{\pm 0.620}$     & $0.578_{\pm 0.028}$      & $6.766_{\pm 0.132}$ \\
            DSM
            & $-0.196_{\pm 0.037}$     & $0.912_{\pm 0.001}$      & \bm{$3.119_{\pm 0.026}$}
            & $-0.503_{\pm 0.031}$     & \bm{$0.598_{\pm 0.002}$} & $6.079_{\pm 0.053}$ \\
            WSM
            & $0.213_{\pm 0.074}$      & $0.926_{\pm 0.000}$      & $3.170_{\pm 0.081}$
            & \bm{$0.437_{\pm 1.249}$} & $0.556_{\pm 0.061}$      & \bm{$5.963_{\pm 0.068}$} \\
            \midrule
            \multicolumn{7}{c}{\textbf{THP}} \\
            \midrule
            MLE
            & $0.308_{\pm 0.010}$      & $0.910_{\pm 0.000}$      & $3.255_{\pm 0.047}$
            & $-0.528_{\pm 0.004}$     & $0.599_{\pm 0.000}$      & $7.274_{\pm 0.054}$ \\
            DSM
            & $0.091_{\pm 0.017}$      & \bm{$0.911_{\pm 0.000}$} & \bm{$3.101_{\pm 0.076}$}
            & $-0.460_{\pm 0.013}$     & $0.600_{\pm 0.001}$      & $6.981_{\pm 0.030}$   \\
            WSM
            & \bm{$0.619_{\pm 0.383}$} & $0.876_{\pm 0.012}$      & $3.141_{\pm 0.024}$
            & \bm{$-0.455_{\pm 0.024}$}& \bm{$0.601_{\pm 0.001}$} & \bm{$6.971_{\pm 0.090}$} \\
            \bottomrule
        \end{tabular}
        }% end resizebox
        \end{sc}
    \end{subtable}

    \vskip 0.1in

    % ---------- (b) STACKOVERFLOW + TAOBAO ----------
    \begin{subtable}{\linewidth}
        \centering
        \begin{sc}
        % 同样用 resizebox，确保与上面的子表和页宽对齐
        \resizebox{\linewidth}{!}{%
        \begin{tabular}{l|ccc|ccc}
            \toprule
            & \multicolumn{3}{c}{stackoverflow}
            & \multicolumn{3}{c}{taobao} \\
            \cmidrule(lr){2-4}
            \cmidrule(lr){5-7}
            & TLL($\uparrow$) & ACC($\uparrow$) & RT(mins) ($\downarrow$)
            & TLL($\uparrow$) & ACC($\uparrow$) & RT (mins)($\downarrow$) \\
            \midrule
            \multicolumn{7}{c}{\textbf{SAHP}} \\
            \midrule
            MLE
            & \bm{$1.762_{\pm 0.006}$} & \bm{$0.468_{\pm 0.000}$} & $3.219_{\pm 0.018}$
            & \bm{$4.930_{\pm 0.118}$} & $0.436_{\pm 0.000}$      &  $1.327_{\pm 0.010}$\\
            DSM
            & $1.510_{\pm 0.033}$      & $0.440_{\pm 0.005}$      & $4.500_{\pm 0.025}$
            & $2.705_{\pm 2.328}$      & $0.376_{\pm 0.044}$      & $1.476_{\pm 0.002}$ \\
            WSM
            & $1.618_{\pm 0.363}$      & $0.423_{\pm 0.004}$      & \bm{$2.758_{\pm 0.016}$}
            & $2.688_{\pm 0.080}$      & \bm{$0.437_{\pm 0.001}$} & \bm{$1.200_{\pm 0.010}$} \\
            \midrule
            \multicolumn{7}{c}{\textbf{THP}} \\
            \midrule
            MLE
            & \bm{$1.712_{\pm 0.009}$} & \bm{$0.456_{\pm 0.000}$} & $3.410_{\pm 0.048}$
            & \bm{$4.218_{\pm 0.017}$} & $0.436_{\pm 0.001}$      & $2.830_{\pm 0.027}$ \\
            DSM
            & $1.649_{\pm 0.019}$      & $0.455_{\pm 0.001}$      & $4.118_{\pm 0.010}$
            & $3.909_{\pm 0.034}$      & $0.438_{\pm 0.002}$      & $2.853_{\pm 0.005}$ \\
            WSM
            & $1.544_{\pm 0.050}$      & $0.449_{\pm 0.003}$      & \bm{$3.227_{\pm 0.022}$}
            & $3.083_{\pm 0.070}$      & \bm{$0.436_{\pm 0.000}$} & \bm{$2.735_{\pm 0.057}$} \\
            \bottomrule
        \end{tabular}
        }% end resizebox
        \end{sc}
    \end{subtable}

\end{table}

\begin{table}[htbp]
    \centering
    \caption{Performance of SMASH trained on four real-world spatio-temporal datasets with three different training objectives.}
    \label{table:results_on_ST_dataset}
    \setlength{\tabcolsep}{2pt}
    \renewcommand{\arraystretch}{0.9}
    \small  % 整个表统一稍微缩小一点

    % ---------- (a) Earthquake + COVID19 ----------
    \begin{subtable}{\linewidth}
        \centering
        \begin{sc}
        \begin{tabular}{l|ccc|ccc}
            \toprule
            & \multicolumn{3}{c}{Earthquake}
            & \multicolumn{3}{c}{COVID19} \\
            \cmidrule(lr){2-4}
            \cmidrule(lr){5-7}
            Method
            & $\textnormal{TLL}_T$($\uparrow$) & $\textnormal{TLL}_S$($\uparrow$) & RT(Mins)($\downarrow$)
            & $\textnormal{TLL}_T$($\uparrow$) & $\textnormal{TLL}_S$($\uparrow$) & RT(Mins)($\downarrow$) \\
            \midrule
            MLE
            & $0.486_{\pm 0.031}$            & $0.644_{\pm 0.048}$            & $6.000_{\pm 0.049}$
            & \bm{$2.219_{\pm 0.006}$}       & \bm{$1.360_{\pm 0.019}$}       & $10.006_{\pm 0.136}$ \\
            DSM
            & \bm{$0.494_{\pm 0.008}$}       & $0.735_{\pm 0.041}$            & \bm{$5.739_{\pm 0.048}$}
            & $1.864_{\pm 0.444}$            & $0.560_{\pm 0.630}$            & $11.178_{\pm 0.671}$  \\
            WSM
            & $0.431_{\pm 0.027}$            & \bm{$0.776_{\pm 0.049}$}       & $5.964_{\pm 0.041}$
            & $2.210_{\pm 0.009}$            & $1.153_{\pm 0.014}$            & \bm{$9.772_{\pm 0.068}$} \\
            \bottomrule
        \end{tabular}
        \end{sc}
    \end{subtable}

    \vskip 0.1in

    % ---------- (b) Citibike + Football ----------
    \begin{subtable}{\linewidth}
        \centering
        \begin{sc}
        % 只用 resizebox 控制宽度，字号跟上面一致（small）
        \resizebox{\linewidth}{!}{%
        \begin{tabular}{l|ccc|cccc}
            \toprule
            & \multicolumn{3}{c}{Citibike}
            & \multicolumn{4}{c}{Football} \\
            \cmidrule(lr){2-4}
            \cmidrule(lr){5-8}
            Method
            & $\textnormal{TLL}_T$($\uparrow$) & $\textnormal{TLL}_S$($\uparrow$) & RT (Mins)($\downarrow$)
            & $\textnormal{TLL}_T$($\uparrow$) & $\textnormal{TLL}_S$($\uparrow$) & ACC($\uparrow$) & RT (Mins)($\downarrow$) \\
            \midrule
            MLE
            & $1.056_{\pm 0.007}$            & $0.777_{\pm 0.119}$            & $13.283_{\pm 0.059}$
            & \bm{$5.478_{\pm 0.006}$}       & \bm{$0.376_{\pm 0.004}$}       & \bm{$0.825_{\pm 0.001}$} & $6.011_{\pm 0.016}$ \\
            DSM
            & $0.910_{\pm 0.053}$            & $0.368_{\pm 0.100}$            & $13.283_{\pm 0.041}$ 
            & $2.160_{\pm 2.187}$            & $0.094_{\pm 0.133}$            & $0.819_{\pm 0.004}$      & \bm{$4.944_{\pm 0.122}$} \\
            WSM
            & \bm{$1.079_{\pm 0.001}$}       & \bm{$0.931_{\pm 0.001}$}       & \bm{$13.222_{\pm 0.039}$}
            & $5.426_{\pm 0.044}$            & {$0.244_{\pm 0.054}$}          & $0.823_{\pm 0.001}$      & $6.050_{\pm 0.068}$ \\
            \bottomrule
        \end{tabular}
        }% end resizebox
        \end{sc}
    \end{subtable}

\end{table}

\subsection{Advantage of (A)WSM over MLE}\label{sec:advantage_of_AWSM}
The key advantage of (A)WSM over MLE is its avoidance of computing intensity integrals, which can be computationally intensive for complex point process models and impact MLE accuracy. We evaluate the TLL of MLE and AWSM on  two real-world datasets as the number of integration nodes varies. As shown in \cref{fig:time_accuracy_tradeoff} , with a limited number of nodes, MLE is faster but exhibits substantial estimation errors. Increasing the number of nodes reduces the error but significantly increases computation time. In this scenario, AWSM is faster than MLE with the same accuracy, thus offering better computational efficiency.

\subsection{Ablation on Survival Classification}
We show that the survival classification term is necessary to recover the ground-truth temporal intensity even when the score is accurately learned. 
We construct a univariate temporal point process with conditional intensity given by 
\begin{equation*}
    \lambda_n(t \mid \mathcal H_{n-1})
    = c\,\mathbf 1(n=1)
    + \frac{c}{1+\beta\exp\big(c(t-t_{n-1})\big)}\,\mathbf 1(n>1),
\end{equation*}
set $c=2$, $\beta=0.02$, and generate $2000$ sequences on $(0,50)$. A THP is trained as the intensity model with AWSM and DSM, each with or without the survival classification term, for $500$ epochs. As shown in \cref{fig:intensity_w&w/o_survival}, without survival classification the learned intensity collapses to an almost horizontal curve that clearly differs from the ground truth, whereas adding the survival classification term yields an intensity that closely matches the true one, confirming the importance of this component. We further provide a heuristic theoretical discussion of this phenomenon in \cref{sec:heuristic_discussion_on_survival}.

\begin{figure}[htbp]
    \centering
    % ------------ 第一行 ------------
    \begin{subfigure}[b]{0.48\textwidth}
        \centering
        \includegraphics[width=\linewidth]{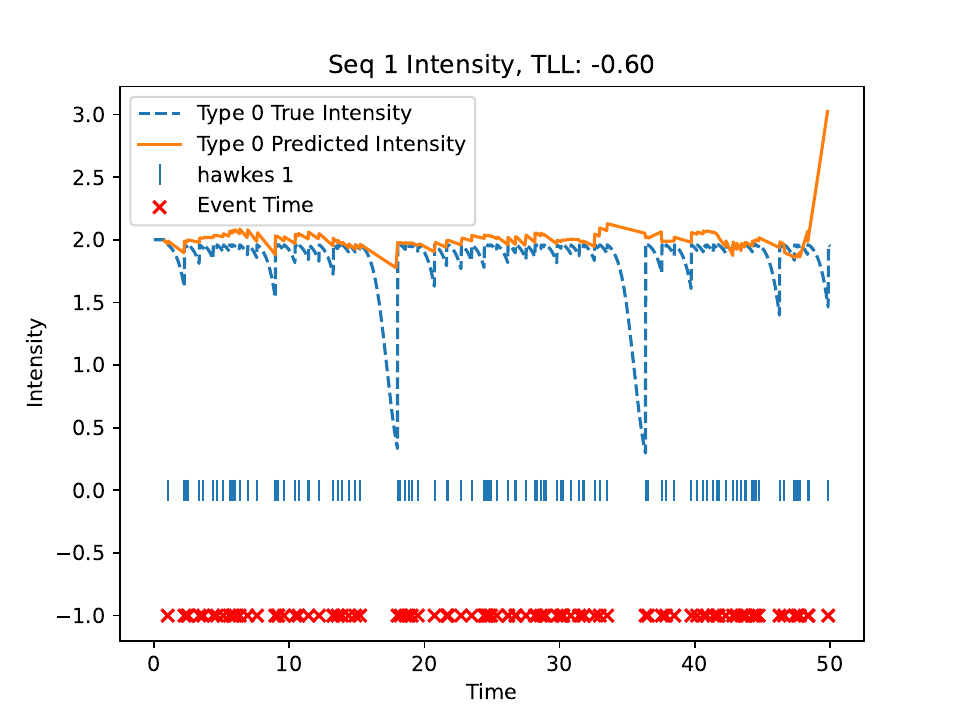}
        \caption{DSM, w/o survival classification}
        \label{fig:intensity_seq0}
    \end{subfigure}
    \hfill
    \begin{subfigure}[b]{0.48\textwidth}
        \centering
        \includegraphics[width=\linewidth]{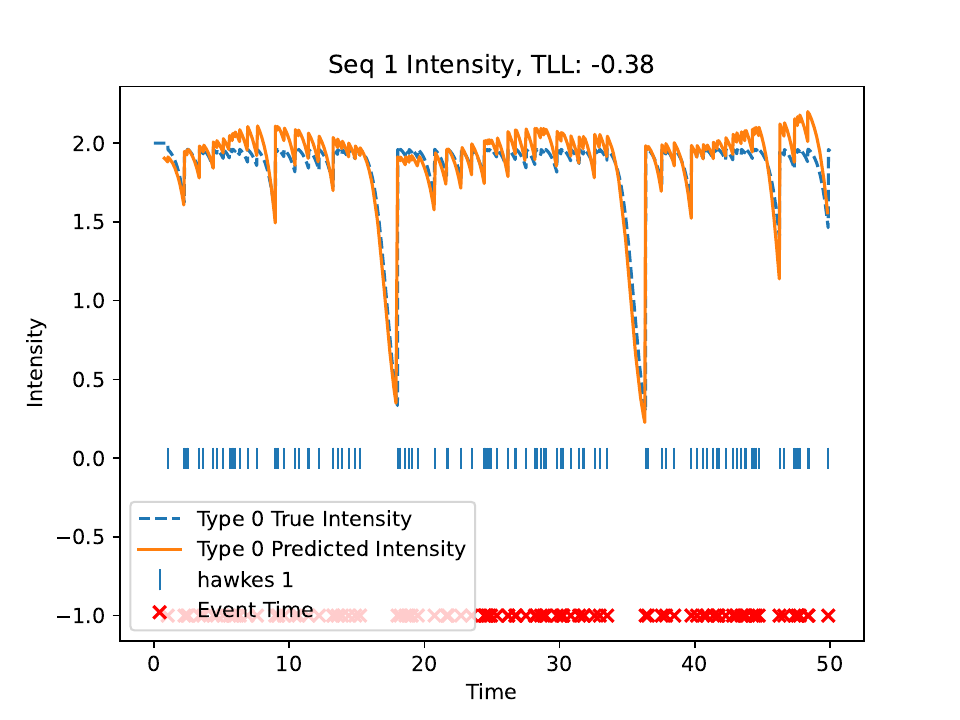}
        \caption{DSM, w/ survival classification}
        \label{fig:intensity_seq1}
    \end{subfigure}

    \vspace{0.4cm}

    % ------------ 第二行 ------------
    \begin{subfigure}[b]{0.48\textwidth}
        \centering
        \includegraphics[width=\linewidth]{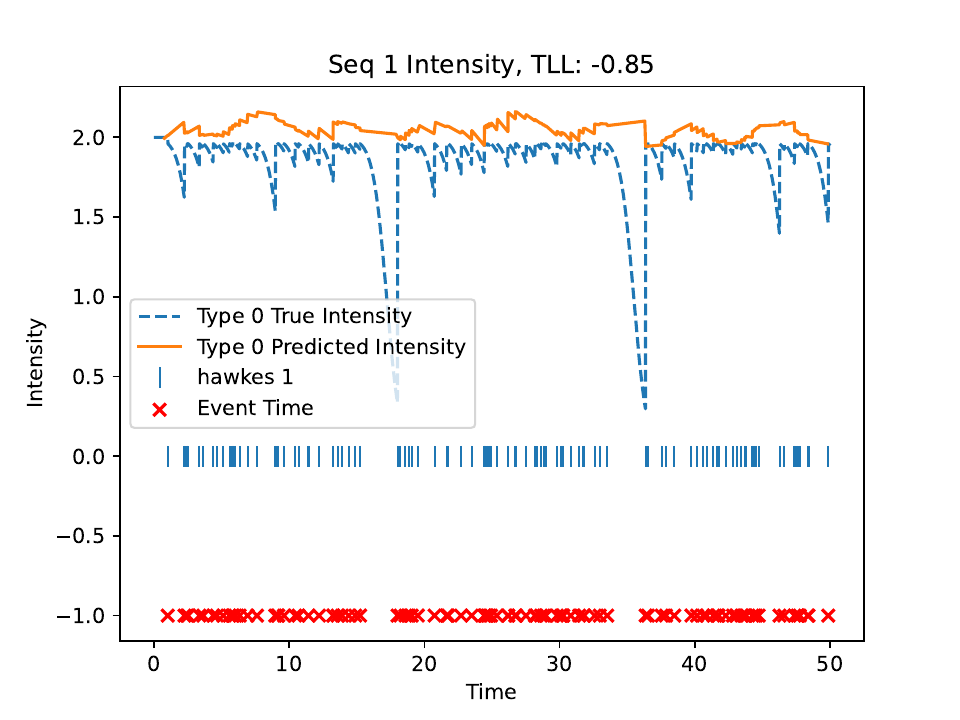}
        \caption{AWSM, w/o survival classification}
        \label{fig:intensity_seq2}
    \end{subfigure}
    \hfill
    \begin{subfigure}[b]{0.48\textwidth}
        \centering
        \includegraphics[width=\linewidth]{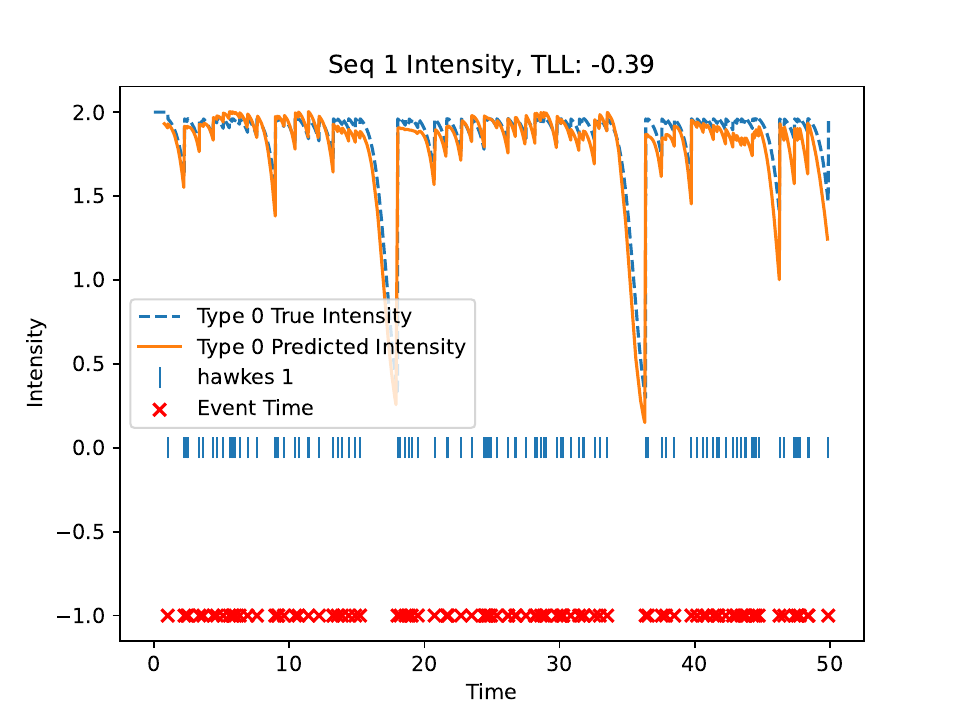}
        \caption{AWSM, w/ survival classification}
        \label{}
    \end{subfigure}
    \caption{Ground-truth and learned intensities of THP trained with AWSM and DSM objectives, with and without the survival classification term.}
    \label{fig:intensity_w&w/o_survival}
\end{figure}

\subsection{Comparison of Different Weight Functions}
Though we provide theoretical insight into the choice of an optimal weight function for our estimator, its validity still needs to be testified by experiments. 
Here, we consider the 2-variate temporal Hawkes process and compare the near-optimal weight function $h_{0,T}^{\textnormal{AWSM}}$ with two alternative choices that satisfy \cref{assum:awsm_regularity_of_weight}, namely the natural weight $h_{1,T}^{\textnormal{AWSM}}$ and the square-root weight $h_{2,T}^{\textnormal{AWSM}}$:   
\begin{align*}
        h_{1,T}^{\textnormal{AWSM}}(t_{n-1}, t_n)&:=(t_n-t_{n-1})(T-t_{n}),\\
        h_{2,T}^{\textnormal{AWSM}}(t_{n-1}, t_n)&:=\sqrt{(t_n-t_{n-1})(T-t_n)}. 
\end{align*}
All three weight functions can be applied in AWSM to recover ground-truth parameters, however with different convergence rates. 
We carry out experiments on synthetic data with the same setting as \cref{synthetic}. 
We measure their MAE for different sample sizes in \cref{fig:compare_weight} and find that $h_{0,T}^{\text{AWSM}}$ does achieve the best results among the three weight functions. 

\begin{figure*}[t]
    \begin{center}
    \adjustbox{valign=b}{
    \begin{minipage}{0.31\linewidth}
    \includegraphics[width=\columnwidth]{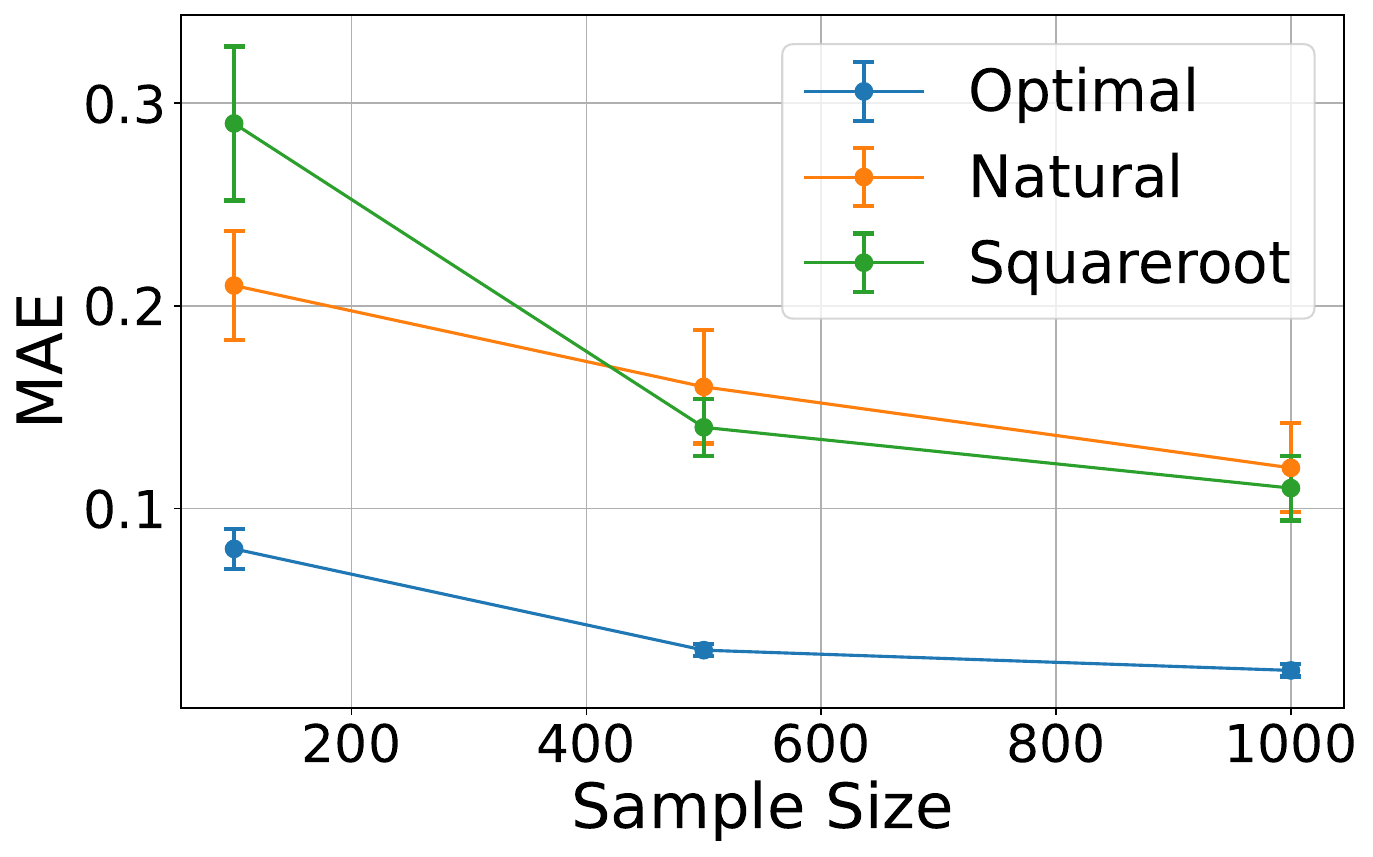}
    \subcaption{On parameter $\alpha_{11}$}
    %\label{fig: Sinusoidal Intensity}
    \end{minipage}}
    \adjustbox{valign=b}{
    \begin{minipage}{0.31\linewidth}
    \includegraphics[width=\columnwidth]{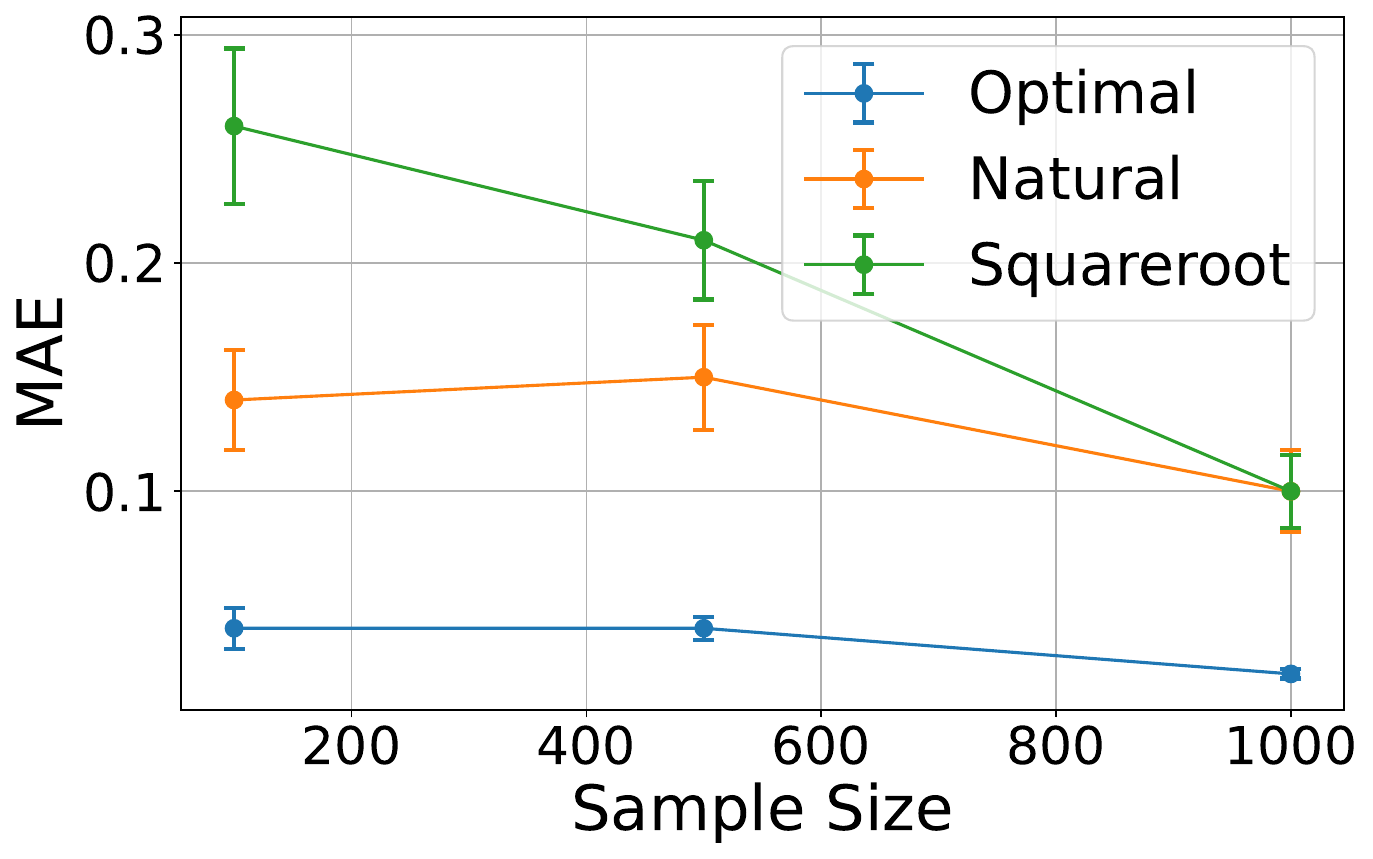}
    \subcaption{On parameter $\alpha_{21}$}
    %\label{fig: Sinusoidal Intensity}
    \end{minipage}}
    \adjustbox{valign=b}{
    \begin{minipage}{0.31\linewidth}
    \includegraphics[width=\columnwidth]{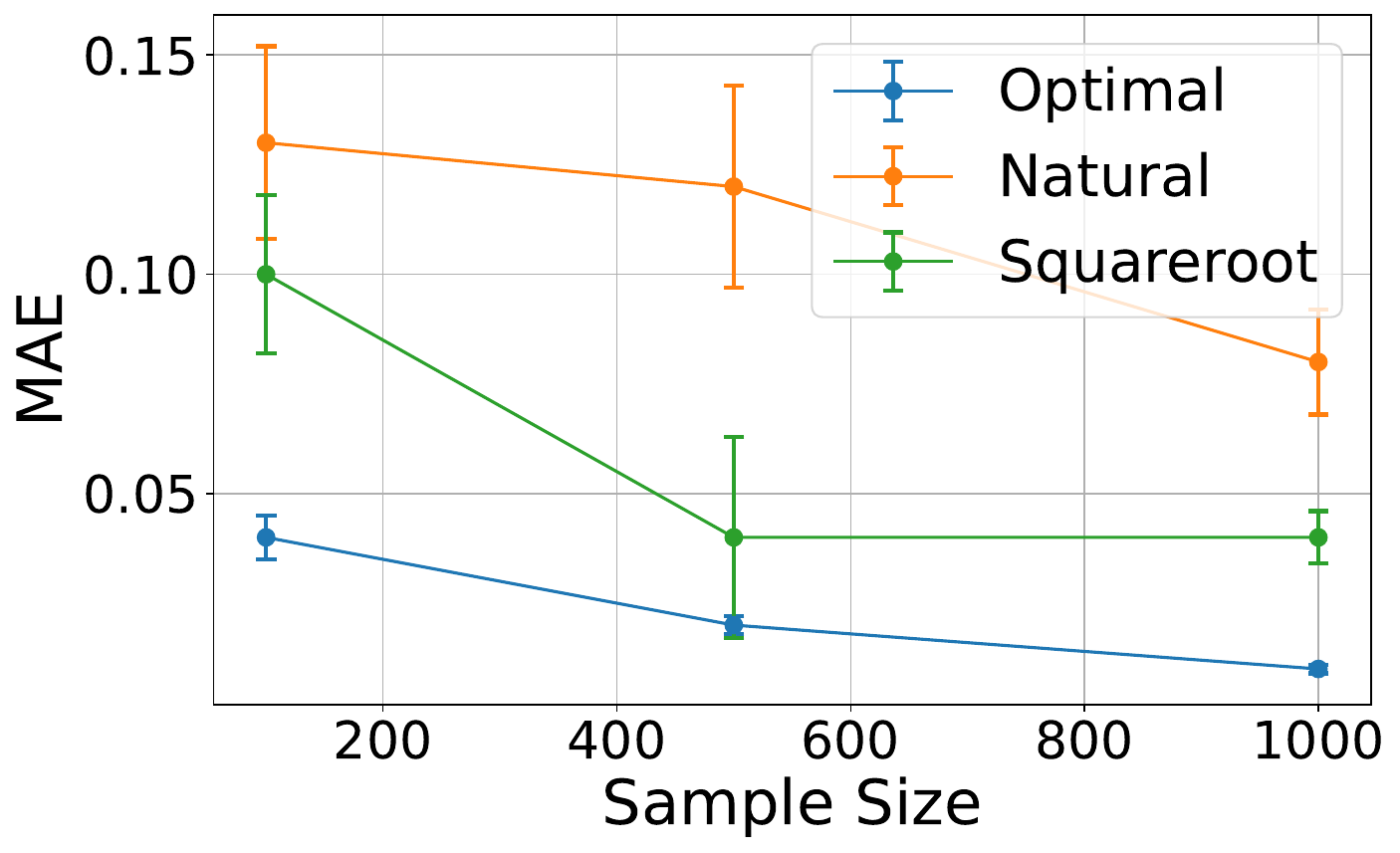}
    \subcaption{On parameter $\mu_1$}
    \end{minipage}}
    \end{center}
    \caption{MAE of parameter estimation versus sample size for three different weight functions on the 2-variate temporal Hawkes processes. Our near-optimal weight function greatly outperforms the rest two valid weight functions in all sample sizes.}
    \label{fig:compare_weight}
\end{figure*}

% \section{Discussions}
\label{sec:discussion}

\section{Conclusions}

Overall, this paper develops a formal framework for applying score matching to finite point processes and provides a unified theoretical and empirical study. We formally define the scores of point process via the Janossy density and introduce a weighted score matching objective with provable consistency, convergence guarantees, and a near-optimal choice of weights, together with an equivalent change-of-variables formulation. Within this framework, we further show that, due to delicate normalization issues specific to finite point processes, even a model whose score is perfectly matched to the truth need not recover the correct intensity or underlying distribution. To overcome this limitation, we propose a simple survival classification objective that enforces the correct normalization while remaining completely integration-free and applicable to general intensity-based models. Extensive experiments on synthetic and real-world temporal and spatio-temporal datasets, using both statistical and deep point process models, validate our theoretical findings: the proposed method reliably recovers ground-truth intensities and achieves performance comparable to MLE.

% Overall, this paper provides a comprehensive study on using score matching to estimate finite point processes. We first define the notion of point process scores based on the Janossy density, and introduce a method called weighted score matching, which yields a trainable and provably consistent objective. Furthermore, we investigate the asymptotic property and finite-sample convergence rates under different weight functions and propose a near-optimal choice of weight. We also establish an equivalence between weighted score matching and a change-of-variables formulation. Extensive experiments demonstrate that the proposed method enables more efficient parameter estimation for point processes and achieves strong empirical performance, even for deep point process models. 

% \acks{This work was supported by the NSFC Project (No.62576346), the MOE Project of Key Research Institute of Humanities and Social Sciences (22JJD110001), the fundamental research funds for the central universities, and the research funds of Renmin University of China (24XNKJ13), and Beijing Advanced Innovation Center for Future Blockchain and Privacy Computing.}

\newpage
\appendix
\addcontentsline{toc}{section}{Appendices}

\section{Proof of Results in \texorpdfstring{\cref{method}}{Section 3}}

\subsection{Proof of  \texorpdfstring{\cref{thm:unique_wsm}}{Theorem 9}}
\begin{proof}
If there exists another $\theta_1\neq \theta^*$ such that $\mathcal L^{\text{WSM}}_{h}(\theta_1)= 0$ then it holds,

\begin{align*}
    \mathcal L^{\text{WSM}}_{h}(\theta_1) = \sum_{N=1}^\infty \int_{V^{(N)}}j_{N,\theta^*}(\bm X_N)\sum_{n=1}^N\||\bm \psi_{n,\theta^*}(\bm X_N)-\bm \psi_{n,\theta_1}( \bm X_N)\|^2h(\bm x_n)d\bm X_N
\end{align*}

Since the square distance is non-negative and $h>0$ a.e., this implies
\begin{equation*}
    \bm \psi_{n,\theta^*}(\mathcal  X)=\bm \psi_{n,\theta_1}(\mathcal  X)\ a.s., \forall n\in \mathbb N_{+}, 
\end{equation*}
By assumption this imples $\theta^* = \theta_1$
\end{proof}
\begin{lemma}\textnormal{(Divergence Theorem)} \label{lemma:divergence_thm}
    Assume $V\subset \mathbb R^d$ is a Lipschitz domain with boundary $\partial V$. For any $f_1, f_2 \in H^1(V)$.
    \begin{equation*}
        \int_V \partial_{x_k} f_1(\bm x)f_2(\bm x)d\bm x = \int_{\partial V}f_1(\bm x) f_2(\bm x)\nu_k ds-\int_V f_1(\bm x)\partial_{x_k}f_2(\bm x) d\bm x,
    \end{equation*}
where $v_k$ is the $k$-th coordinate of the unit outward normal vector on $\partial V$.
\end{lemma}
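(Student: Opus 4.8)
The plan is to reduce the two-function identity to the single-function Gauss--Green formula and then invoke the product rule for weak derivatives; since this is a classical fact about Sobolev spaces on Lipschitz domains (see Section~7 of \citet{atkinson2005theoretical}), I will only indicate the structure of the argument. The key reduction is to first establish, for every scalar $u\in W^{1,1}(V)$,
\[
\int_V \partial_{x_k} u(\bm x)\, d\bm x = \int_{\partial V} u(\bm x)\, \nu_k\, ds,
\]
and then apply it to the product $u = f_1 f_2$.

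To obtain the single-function formula, I would start from the classical divergence theorem: for $\bm F\in C^1(\bar V;\mathbb R^d)$ one has $\int_V \operatorname{div}\bm F\, d\bm x = \int_{\partial V}\bm F\cdot\bm\nu\, ds$, which holds on a bounded Lipschitz domain because the defining Lipschitz charts furnish an outward unit normal $\bm\nu$ and a surface measure $ds$ that are well defined $\mathcal H^{d-1}$-almost everywhere on $\partial V$. Choosing $\bm F = u\,\bm e_k$ gives the displayed identity for $u\in C^1(\bar V)$. To extend it to all $u\in W^{1,1}(V)$, I would use two standard properties of bounded Lipschitz domains: the density of $C^\infty(\bar V)$ in $W^{1,1}(V)$, and the boundedness of the trace operator $\operatorname{Tr}\colon W^{1,1}(V)\to L^1(\partial V)$. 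Approximating $u$ by smooth $u_j$ in the $W^{1,1}$ norm, the left-hand side converges by $L^1$ convergence of $\partial_{x_k}u_j$ and the right-hand side by continuity of the trace, giving the formula for all $u\in W^{1,1}(V)$.

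It remains to apply this to $u=f_1 f_2$, which requires checking $f_1 f_2\in W^{1,1}(V)$ whenever $f_1,f_2\in H^1(V)=W^{1,2}(V)$. By Cauchy--Schwarz $f_1 f_2\in L^1(V)$, and the weak product rule gives $\partial_{x_k}(f_1 f_2)=f_2\,\partial_{x_k}f_1 + f_1\,\partial_{x_k}f_2$, with each summand a product of two $L^2$ functions and hence in $L^1(V)$; moreover the trace factorizes as $\operatorname{Tr}(f_1 f_2)=\operatorname{Tr}(f_1)\operatorname{Tr}(f_2)$. Substituting into the single-function formula and moving the term $\int_V f_1\,\partial_{x_k}f_2\, d\bm x$ to the other side yields the claimed identity. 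The main obstacle is genuinely the Lipschitz geometry: the density of smooth functions and the existence of a bounded trace on a merely Lipschitz boundary are the nontrivial ingredients, whereas the product-rule bookkeeping is routine, with the one caveat that $H^1$ is not an algebra, so membership of $f_1 f_2$ in $W^{1,1}$ (not $H^1$) must be verified before the single-function formula can be applied.
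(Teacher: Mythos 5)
The paper does not prove this lemma at all: it is stated as a classical fact (integration by parts / Gauss--Green on a bounded Lipschitz domain for $H^1$ functions) and used as a black box in the proof of Theorem~\ref{thm:equivalence between EWSM and IWSM}, with the Lipschitz-domain background deferred to \citet{atkinson2005theoretical}. Your sketch is a correct outline of the standard argument, so there is no conflict with the paper --- you are simply supplying the proof the paper omits. The structure is sound: the single-function Gauss--Green formula for $W^{1,1}(V)$ follows from the $C^1(\bar V)$ case via density of $C^\infty(\bar V)$ and Gagliardo's bounded trace $W^{1,1}(V)\to L^1(\partial V)$, and you correctly flag the one point that genuinely needs checking, namely that $f_1f_2$ lands in $W^{1,1}(V)$ (not $H^1(V)$) with the weak product rule $\partial_{x_k}(f_1f_2)=f_2\,\partial_{x_k}f_1+f_1\,\partial_{x_k}f_2$ and that $\operatorname{Tr}(f_1f_2)=\operatorname{Tr}(f_1)\operatorname{Tr}(f_2)$; both are justified by approximating each factor in $H^1$ by functions in $C^\infty(\bar V)$ and passing to the limit, using that the trace is bounded from $H^1(V)$ into $L^2(\partial V)$ so the boundary products converge in $L^1(\partial V)$. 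If you were to write this out in full, that limiting argument for the product and its trace is the only step that deserves more than one line; everything else is citation-level.
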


\subsection{Proof of \texorpdfstring{\cref{thm:equivalence between EWSM and IWSM}}{Theorem 10}}
\begin{proof}

First we show that under \cref{assum:wsm_score_regularity} and \cref{assum:wsm_regularity_of_weight}, we have,
\begin{equation}\label{eq:boundedness_of_expecation}
    \sum_{n=1}^{N(\bm X)}\|\bm \psi_{n,\theta}(\bm X)\|^2h(\bm x_n), \sum_{n=1}^{N(\bm X)}\bm \psi_{n}(\bm X)\cdot \bm \psi_{n,\theta}(\bm X)h(\bm x_n) \in L^1(\mathbb P).
\end{equation}

We prove the first term is L-1 integrable, the other one follows the same derivation. This is because by Cauchy-Schwarz inequality, we have,
\begin{align*}
    \mathbb E [\sum_{n=1}^{N(\mathcal X)}\|\bm \psi_{n,\theta}(\mathcal X)\|^2h(\bm x_n)]\leq \Big(\mathbb E [\sum_{n=1}^{N(\mathcal X)} \|\psi_{n,\theta}(\mathcal X)\|^4]\Big)^{\frac{1}{2}}\Big(\mathbb E [\sum_{n=1}^{N(\mathcal X)} h^2(\bm x_n)]\Big)^{\frac{1}{2}},
\end{align*}
then by (A1) in \cref{assum:wsm_score_regularity} and (A3) in \cref{assum:wsm_regularity_of_weight}, we know RHS is bounded. Thus we verify $\sum_{n=1}^{N(\bm X)}\|\bm \psi_{n,\theta}(\bm X)\|^2h(\bm x_n) \in L^1(\mathbb P)$.

Now by \cref{eq:boundedness_of_expecation}, we can use Fubini theorem to exchange the integral and have,
\begin{equation}\label{eq:cross_term_wsm}
    \begin{aligned}
    \mathbb E[\sum_{n=1}^{N(\mathcal X)}&\bm \psi_n(\mathcal X)\cdot \bm \psi_{n, \theta}(\mathcal X) h(\bm x_n)]=\sum_{N=1}^\infty\frac{1}{N!}\int_{V^{(N)}}j_N(\bm X_N)\sum_{n=1}^N\bm \psi_n(\bm X_n)\cdot \bm \psi_{n, \theta}(\bm X_N)h(\bm x_n) d\bm X_N\\
    &=\sum_{N=1}^\infty\sum_{n=1}^N \frac{1}{N!}\int_{V^{(N-1)}}d\bm X_{N,-n}\int_{V}j_{N}(\bm X_N)\bm \psi_n(\bm X_N)\cdot \bm \psi_{n, \theta}(\bm X_N)h(\bm x_n)d\bm x_n.
\end{aligned}
\end{equation}
We further evaluate the inner integral by,
\begin{align*}
    &\int_{V}j_N(\bm X_N)\bm \psi_n(\bm X_N)\cdot \bm \psi_{n, \theta}(\bm X_N)h(\bm x_n) d\bm x_n\\&=\int_{V}\nabla_{\bm x_n}j_N(\bm X_N)\cdot \bm \psi_{n, \theta}(\bm X_N)h(\bm x_n)d\bm x_n\\
    &=\int_{\partial V}j_N(\bm X_N)\bm \psi_{n, \theta}(\bm X_N)\cdot \bm \nu_dh(\bm x_n)d\bm x_n-\int_{V}j_N(\bm X_N)\nabla_{\bm x_n} \cdot\big( \bm \psi_{n, \theta}(\bm X_N)h(\bm x_n)\big)d\bm x_n\\
    &=-\int_{V}j_N(\bm X_N) \Big[\text{Tr}(\nabla_{\bm x_n} \bm\psi_{n,\theta}(\bm X_N))h(\bm x_n)+\bm \psi_{n,\theta}(\bm X_N)\cdot \nabla_{\bm x_n}h(\bm x_n)\Big]d\bm x_n
\end{align*}
where the second equation is by divergence theorem and $\nabla_{\bm x_n}\cdot$ represents taking divergence w.r.t. $\bm x_n$. The first term in RHS of the second equation vanishes because of \cref{assum:wsm_regularity_of_weight}. Therefore we have,
\begin{align*}
    &-\mathbb E[\sum_{n=1}^{N(\mathcal X)}\bm \psi_n(\mathcal X)\cdot \bm \psi_{n, \theta}(\mathcal X) h(\bm x_n)] \\&=\sum_{N=1}^\infty\sum_{n=1}^N \frac{1}{N!}\int_{V^{(N-1)}}d\bm X_{N,-n}\int_{V}j_N(\bm X_N) \Big[\text{Tr}(\nabla_{\bm x_n} \bm\psi_{n,\theta}(\bm X_N))h(\bm x_n)+\bm \psi_{n,\theta}(\bm X_N)\cdot \nabla_{\bm x_n}h(\bm x_n)\Big]d\bm x_n\\
    &=\sum_{N=1}^\infty\frac{1}{N!}\int_{V^{(N)}}j_N(\bm X_N)\sum_{n=1}^N \Big[\text{Tr}(\nabla_{\bm x_n} \bm\psi_{n,\theta}(\bm X_N))h(\bm x_n)+\bm \psi_{n,\theta}(\bm X_N)\cdot \nabla_{\bm x_n}h(\bm x_n)\Big]d\bm X_N\\
    &=\mathbb E[\sum_{n=1}^{N(\mathcal X)}\text{Tr}(\nabla_{\bm x_n} \bm\psi_{n,\theta}(\bm X_N))h(\bm x_n)+\bm \psi_{n,\theta}(\bm X_N)\cdot \nabla_{\bm x_n}h(\bm x_n)]
\end{align*}

The second equation is by Fubini Theorem, this suffices to prove the theorem.

\end{proof}

Now we provide two parametric examples showing that many common parametric point processes satisfy \cref{assum:wsm_score_regularity} and \cref{assum:wsm_regularity_of_weight}.

\begin{example}\textnormal{(Inhomogeneous Poisson Process)}
    Consider a temporal inhomogeneous Poisson process on $(-\pi,\pi)$ with $\lambda(x)=\exp(\theta \sin (x))$. For compact parameter space, we have $\theta\lesssim \textnormal{diam}(\Theta)$. Then by Campbell's theorem, 
        \begin{equation*}
        \mathbb E[N(\mathcal X)]=\int_{(-\pi,\pi)}\exp(\theta^* sinx)dx \leq 2\pi \exp(\textnormal{diam}(\Theta))<\infty.\\
    \end{equation*}

    Therefore, to satisfy \cref{assum:wsm_score_regularity}, a sufficient condition is score are bounded. This holds in this example since,
    \begin{equation*}
        |\psi_{n,\theta}(\bm X)| = |\theta\cos x_n|\leq \textnormal{diam}(\Theta).
    \end{equation*}
    Therefore, the model score satisfies the regularity. Since $ \mathbb E[N(\mathcal X)]<\infty$, we know the distance function $h(x) = \min\{|x+\pi|, |x-\pi|\}$ will satisfy \cref{assum:wsm_regularity_of_weight}.
\end{example}

\begin{example}\textnormal{(Truncated Gaussian Point Process)}\label{ex:truncated_gaussian}
    Consider a d-dimensional finite point process on $V\subset \mathbb R^d$ with Janossy density,
    \begin{equation*}
        j_{N,\theta}(\bm X) = \frac{1}{(2\pi|\bm \Sigma|)^{\frac{dN}{2}}}\exp\{-\frac{1}{2}\sum_{n=1}^{N(\bm X)}(\bm x_n-\bm \mu)^\top\bm \Sigma^{-1}(\bm x_n-\bm \mu)\}Z(\bm \mu, \Sigma, V),
    \end{equation*}
    where $Z(\bm \mu, \bm \Sigma, V)<\infty$ is the normalizing constant. This is also an exponential family as we introduced in  \cref{ex:exponential}. Model score is $\bm \psi_{n,\theta}(\bm X) = -\bm \Sigma^{-1}(\bm x_n-\bm \mu)$. Notice that,
    \begin{align*}
        \mathbb E [\sum_{n=1}^{N(\mathcal X)} \|\psi_{n,\theta}(\mathcal X)\|^4]&= \mathbb E[\sum_{n=1}^{N(\mathcal X)}\Big((\bm x_n-\bm \mu)^\top\Sigma^{-2}(\bm x_n-\bm \mu)\Big)^2]\\
        &=\sum_{N=1}^\infty\frac{1}{N!}\int_{V^{(N)}}j_N(\bm X_N)\sum_{n=1}^N \Big((\bm x_n-\bm \mu)^\top\Sigma^{-2}(\bm x_n-\bm \mu)\Big)^2d\bm X_N\\
        &=Z(\bm \mu^*, \bm \Sigma^*, V)\sum_{N=1}^{\infty}\frac{\textnormal {Pr}(V)^{N}}{N!}\sum_{n=1}^N\mathbb E_{\bm x\sim \textnormal{Trunc}\mathcal  N(\bm \mu^*, \Sigma^*)} \Big((\bm x-\bm \mu)^\top\Sigma^{-2}(\bm x-\bm \mu)\Big)^2\\
        &\lesssim \sum_{N=1}^\infty \frac{N\textnormal{Pr(V)}^N}{N!} <\infty,
    \end{align*}

where $\textnormal{Pr}(V)$ is the probability of the sample located in $V$ under $\mathcal N(\bm \mu^*, \Sigma^*)$. For the last inequality, we need to assume that $\Theta$ is compact so that by the property of Gaussian, $\mathbb E_{\bm x\sim \textnormal{Trunc}\mathcal N (\bm \mu^*, \bm \Sigma^*)} \Big((\bm x-\bm \mu)^\top\Sigma^{-2}(\bm x-\bm \mu)\Big)^2$ can be upper bounded by some absolute constant. So far, we have shown that \cref{assum:wsm_score_regularity} is satisfied. Following the same derivation, one can show that with bounded distance function $h^{\textnormal{WSM}}_0(\bm x)=\text{dist}(\bm x, \partial V)$, \cref{assum:wsm_regularity_of_weight} is also satisfied.
\end{example}

\subsection{Remedy of non-identifying issue of WSM}\label{sec:WSM_remedy}
Let's assume we have a nonparametric model denoted as $\tilde j_N(\bm X)$ for the janossy density. Denote its score as $\hat {\bm \psi}_n$. For the case where the score of the Janossy density is matched, i.e.
\begin{equation*}
    \hat {\bm \psi}_n(\bm X)=\bm \psi_n(\bm X),\forall n \in [N], \forall \bm X\in V^{(N)}
\end{equation*}
This means that the modeled janossy density satisfies,
\begin{equation*}
    \tilde j_N(\bm X) = c_N j_N(\bm X),
\end{equation*}
where we utilize the fact that $\tilde j_N$ and $j_N$ should be symmetric to the $N$ points, and here $c_N$ is a constant that is not necessarily one. If we integrate both sides, we have,
\begin{equation*}
    \int_{V^{(N)}}\tilde j_N(\bm X)d\bm X = c_N J_N\Rightarrow c_N=\frac{ \int_{V^{(N)}}\tilde j_N(\bm X)d\bm X}{J_N},
\end{equation*}
where we abuse the notation $J_N$ for $J_N(V^{(N)})$. Then, we can estimate $c_N$ by 
\begin{equation*}
    \hat c_N:=\frac{\int_{V^{(N)}}\tilde j_N(\bm X)d\bm X}{\frac{\#\textnormal{ of trajectories with N points}}{\textnormal{total }\# \textnormal{ of  trajectories}}}.
\end{equation*}
Then $\frac{\tilde j_N}{\hat c_N} $ should be a consistent estimator for $j_N$.

\section{Proof of Results in \texorpdfstring{\cref{method2}}{Section 4}}

\subsection{Proof of \texorpdfstring{\cref{thm:unique_awsm}}{Theorem 13}}\label{sec:proof_of_awsm_recover}
% \begin{proof}
% By definition,
% \begin{equation*}
%   \mathcal L_{h_T,h_S}^{\textnormal{AWSM}}(\theta)
%   = \frac{1}{2}\,\mathbb E[
%     \sum_{n=1}^{N(\mathcal X)}
%       \{
%         (\psi_{T,n}-\psi_{T,n,\theta})^2 h_T
%         + \|\bm\psi_{S,n}-\bm\psi_{S,n,\theta}\|^2 h_S
%       \}]\ge 0,
% \end{equation*}
% where we abbreviate $h_T=h_T(t_{n-1},t_n)$ and $h_S=h_S(\bm s_n)$.
% For the true parameter $\theta^*$ the model scores coincide with the ground-truth
% scores, so $\mathcal L_{h_T,h_S}^{\textnormal{AWSM}}(\theta^*)=0$, hence $\theta^*$ is a minimizer.
% Now let $\theta_1$ be any minimizer. Then 
% $\mathcal L_{h_T,h_S}^{\textnormal{AWSM}}(\theta_1)=0$, so the nonnegative random variable
% \begin{equation*}
%   Z(\theta_1)
%   := \sum_{n=1}^{N(\mathcal X)}
%     \{
%       (\psi_{T,n}-\psi_{T,n,\theta_1})^2 h_T
%       + \|\bm\psi_{S,n}-\bm\psi_{S,n,\theta_1}\|^2 h_S
%     \}
% \end{equation*}
% satisfies $\mathbb E[Z(\theta_1)]=0$, and hence $Z(\theta_1)=0$ almost surely.
% The sum contains finitely many nonnegative terms, so each term must vanish a.s.
% Using that $h_T>0$ and $h_S>0$ almost everywhere, we obtain $\psi_{T,n}(\bm X)=\psi_{T,n,\theta_1}(\bm X), 
%   \bm\psi_{S,n}(\bm X)=\bm\psi_{S,n,\theta_1}(\bm X)$ almost surely for all $n\ge 1$. By the identifiability assumption in the
% statement, this implies $\theta_1=\theta^*$, so $\theta^*$ is the unique
% minimizer of $\mathcal L_{h_T,h_S}^{\textnormal{AWSM}}(\theta)$.
% \end{proof}
\begin{proof}
By definition,
\begin{equation*}
  \mathcal L_{h_T,h_S}^{\textnormal{AWSM}}(\theta)
  = \frac{1}{2}\,\mathbb E\left[
    \sum_{n=1}^{N(\mathcal X)}
      \left\{
        (\psi_{T,n}-\psi_{T,n,\theta})^2 h_T
        + \|\bm\psi_{S,n}-\bm\psi_{S,n,\theta}\|^2 h_S
      \right\}\right]\ge 0,
\end{equation*}
where $h_T=h_T(t_{n-1},t_n)$ and $h_S=h_S(\bm s_n)$.
For $\theta=\theta^*$ the scores coincide with the truth, so 
$\mathcal L_{h_T,h_S}^{\textnormal{AWSM}}(\theta^*)=0$, hence $\theta^*$ is a minimizer.
Let $\theta_1$ be any minimizer. Then 
$\mathcal L_{h_T,h_S}^{\textnormal{AWSM}}(\theta_1)=0$, so the nonnegative random variable
$Z(\theta_1):=\sum_{n=1}^{N(\mathcal X)}\{(\psi_{T,n}-\psi_{T,n,\theta_1})^2 h_T
+\|\bm\psi_{S,n}-\bm\psi_{S,n,\theta_1}\|^2 h_S\}$ satisfies $\mathbb E[Z(\theta_1)]=0$ and
hence $Z(\theta_1)=0$ a.s.
Since $h_T>0$ and $h_S>0$ a.e., each squared term must vanish a.s., so
$\psi_{T,n}=\psi_{T,n,\theta_1}$ and $\bm\psi_{S,n}=\bm\psi_{S,n,\theta_1}$ a.s.\ for all $n$.
By the identifiability assumption this implies $\theta_1=\theta^*$, so $\theta^*$ is the unique minimizer.
\end{proof}

% \begin{proof}
%      First, since $\mathcal L^{\text{AWSM}}_{h_T,h_S}(\theta) \geq 0$ and $\mathcal L^{\text{AWSM}}_{h_T,h_S}(\theta^*) = 0$, we see $\theta^*$ is a minimizer. If there exists another minimizer $\theta_1$, then we have 
%     \begin{equation*}
%         \begin{aligned}
%             \mathcal L_{\text{AWSM}}(\theta_1) = \frac{1}{2}\sum_{N=1}^\infty \int_{V^{(N)}} j_N(\bm X_N)&\Big[\big(\psi_{T,n}(\bm X_N)- \psi_{T,n,\theta}(\bm X_N)\big)^2 h_{T,n}(\bm X_N)\\&+\|\bm \psi_{S,n}(\bm X_N)-\bm \psi_{S,n, \theta}(\bm X_N)\|^2h_{S,n}(\bm X_N)\Big]d\bm X_N
%         \end{aligned}
%     \end{equation*}
%     By the definition of $h_T, h_S$, we have $h_T(t_{n-1},t_n)>0, h_S(\bm s_n)>0$ a.e. , this implies $h_{T,n}(\bm X),  h_{S,n}(\bm X) > 0$ a.s. Since the L2 distance is non-negative, we have for any $n > 0 $,  $ \psi_{T,n}(\bm X)=\psi_{T,n,\theta}(\bm X)$ $a.s.$  and $ {\bm \psi}_{S,n}(\bm X)={\bm \psi}_{S,n, \theta}(\bm X)$ $a.s.$. Then by assumption this implies $\theta_1=\theta^*$.
% \end{proof}

\subsection{Proof of \texorpdfstring{\cref{thm:tractable EWSM}}{Theorem 14}}

\begin{lemma}\label{Lemma:exchange summation}
    For a regular and finite spatio-temporal point process on $V=(0,T)\times S$ with the underlying probability measure $\mathbb P$. Recall the notation in \cref{prop:existence_of_conditional_densityd}, denote $p_n(\bm x_1,\ldots, \bm x_n)=p_n(t_n,\bm s_n|\mathcal H_{n-1})\times\ldots \times p_1(t_1,\bm s_1)$. Suppose $g_n: D_n^T\times S^n\rightarrow \mathbb R$ and $g_n(\bm X):=g_n(t_1,\bm s_1,\ldots, t_n,\bm s_n)$,
    If $\sum_{n=1}^{N(\bm X)}g_n(\bm X) \in L^1(\mathbb P)$,
    then
    \begin{equation*}
        \mathbb E\Big[\sum_{n=1}^{N(\mathcal X)}g_n(\mathcal X)\Big]=\sum_{n=1}^\infty \int_{D_n^T\times S^{(n)}}p_n(\bm X_n) g_n(\bm X_n) d\bm X_n
    \end{equation*}
\end{lemma}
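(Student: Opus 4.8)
The plan is to unfold the expectation through the definition of $\mathbb P$ in \cref{def:data_generating_measure}, reduce each integral over $V^{(N)}$ to the time-ordered simplex by symmetry, substitute the factorization of the Janossy density into conditional densities and a survivor function from \cref{prop:existence_of_conditional_densityd}, and then collapse the resulting double series by a telescoping argument driven by the survivor identity.

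First I would write
\[
\mathbb E\Big[\sum_{n=1}^{N(\mathcal X)}g_n(\mathcal X)\Big]
=\sum_{N=1}^\infty\frac{1}{N!}\int_{V^{(N)}}\Big(\sum_{n=1}^N g_n(\bm X_N)\Big)j_N(\bm X_N)\,d\bm X_N .
\]
Since $j_N$ is symmetric and, under the convention that each $g_n$ is evaluated on the time-ordered points, the inner sum $\sum_{n=1}^N g_n$ is invariant under relabelling of the tuple entries, the whole integrand is symmetric. The time coordinates partition $V^{(N)}$ almost everywhere into $N!$ permuted copies of the region $\{t_1<\dots<t_N\}=D_N^T$, so the factor $1/N!$ is absorbed and each term becomes an integral over $D_N^T\times S^{(N)}$. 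I would then substitute $j_N(\bm X_N)=p_N(\bm X_N)\,G_{N+1}(T\mid\mathcal H_N)$ from \cref{eq:hawkes_cond_density} and exchange the order of the double sum $\sum_{N\ge1}\sum_{n=1}^N=\sum_{n\ge1}\sum_{N\ge n}$; for a fixed $n$ the factor $g_n(\bm X_n)\,p_n(\bm X_n)$ depends only on the first $n$ points, so it can be pulled out and the remaining points $n+1,\dots,N$ integrated over $\{t_n<t_{n+1}<\dots<t_N<T\}\times S^{(N-n)}$.

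The key step is to show that, for every fixed $n$ and every history $\mathcal H_n$, the sum over $N\ge n$ of these ``future'' integrals equals $1$. Writing $Q_N^{(n)}(\mathcal H_n)$ for the integral of $\prod_{i=n+1}^N p_i\cdot G_{N+1}(T\mid\mathcal H_N)$ over the future positions, I would invoke the survivor identity $G_{N+1}(T\mid\mathcal H_N)=1-\int_{t_N}^T\!\int_S p_{N+1}(u,\bm s\mid\mathcal H_N)\,d\bm s\,du$. Substituting this splits $Q_N^{(n)}$ as $R_N-R_{N+1}$, where $R_M(\mathcal H_n):=\int\prod_{i=n+1}^M p_i$ over the corresponding ordered region is the conditional probability of at least $M-n$ further points given $\mathcal H_n$ (with $R_n=1$), because the subtracted term is exactly $R_{N+1}$. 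The series then telescopes, $\sum_{N=n}^M Q_N^{(n)}=R_n-R_{M+1}=1-R_{M+1}$, and $R_{M+1}=\Pr(N(\mathcal X)\ge M+1\mid\mathcal H_n)\to0$ since the process is finite almost surely. Hence $\sum_{N\ge n}Q_N^{(n)}(\mathcal H_n)=1$, and plugging this back leaves precisely $\int_{D_n^T\times S^{(n)}}p_n(\bm X_n)g_n(\bm X_n)\,d\bm X_n$; summing over $n$ gives the claim.

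I expect the main obstacle to be justifying the interchanges of summation and integration together with the vanishing of the tail $R_{M+1}$. The integrability hypothesis $\sum_{n=1}^{N(\bm X)}g_n\in L^1(\mathbb P)$ secures the former once the computation is first run on $|g_n|$ (where the identity holds unconditionally by Tonelli) to establish absolute convergence of the double series, after which Fubini legitimizes the signed manipulation; the latter rests on the almost-sure finiteness of $N(\mathcal X)$, which forces $R_{M+1}\to0$. A secondary technical point is the symmetrization: one must check that configurations with coincident event times are $\mathbb P$-null, so that the partition of $V^{(N)}$ into ordered cells indeed holds almost everywhere and the factor $1/N!$ is correctly absorbed.
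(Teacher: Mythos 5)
Your proposal is correct and follows essentially the same route as the paper's proof: unfold $\mathbb P$ into the Janossy series, symmetrize onto the ordered simplex to absorb the $1/N!$, factor $j_N = p_N\cdot G_{N+1}$, swap the double sum by Fubini, and show the future integrals sum to one over $N\ge n$. The only difference is that where the paper directly identifies each future integral as $\mathbb P(N(\mathcal X)=N\mid\mathcal H_n)$ and asserts these sum to $1$, you derive the same fact by expanding $G_{N+1}$ via the survivor identity and telescoping — a slightly more explicit justification of the same step, with the correct supporting observations about Tonelli on $|g_n|$ and the null set of coincident times.
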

\begin{proof}
    Notice that,
    \begin{align*}
        \mathbb E\Big[\sum_{n=1}^{N(\mathcal X)}g_n(\mathcal X)\Big]&=\sum_{N=1}^\infty\frac{1}{N!}\int_{V^{(N)}} j_N(\bm X_N)\sum_{n=1}^Ng_n(\bm X_N)d \bm X_N\\
        &=\sum_{N=1}^\infty\int_{D_N^T\times S^{(N)}}j_N(\bm X_N)\sum_{n=1}^Ng_n(\bm X_n)d\bm X_N\\
        =\sum_{N=1}^\infty\sum_{n=1}^N \int_{D_n^{T}\times S^{(n)}}&p_n(\bm X_n)g_n(\bm X_n)\int_{D_{N-n}^{t_n,T}\times S^{(N-n)}}\big[p(\bm X_{n+1:N}|\mathcal H_{n})G_{N+1}(T|\bm X_N)d\bm X_{n+1:N}\big]d\bm X_n\\
        &=\sum_{N=1}^\infty\sum_{n=1}^N \int _{D_n^T\times S^{(n)}}p_n(\bm X_n)g_n(\bm X_n)\mathbb P(N(\mathcal X)=N|\mathcal H_n) d\bm X_n\\
        &=\sum_{n=1}^\infty\int_{D_n^T\times S^{(n)}}p_n(\bm X_n)g_n(\bm X_n)d\bm X_n\sum_{N=n}^\infty \mathbb P(N(\mathcal X)=N|\mathcal H_n)\\
        &=\sum_{n=1}^\infty\int_{D_n^T\times S^{(n)}}p_n(\bm X_n)g_n(\bm X_n)d\bm X_n,
    \end{align*}
where the second equation is because a $\sum_{n=1}^N g_n(\bm X_N)$ function of chronologically ordered points and (when perceived as a function on $V^\cup$) is symmetric w.r.t. all $N$ points. The third and the fifth equation is by Fubini theorem. $D_{N-n}^{t_n,T}$ represents the following area $t_n< t_{n+1}\ldots<t_N<T$. The sixth equation is by observing that $\sum_{N=n}^\infty\mathbb P(N(\mathcal X)=N|\mathcal H_n)=1$.

\end{proof}

\begin{proof}\textbf{of \cref{thm:tractable EWSM}}

First we show that under \cref{assum:AWSM_score_regularity} and \cref{assum:awsm_regularity_of_weight}, we have,
\begin{align*}
    &\sum_{n=1}^{N(\bm X)}\psi^2_{T,n,\theta}(t_n|\mathcal H_{n-1})h_T(t_{n-1}, t_n), \sum_{n=1}^{N(\bm X)}\psi_{T,n}(t_n|\mathcal H_{n-1}) \psi_{T,n,\theta}(t_n|\mathcal H_{n-1})h_T(t_{n-1}, t_n)\in L^1(\mathbb P)\\
    &\sum_{n=1}^{N(\bm X)}\bm \psi^2_{S,n,\theta}(\bm s_n|\mathcal H_{t_n})h_S(\bm s_n), \sum_{n=1}^{N(\bm X)}\bm \psi_{S,n}(\bm s_n|\mathcal H_{t_n})\cdot \bm \psi_{S,n,\theta}(\bm s_n|\mathcal H_{t_n})h_S(\bm s_n)\in L^1(\mathbb P)
\end{align*}

Again we prove the first term is L1 integrable, the other one follows the same derivation. This is because by Cauchy-Schwarz inequality, we have,
\begin{align*}
    \mathbb E [\sum_{n=1}^{N(\mathcal X)}\psi^2_{T,n,\theta}(t_n|\mathcal H_{n-1})h_T(t_{n-1}, t_n)]\leq \Big(\mathbb E [\sum_{n=1}^{N(\mathcal X)} \psi^4_{T,n,\theta}(t_n|\mathcal H_{n-1})]\Big)^{\frac{1}{2}}\Big(\mathbb E [\sum_{n=1}^{N(\mathcal X)} h^2_T(t_{n-1}, t_n)]\Big)^{\frac{1}{2}},
\end{align*}
then by (A1) in \cref{assum:AWSM_score_regularity} and (A3) in \cref{assum:awsm_regularity_of_weight}, we know RHS is bounded. Thus we verify $\sum_{n=1}^{N(\bm X)}\psi^2_{T,n,\theta}(t_n|\mathcal H_{n-1})h_T(t_{n-1}, t_n) \in L^1(\mathbb P)$.

\begin{align*}
        &\mathbb E\Big\{ \sum_{n=1}^{N(\mathcal X)}\psi_{T,n}(t_n|\mathcal H_{n-1})\psi_{T,n,\theta}(t_n|\mathcal H_{n-1})h_T(t_{n-1}, t_n)\Big\}
        \\&=\sum_{n=1}^\infty \int_{D_n^T\times S^{(n)}} p_n(\bm X_n)\psi_{T,n}(t_n|\mathcal H_{n-1})\psi_{T,n,\theta}(t_n|\mathcal H_{n-1})h_T(t_{n-1}, t_n)d\bm X_n\\
        &=\sum_{n=1}^\infty \int_{D_n^T\times S^{(n-1)}} p_{n-1}(\bm X_{n-1})p_n(t_n|\mathcal H_{n-1})
        \psi_{T,n}(t_n|\mathcal H_{n-1})\psi_{T,n,\theta}(t_n|\mathcal H_{n-1})h_T(t_{n-1}, t_n)d(\bm X_{n-1}, t_n)\\
        &=\sum_{n=1}^\infty \int_{D_{n-1}^T\times S^{(n-1)}} p_{n-1}(\bm X_{n-1})d\bm X_{n-1}\int_{t_{n-1}}^T\partial_{t_n}p_n(t_n|\mathcal H_{n-1})\psi_{T,n,\theta}(t_n|\mathcal H_{n-1})h_T(t_{n-1}, t_n)dt_n 
    \end{align*}

By \cref{assum:AWSM_score_regularity} and \cref{lemma:divergence_thm}, we have,
\begin{align*}
    &\int_{t_{n-1}}^T\partial_{t_n}p_n(t_n|\mathcal H_{n-1})\psi_{T,n,\theta}(t_n|\mathcal H_{n-1})h_T(t_{n-1}, t_n)dt_n\\&=p_n(t_n|\mathcal H_{n-1})\psi_{T,n,\theta}(t_n|\mathcal H_{n-1})h_T(t_{n-1}, t_n)\Big|_{t_{n-1}}^T- \int_{t_{n-1}}^Tp_n(t_n|\mathcal H_{n-1})\partial_{t_n}\big[\psi_{T,n,\theta}(t_n|\mathcal H_{n-1})h_T(t_{n-1}, t_n)\big]dt_n\\
    &=- \int_{t_{n-1}}^Tp_n(t_n|\mathcal H_{n-1})\partial_{t_n}\big[\psi_{T,n,\theta}(t_n|\mathcal H_{n-1})h_T(t_{n-1}, t_n)\big]dt_n,
\end{align*}
where the second equation is due to the assumption on weight function. Plug this term back,
\begin{align*}
    &-\mathbb E\Big\{ \sum_{n=1}^{N(\mathcal X)}\psi_{T,n}(t_n|\mathcal H_{n-1})\psi_{T,n,\theta}(t_n|\mathcal H_{n-1})h_T(t_{n-1}, t_n)\Big\}
    \\
    &=\sum_{n=1}^\infty \int_{D_n^T\times S^{n-1}}p_{n-1}(\bm X_{n-1})p_n(t_n|\mathcal H_{n-1})\partial t_n\big[\psi_{T,n,\theta}(t_n|\mathcal H_{n-1})h_T(t_{n-1}, t_n)]d(\bm X_{n-1}, t_n)\\
    &=\sum_{n=1}^\infty \int_{D_n^T\times S^{n}}p_{n}(\bm X_{n})\partial t_n\big[\psi_{T,n,\theta}(t_n|\mathcal H_{n-1})h_T(t_{n-1}, t_n)]d\bm X_n\\
    &=\mathbb E\Bigg\{\sum_{n=1}^{N(\mathcal X)}\Big[\partial_{t_n}\psi_{T,n,\theta}(t_n|\mathcal H_{n-1})h_T(t_{n-1}, t_n)+\psi_{T,n,\theta}(t_n|\mathcal H_{n-1})\partial_{t_n}h_T(t_{n-1}, t_n)\Big]\Bigg\}
\end{align*}

Combined with $\sum_{n=1}^{N(\bm X)}\psi^2_{T,n,\theta}(t_n|\mathcal H_{n-1})h_T(t_{n-1}, t_n) \in L^1(\mathbb P)$, we have proved,
\begin{align*}
    \frac{1}{2}\mathbb E\Bigg\{\sum_{n=1}^{N(\mathcal X)}&\Big (\psi_{T,n}(t_n|\mathcal H_{{n-1}})-\psi_{T,n,\theta}(t_n|\mathcal H_{n-1})\Big)^2h_{T}(t_{n-1},t_n)\Bigg\}
    =\mathcal J^{\textnormal{AWSM, T}}_{h_T}(\theta) + \textnormal{const}
\end{align*}

By the same proof, we also have,
\begin{align*}
    \frac{1}{2}\mathbb E\Bigg\{\sum_{n=1}^{N(\mathcal X)}\|\bm \psi_{S,n}(\bm s_n|\mathcal F_{t_n})-\bm \psi_{S,n, \theta}(\bm s_n|\mathcal F_{t_n})\|^2h_{S}(\bm s_n)\Bigg\} = \mathcal J^{\textnormal{AWSM, S}}_{h_S}(\theta) + \textnormal{const}
\end{align*}

\end{proof}

\section{Proof of Results in \texorpdfstring{\cref{sec:Theory}}{Section 5}}\label{appendix:proof_of_asymptotic}

In this section, we will state some classical lemmas to establish the statistical property of our estimator. When stating those lemmas, we always assume there is an underlying data generating probability space $(\mathcal Y, \mathbb P)$. Our sample is generated i.i.d from this space denoted as $(Y_1,\ldots, Y_m)$ with $m$ being the sample size. When we apply those lemmas to our problem, $\mathbb P$ will be the measure that generate the ground-truth point process as introduced in \cref{def:data_generating_measure}.  We will use a \textit{compact} set $\Theta\subset \mathbb R^p$ to represent the parameter space and the ground-truth parameter $\theta^*\in\text{Interior}(\Theta)$.

\begin{lemma}\label{lemma:uniform_lln}
    \textnormal{(Uniform LLN \citep{newey1994large})} Suppose $\bm a_{\theta}(y): \Theta\times \mathcal Y \times \rightarrow \mathbb R^{p\times p}$ is continuous at each $\theta\in\Theta$ with probability one and $\Theta$ being compact. If the following holds,
    \begin{equation*}
        \mathbb E\big[\sup_{\theta\in\Theta}\|\bm a_{\theta}(y)\|\big]<\infty.
    \end{equation*}
    Then it holds that,
    \begin{equation*}
        \sup_{\theta\in \Theta}\|\frac{1}{m}\sum_{i=1}^m \bm a_{\theta}(Y_i)-\mathbb E[\bm a_{\theta}(Y_i)]\|\xrightarrow[]{\mathbb P}0,
    \end{equation*}
    where all $\|\cdot \|$ above is the Frobenius norm.
\end{lemma}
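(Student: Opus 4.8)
The plan is to combine a pointwise law of large numbers with a compactness/covering argument, using the integrable envelope $\sup_{\theta}\|\bm a_\theta(y)\|$ to control local oscillations. Write $g(\theta) := \mathbb E[\bm a_\theta(Y)]$ and $\hat g_m(\theta) := \frac{1}{m}\sum_{i=1}^m \bm a_\theta(Y_i)$. First I would record two preliminary facts: (i) for each fixed $\theta$ the ordinary (pointwise) weak LLN gives $\hat g_m(\theta) \xrightarrow{\mathbb P} g(\theta)$, since $\|\bm a_\theta(Y)\|\le \sup_{\theta}\|\bm a_\theta(Y)\|$ is integrable; and (ii) by the almost-sure continuity of $\theta\mapsto \bm a_\theta(y)$ together with dominated convergence (again using the envelope), the map $g$ is continuous on $\Theta$.

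The heart of the argument is an $L^1$ modulus-of-continuity estimate. For $\theta\in\Theta$ and $\rho>0$, define the oscillation
\[
\Delta(y,\theta,\rho) := \sup_{\theta' : \|\theta'-\theta\| < \rho} \|\bm a_{\theta'}(y) - \bm a_\theta(y)\|.
\]
Measurability of this supremum is not an issue, since $\Theta\subset\mathbb R^p$ is separable and the supremum may be taken over a countable dense subset. For a.e.\ $y$ the almost-sure continuity forces $\Delta(y,\theta,\rho)\downarrow 0$ as $\rho\downarrow 0$, and because $\Delta(y,\theta,\rho)\le 2\sup_{\theta}\|\bm a_\theta(y)\|$ is dominated by an integrable function, dominated convergence yields $\mathbb E[\Delta(Y,\theta,\rho)]\to 0$ as $\rho\to 0$. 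Thus, fixing $\varepsilon>0$, for each $\theta$ I can choose a radius $\rho_\theta$ with $\mathbb E[\Delta(Y,\theta,\rho_\theta)]<\varepsilon$.

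Next I would invoke compactness. The open balls $\{B(\theta,\rho_\theta)\}_{\theta\in\Theta}$ cover $\Theta$, so a finite subcover $B(\theta_1,\rho_1),\dots,B(\theta_K,\rho_K)$ exists. For any $\theta$ lying in the $j$-th ball, the triangle inequality gives
\[
\|\hat g_m(\theta) - g(\theta)\| \le \tfrac{1}{m}\sum_{i=1}^m \Delta(Y_i,\theta_j,\rho_j) + \|\hat g_m(\theta_j) - g(\theta_j)\| + \mathbb E[\Delta(Y,\theta_j,\rho_j)],
\]
where the first term bounds $\|\hat g_m(\theta)-\hat g_m(\theta_j)\|$, the middle term is the pointwise error at the center, and the last bounds $\|g(\theta_j)-g(\theta)\|$ and is already $<\varepsilon$. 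Taking the supremum over $\theta$ reduces to a maximum over the \emph{finite} index set $j=1,\dots,K$. Applying the pointwise LLN to each $\hat g_m(\theta_j)$ and to each integrable oscillation $\Delta(\cdot,\theta_j,\rho_j)$, every term on the right is, with probability tending to one, within $\varepsilon$ of its limit, giving $\sup_{\theta}\|\hat g_m(\theta)-g(\theta)\|\le 3\varepsilon$ with probability approaching $1$. Since $\varepsilon$ is arbitrary, uniform convergence in probability follows.

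The step I expect to be the main obstacle is the oscillation-control estimate, specifically justifying $\mathbb E[\Delta(Y,\theta,\rho)]\to 0$: this is exactly where the integrable-envelope hypothesis is essential, and one must be slightly careful about the measurability of $\Delta$ and about interchanging the supremum with the expectation. Everything downstream is routine finite-cover bookkeeping, so the proof really rests on converting pointwise continuity into an $L^1$ modulus of continuity via dominated convergence.
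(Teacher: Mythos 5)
Your proof is correct. Note, however, that the paper does not prove this lemma at all: it is stated purely as a citation to Newey and McFadden (1994) and used as a black box in the consistency arguments, so there is no in-paper proof to compare against. Your argument --- pointwise weak LLN at finitely many centers, an $L^1$ modulus-of-continuity bound on the oscillation $\Delta(y,\theta,\rho)$ obtained from the integrable envelope via dominated convergence, and a finite subcover of the compact parameter set --- is exactly the classical Jennrich/Newey--McFadden proof of this uniform LLN, and all the steps (measurability of the supremum via separability, the three-term triangle-inequality decomposition, and the reduction of the supremum to a finite maximum) are handled correctly.
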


For the following two lemmas, we will denote the criterion function as $q_{\theta}(y):\Theta\times \mathcal Y \rightarrow \mathbb R$. The empirical loss is $Q_m(\theta):=\frac{1}{m}\sum_{i=1}^mq_{\theta}(Y_i)$ and population loss $Q(\theta):=\mathbb E[q_{\theta}(Y)]$.

\begin{lemma}\textnormal{(Theorem 5.7 in \citet{van2000asymptotic})}\label{lemma:wald's_consistency}
If for a $\theta^*\in \Theta$ and every $\varepsilon>0$,
    \begin{align*}
        \sup_{\theta\in \Theta}|Q_m(\theta)-Q(\theta)|&\xrightarrow{\mathbb p} 0,\\
        \sup_{\theta:\|\theta-\theta^*\|\geq \varepsilon}Q(\theta)&<Q(\theta^*),
    \end{align*}
    then any sequence of estimators $\hat {\theta}_m$ with $Q_m(\hat \theta_m)>Q_m(\theta^*)-o_p(1)$ converges to $\theta^*$ in probability.
\end{lemma}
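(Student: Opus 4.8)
The plan is to run the classical Wald argmax-consistency argument, using each of the two hypotheses exactly where it is needed. Fix an arbitrary $\varepsilon > 0$; it suffices to show $\Pr(\|\hat\theta_m - \theta^*\| \ge \varepsilon) \to 0$. First I would turn the well-separation hypothesis into a quantitative gap: setting $\eta := Q(\theta^*) - \sup_{\theta:\|\theta - \theta^*\|\ge\varepsilon} Q(\theta)$, the second assumption guarantees $\eta > 0$. The purpose of this step is that on the ``bad'' event $\{\|\hat\theta_m - \theta^*\| \ge \varepsilon\}$ the estimator $\hat\theta_m$ lies in the region where $Q$ is at least $\eta$ below its value at $\theta^*$, so that $Q(\theta^*) - Q(\hat\theta_m) \ge \eta$ holds there.

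The second step is a three-term decomposition that transfers this population gap to quantities I can control. I would write
\begin{align*}
 Q(\theta^*) - Q(\hat\theta_m)
 &= \bigl[Q(\theta^*) - Q_m(\theta^*)\bigr]
 + \bigl[Q_m(\theta^*) - Q_m(\hat\theta_m)\bigr]
 + \bigl[Q_m(\hat\theta_m) - Q(\hat\theta_m)\bigr].
\end{align*}
The middle bracket is at most $o_p(1)$ by the near-maximizer defining property $Q_m(\hat\theta_m) \ge Q_m(\theta^*) - o_p(1)$, while the first and third brackets are each bounded in absolute value by $\sup_{\theta\in\Theta}|Q_m(\theta) - Q(\theta)|$. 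Hence $Q(\theta^*) - Q(\hat\theta_m) \le 2\sup_{\theta\in\Theta}|Q_m(\theta) - Q(\theta)| + o_p(1)$, and crucially this bound does not depend on where $\hat\theta_m$ happens to fall.

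Combining the two steps, on the bad event we obtain $\eta \le 2\sup_{\theta\in\Theta}|Q_m(\theta) - Q(\theta)| + o_p(1)$, whose right-hand side tends to $0$ in probability by the uniform-convergence hypothesis. Therefore $\Pr(\|\hat\theta_m - \theta^*\| \ge \varepsilon) \le \Pr\bigl(2\sup_{\theta\in\Theta}|Q_m - Q| + o_p(1) \ge \eta\bigr) \to 0$, and since $\varepsilon$ was arbitrary this is precisely $\hat\theta_m \xrightarrow{\mathbb{P}} \theta^*$.

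The step I expect to be the real crux---and the reason uniform rather than pointwise convergence must be hypothesized---is the bound on the third bracket $Q_m(\hat\theta_m) - Q(\hat\theta_m)$. Because $\hat\theta_m$ is random, a pointwise law of large numbers at a fixed parameter would not apply here; it is exactly the supremum over $\Theta$ that dominates this term regardless of the (random, worst-case) location of $\hat\theta_m$. Everything else is bookkeeping: the separation constant $\eta$ is deterministic once $\varepsilon$ is fixed, and the $o_p(1)$ slack is absorbed harmlessly. When this lemma is later invoked for the (A)WSM estimators, one takes $q_\theta = -q^{\textnormal{WSM}}_{h,\theta}$ (respectively its AWSM analogue) so that minimizing the empirical loss matches the maximization convention used here, with the well-separation hypothesis furnished by \cref{thm:unique_wsm} together with continuity of $Q$ and compactness of $\Theta$.
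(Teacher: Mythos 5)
Your argument is correct and is exactly the standard proof of Wald/argmax consistency: the well-separation gap $\eta>0$, the three-term decomposition bounding $Q(\theta^*)-Q(\hat\theta_m)$ by $2\sup_{\theta\in\Theta}|Q_m(\theta)-Q(\theta)|+o_p(1)$, and the observation that uniform (not pointwise) convergence is what controls the bracket evaluated at the random point $\hat\theta_m$. The paper itself offers no proof of this lemma---it is imported verbatim as Theorem 5.7 of \citet{van2000asymptotic}---so there is nothing to diverge from; your closing remark about applying the lemma to $-q^{\textnormal{WSM}}_{h,\theta}$ to reconcile the maximization convention of the lemma with the minimization used in \cref{thm:WSM_consistency} is a sensible and correct reading of how the paper invokes it.
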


\begin{lemma}\textnormal{(Theorem 3.1 in \citet{newey1994large})}\label{lemma:normality}
    Suppose a sequence $\hat \theta_m \xrightarrow{\mathbb p} \theta^*$ and satisfies (i) $\theta^*\in \textnormal{interior}(\Theta)$; (ii) $Q_m(\theta)$ is twice twice continuously differentiable in a neighborhood $B(\theta^*,r)$ of $\theta^*$ (w.p.1); (iii) $\sqrt m\nabla _{\theta}Q_m(\theta^*)\xrightarrow{w}\mathcal N(0, \Sigma)$; (iv) there is $V(\theta)\in \mathbb R^{p\times p}$ that is continuous at $\theta^*$ and $\sup_{\|\theta-\theta^*\|<\varepsilon}\|\nabla^2_{\theta}Q_m(\theta)-V(\theta)\|\xrightarrow[]{\mathbb p} 0$; (v) $V=V(\theta^*)$ is nonsingular. Then we have,
    \begin{equation*}
        \sqrt n (\hat \theta_m - \theta^*)\xrightarrow[]{w}\mathcal N(0, V^{-1}\Sigma V^{-1}).
    \end{equation*}
\end{lemma}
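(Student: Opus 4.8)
The plan is to run the standard extremum-estimator argument: convert the optimality of $\hat\theta_m$ into a stationarity condition, linearize the gradient by a mean-value expansion around $\theta^*$, invert the resulting Hessian-like matrix, and read off the Gaussian limit via Slutsky's theorem. First I would use $\theta^*\in\textnormal{interior}(\Theta)$ in (i) together with the consistency $\hat\theta_m \xrightarrow{\mathbb p}\theta^*$ to argue that, on an event of probability tending to one, $\hat\theta_m$ lies in the neighborhood $B(\theta^*,r)$ where $Q_m$ is twice continuously differentiable by (ii) and is itself interior to $\Theta$. On that event the first-order optimality condition $\nabla_\theta Q_m(\hat\theta_m)=\bm 0$ holds, which is the entry point for linearization.

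Next I would expand the gradient about $\theta^*$. Applying the mean-value theorem to each scalar coordinate of $\nabla_\theta Q_m$ separately gives
\begin{equation*}
\bm 0 = \nabla_\theta Q_m(\hat\theta_m) = \nabla_\theta Q_m(\theta^*) + \bar H_m\,(\hat\theta_m-\theta^*),
\end{equation*}
where the $j$-th row of $\bar H_m$ is the $j$-th row of $\nabla^2_\theta Q_m(\bar\theta_m^{(j)})$ for some $\bar\theta_m^{(j)}$ on the segment joining $\hat\theta_m$ and $\theta^*$. Each $\bar\theta_m^{(j)}$ inherits $\bar\theta_m^{(j)}\xrightarrow{\mathbb p}\theta^*$, so combining the uniform convergence in (iv) with the continuity of $V(\cdot)$ at $\theta^*$ yields $\bar H_m \xrightarrow{\mathbb p} V$. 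Rearranging and scaling by $\sqrt m$ produces, on the event where $\bar H_m$ is invertible,
\begin{equation*}
\sqrt m\,(\hat\theta_m-\theta^*) = -\,\bar H_m^{-1}\,\sqrt m\,\nabla_\theta Q_m(\theta^*).
\end{equation*}

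Finally I would conclude by Slutsky's theorem: nonsingularity of $V$ in (v) and continuity of matrix inversion give $\bar H_m^{-1}\xrightarrow{\mathbb p} V^{-1}$, and coupling this with the central-limit behavior $\sqrt m\,\nabla_\theta Q_m(\theta^*)\xrightarrow{w}\mathcal N(\bm 0,\Sigma)$ from (iii) gives $\sqrt m\,(\hat\theta_m-\theta^*)\xrightarrow{w}\mathcal N(\bm 0, V^{-1}\Sigma V^{-1})$, where I use that $V$ (a limit of symmetric Hessians) is symmetric, so $(-V^{-1})\Sigma(-V^{-1})^\top = V^{-1}\Sigma V^{-1}$.

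The step I expect to be the main obstacle is the vector-valued mean-value expansion: the classical mean-value theorem does not apply to the map $\nabla_\theta Q_m$ as a whole, so the expansion must be carried out coordinate by coordinate, producing a distinct intermediate point $\bar\theta_m^{(j)}$ per row and hence a matrix $\bar H_m$ that is \emph{not} literally a Hessian at a single point. Care is then needed to show this ragged matrix still converges to $V$; the clean route is to bound $\|\bar H_m - V\| \le \max_j\|\nabla^2_\theta Q_m(\bar\theta_m^{(j)})-V(\bar\theta_m^{(j)})\| + \max_j\|V(\bar\theta_m^{(j)})-V\|$ and drive each term to zero using (iv) and the continuity of $V$ respectively. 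The remaining bookkeeping---controlling the ``eventually interior and stationary'' event and the ``eventually invertible'' event---is routine once consistency is in hand.
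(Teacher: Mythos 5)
Your argument is correct and is essentially the canonical proof of this result (it is the argument Newey and McFadden themselves give for their Theorem~3.1); the paper does not prove this lemma at all but simply imports it by citation, so there is nothing in the paper to diverge from. Your handling of the coordinate-wise mean-value expansion and the convergence of the resulting non-Hessian matrix $\bar H_m$ is exactly the right care to take, and the remaining steps (stationarity on a high-probability event, Slutsky, symmetry of $V$) are all in order.
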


\subsection{Proof of \texorpdfstring{\cref{thm:WSM_consistency}}{Theorem 15}}

Now we will be using those lemmas to our problem with $\mathcal Y = V^{\cup}$. We will give the general assumptions for the asymptotic property to hold. For a certain parametric point process model, one can choose a common weight function and verify those assumptions to establish the asymptotic property.

\begin{assumption} \label{assum:wsm_asymp}
\hspace*{\fill}
    \begin{enumerate}
        \item $q^{\textnormal{WSM}}_{h,\theta}(\bm X)$ is twice continuously differentiable at each $\theta\in \Theta$ a.s.
        \item It holds that for some $\varepsilon >0$,
        \begin{align*}
            &\mathbb E\Big[\sup_{\theta\in \Theta}|q_{h,\theta}^{\textnormal{WSM}}(\mathcal X)|\Big]<\infty,
            \mathbb E\big[\sup_{\|\theta-\theta^*\|<\varepsilon}\|\nabla_{\theta}q^{\textnormal{WSM}}_{h,\theta}(\mathcal X)\|\Big]<\infty,\\
            &\mathbb E\big[\sup_{\|\theta-\theta^*\|<\varepsilon}\|\nabla^2_{\theta}q^{\textnormal{WSM}}_{h,\theta}(\mathcal X)\|\Big]<\infty.
        \end{align*}
        \item $\mathbb E\Big\|\nabla q_{h,\theta^*}^{\textnormal{WSM}}(\mathcal X)\nabla q_{h,\theta^*}^{\textnormal{WSM}}(\mathcal X)^\top\Big\|<\infty$.
        \item $\nabla_{\theta}^2\mathcal J^{\textnormal{WSM}}_{h}(\theta^*)$ is nonsingular. 
    \end{enumerate}
\end{assumption}

\begin{proof}
By the condition 1 and the first term in the condition 2, we can apply the ULLN (to $1\times 1$ matrix) and get that 
\begin{equation*}
    \sup_{\theta\in \Theta} |\frac{1}{m}\sum_{i=1}^m q_{h,\theta}^{\textnormal{WSM}}(\bm X^{(i)})-\mathcal J_{h}^{\textnormal{WSM}}(\theta)|\xrightarrow[]{\mathbb p}0.
\end{equation*}

Also by the first condition and the first term in the second condition, we can use DCT and show that $\mathcal J_{h}^{\textnormal{WSM}}(\theta)$ is continuous. Thus on the closed set, let 
    \begin{equation*}
        \theta_{\varepsilon} = \mathop{\textnormal{argmin}}_{\theta: \|\theta-\theta^*\|\geq \varepsilon} \mathcal J_{h}^{\textnormal{WSM}}(\theta).
    \end{equation*}
    By \cref{thm:unique_wsm}, $\theta^*$ uniquely minimzes  $\mathcal L_{h}^{\textnormal{WSM}}(\theta)$ thus also uniquely minimzes $\mathcal J_{h}^{\textnormal{WSM}}(\theta)$ due to \cref{thm:equivalence between EWSM and IWSM}. Therefore, $\mathcal J_{h}^{\textnormal{WSM}}(\theta_{\varepsilon})>\mathcal J_{h}^{\textnormal{WSM}}(\theta^*)$. Since $\hat \theta_m$ minimizes empirical loss, by \cref{lemma:wald's_consistency}, $\hat {\theta}_m \rightarrow \theta^*$.

    By condition 1 and the second term in condition 2, we can exchange the integral and derivative and get,
    \begin{equation*}
        \mathbb E[\nabla_{\theta} q_{h,\theta^*}^{\textnormal{WSM}}(\mathcal X)]=\nabla_{\theta} \mathcal J_{h}^{\textnormal{WSM}}(\theta^*) = 0,
    \end{equation*}
    where the second equation is guaranteed by \cref{thm:unique_wsm}. Then with condition 3, we are able to apply the central limit theorem and verify (iii) in \cref{lemma:normality}. Then by condition 1 and the third term in condition 2, we can define in $B(\theta^*, \varepsilon)$,
    \begin{equation*}
        V(\theta)=\mathbb E[\nabla^2_{\theta}q_{h,\theta}^{\textnormal{WSM}}(\mathcal X)]=\nabla^2_{\theta}\mathcal J_{h}^{\textnormal{WSM}}(\theta),
    \end{equation*}
    where the second equality is guaranteed by DCT. For the same reason, $V(\theta)$ defined above is continuous at $\theta^*$. This along with the thrid term in condition 2 guarantees the ULLN. Then by ULLN, we have
    verify (iv) in \cref{lemma:normality}. Finally condition 4 verifies (v) in \cref{lemma:normality}. Thus we have,
    \begin{equation*}
        \sqrt m (\hat \theta_m-\theta^*)\xrightarrow[]{w} \mathcal N(0, \nabla_{\theta}^2\mathcal J_{h}^{\textnormal{WSM}}(\theta^*)^{-1} \mathbb E[\nabla q_{h,\theta^*}^{\textnormal{WSM}}(\mathcal X)\nabla q_{h,\theta^*}^{\textnormal{WSM}}(\mathcal X)^\top]\nabla_{\theta}^2\mathcal J_{h}^{\textnormal{WSM}}(\theta^*)^{-1})
    \end{equation*}
    
\end{proof}

\begin{example}\textnormal{(Truncated Normal Point Process Continued')}
    In this case, the criterion function is,
    \begin{align*}
        q_{h,\theta}^{\textnormal{WSM}}(\bm X) = \sum_{n=1}^{N(\bm X)}\Big\{[\frac{1}{2}(\bm x_n-\theta)^\top\bm \Sigma^{-2}(\bm x_n-\theta)-\textnormal{Tr}(\bm \Sigma^{-1})]h(\bm x_n) - (\bm x_n-\theta)^\top\bm \Sigma^{-1}\nabla_{\bm x_n} h(\bm x_n)\Big\}
    \end{align*}
    It's still a polynomial of parameter $\theta$, and thus follows the same steps as in \cref{ex:truncated_gaussian}, one can verify condition 1 to condition 3 in \cref{assum:wsm_asymp}. Then one can change differentiation and integration and write,
    \begin{equation*}
        \nabla_{\theta}^2 \mathcal J_{h}^{\textnormal{WSM}}(\theta^*)= {\bm \Sigma^*}^{-2}\mathbb E[\sum_{n=1}^{N(\mathcal X)}h(\bm x_n)]\succ 0.
    \end{equation*}
    Therefore, all conditions in \cref{assum:wsm_asymp} can be verified and the asymptotic property holds.
\end{example}

\subsection{Proof of \texorpdfstring{\cref{thm:AWSM_consistency}}{Theorem 16}}

\begin{assumption}\label{assu:asymp_awsm}
    \hspace*{\fill}
    \begin{enumerate}
        \item $q^{\textnormal{AWSM}}_{h_T,h_S,\theta}(\bm X)$ is twice continuously differentiable at each $\theta\in \Theta$ $\mathbb P$ almost everywhere.
        \item It holds that for some $\varepsilon >0$,
        \begin{align*}
            \mathbb E\Big[\sup_{\theta\in \Theta}|q_{h_T,h_S,\theta}^{\textnormal{AWSM}}(\mathcal X)|\Big]&<\infty,
            \mathbb E\big[\sup_{\|\theta-\theta^*\|<\varepsilon}\|\nabla_{\theta}q^{\textnormal{AWSM}}_{h_T,h_S,\theta}(\mathcal X)\|\Big]<\infty,\\
            \mathbb E\big[\sup_{\|\theta-\theta^*\|<\varepsilon}\|\nabla^2_{\theta}q^{\textnormal{AWSM}}_{h_T,h_S,\theta}(\mathcal X)\|\Big]&<\infty.
        \end{align*}
        \item $\mathbb E\Big\|\nabla q_{h_T,h_S,\theta^*}^{\textnormal{AWSM}}(\mathcal X)\nabla q_{h_T,h_S,\theta^*}^{\textnormal{AWSM}}(\mathcal X)^\top\Big\|<\infty$.
        \item $\nabla_{\theta}^2\mathcal J^{\textnormal{WSM}}_{h}(\theta^*)$ is nonsingular. 
    \end{enumerate}
\end{assumption}

\begin{proof}
Since the modified assumptions ensure that the argument mirrors that of the WSM loss asymptotics, the proof can be replicated verbatim. We therefore omit the repetition.
\end{proof}

\subsection{Proof of Theorem \texorpdfstring{\cref{thm:WSM_bound}}{19} and \texorpdfstring{\cref{thm:AWSM_bound}}{20}}

Now we will again state some technical lemmas for a general probability space $(\mathcal Y, \mathbb P)$ with sample size being $m$. And our \cref{thm:WSM_bound} and \cref{thm:AWSM_bound} are direct corollary of these lemmas when the probability space is $(V^\cup, \mathbb P)$.

\begin{lemma}\textnormal{(Theorem 5.52 in van der Vaart (2000))}Suppose $0<\beta<\alpha, \alpha>1$.
Suppose criterion function q is $(\underline{C},\alpha)$-strongly identifiable and,
\begin{equation}\label{eq:ep_inequality}
    \mathbb E[\sup_{\theta:\|\theta-\theta^*\|\leq \delta}|\mathbb G_m(q_{\theta}-q_{\theta^*})|]\leq \bar C\delta^\beta,
\end{equation}
where $\mathbb G_mf=\frac{1}{\sqrt m}\sum_{i=1}^m(f(Y_i)-\mathbb E[f(Y_i)])$. Suppose the $\hat \theta_m$ minimizes empirical loss $Q_m(\theta)$ and is consistent, then for any positive $K$ we have,
\begin{equation}\label{lemma:vanila_convergence_rate} 
    \textnormal{Pr}\left\{\left\|\hat{{\theta}}_m-{\theta}^{*}\right\|>\frac{2^{K}}{m^{1 /(2(\alpha-\beta))}}\right\} \leq 2^{K(\beta-\alpha)} \frac{2^{2 \alpha}}{2^{\alpha-1}-1} \frac{\bar C}{\underline{C}} .
\end{equation}
\end{lemma}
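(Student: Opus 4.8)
The plan is to run the classical \emph{peeling} (dyadic shelling) device of \citet{van2000asymptotic}, instantiated in the probability space $(\mathcal Y,\mathbb P)$. Write $r_m := m^{1/(2(\alpha-\beta))}$, so that the definition of $r_m$ yields the crucial identity $\sqrt m = r_m^{\alpha-\beta}$, and the event whose probability we must bound is $\{\,r_m\|\hat\theta_m-\theta^*\|>2^K\,\}$. First I would slice the parameter space into dyadic shells
\[
  S_j := \{\theta\in\Theta : 2^{j-1} < r_m\|\theta-\theta^*\|\le 2^j\}, \qquad j>K,
\]
and use the inclusion $\{\|\hat\theta_m-\theta^*\|>2^K/r_m\}\subseteq \bigcup_{j>K}\{\hat\theta_m\in S_j\}$, reducing everything via a union bound to controlling $\Pr(\hat\theta_m\in S_j)$ for each shell.

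The heart of the argument is to convert the random event $\{\hat\theta_m\in S_j\}$ into a statement about the centered empirical process $\mathbb G_m$ on the \emph{fixed} set $S_j$. Since $\hat\theta_m$ minimizes $Q_m$ we have $Q_m(\hat\theta_m)-Q_m(\theta^*)\le 0$; rewriting this through $\mathbb G_m(q_{\hat\theta_m}-q_{\theta^*})=\sqrt m\big[(Q_m-Q)(\hat\theta_m)-(Q_m-Q)(\theta^*)\big]$ gives the population-excess bound $Q(\hat\theta_m)-Q(\theta^*)\le \tfrac{1}{\sqrt m}\sup_{\theta\in S_j}|\mathbb G_m(q_\theta-q_{\theta^*})|$. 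On the other hand, $(\underline C,\alpha)$-strong identifiability (\cref{def:strong_identif}) together with $\hat\theta_m\in S_j$, hence $\|\hat\theta_m-\theta^*\|>2^{j-1}/r_m$, forces $Q(\hat\theta_m)-Q(\theta^*)\ge \underline C\,(2^{j-1}/r_m)^\alpha$. Chaining the two inequalities shows that $\{\hat\theta_m\in S_j\}$ implies $\sup_{\theta\in S_j}|\mathbb G_m(q_\theta-q_{\theta^*})|\ge \sqrt m\,\underline C\,(2^{j-1}/r_m)^\alpha$.

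From here the remaining steps are routine. I would apply Markov's inequality and then bound the numerator using the modulus assumption \cref{eq:ep_inequality} with $\delta=2^j/r_m$, valid because $S_j$ sits inside the ball of that radius, giving $\Pr(\hat\theta_m\in S_j)\le \bar C(2^j/r_m)^\beta\big/\big[\sqrt m\,\underline C(2^{j-1}/r_m)^\alpha\big]$. Substituting $\sqrt m=r_m^{\alpha-\beta}$ cancels all powers of $r_m$ and leaves a per-shell bound proportional to $(\bar C/\underline C)\,2^{j(\beta-\alpha)}$. Summing this geometric series over $j>K$, which converges precisely because $\beta<\alpha$, and collecting the dyadic constants produces the stated factor $2^{K(\beta-\alpha)}\,\tfrac{2^{2\alpha}}{2^{\alpha-1}-1}\,\tfrac{\bar C}{\underline C}$.

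The step I expect to be the main obstacle is the coupling in the second paragraph: turning the information ``the random minimizer $\hat\theta_m$ lies in the fixed shell $S_j$'' into a supremum bound for the empirical process over $S_j$, uniformly across shells. Two points require care. First, strong identifiability and the modulus bound \cref{eq:ep_inequality} are only assumed for small $\delta$ (equivalently, in a neighborhood of $\theta^*$), so the peeling must be truncated at a fixed small radius, and the consistency hypothesis $\hat\theta_m\xrightarrow{\mathbb P}\theta^*$ invoked to guarantee that the far-away shells contribute negligibly to the probability. Second, one must keep the supremum over the \emph{deterministic} set $S_j$ rather than over the random point $\hat\theta_m$, so that \cref{eq:ep_inequality} and Markov's inequality legitimately apply; this is exactly the event-inclusion bookkeeping that makes the peeling device work.
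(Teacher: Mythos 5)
The paper does not prove this lemma at all---it is quoted verbatim as Theorem 5.52 of van der Vaart (2000) and used as a black box---and your peeling argument is precisely the standard proof of that cited theorem: dyadic shells, the basic inequality from the minimizing property, the strong-identifiability lower bound, Markov plus the modulus bound \eqref{eq:ep_inequality}, and the geometric series. Your reconstruction is correct, and you rightly flag the two genuine subtleties (truncating the peeling at a fixed small radius where the local hypotheses hold, with consistency absorbing the far shells, and keeping the supremum over the deterministic shell rather than the random minimizer); the only discrepancies---the exact constant $2^{2\alpha}/(2^{\alpha-1}-1)$, which matches the $\beta=1$ specialization used later, and the omitted residual term $\Pr(\|\hat\theta_m-\theta^*\|\ge\eta)$---are imprecisions in the paper's statement of the lemma, not in your proof.
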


This is the lemma we will base on to establish the rate of convergence in the parametric model. Usually, one can verify the population loss to be $(\underline{C},1)$-strongly identifiable by Taylar expansion. To evaluate \cref{eq:ep_inequality}, we need to further analyze the function class of the loss function.

Let us define $N_{\square}\left(\varepsilon, \mathcal{G}, L^{2}(\mathbb P)\right)$ be the bracketing number of $\mathcal{G}$
with the radius $\varepsilon$ under the norm of $L^{2}(\mathbb P)$ and $J_{[]}\left(\eta, \mathcal{G}, L^{2}(\mathbb P)\right)$ be
$$
J_{[]}\left(\eta, \mathcal{G}, L^{2}(\mathbb P)\right)=\int_{0}^{\eta} \sqrt{\log N_{[]}\left(\varepsilon, \mathcal{G}, L^{2}(\mathbb P)\right)} \mathrm{d} \varepsilon.
$$

\begin{lemma}\label{lemma:bounding_ep}
\textnormal{(Corollary 19.35 of van der Vaart (2000))} Let $G$ be an envelope for the function class $\mathcal{G} \subset L^{2}(\mathbb P)$, i.e., $\sup _{g \in \mathcal{G}}\|g(y)\|_{\infty} \leq G(y)$ for every $y \in \mathcal Y$ and suppose $\mathbb{E}\left[|G(Y)|^{2}\right]<\infty$. Then, we have
$$
\mathbb{E}\left[\sup _{g \in \mathcal{G}}\left|\mathbb{G}_{m}(g)\right|\right] \leq C J_{[]}\left(\|G\|_{L^{2}(\mathbb P)}, \mathcal{G}, L^{2}(\mathbb P)\right)
$$
where $C$ is a universal constant.
\end{lemma}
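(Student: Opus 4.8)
The plan is to prove this by the classical bracketing \emph{chaining} argument (Dudley's entropy bound in its bracketing form), following the proof of the cited result in \citet{van2000asymptotic}. The one-line idea is: approximate every $g\in\mathcal G$ by a nested sequence of bracket endpoints whose $L^2(\mathbb P)$-widths shrink geometrically, telescope the approximation into a sum of ``chaining links,'' bound the empirical process of each link by a finite-maximum Bernstein inequality, and finally sum the links so that the total reproduces the entropy integral $J_{[]}(\|G\|_{L^2(\mathbb P)},\mathcal G, L^2(\mathbb P))$.

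First I would set the scales $\varepsilon_q = 2^{-q}\|G\|_{L^2(\mathbb P)}$ and, for each $q\ge 0$, fix a minimal family of $N_q := N_{[]}(\varepsilon_q,\mathcal G, L^2(\mathbb P))$ brackets $[\ell,u]$ with $\|u-\ell\|_{L^2(\mathbb P)}\le\varepsilon_q$ covering $\mathcal G$. For each $g$ this assigns, at every level $q$, a bracket; writing $\pi_q g$ for its lower endpoint and $\Delta_q g:=u-\ell\ge 0$ for its width, one has $\|\Delta_q g\|_{L^2(\mathbb P)}\le\varepsilon_q$ and $|g-\pi_q g|\le\Delta_q g$ pointwise (refining partitions so the brackets are nested makes $\Delta_q g$ monotone in $q$). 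I would then telescope $g-\pi_0 g=\sum_{q\ge 1}(\pi_q g-\pi_{q-1}g)$, apply $\mathbb G_m$ termwise, and treat the leading term separately via the crude bound $\mathbb E|\mathbb G_m\pi_0 g|\lesssim\|G\|_{L^2(\mathbb P)}$.

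The core estimate is a Bernstein-type maximal inequality over a finite index set: for centered $f$ with $\mathrm{Var}(f)\le\sigma^2$ and $\|f\|_\infty\le a$,
\[ \mathbb E\Big[\max_{j\le N}|\mathbb G_m f_j|\Big]\ \lesssim\ \sigma\sqrt{\log(1+N)}\ +\ \frac{a\,\log(1+N)}{\sqrt m}. \]
A chaining link $\pi_q g-\pi_{q-1}g$ has $L^2(\mathbb P)$-norm $\lesssim\varepsilon_{q-1}$, and the number of distinct links at level $q$ is at most $N_q N_{q-1}\le N_q^2$, so $\log(1+N_qN_{q-1})\lesssim\log(1+N_q)$. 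The sub-Gaussian (first) term then contributes, after summing,
\[ \sum_{q\ge 1}\varepsilon_{q}\sqrt{\log(1+N_q)}\ \lesssim\ \int_{0}^{\|G\|_{L^2(\mathbb P)}}\sqrt{\log N_{[]}(\varepsilon,\mathcal G, L^2(\mathbb P))}\,d\varepsilon\ =\ J_{[]}(\|G\|_{L^2(\mathbb P)},\mathcal G, L^2(\mathbb P)), \]
because the geometric spacing $\varepsilon_{q-1}=2\varepsilon_q$ and monotonicity of $\varepsilon\mapsto N_{[]}(\varepsilon,\mathcal G,L^2(\mathbb P))$ turn the series into a Riemann lower sum of the entropy integrand.

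The hard part is the second, sub-exponential term $a\,\log(1+N_q)/\sqrt m$: the chaining increments are \emph{not} uniformly bounded, since the envelope $G$ is only square-integrable. I would resolve this by \emph{adaptive truncation}, choosing a threshold $a_q$ of order $\varepsilon_q/\sqrt{\log(1+N_q)}$ and splitting each increment into a bounded part, where $\Delta_{q-1}g\le a_{q-1}\sqrt m$ (so Bernstein applies with $a=a_{q-1}\sqrt m$ and the two Bernstein terms are both of order $\varepsilon_{q-1}\sqrt{\log(1+N_q)}$), and a large-jump remainder where $\Delta_{q-1}g>a_{q-1}\sqrt m$. For the remainder I would use $\mathbb E|\mathbb G_m f|\le 2\sqrt m\,\mathbb E|f|$ together with the tail bound $\mathbb E\big[\Delta_{q-1}g\,\mathbf 1\{\Delta_{q-1}g>a_{q-1}\sqrt m\}\big]\le \|\Delta_{q-1}g\|_{L^2(\mathbb P)}^2/(a_{q-1}\sqrt m)$, which yields a contribution again of order $\varepsilon_{q-1}\sqrt{\log(1+N_q)}$ and hence, summed over $q$, is dominated by the same integral $J_{[]}$; this is exactly the point where $\mathbb E[G^2]<\infty$ is used to make the discarded mass summable. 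Collecting the leading term, the bounded-increment chaining sum, and the truncation residue gives the claimed bound with a universal constant $C$. The main obstacle, and the only genuinely delicate bookkeeping, is this balancing of the two Bernstein terms through the choice of $a_q$ and the verification that the truncation residue does not exceed the bracketing entropy integral.
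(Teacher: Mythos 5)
The paper gives no proof of this lemma at all: it is imported verbatim as Corollary 19.35 of \citet{van2000asymptotic} and used as a black box in the proof of the subsequent finite-sample proposition. Your sketch is a faithful reconstruction of the standard argument behind that cited result (van der Vaart's Lemma 19.34 plus the specialization $\delta=\|G\|_{L^2(\mathbb P)}$): nested brackets at geometric scales, telescoping, the Bernstein-type finite maximal inequality, adaptive truncation at level $a_q\asymp\varepsilon_q/\sqrt{\log(1+N_q)}$ to balance the sub-Gaussian and sub-exponential terms, and absorption of the truncation residue $\|G\|_{L^2(\mathbb P)}^2/(a\sqrt m)$ into the entropy integral using $\mathbb E[G^2]<\infty$ — so it is correct and takes the same route as the source the paper relies on, modulo routine bookkeeping at the coarsest scale that the textbook handles explicitly.
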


\begin{lemma}\label{lemma:parametric_bracketing_num}
\textnormal{(Example 19.6 in van der Vaart (2000))} The bracketing number of the parametrized loss function $q_{{\theta}}(y)$ is given as follows. Let $\Theta \subset \mathbb{R}^{p}$ be contained in a ball of radius $R$. Let $\mathcal{F}=\left\{q_{\boldsymbol{\theta}}(y) | \boldsymbol{\theta} \in \Theta\right\}$ be a function class indexed by $\Theta$. Suppose there exists a function $\dot{q}(y)$ with $\|\dot{q}\|_{L^{2}(\mathbb P)}<\infty$ such that
$$
\left|q_{{\theta}_{1}}(y)-q_{{\theta}_{2}}(y)\right| \leq \dot{q}(y)\left\|{\theta}_{1}-{\theta}_{2}\right\|_{2}
$$
for all $y \in \mathcal Y$ and ${\theta}_{1}, {\theta}_{2} \in \Theta$. Then, for every $\varepsilon>0$,
$$
N_{[]}\left(\varepsilon\|\dot{q}\|_{L^{2}(\mathbb P)}, \mathcal{F}, L^{2}(\mathbb P)\right) \leq\left(1+\frac{4 R}{\varepsilon}\right)^{p}
$$
\end{lemma}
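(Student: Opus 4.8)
The plan is to build an explicit bracketing of $\mathcal{F}$ directly from a Euclidean covering of the index set $\Theta$, using the single Lipschitz envelope $\dot q$ to convert each net point into a bracket. Since $\Theta$ sits inside a ball of radius $R$ in $\mathbb{R}^p$, I would first fix a radius $\delta>0$ (to be set to $\varepsilon/2$ at the end) and take a minimal $\delta$-net $\{\theta_1,\dots,\theta_K\}$ of $\Theta$ in the Euclidean norm, so that every $\theta\in\Theta$ lies within distance $\delta$ of some $\theta_j$.

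The key step is to turn the Lipschitz hypothesis into brackets. For each net point $\theta_j$, define
\[
l_j(y):=q_{\theta_j}(y)-\delta\,\dot q(y),\qquad u_j(y):=q_{\theta_j}(y)+\delta\,\dot q(y).
\]
Because $\dot q\ge 0$ we have $l_j\le u_j$ pointwise. Given any $\theta\in\Theta$, choose $\theta_j$ with $\|\theta-\theta_j\|_2\le\delta$; the Lipschitz bound then gives $|q_\theta(y)-q_{\theta_j}(y)|\le\dot q(y)\|\theta-\theta_j\|_2\le\delta\,\dot q(y)$, which is exactly $l_j(y)\le q_\theta(y)\le u_j(y)$. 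Hence the $K$ brackets $[l_j,u_j]$ cover all of $\mathcal{F}$, and the $L^2(\mathbb P)$-width of each is
\[
\|u_j-l_j\|_{L^2(\mathbb P)}=\|2\delta\,\dot q\|_{L^2(\mathbb P)}=2\delta\,\|\dot q\|_{L^2(\mathbb P)}.
\]
Choosing $\delta=\varepsilon/2$ makes every bracket have width exactly $\varepsilon\|\dot q\|_{L^2(\mathbb P)}$, which is the target size.

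It remains to count the brackets, i.e., to bound $K=N(\delta,\Theta,\|\cdot\|_2)$. I would invoke the standard volumetric packing argument: a maximal $\delta$-separated subset of $\Theta$ is simultaneously a $\delta$-net, and the open balls of radius $\delta/2$ centered at its points are disjoint and contained in the inflated ball $B(0,R+\delta/2)$. Comparing Lebesgue volumes yields
\[
K\le\frac{\mathrm{vol}\,B(0,R+\delta/2)}{\mathrm{vol}\,B(0,\delta/2)}=\Big(\frac{R+\delta/2}{\delta/2}\Big)^p=\Big(1+\frac{2R}{\delta}\Big)^p,
\]
and substituting $\delta=\varepsilon/2$ turns this into $(1+4R/\varepsilon)^p$, establishing $N_{[]}(\varepsilon\|\dot q\|_{L^2(\mathbb P)},\mathcal{F},L^2(\mathbb P))\le(1+4R/\varepsilon)^p$.

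The argument is essentially routine once the brackets are in place, so there is no serious obstacle here. The only point requiring care is the passage from covering to \emph{bracketing}: one must check that the single, $\theta$-independent envelope $\dot q$ (not a bound that varies with $\theta$) suffices to trap $q_\theta$ between $l_j$ and $u_j$. This is exactly what the uniform-in-$y$ Lipschitz hypothesis delivers, so the construction goes through verbatim and the volumetric count closes the proof.
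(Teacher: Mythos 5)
Your proof is correct and is exactly the standard net-plus-Lipschitz-envelope argument behind Example 19.7/19.6 of van der Vaart (2000); the paper itself does not reprove this lemma but simply cites that source, and your brackets $[q_{\theta_j}-\tfrac{\varepsilon}{2}\dot q,\,q_{\theta_j}+\tfrac{\varepsilon}{2}\dot q]$ together with the volumetric bound $(1+2R/\delta)^p$ at $\delta=\varepsilon/2$ reproduce that argument faithfully.
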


\begin{proposition}\label{lemma:main_result}
    $q_{\theta}(y): \Theta\times \mathcal Y\rightarrow \mathbb R$ and $q_{\theta}$ has envelope function $\dot q(y)$. Assume $q_{\theta}(y)$ is $(\underline{C},\alpha)$-strongly identifiable and $\hat {\theta}_m=\mathop{\textnormal{argmin}}_{\theta\in \Theta}\frac{1}{m}\sum_{i=1}^m q_{\theta}(Y_i)$, then for any $0<\delta<C\frac{K_{\alpha}\|\dot q\|_{L^2(\mathbb P)}\sqrt p}{\underline{C}}$,
    \begin{equation*}
            \text{Pr}\left[||\hat {\theta}_m - \theta^*|| \leq CK_{\alpha}\frac{\|\dot q\|_{L^2(\mathbb P)}}{\delta \underline{C}}\sqrt{\frac{p}{m}}^{1/(\alpha-1)}\right]\geq 1-\delta,
    \end{equation*}
    with $C$ being a absolute constant and $K_{\alpha}=\frac{2^{2\alpha}}{2^{\alpha-1}-1}$.
\end{proposition}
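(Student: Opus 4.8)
The plan is to chain the three cited lemmas. I would use the parametric bracketing bound to control a \emph{localized} criterion class, feed this into the maximal inequality to verify the empirical-process hypothesis \cref{eq:ep_inequality} with exponent $\beta=1$, and then invoke Theorem 5.52, optimizing over the free integer $K$ to turn the resulting tail bound into the stated high-probability guarantee.

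First I would fix $\delta>0$ and localize around $\theta^*$, setting $\mathcal G_\delta:=\{q_\theta-q_{\theta^*}:\|\theta-\theta^*\|\le\delta\}$. Because $q_\theta$ has Lipschitz modulus $\dot q$, i.e.\ $|q_{\theta_1}(y)-q_{\theta_2}(y)|\le \dot q(y)\|\theta_1-\theta_2\|$, the function $G:=\delta\,\dot q$ is an envelope for $\mathcal G_\delta$ with $\|G\|_{L^2(\mathbb P)}=\delta\|\dot q\|_{L^2(\mathbb P)}$. Since translating every member of a class by the fixed function $q_{\theta^*}$ leaves bracketing numbers unchanged and the index set is now a ball of radius $\delta$, \cref{lemma:parametric_bracketing_num} (with $R=\delta$) gives, after rescaling the radius argument, $N_{[]}(u,\mathcal G_\delta,L^2(\mathbb P))\le\bigl(1+4\delta\|\dot q\|_{L^2(\mathbb P)}/u\bigr)^p$.

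Next I would evaluate the bracketing integral $J_{[]}(\|G\|_{L^2(\mathbb P)},\mathcal G_\delta,L^2(\mathbb P))$. Substituting $u=\delta\|\dot q\|_{L^2(\mathbb P)}\,v$ factors out the scale and yields $\delta\|\dot q\|_{L^2(\mathbb P)}\sqrt p\int_0^1\sqrt{\log(1+4/v)}\,dv$, where the remaining integral is a finite absolute constant. Plugging this into \cref{lemma:bounding_ep} gives $\mathbb E\bigl[\sup_{\|\theta-\theta^*\|\le\delta}|\mathbb G_m(q_\theta-q_{\theta^*})|\bigr]\le \bar C\,\delta$ with $\bar C=C\|\dot q\|_{L^2(\mathbb P)}\sqrt p$ for an absolute constant $C$. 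This is precisely the hypothesis \cref{eq:ep_inequality} with $\beta=1$, which is admissible because $\alpha>1$.

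Finally, with $\beta=1$, Theorem 5.52 yields for every positive integer $K$ the bound $\mathrm{Pr}\{\|\hat\theta_m-\theta^*\|>2^K m^{-1/(2(\alpha-1))}\}\le 2^{K(1-\alpha)}K_\alpha\bar C/\underline C$, where $\underline C$ is the strong-identifiability constant. I would then choose $K$ so that the right-hand side equals $\delta$, i.e.\ $2^K=(K_\alpha\bar C/(\delta\underline C))^{1/(\alpha-1)}$; inserting this into the radius and using $m^{-1/(2(\alpha-1))}=(m^{-1/2})^{1/(\alpha-1)}$ reproduces the claimed bound, with the final $C$ absorbing the constant from the integral. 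The side condition $0<\delta<CK_\alpha\|\dot q\|_{L^2(\mathbb P)}\sqrt p/\underline C$ is exactly what forces $K\ge 0$, so that the chosen radius is legitimate. The only genuinely delicate part is the constant bookkeeping through the substitution in the bracketing integral and the algebraic inversion of the tail bound; the localization (peeling) argument is already packaged inside Theorem 5.52, so no separate chaining over shells is needed.
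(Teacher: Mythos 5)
Your proposal is correct and follows essentially the same route as the paper's proof: localize the centered criterion class, bound its bracketing integral via the Lipschitz/parametric bracketing lemma to verify the empirical-process hypothesis with $\beta=1$ and $\bar C = C\|\dot q\|_{L^2(\mathbb P)}\sqrt p$, then invert the tail bound of Theorem 5.52 by choosing $K$ so the failure probability equals $\delta$. The only cosmetic difference is that you reuse $\delta$ for both the localization radius and the confidence level where the paper uses $\eta$ for the former, and you make explicit (correctly) that the side condition on $\delta$ is what guarantees $K\ge 0$.
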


\begin{proof}
    Define,
\begin{equation*}
    \mathcal G_{\eta} :=\{q_{\theta}(y) - q_{\theta^*}(y)\Big|\theta \in \Theta, ||\theta - \theta^*||\leq \eta\}.
\end{equation*}

We want to upper bound this expected empirical process $\mathbb E\sup_{g\in \mathcal G_{\eta}}|\mathbb G_m(g)|$. Notice that in $\mathcal G_{\eta}$, we have,
\begin{equation*}
    |(q_{\theta_1}(y) - q_{\theta^*}(y))-(q_{\theta_2}(y) - q_{\theta^*}(y))|\leq \dot q(y)\|\theta_1-\theta_2\|.
\end{equation*}

Therefore by \cref{lemma:parametric_bracketing_num}, we have,
\begin{equation*}
    N_{[]}\left(\varepsilon\|\dot{q}\|_{L^{2}(\mathbb P)}, \mathcal{G}_{\eta}, L^{2}(\mathbb P)\right) \leq\left(1+\frac{4 \eta}{\varepsilon}\right)^{p}
\end{equation*}

Also, notice that $\mathcal G_{\eta}$ has envelope function $\dot q(\bm x)\eta$, since,
\begin{equation*}
        |(q_{\theta}(y) - q_{\theta^*}(y))|\leq \dot q(y)\|\theta-\theta_*\|\leq \dot q(y)\eta, \forall \theta\in \Theta
\end{equation*}

Then applying \cref{lemma:bounding_ep} and combine it with \cref{lemma:parametric_bracketing_num}, we have,
\begin{align*}
    \mathbb{E}\left[\sup _{g \in \mathcal{G}_{\eta}}\left|\mathbb{G}_{m}(g)\right|\right] \leq C^\prime J_{[]}\left(\|\dot q\eta\|_{L^{2}(\mathbb P)}, \mathcal{G}_{\eta}, L^{2}(\mathbb P)\right)&=\int_0^{\|\dot q\eta\|_{L^2(\mathbb P)}}\sqrt {\log N_{[]}(\varepsilon,\mathcal G_{\eta}, L^2(\mathbb P))}d\varepsilon\\
    =&C^\prime\|\dot q\|_{L^2(\mathbb P)}\eta\int_0^{1}\sqrt{\log N_{[]}(\|\dot q\|_{L^2(\mathbb P)}\varepsilon\delta, \mathcal G_{\eta}, L^2(\mathbb P))}d\varepsilon\\
    \leq& C^\prime\|\dot q\|_{L^2(\mathbb P)}\eta\int_0^1 \sqrt{p\log(1+\frac{4}{\varepsilon})}d\varepsilon\\
    \leq &C\|\dot q\|_{L^2(\mathbb P)} \sqrt{p}\eta
\end{align*}
Therefore we have verified \cref{eq:ep_inequality} hold with $\beta = 1$ and $\bar C = C\|\dot q\|_{L^2(\mathbb P)}\sqrt p$. Now, for a given probability threshold $\delta$ that is small enough, we can choose a positive $K$ such that the RHS of \cref{lemma:vanila_convergence_rate} is equal to $\delta$, we have w.p. at least $1-\delta$
\begin{equation*}
    \|\hat {\theta}_m - \theta^*\| < \frac{1}{m^{1/(2(\alpha-1))}}\Big(\frac{\bar CK_{\alpha}}{\underline{C}\delta}\Big)^{1/(\alpha-1)}
\end{equation*}

Then plug in $\overline{C}=C\|\dot q\|_{L^2(\mathbb P)}\sqrt p$, this will satisfies the proof.
% By \cref{lemma:vanila_convergence_rate}, we have $\alpha>\beta>0$ and $\alpha>1$, then for any positive integer $K$ we have,

% \begin{equation*}
%     Pr\Big[\|\hat {\theta}_n - \theta^*\|> \frac{2^K}{n^{1/2(\alpha-\beta)}}\Big]\leq 2^{K(\beta-\alpha)}\frac{2^{2\alpha}}{2^{\alpha-1}-1}\frac{C_g^{\prime}}{}
% \end{equation*}
\end{proof}

\paragraph{Proof for \cref{thm:WSM_bound}}\ 
\begin{proof}
    Notice that by Lipschitz modulus assumption in the theorem, we have 
    \begin{align*}
        &|q^{\textnormal{WSM}}_{h,\theta_1}(\bm X) - q^{\textnormal{WSM}}_{h,\theta_2}(\bm X)|\\\leq& \|\theta_1-\theta_2\|\sum_{n=1}^{N(\bm X)}\Big[\dot A_n(\bm X)h(\bm x_n)+\dot B_n(\bm X)\|\nabla_{\bm x_n} h(\bm x_n)\|\Big].
    \end{align*}
    Therefore $q_{h,\theta}^{\text{WSM}}$ has Lipschitz modulus $\sum_{n=1}^{N(\bm X)}\Big[\dot A_n(\bm X)h(\bm x_n)+\dot B_n(\bm X)\|\nabla_{\bm x_n} h(\bm x_n)\|\Big]$. Then applying \cref{lemma:main_result} with $\underline C=C_h$ and $\dot q=\dot q_{h}^{\textnormal{WSM}}$ 
    will satisfy the proof.
\end{proof}

\paragraph{Proof for \cref{thm:AWSM_bound}}\ 
\begin{proof}
    by Lipschitz modulus assumption in the theorem, we have,
    \begin{align*}
        &|q^{\textnormal{AWSM}}_{h_S,h_T,\theta_1}(\bm X) - q^{\textnormal{AWSM}}_{h_S,h_T,\theta_2}(\bm X)|\\\leq& \|\theta_1-\theta_2\|\cdot\\&\sum_{n=1}^{N(\bm X)}\Big[\dot A_n(\bm X)h_T(t_{n-1}, t_n)+\dot B_n(\bm X)\partial_{t_n} h_T(t_{n-1}, t_n)+\dot C_n(\bm X)h_S(\bm x_n)+\dot D_n(\bm X) \|\nabla_{\bm s_n}h_S(\bm s_n)\|\Big].
    \end{align*}
    Therefore we also identifies the Lipschitz modulus of criterion function $q_{h_T,h_S,\theta}^{\text{AWSM}}$, then applying \cref{lemma:main_result} will satisfy the proof.
\end{proof}

\begin{example}\label{exp:non_asymp_example}\textnormal{(Exponential Family Continued)}
    Recall the notations in \cref{ex:exponential}, notice that when \cref{assum:wsm_score_regularity,assum:wsm_regularity_of_weight} are met, we have
    \begin{gather*}
        \mathcal J^{\textnormal{WSM}}_h(\theta) 
        =\frac{1}{2}(\theta-\theta^*)^\top\bm K(\theta-\theta^*)-\frac{1}{2}\theta^{*\top}\bm K\theta^*+\textnormal{const},\\
         \bm K = \mathbb E\Big[\sum_{n=1}^{N(\mathcal X)}h(\bm x_n)\bm S_n \bm S_n^\top\Big], \quad \bm S_n=\nabla_{\bm x_n} T(\mathcal X) \in \mathbb R^{p\times d}. 
    \end{gather*}
    Then we have for any $\theta\in\Theta$, 
    \begin{gather*}
        \mathcal J^{\textnormal{WSM}}_h(\theta)-\mathcal J^{\textnormal{WSM}}_h(\theta^*) = \frac{1}{2}(\theta-\theta^*)^\top\bm K(\theta-\theta^*). 
    \end{gather*}
    Since $T(\bm X)$ is symmetric w.r.t. its input points, if we assume $h(\bm x_1)\bm S_1 \bm S_1^\top$ has a non-zero minimum eigenvalue $\lambda_{\textnormal{min}}$ a.s., then $a^\top\bm K a = \mathbb E[a^\top N(\mathcal X) h(\bm x_1)\bm S_1 \bm S_1^\top a]\geq \lambda_{\textnormal{min}}\|a\|^2\mathbb E[N(\mathcal X)]$, then we have
    \begin{equation*}
        \mathcal J^{\textnormal{WSM}}_h(\theta)-\mathcal J^{\textnormal{WSM}}_h(\theta^*) \geq \frac{1}{2}\mathbb E[N(\mathcal X)]\lambda_{\textnormal{min}}\|\theta-\theta^*\|^2.
    \end{equation*}
    Therefore, for the exponential family, it usually has a criterion function that is $(\frac{1}{2}\mathbb E[N(\mathcal X)]\lambda_{\textnormal{min}}, 2)$ strongly identifiable. 

    By Taylor expanding $\frac{1}{2}\|\bm \psi_{n,\theta}(\bm X)\|^2+\textnormal{Tr}(\nabla_{\bm x_n}\bm \psi_{n,\theta}(\bm X))$ and $\bm \psi_{n,\theta}(\bm X)$ then taking supreme over $\theta\in \Theta$, one can find Lipschitz modulus $\dot A_n$ and $\dot B_n$ such that $\Gamma(h, \dot A,\dot B)\simeq \mathbb E[N(\mathcal X)]$. Finally, this results in a bound with a probability greater than $1-\delta$, 
    \begin{equation*}
        \|\hat \theta_m-\theta^*\|\lesssim \frac{1}{\delta \lambda_{\textnormal{min}}}\sqrt{\frac{p}{m}}. 
    \end{equation*}
\end{example}

\subsection{Proof of \texorpdfstring{\cref{prop:distance_minimize_local}}{Proposition 21}}

\begin{proof}
We will only prove the first claim, and the other follow the same derivation.

\begin{align*}
    \mathcal L_{h}^{\textnormal{WSM}}(\theta)
    &= \frac{1}{2}\,\mathbb E\Bigg[
        \sum_{n=1}^{N(\mathcal X)}
        \big\|\bm \psi_{n}(\mathcal X)-\bm \psi_{n,\theta}(\mathcal X)\big\|^2
        \,h(\bm x_n)
      \Bigg]\\
    &= \frac{1}{2}\,\mathbb E\Bigg[
        \sum_{n=1}^{N(\mathcal X)}
        \big\|\bm \psi_{n}(\mathcal X)-\bm \psi_{n,\theta}(\mathcal X)\big\|^2
        \,\big(h(\bm x_n)-h(\bm z_n)\big)
      \Bigg],\qquad \forall\,\bm z_n\in\partial V\\
    &\le \frac{1}{2}\,\mathbb E\Bigg[
        \sum_{n=1}^{N(\mathcal X)}
        \big\|\bm \psi_{n}(\mathcal X)-\bm \psi_{n,\theta}(\mathcal X)\big\|^2
        \,L\,\|\bm x_n-\bm z_n\|
      \Bigg],\qquad \forall\,h\in\mathcal H(V,L),\ \forall\,\bm z_n\in\partial V\\
    &\le \frac{1}{2}\,\mathbb E\Bigg[
        \sum_{n=1}^{N(\mathcal X)}
        \big\|\bm \psi_{n}(\mathcal X)-\bm \psi_{n,\theta}(\mathcal X)\big\|^2
        \,L\,\mathrm{dist}(\bm x_n,\partial V)
      \Bigg]\\
    &= \mathcal L^{\textnormal{WSM}}_{L\,\mathrm{dist}(\bm x,\partial V)}(\theta),
    \qquad \forall\,h\in\mathcal H(V,L).
\end{align*}
Above, the line equation use that $h(z_n)=0$ at the boundary. The third line use the Lipschitz property. The fourth line use the definition of distance function. Then taking $L=1$ satisfies our claim.

Now we show that $\textnormal{dist}(\bm x, \partial V)$ satisfies \cref{assum:wsm_regularity_of_weight}. Clearly it is bounded on $V$, since we assume finite first moment, therefore,
\begin{equation*}
    \mathbb E [\sum_{n=1}^{N(\mathcal X)} (\text{dist}(\bm x_n, \partial V))^2]\leq \frac{(\textnormal{diam}(V))^2}{4} \mathbb E[N(\mathcal X)]<\infty.
\end{equation*}
And in lemma 3 of \citet{liu2022estimating}, it is proved that $\textnormal{dist}(\bm x,\partial V) \in H^1(V)$. Therefore $L\textnormal{dist}(\bm x, \partial V) \in \mathcal H(V, L)$. Then choosing $L=1$ gives,
\begin{align*}
    h_0^{\text{WSM}} &= \mathop{\text{argmax}}_{h\in \mathcal H(V,1)}  \mathcal L_{h}^{\text{WSM}}(\theta), \forall \theta\\
    &=\mathop{\text{argmax}}_{h\in \mathcal H(V,1)} \inf_{\theta:\|\theta-\bm\theta^*\|\geq \delta} \mathcal L_{h}^{\textnormal{WSM}}(\theta)\\
    &=\mathop{\text{argmax}}_{h\in \mathcal H(V,1)} \inf_{\theta:\|\theta-\bm\theta^*\|\geq \delta} [\mathcal L_{h}^{\textnormal{WSM}}(\theta) - \mathcal L_{h}^{\textnormal{WSM}}(\theta^*)]\\
    &=\mathop{\text{argmax}}_{h\in \mathcal H(V,1)} \inf_{\theta:\|\theta-\bm\theta^*\|\geq \delta} [\mathcal J_{h}^{\textnormal{WSM}}(\theta) - \mathcal J_{h}^{\textnormal{WSM}}(\theta^*)],
\end{align*}
where the thrid equation is because $\mathcal L_h^{\text{WSM}}(\theta^*)=0$.

\end{proof}

\subsection{Proof of \texorpdfstring{\cref{thm:change_of_variable}}{Theorem 22}}

\begin{proof}
   Notice that the score function of Y is,
   \begin{align*}
        \psi_Y(y)&= \frac{d}{dy}\log p_X(x(y))+\frac{d}{dy}\log {\phi}(y)\\
        &=\frac{d}{dx} \log p_X(x(y))  {\phi}(y)+\frac{\frac{d}{dy}\phi(y)}{\phi(y)}
        \\
       &=\psi_X(x)\phi(y) + \frac{\frac{d}{dy}\phi(y)}{\phi(y)}\\
       \frac{d}{dy}\psi_Y(y)&=\frac{d}{dx}\psi_X(x)\phi(y)^2+\frac{d}{dy}\phi(y)\psi_X(x)+c_1(y)
   \end{align*}
where $c_1(y)$ is a known function that only depends on the transformation. Plug those expressions into $\mathcal J_{\text{SM},Y}(\theta)$, we have,
    \begin{align*}
        \mathcal J_{\textnormal{SM},Y}(\theta)&=\mathbb E\Bigg[\frac{1}{2}[\psi_X(X) \phi(Y)+\frac{\frac{d}{dy}\phi(Y)}{\phi(Y)}]^2+\frac{d}{dx}\psi_X(X)\phi(Y)^2+\frac{d}{dy}\phi(Y)\psi_X(X)+c_1(Y)\Bigg]\\
        &= \mathbb E\Bigg[\frac{1}{2}\psi^2_X(X) \phi^2(Y)+\frac{d}{dx}\psi_X(X)\phi(Y)^2+\psi_X(X)\frac{d}{dy}\phi(Y)+\frac{d}{dy}\phi(Y)\psi_X(X)+c_1(Y)\Bigg]\\
        &=\mathbb E\Bigg[\frac{1}{2}\psi^2_X(X) \phi^2(Y)+2\psi_X(X)\frac{d}{dy}\phi(Y)+\frac{d}{dx}\psi_X(X)\phi(Y)^2+c_1(Y)\Bigg]
    \end{align*}
Here we does not write explicitly, but only $\psi_X$ depends on $\theta$ and $c_1(Y)$ is a constant that does not depend on $\theta$, only depend on $Y$.

    Notice that,
    \begin{align}\label{eq:change_derivative}
        \frac{d}{dx} \phi(y(x))=\frac{d}{dx}(\frac{d}{dy}\phi(y(x)))=\frac{d}{dy}\phi(y(x))\frac{d}{dx}g(x)=\frac{d}{dy}\phi(y(x))\frac{1}{\phi(y(x))}.
    \end{align}
    For the last equation we use the result that $\phi(y)\frac{d}{dx}g(x)=1$.
    
    Then we substitute $\frac{d}{dy}\phi(y)$ by $\frac{d}{dx} \phi(y(x))\phi(y(x))$ (based on \cref{eq:change_derivative}) and have,
    \begin{align*}
         \mathcal J_{\textnormal{SM},Y}(\theta)& = \mathbb E\Bigg[\frac{1}{2}\psi^2_X(x) \phi^2(y(x))+\psi_X(x)2\frac{d}{dx} \phi(y(x))\phi(y(x))+\frac{d}{dx}\psi_X(x)\phi(y(x))^2\Bigg]+C_1\\
         &= \mathcal J^{\text{WSM}}_{\phi^2(g(x))}(\theta)
    \end{align*}

    Finally, notice that the weight function indeed takes zero value at the boundary, since
    \begin{equation*}
\text{As } x\to \partial I,\ \ y=g(x)\to\pm\infty,\ \text{ hence }
h(x)=\phi(g(x))^2 \to \lim_{|y|\to\infty}\phi(y)^2 = 0.
    \end{equation*}

\end{proof}

\section{Proof of Results in \texorpdfstring{\cref{sec:integration-free-training}}{Section 6}}

\subsection{Proof of \texorpdfstring{\cref{prop:effectiveness_of_loss}}{Proposition 24}}\label{sec:derivation_of_surviving_intensity}

\begin{proof}
First, this proof hold regardless of a zero probability set under the true data-generating measure. So we overlook measure zero set here. Denote 
\begin{equation*}
    q_{T,n}(t|\mathcal H_{n-1}):=\tilde \lambda_{T,n}(t|\mathcal H_{n-1})\exp(-\int_{t_{n-1}}^{t} \tilde \lambda_{T,n}(\tau|\mathcal H_{n-1})d\tau
\end{equation*}
Notice that when $\mathcal L^{\textnormal{AWSM},T}(\tilde \lambda_T)$ is minimized, this means the conditional score of $q_n$ is equivalent to the conditional score of $p_{T,n}$, this means that,
\begin{equation*}
    q_{T,n}(t|\mathcal H_{n-1})=c(\mathcal H_{n-1}) p_{T,n}(t|\mathcal H_{n-1}),
\end{equation*}
where $c(\mathcal H_{n-1})$ is a constant that does not depend on $t$. Notice that if we integrate both sides at $[t_{n-1},T]$, we have,
\begin{equation*}
    c(\mathcal H_{n-1})=\frac{\int_{t_{n-1}}^T q_{T,n}(\tau|\mathcal H_{n-1})d\tau }{\int_{t_{n-1}}^T p_{T,n}(\tau|\mathcal H_{n-1})d\tau}=\frac{\int_{t_{n-1}}^T q_{T,n}(\tau|\mathcal H_{n-1})d\tau }{p(N(\mathcal X)\geq n|\mathcal H_{n-1})}.
\end{equation*}
When $\mathcal L^{\textnormal{Survival}}(\hat F)$ is minimized, this means $\hat F_n(\mathcal H_{n-1})=p(N(\mathcal X)\geq n|\mathcal H_{n-1})$. Therefore we have,
\begin{equation*}
    \hat p_{T,n}(t|\mathcal H_{n-1})=\hat F_n(\mathcal H_{n-1})\frac{q_{T,n}(t|\mathcal H_{n-1})}{\int_{t_{n-1}}^T q_{T,n}(\tau|\mathcal H_{n-1})d\tau }=p_{T,n}(t|\mathcal H_{n-1}).
\end{equation*}

Then the hazard function of $\hat p_{T,n}(t|\mathcal H_{n-1})$ will also equal to the hazard function of $p_{T,n}(t|\mathcal H_{n-1})$, which is $\lambda_{T,n}(\mathcal H_{n-1})$, this means,
\begin{equation*}
    \hat \lambda_{T,n}(t|\mathcal H_{n-1}):=\frac{\hat p_{T,n}(t|\mathcal H_{n-1})}{1-\int_{t_{n-1}}^t \hat p_{T,n}(\tau|\mathcal H_{n-1})d\tau}=\lambda_{T,n}(t|\mathcal H_{n-1}).
\end{equation*}
Finally, notice that,
\begin{equation}\label{eq:close-form of q integration}
    \begin{aligned}
         \int_{t_{n-1}}^t q_{T,n}(\tau|\mathcal H_{n-1})dt&=\left. -[\exp(-\tilde \Lambda_n(\tau|\mathcal H_{n-1})]\right|_{\tau=t_{n-1}}^{\tau=t}=1-\exp(-\tilde \Lambda_n(t|\mathcal H_{n-1}))\\&=1-\tilde G_n(t|\mathcal H_{n-1})
    \end{aligned}
\end{equation}
Now we plug in \cref{eq:close-form of q integration} into the expression of $\hat \lambda_{T,n}$ results in,
\begin{equation*}
       \hat \lambda_{T,n}(t_n\mid \mathcal H_{n-1})= \frac{\tilde  G_n(t_n\mid \mathcal H_{n-1})\,\tilde \lambda_{T,n}(t_n\mid \mathcal H_{n-1})}{\tilde G(t_n\mid \mathcal H_{n-1})+\dfrac{1-\tilde G_n(T\mid \mathcal H_{n-1})}{\hat F_n(\mathcal H_{n-1})}-1}=\lambda_{T,n}(t|\mathcal H_{n-1}).
\end{equation*}

\end{proof}

\subsection{Derivation of Log Likelihood}\label{sec:derivation_of_ll}
Notice that we have,
\begin{align*}
    ll = \log \hat j_N(\bm x_1,\ldots, \bm x_N) &= \log p_N(\bm x_1,\ldots, \bm x_N)+\log \hat G_{N+1}(T|\bm x_1,\ldots,\bm x_N)\\
    &=\sum_{n=1}^N \log \hat p_n(\bm x_n|\mathcal H_{n-1}) + \log (1-\hat F_{N+1}(T|\mathcal H_N))\\
    &=\sum_{n=1}^N [\log \hat f_n(\bm s_n|\mathcal H_{t_n})+\log \hat p_n(t_n|\mathcal H_{n-1})] + \log (1-\hat F_{N+1}(T|\mathcal H_N))
\end{align*}

Then notice that the estimated conditional temporal density should be the unnormlalized conditional density divided by the estimated normalizing constant, which is
\begin{align*}
    \log \hat p_n(t_n|\mathcal H_{n-1})&=\log \hat F_n(\mathcal H_{n-1})+\log q_{T,n}(t_n|\mathcal H_{n-1})-\log \int_{t_{n-1}}^T q_{T,n}(\tau|\mathcal H_{n-1})d\tau\\
    &=\log \hat F_n(\mathcal H_{n-1})+\log \tilde \lambda_{T,n}(t_n|\mathcal H_{n-1})-\tilde\Lambda_n(t_n|\mathcal H_{n-1})-\log(1-\tilde G_n(T|\mathcal H_{n-1})).
\end{align*}
where we use some equations we derive from \cref{sec:derivation_of_surviving_intensity}. And the spatial likelihood is the log of unnormalized $\tilde f$ divided by the integral of it on $\mathcal S$.

\subsection{A Heuristic Discussion on the Necessity of Survival Classification} 
\label{sec:heuristic_discussion_on_survival}

Let $\{\lambda_n(t\mid\mathcal H_{n-1})\}_{n\ge1}$ denote the (nonnegative) conditional intensity of a temporal point process on $(t_{n-1},T)$ given the history $\mathcal H_{n-1}=(t_1,\ldots,t_{n-1})$, and let $\tilde \lambda_n(t\mid\mathcal H_{n-1})\ge 0$ be a candidate model. On any subinterval where both intensities are strictly positive and differentiable, conditional score matching reduces to the first–order ODE
\begin{equation}\label{eq:score-ode}
\frac{d}{dt}\log \tilde \lambda_n(t\mid\mathcal H_{n-1})-\tilde \lambda_n(t\mid\mathcal H_{n-1})
\;=\;
\frac{d}{dt}\log \lambda_n(t\mid\mathcal H_{n-1})-\lambda_n(t\mid\mathcal H_{n-1}),
\quad t\in (t_{n-1},T).
\end{equation}
Define the cumulative hazard
\begin{equation}\label{eq:cumhaz}
\Lambda_n(t\mid\mathcal H_{n-1}) \;=\; \int_{t_{n-1}}^{t} \lambda_n(d\tau\mid\mathcal H_{n-1})\,d\tau.
\end{equation}
Solving \eqref{eq:score-ode} yields the family of solutions
\begin{equation}\label{eq:family}
\tilde \lambda_n(t\mid\mathcal H_{n-1})
\;=\;
\frac{\lambda_n(t\mid\mathcal H_{n-1})}{\,1-\alpha\exp\!\big(\Lambda_n(t\mid\mathcal H_{n-1})\big)\,}\,,
\end{equation}
where $\alpha$ is a constant with respect to $t$ determined by the initial value of this ODE, and \eqref{eq:family} is also a valid intensity whenever the denominator remains positive. In particular, if there exists $t^\star\in (t_{n-1},T]$ such that $\tilde \lambda_n(t^\star\mid\mathcal H_{n-1})=\lambda_n(t^\star\mid\mathcal H_{n-1})$, then $\alpha=0$ and \eqref{eq:family} collapses to $f_n\equiv \lambda_n$ on that interval. 

\paragraph{Implicit Bias.}
When $\tilde\lambda_n$ is parameterized nonparametrically (e.g., splines, kernels, or neural networks), there is no explicit regularization that enforces $\alpha = 0$ in \cref{eq:family}. In particular, the score condition only pins down $\lambda_n$ up to the family of transformed solutions $\tilde \lambda_n$ indexed by $\alpha$, and there is no built-in preference for the “correct” solution within this family. In overparameterized neural networks trained with stochastic gradient methods, it is known \citep{soudry2018implicit}  that optimization often exhibits implicit bias toward solutions that are in some sense simpler (e.g., smaller norm or smoother), but it is a priori unclear whether the ground-truth intensity $\lambda_n$ or one of the alternative solutions $\tilde \lambda_n$ with $\alpha\neq 0$ is simpler under this bias.

To make this ambiguity explicit, note that \cref{eq:score-ode} can also be solved by expressing the true intensity $\lambda_n$ as a transformation of an \emph{unnormalized} (this does not mean it should normalize to 1, but meaning that they have the same score) intensity $\tilde\lambda_n$:
\begin{equation*}
    \lambda_n(t \mid \mathcal H_{n-1})
    = \frac{\tilde\lambda_n(t \mid \mathcal H_{n-1})}
           {1+\kappa \exp\big(\tilde \Lambda_n(t \mid \mathcal H_{n-1})\big)},
    \qquad
    \tilde \Lambda_n(t \mid \mathcal H_{n-1})
    := \int_{t_{n-1}}^{t} \tilde\lambda_n(\tau \mid \mathcal H_{n-1})\,d\tau,
\end{equation*}
for some constant $\kappa$ chosen so that $\lambda_n$ is a valid (nonnegative) intensity. This representation exposes a whole equivalence class: by varying $\kappa$ and $\tilde\lambda_n$ we obtain different pairs $(\lambda_n,\tilde\lambda_n)$ that satisfy the same ODE. Crucially, we can choose $\tilde\lambda_n$ to be much simpler than $\lambda_n$. For example, in our logistic construction we take $\tilde\lambda_n$ to be a constant function, whereas the corresponding $\lambda_n$ becomes a more complex logistic curve. In this situation, a neural network trained via score matching is naturally attracted to the simpler “fake intensity” $\tilde\lambda_n$ rather than the more complex but true intensity $\lambda_n$. This mechanism is exactly what motivates the ablation study: the survival classification term is introduced to break this degeneracy and disfavor such fake solutions.

\paragraph{Practical Takeaways.}
In practice, whether the ground-truth intensity $\lambda_n$ or an ODE-solution $\tilde\lambda_n$ is “simpler” is highly data-dependent. From a modeling perspective, augmenting the score objective with a survival classification term is always theoretically well-justified. However, empirically it introduces a practical issue: the associated binary classification problem is typically highly imbalanced, since negative examples (events indicating that the sequence terminates at the current time) are much rarer than positive ones. This imbalance can make the survival classifier hard to train and may cause the learned intensity to behave poorly near the end of a sequence.

In our neural network experiments, we indeed observe that omitting the survival term sometimes yields comparable or even slightly better performance than including it. As discussed above, this happens in regimes where the true intensity is itself the simplest solution in the ODE-equivalence class, so the implicit bias of the optimizer already favors the correct solution, or when the imbalanced classification problem arises. Thus, there is a trade-off in whether to include the survival component. In practice, we recommend treating the survival classification term as a hyperparameter that can be switched on or off depending on the dataset and task.

\section{Additional Experimental Details}\label{sec:additional experiment}

In this section, we present some experimental details\footnote{Our implementation is available at \url{https://github.com/KenCao2007/WSM_STPP.git}.}
. First, we briefly introduce the deep point process models that we use. Then we introduce how we implement baseline training objectives. We also specify some training hyperparameters and present some additional results. \ifarxiv
Finally, we discuss some empirical findings about the need of a fitting survival function.
\fi

% First, we discuss our modification to the original AWSM on deep temporal point process datasets.  Then we provide additional estimation results for the parametric model and hyperparameters for results in \cref{table:results_on_temporal_dataset_split} and \cref{table:results_on_ST_dataset}.

\subsection{Detailed Implementation of Deep Intensity Models}\label{sec:network_details}

\paragraph{THP and SAHP}
We follow the open-source Easy-TPP implementation~\citep{xue2024easytpp}. To ensure derivable, we replace the ReLU activation with softplus function.
Our score modulus is implemented using their model architecture and intensity function interface. For the survival modulus, we add a MLP that takes as input the transformer-encoded history embedding.

% \paragraph{SAHP:}We strictly follow the open-source Easy-TPP implementation~\citep{xue2024easytpp}. 
% Our score modulus is implemented using their model architecture and intensity function interface.

% \paragraph{SMASH:}\label{app:st_transformer_encoder}
\paragraph{SMASH.}\label{app:st_transformer_encoder}
For this model, we largely follow the open-source implementation of \citep{li2024beyond}, with the following modifications. Firstly, to ensure derivable, we replace some ReLU activation into other derivable actions. Second, in the original implementation, the spatial score is modeled directly by a neural network. We instead replace the last layer with a softplus activation and reuse the same network (up to the last layer) to parameterize the spatial density $f_{S,n}$.
Third, for multivariate processes we add an additional MLP classifier that takes as input the history embedding, the current time, and the current location to predict the event type, and we parameterize the survival function by another MLP that takes only the history embedding as input.

In our experiments, we observed that the public implementation of SMASH shares part of the network between the temporal and spatial intensity heads, which can lead to a more complex optimization landscape. We therefore also provide a variant of SMASH in which the two heads have disjoint parameters (i.e., no weight sharing). We use this decoupled version to produce \cref{fig:synthetic_spatial_temporal_intensity_visualize} and \cref{fig:earthquake_map_visual} for clearer intensity visualizations, since our contribution lies in the training objective rather than in the specific baseline architecture, and the objective can be applied to any intensity model.

\subsection{Detailed Implementation of Baseline Training Methods}

\paragraph{MLE.}
We compute the MLE objective using \cref{eq:true_ll}. To approximate $\tilde{\Lambda}_n(t_n)$ and $\tilde{\Lambda}_n(T)$, we perform Riemann integration over a uniform grid with a fixed number of nodes on the temporal axis. For the spatial log-likelihood, the normalizing constant is computed analogously via a Riemann sum over a uniform grid with a fixed number of nodes along both dimensions of $\mathcal{S}$.

\paragraph{DSM}: For DSM, at the score part, we use the training objective as in \citep{li2024beyond}.  For the spatial noise, we sample from Gaussian distribution. For the temporal noise, we sampled from log-normal distribution. We also add a survival classifier to recover the intensity besides matching the score. 

% Except that for the temporal noise, we will sample from  log-normal instead of Gaussian. 

\subsection{Hyperparameters}
For results in \cref{table:results_on_temporal_dataset_split} and \cref{table:results_on_ST_dataset}, we train them on three seeds. We train each experiment with the same number of epochs and validate every 10 epochs to report the best result. The detailed hyperparameters for \cref{table:results_on_temporal_dataset_split} is summarized in \cref{tab:t_hyperparams} and \cref{table:results_on_ST_dataset} are summarized in \cref{tab:st_hyperparams}. \textbf{Epochs} column represents the number of epochs for the experiments of a model  on a dataset. \textbf{bs} is the batch size. $N_T$ and $N_s$ is the bool variable for whether normalizing the time or space.

% \subsection{Additional Results on Statistical Point Process Model}
% We prov

\begin{table*}[t]
\centering
\caption{The MAE of 2-variate Hawkes processes trained by MLE, (A)SM, and (A)WSM on the synthetic dataset.}
\label{table:additional_experiments}
\begin{sc}
\scalebox{0.74}{
\begin{tabular}{c|ccc}
    \toprule
    \multirow{2}{*}{Estimator} & \multicolumn{3}{c}{Exp-Hawkes }   \\
    \cmidrule{2-4}
        & $\alpha_{21}$ & $\alpha_{22}$ & $\mu_{2}$\\   
    \midrule
    (A)WSM & $\bm{0.052_{\pm 0.054}}$ & $\bm{0.022_{\pm 0.005}}$& $\bm{0.011_{\pm 0.012}}$  
    \\
     \midrule
    (A)SM & $0.769_{\pm 0.001}$ & $0.769_{\pm 0.001}$& $0.680_{\pm 0.270}$
    \\
    \midrule
    MLE  & {$0.065_{\pm 0.032}$} & {$0.034_{\pm 0.015}$} & ${0.014_{\pm 0.002}}$\\
    \bottomrule
    % \midrule
    % Exp Hawkes & $-0.33_{\pm 0.03}$ & $-0.28_{\pm 0.05}$& $-0.32_{\pm 0.02}$ & $-0.35_{\pm 0.05}$ \\
    % \midrule
    % Sin Hawkes & $-1.11_{\pm 0.04}$ & $-1.04_{\pm 1.02}$ & $-0.74_{\pm 0.02}$ & $-0.90_{\pm 0.03}$\\
\end{tabular}}
\end{sc}
\end{table*}

\begin{table}[t]
    \centering
    \small
    \setlength{\tabcolsep}{4pt}
    \caption{Training hyperparameters for \cref{table:results_on_temporal_dataset_split}.}
    \label{tab:t_hyperparams}
    \begin{tabular}{lccccccccc}
        \toprule
        Dataset 
        & $\alpha_{K}$ 
        & $\alpha_{\text{surv}}^{\text{THP}}$ 
        & $\alpha_{\text{surv}}^{\text{SAHP}}$ 
        & grid 
        & num-noise 
        & $\sigma_t$ 
        & $N_T$ 
        & epochs 
        & bs \\
        \midrule
        retweet       & 10 & 50 & 10 & 10 & 50 & 0.5 & 1 & 100 & 64 \\
        stackoverflow & 10 & 50 & 10 & 10 & 50 & 0.5 & 1 & 200 & 64 \\
        taobao        & 10 & 50 & 10 & 10 & 50 & 0.5 & 1 & 200 & 64 \\
        taxi          & 10 & 50 &  1 & 10 & 50 & 0.5 & 1 & 250 & 64 \\
        \bottomrule
    \end{tabular}
\end{table}

% For deep point process experiments, we run 4 datasets with 4 methods deployed on 2 models. We show the hyperparameters of those experiments in \cref{table:hyper}.  We train the same epochs for four training methods (MLE, AWSM+CE or AWSM+ARM, DSM+CE) and validate every 10 epochs to report the best result. When training MLE, the hyperparameter is the number of integral nodes, which is always 10 for all experiments. When training AWSM, we have two hyperparameters, the value of balancing coefficient for CE and ARM loss (we use the same balancing coefficient), shown in column $\alpha_{\text{AWSM}}$ and whether data truncation is performed as shown in column \textbf{Trunc}. T represents performing data truncation and F represents no data truncation. 

\begin{table}[t]
    \centering
    \small
    \setlength{\tabcolsep}{4pt} % 缩小列间距
    \caption{Training hyperparameters for \cref{table:results_on_ST_dataset}}
    \label{tab:st_hyperparams}
    \begin{tabular}{lcccccccccccc}
        \toprule
        Dataset & $\textnormal{grid}_T$ & $\textnormal{grid}_S$ & $\alpha_{K}$ & $\alpha_{\text{surv}}$ & $\alpha_{S}$ & num-noise& $\sigma_t$ & $\sigma_s$ & bs & epochs & $N_T$ & $N_S$ \\
        \midrule
        Earthquake & 4 & 4 & NA & 1 & 1  &10  & 0.5 & 0.05 & 160 & 200 & 0 & 1 \\
        Citibike   & 5 & 5 & NA & 1 & 10 &10  & 0.5 & 0.05 & 60  & 150 & 0 & 1 \\
        COVID19    & 5 & 5 & NA & 1 & 0.5 &10 & 0.5 & 0.01 & 90  & 200 & 0 & 1 \\
        football   & 5 & 5 & 1  & 1 & 1  &10 & 0.1 & 0.1  & 12  & 200 & 1 & 1 \\
        \bottomrule
    \end{tabular}
\end{table}

\vskip 0.2in
\bibliography{sample}

\end{document}